\newtheorem{theorem}{Theorem}[section]
\newtheorem{definition}[theorem]{Definition}
\newtheorem{example}[theorem]{Example}
\newtheorem{proposition}[theorem]{Proposition}
\newtheorem{lemma}[theorem]{Lemma}
\newtheorem{claim}[theorem]{Claim}
\newcommand{\qed}{\hfill $\Box$}
\newenvironment{proof}{\noindent {\bf Proof.}}{\qed\medskip}
\newcommand{\A}{\mathcal{A}}
\newcommand{\B}{\mathcal{B}}
\newcommand{\C}{\mathcal{C}}
\newcommand{\I}{\mathcal{I}}
\newcommand{\J}{\mathcal{J}}
\newcommand{\K}{\mathcal{K}}
\renewcommand{\L}{\mathcal{L}}
\newcommand{\M}{\mathcal{M}}
\newcommand{\N}{\mathcal{N}}
\newcommand{\R}{\mathcal{R}}
\newcommand{\T}{\mathcal{T}}
\newcommand{\V}{\mathcal{V}}
\newcommand{\tup}[1]{\langle #1\rangle}
\newcommand{\card}[1]{|{#1}|}
\newcommand{\st}{ \ | \ }
\newcommand{\ISA}{\sqsubseteq}
\newcommand{\AND}{\sqcap}
\newcommand{\SOMET}[1]{\ensuremath{\exists #1}}
\newcommand{\NOT}{\neg}
\newcommand{\kb}{\ensuremath{\K=\tup{\T,\A}}\xspace}
\newcommand{\dom}[1][\I]{\Delta^{#1}}
\newcommand{\Int}[2][\I]{#2^{#1}}
\newcommand{\cert}{\mathit{cert}}
\newcommand{\dllite}{\textit{DL-Lite}\xspace}
\newcommand{\dlliter}{\textit{DL-Lite}\ensuremath{_{\R}}\xspace}
\newcommand{\dlliterpos}{\textit{DL-Lite}\ensuremath{^\textit{\,pos}_{\R}}\xspace}
\newcommand{\dlliterdfs}{\textit{DL-Lite}\ensuremath{_{\scriptscriptstyle\mathit{RDFS}}}\xspace}
\newcommand{\owltwo}{OWL\,2\xspace}
\newcommand{\owlql}{OWL\,2\,QL\xspace}
\newcommand{\NLOGSPACE}{\textsc{NLogSpace}\xspace}
\newcommand{\PTIME}{\textsc{PTime}\xspace}
\newcommand{\PSPACE}{\textsc{PSpace}\xspace}
\newcommand{\EXPTIME}{\textsc{ExpTime}\xspace}
\newcommand{\NP}{\textsc{NP}\xspace}
\newcommand{\Sat}{\text{\sc Sat}}
\newcommand{\Mod}{\text{\sc Mod}}
\newcommand{\CQ}{\ensuremath{\mathsf{CQ}}\xspace}
\newcommand{\UCQ}{\ensuremath{\mathsf{UCQ}}\xspace}
\newcommand{\Uni}{{\mathcal U}}
\newcommand{\Ind}{\mathsf{Ind}}
\newcommand{\Null}{\mathsf{Null}}
\newcommand{\tail}{\mathsf{tail}}
\newcommand{\gpath}{\mathsf{path}}
\newcommand{\ttype}[2][]{\mathbf{t}^{#2}_{#1}}
\newcommand{\rtype}[2][]{\mathbf{r}^{#2}_{#1}}
\newcommand{\stype}[2][]{\mathbf{s}^{#2}_{#1}}
\newcommand{\qtype}[2][]{\mathbf{q}^{#2}_{#1}}
\newcommand{\srkb}[1][B]{\mathcal{S}_{#1}}
\newcommand{\tgkb}[1][B]{\mathcal{X}_{#1}}
\newcommand{\consc}[1][\T_1 \cup \T_{12}]{\mathit{cons}_\C(#1)}
\newcommand{\consr}[1][\T_1 \cup \T_{12}]{\mathit{cons}_\R(#1)}
\newcommand{\QQ}{\mathsf{Q}}
\newcommand{\BB}{\mathbf{B}}
\newcommand{\RR}{\mathbf{R}}
\newcommand{\NN}{{\mathbf N}}
\newcommand{\PP}{{\mathbf P}}
\newcommand{\Au}{\mathbb{A}}
\newcommand{\cn}{{\it can}}
\newcommand{\acan}[1][\K]{\Au_{#1}^{\cn}}
\newcommand{\fn}{{\it fin}}
\newcommand{\agfin}{\Au_{\fn}}
\newcommand{\sol}{{\it mod}}
\newcommand{\asol}[1][\K]{\Au_{#1}^{\sol}}
\newcommand{\Bau}{\mathbb{B}}
\renewcommand{\ng}{{\it ng}}
\newcommand{\run}{\mathbf{r}}
\renewcommand{\succ}{\mathsf{succ}}
\newcommand{\len}{\mathsf{len}}
\newcommand{\dep}{\mathsf{dep}}
\newcommand{\pos}{\mathit{pos}}
\newcommand{\concept}[1]{{#1}{(\cdot)}}
\newcommand{\role}[1]{{#1}{(\cdot,\cdot)}}
\tikzset{ %
  taxonomy/.style={anchor=west, grow via three points={one child at (0.2,-0.5)
      and two children at (0.2,-0.5) and (0.2,-1)}, %
    edge from parent path={(\tikzparentnode.south) |- (\tikzchildnode.west)},
    draw=mydarkgray}, %
  highlight/.style={draw=myorange, rounded corners, text=black},%
}
\definecolor{mygreen}{HTML}{008888}
\definecolor{mygray}{HTML}{cccccc}
\definecolor{mydarkgray}{HTML}{aaaaaa}
\definecolor{myorange}{HTML}{cc6633}
\def \drawwidetable[#1,#2,#3]{
  \node[anchor=south east,minimum height=0.5cm] (box) at
  ($(#2)+(0,0.7)$) {\footnotesize $#1$}; %
  \draw[mydarkgray, rounded corners] %
  ($(#3.south west)-(1.8,0)$) rectangle ($(box.south east)+(0.5,0.5)$) %
  ($(#3.south west)-(1.8,0)-(#3.south)+(box.south east)$) -- %
  ($(box.south east)+(0.5,0)$); 
}
\def\entity[#1,#2]#3;{
  \node[draw,drop shadow={opacity=.4,shadow xshift=0.04, shadow
    yshift=-0.04},color=gray!30!black,text=black,fill=white,rounded corners=3] (#1) at #2 {#3};
}
\def\medge[#1,#2,#3,#4];{
  \draw[decorate,decoration={coil,aspect=0,segment length=20pt,amplitude=1.5pt},
  -latex,color=gray!40!black,very thin,#4] (#1) -- #3 (#2);
}
\def\medger[#1,#2,#3,#4];{
  \draw[decorate,decoration={coil,aspect=0,segment length=20pt,amplitude=1.5pt},
  -latex,color=gray!40!black,very thin,double,#4] (#1) -- #3 (#2);
}
\def\medgecurve[#1,#2,#3,#4,#5];{
  \draw[decorate,decoration={coil,aspect=0,segment length=20pt},
  -latex,color=gray!40!black,#4] (#1) .. #3 controls #5 .. (#2);
}
\def\disjmedge[#1,#2,#3,#4];{ 
 \draw[decorate,decoration={coil,aspect=0,segment length=20pt},
  -latex,color=gray!40!black,#4]%
 (#1) -- #3 (#2); 
 \draw[decorate,decoration={markings, mark=at position 0.5 with
   {\draw[-] (-4pt,-4pt) -- (4pt, 4pt);} }]%
 (#1) -- (#2); 
}
\def\redge[#1,#2,#3];{
  \draw[->] (#1) -- #3 (#2);
}
\def\redgecurve[#1,#2,#3,#4];{
  \draw[->] (#1) .. #3 controls #4 .. (#2);
}
\def\isaedge[#1,#2,#3,#4];{ 
  \draw[-triangle 60,color=black!20!black,#4,fill=white] (#1) -- #3
  (#2);  
}
\def\isaedgecurve[#1,#2,#3,#4,#5];{ 
  \draw[-triangle 60,color=black!20!black,#4,fill=white] (#1) .. #3
  controls #5 .. (#2); 
}
\def\disedge[#1,#2,#3,#4];{ 
  \draw[triangle 60-triangle 60,color=black!20!black,#4,fill=white]
  [postaction=decorate,decoration={markings, mark=at position 0.5 with {\draw[-]
      (-3pt,-3pt) -- (3pt, 3pt);} }] 
  (#1) -- #3 (#2); 
}
\def\tpath[#1,#2,#3,#4];{
    \draw[#4,-diamond,fill=white,decorate,decoration=zigzag] (#1) -- #3 (#2);
}
\tikzset{
  sshadow/.style={opacity=.25, shadow xshift=0.05, shadow yshift=-0.06},
}
\def\schemel[#1,#2,#3,#4,#5,#6]#7{ %
  \node[draw, diamond, shape aspect=#3, rotate=#2, minimum size=#1, %
  bottom color=green!55, top color=green!25, color=green!65!black, %
  drop shadow={sshadow,color=green!60!black}] (#5) at #6
  {\textcolor{green!40}{bla}}; %
  \node at #6 {#7};%

  \node[anchor=north] at (#5.south) {#4}; 
}
\def\schemer[#1,#2,#3,#4,#5,#6]#7{ %
  \node[draw, diamond, shape aspect=#3, rotate=#2, minimum size=#1, %
  bottom color=green!65, top color=green!30, color=green!60!black, %
  drop shadow={sshadow,color=green!65!black}] (#5) at #6
  {\textcolor{green!53}{bla}}; %
  \node at #6 {#7}; %

  \node[anchor=north] at (#5.south) {#4}; 
}
\def\tboxl[#1,#2,#3,#4,#5]#6{%
  \node[draw, drop shadow={opacity=.35}, minimum height=#1, minimum width=#2, %
  inner color=blue!45, outer color=blue!55, color=blue!40!black] (#4) at #5 {}; %
  \node[anchor=#3,inner sep=2pt] at (#4.#3) {#6};%
}
\def\tboxr[#1,#2,#3,#4,#5]#6{%
  \node[draw, drop shadow={opacity=.35}, minimum height=#1, minimum width=#2, %
  inner color=blue!35, outer color=blue!45, color=blue!50!black] (#4) at #5 {}; %
  \node[anchor=#3,inner sep=2pt] at (#4.#3) {#6}; %
}
\def\aboxlold[#1,#2,#3,#4,#5]#6;{
  \node[cylinder,draw,shape border rotate=90,aspect=#3,minimum height=#1,
  minimum width=#2, top color=white, bottom color=orange!80] 
  (#4) at #5 {#6};
}
\def\aboxrold[#1,#2,#3,#4,#5]#6;{
  \node[cylinder,draw,shape border rotate=90,aspect=#3,minimum
  height=#1, minimum width=#2, top color=white, bottom
  color=orange!60] (#4) at #5 {#6};  
}
\def\aboxl[#1,#2,#3,#4,#5]#6{%
  \node[draw, cylinder, alias=cyl, shape border rotate=90, aspect=#3, %
  minimum height=#1, minimum width=#2, outer sep=-0.5\pgflinewidth, %
  color=orange!40!black, left color=orange!70, right color=orange!80, middle
  color=white] (#4) at #5 {};%
  \node at #5 {#6};%
  \fill [orange!30] let \p1 = ($(cyl.before top)!0.5!(cyl.after top)$), \p2 =
  (cyl.top), \p3 = (cyl.before top), \n1={veclen(\x3-\x1,\y3-\y1)},
  \n2={veclen(\x2-\x1,\y2-\y1)} in (\p1) ellipse (\n1 and \n2); }
\def\aboxr[#1,#2,#3,#4,#5]#6{%
  \node[draw, cylinder, alias=cyl, shape border rotate=90, aspect=#3, %
  minimum height=#1, minimum width=#2, outer sep=-0.5\pgflinewidth, %
  color=orange!50!black, left color=orange!50, right color=orange!60, middle
  color=white] (#4) at #5 {};%
  \node at #5 {#6};%
  \fill [orange!20] let \p1 = ($(cyl.before top)!0.5!(cyl.after top)$), \p2 =
  (cyl.top), \p3 = (cyl.before top), \n1={veclen(\x3-\x1,\y3-\y1)},
  \n2={veclen(\x2-\x1,\y2-\y1)} in (\p1) ellipse (\n1 and \n2); }
\def\kbbox[#1,#2,#3,#4,#5]#6{%
  \draw[dashed] node[draw,color=gray!50,minimum height=#1,minimum width=#2] 
  (#4) at #5 {}; 
  \node[anchor=#3,inner sep=2pt] at (#4.#3) {#6}; 
}
\def\soledge[#1,#2,#3,#4];{
        \draw[dashed,-latex,#4] (#1) -- #3 (#2);
}
\def\ind[#1,#2,#3,#4,#5];{
      \node[circle,draw,inner sep=1.5pt,outer sep=0.5, color=#1!80!black,fill=#1!80,#2,#3] (#4) at #5 {};  
}
\def\inda[#1,#2,#3,#4];{
      \node[circle,draw,inner sep=0.7pt,outer sep=0.5, #1,#2] (#3) at #4 {};  
}
\def \rectnode[#1,#2]{
  \node[color=gray,draw,minimum width=#1,minimum height=#2]
}
\newif\ifextendedversion
\title{Exchanging OWL 2 QL Knowledge Bases}
\author{Marcelo Arenas\\
 PUC Chile \&\\
 Univ.~of Oxford, U.K.\\
 marenas@ing.puc.cl
 \And
 Elena Botoeva\\
 Free U.\ of Bolzano\\
 Italy\\
 botoeva@inf.unibz.it
 \And
 Diego Calvanese\\
 Free U.\ of Bolzano, Italy \&\\
 TU Vienna, Austria\\
 calvanese@inf.unibz.it
 \And
 Vladislav Ryzhikov\\
 Free U.\ of Bolzano\\
 Italy\\
 ryzhikov@inf.unibz.it
}
\begin{document}

\maketitle

\begin{abstract}
  Knowledge base exchange is an important problem in the area of data exchange
  and knowledge representation, where one is interested in exchanging
  information between a source and a target knowledge base connected through a
  mapping. In this paper, we study this fundamental problem for knowledge bases
  and mappings expressed in \owlql, the profile of \owltwo based on the
  description logic \dlliter. More specifically, we consider the problem of
  computing universal solutions, identified as one of the most desirable
  translations to be materialized, and the problem of computing
  \UCQ-representations, which optimally capture in a target TBox the
  information that can be extracted from a source TBox and a mapping by means
  of unions of conjunctive queries. For the former we provide a novel
  automata-theoretic technique, and complexity results that range from \NP to
  \EXPTIME, while for the latter we show \NLOGSPACE-completeness.
\end{abstract}

\section{Introduction}
\label{sec-introduction}

Complex forms of information, maintained in different formats and organized
according to different structures, often need to be shared between agents.  In
recent years, both in the data management and in the knowledge representation
communities, several settings have been investigated that address this problem
from various perspectives: in \emph{information integration}, uniform access is
provided to a collection of data sources by means of an ontology (or global
schema) to which the sources are mapped \cite{Lenz02}; in \emph{peer-to-peer
  systems}, a set of peers declaratively linked to each other collectively
provide access to the information assets they maintain
\cite{KeAM03,ACGRS06,FKMT06}; in \emph{ontology matching}, the aim is to
understand and derive the correspondences between elements in two ontologies
\cite{EuSh07,SE13}; finally, in \emph{data exchange}, the information stored
according to a source schema needs to be restructured and translated so as to
conform to a target schema \cite{FKMP05,Barc09}.

The work we present in this paper is inspired by the latter setting,
investigated in databases.  We study it, however, under the assumption of
incomplete information typical of knowledge representation \cite{ArPR11}.
Specifically, we investigate the problem of \emph{knowledge base exchange},
where a source knowledge base (KB) is connected to a target KB by means of a
declarative mapping specification, and the aim is to exchange knowledge from
the source to the target by exploiting the mapping.  We rely on a framework for
KB exchange based on lightweight Description Logics (DLs) of the \dllite family
\cite{CDLLR07}, recently proposed in \cite{ABCRS12,ABCRS12b}: both source and
target are KBs constituted by a DL TBox, representing implicit information, and
an ABox, representing explicit information, and mappings are sets of DL concept
and role inclusions. Note that in data and knowledge base exchange, differently
from ontology matching, mappings are first-class citizens. In fact, it has been
recognized that building schema mappings is an important and complex activity,
which requires the designer to have a thorough understanding of the source and
how the information therein should be related to the target.  Thus, several
techniques and tools have been developed to support mapping design, e.g.,
exploiting lexical information \cite{FHHM*09}. Here, similar to data exchange,
we assume that for building mappings the target signature is given, but no
further axioms constraining the target knowledge are available.  In fact, such
axioms are derived from the source KB and the mapping.

We consider two key problems:
\begin{inparaenum}[\it (i)]
\item computing \emph{universal solutions}, which have been identified as one
  of the most desirable translations to be materialized;
\item \emph{\UCQ-representability} of a source TBox by means of a target TBox
  that captures at best the intensional
  information that can be extracted from the source according to a mapping
  using union of conjunctive queries.  Determining \UCQ-representability is a
  crucial task, since it allows one to use the obtained target TBox to infer
  new knowledge in the target, thus reducing the amount of extensional
  information to be transferred from the source.
  Moreover,
  it has been noticed that in many data exchange applications users only
  extract information from the translated data by using specific queries
  (usually conjunctive queries), so query-based notions of translation
  specifically tailored to store enough information to answer such queries have
  been widely studied in the data exchange
  area~\cite{MH03,FKNP08,APRR09,FK12,PSS13}.
\end{inparaenum}
For these two problems, we investigate both the task of checking
\emph{membership}, where a candidate universal solution (resp.,
\UCQ-representation) is given and one needs to check its correctness, and
\emph{non-emptiness}, where the aim is to determine the existence of a
universal solution (resp., \UCQ-representation).

We significantly extend previous results in several directions.
First of all, we establish results for \owlql \cite{OWL2QL}, one of the profiles
of the standard Web Ontology Language~OWL~2 \cite{OWL2}, which is based on the
DL \dlliter.  To do so, we have to overcome the difficulty of dealing with null
values in the ABox, since these become necessary in the target to represent
universal solutions. Also, for the first time, we address disjointness
assertions in the TBox, a construct that is part of \owlql.
The main contribution of our work is then a detailed analysis of the
computational complexity of both membership and non-emptiness for universal
solutions and \UCQ-representability.
For the non-emptiness problem of universal solutions, previous known results
covered only the simple case of $\dlliterdfs$, the RDFS fragment of \owlql, in
which no new facts can be inferred, and universal solutions always exist and can
be computed in polynomial time via a chase procedure (see \cite{CDLLR07}).  We
show that in our case, instead, the problem is \PSPACE-hard, hence significantly
more complex, and provide an \EXPTIME upper bound based on a novel approach
exploiting two-way alternating automata.  We provide also NP upper bounds for
the simpler case of ABoxes without null values, and for the case of the
membership problem.
As for \UCQ-representability, we adopt the notion of
\emph{\UCQ-representability} introduced in \cite{ABCRS12,ABCRS12b} and extend it
to take into account disjointness of \owlql.  For that case we show
\NLOGSPACE-completeness of both non-emptiness and membership, improving on the
previously known \PTIME upper bounds.

The paper is organized as follows. In Section~\ref{sec-preliminaries}, we give
preliminary notions on DLs and queries. In Section~\ref{sec-kbe}, we define our
framework of KB exchange and discuss the problem of computing solutions.  In
Section~\ref{sec-cont}, we overview our contributions, and then we provide our
results on computing universal solutions in Section~\ref{sec-univ}, and on
UCQ-representability in Section~\ref{sec-ucq-rep}.  Finally, in
Section~\ref{sec-conclusions}, we draw some conclusions and outline issues for
future work.

\ifextendedversion
\else
The proofs are available in an extended technical report accessible at
\url{http://arxiv.org/abs/1304.5810}. 
\fi

\section{Preliminaries}
\label{sec-preliminaries}

The DLs of the \dllite family~\cite{CDLLR07} of light-weight DLs are
characterized by the fact that standard reasoning can be done in polynomial
time.
We adapt here \dlliter, the DL underlying \owlql, and present now its
syntax and semantics.  Let $N_C$, $N_R$, $N_a$, $N_\ell$ be pairwise
disjoint sets of \emph{concept names}, \emph{role names}, \emph{constants}, and
\emph{labeled nulls}, respectively.  Assume in the following that $A\in N_C$ and
$P\in N_R$; in \dlliter, $B$ and $C$ are used to denote basic and arbitrary (or
complex) concepts, respectively, and $R$ and $Q$ are used to denote basic and
arbitrary (or complex) roles, respectively, defined as follows:
\vspace{-0.01cm}
\[
\begin{array}{r@{~~}c@{~~}l}
  R & ::= & P ~\mid~ P^- \\
  Q & ::= & R ~\mid~ \NOT R
\end{array}
\qquad\qquad
\begin{array}{r@{~~}c@{~~}l}
  B & ::= & A ~\mid~ \SOMET{R} \\
  C & ::= & B ~\mid~ \NOT B
\end{array}
\]
From now on, for a basic role $R$, we use $R^-$ to denote $P^-$ when $R=P$, and
$P$ when $R=P^-$.

A TBox is a finite set of \emph{concept inclusions} $B \ISA C$ and \emph{role
  inclusions} $R\ISA Q$.  We call an inclusion of the form $B_1 \ISA \neg B_2$
or $R_1 \ISA \neg R_2$ a \emph{disjointness assertion}.  An ABox is a finite set
of \emph{membership assertions} $B(a)$, $R(a,b)$, where $a, b \in N_a$.  In this
paper, we also consider extended ABoxes, which are obtained by allowing labeled
nulls in membership assertions. Formally, an \emph{extended ABox} is a finite
set of membership assertions $B(u)$ and $R(u,v)$, where $u, v \in (N_a \cup
N_\ell)$. Moreover, a(n \emph{extended}) \emph{KB} $\K$ is a pair $\tup{\T,\A}$,
where $\T$ is a TBox and $\A$ is an (extended) ABox.

A \emph{signature} $\Sigma$ is a finite set of concept and role names.
A KB $\K$ is said to be \emph{defined over} (or simply, \emph{over}) $\Sigma$ if
all the concept and role names occurring in $\K$ belong to $\Sigma$
(and likewise for TBoxes, ABoxes, concept inclusions, role inclusions
and membership assertions).
Moreover, an \emph{interpretation} $\I$ \emph{of} $\Sigma$ is a pair
$\tup{\dom,\Int{\cdot}}$, where $\dom$ is a non-empty domain and $\Int{\cdot}$
is an interpretation function such that:
\begin{inparaenum}[(1)]
\item $\Int{A} \subseteq \dom$, for every concept name $A \in \Sigma$;
\item $\Int{P} \subseteq \dom \times \dom$, for every role name $P \in \Sigma$;
  and
\item \label{it:int-obj} $\Int{a} \in \dom$, for every constant $a \in N_a$.
\end{inparaenum}
Function $\Int{\cdot}$ is extended to also interpret concept and role
constructs:
\[
  \begin{array}{r@{~~}c@{~~}l}
    \Int{(\SOMET{R})} &=& \{x \in \dom \st \exists y \in \dom\text{ such that
    }(x,y) \in \Int{R}\};\\
    \Int{(P^-)} &=& \{(y,x) \in \dom \times \dom \st (x,y) \in \Int{P}\};\\
    \Int{(\NOT B)} &=& \dom \setminus \Int{B}; \qquad
    \Int{(\NOT R)}~=~(\dom \times \dom) \setminus \Int{R}.
  \end{array}
\]
Note that, consistently with the semantics of \owlql, we do \emph{not} make the
unique name assumption (UNA), i.e., we allow distinct constants $a,b\in
N_a$ to be interpreted as the same object, i.e., $\Int{a}=\Int{b}$.
Note also that labeled nulls are \emph{not} interpreted by $\I$.

Let $\I = \tup{\dom,\Int{\cdot}}$ be an interpretation over a signature
$\Sigma$.  Then $\I$ is said to satisfy a concept inclusion $B \ISA C$ over
$\Sigma$, denoted by $\I \models B \ISA C$, if $\Int{B} \subseteq \Int{C}$; $\I$
is said to satisfy a role inclusion $R \ISA Q$ over $\Sigma$, denoted by $\I
\models R \ISA Q$, if $\Int{R} \subseteq \Int{Q}$; and $\I$ is said to satisfy a
TBox $\T$ over $\Sigma$, denoted by $\I \models \T$, if $\I \models \alpha$ for
every $\alpha \in \T$.  Moreover, satisfaction of membership assertions over
$\Sigma$ is defined as follows. A \emph{substitution} over $\I$ is a function $h
: (N_a \cup N_\ell) \to \Int{\Delta}$ such that $h(a) = \Int{a}$ for every $a
\in N_a$.  Then $\I$ is said to satisfy an (extended) ABox $\A$, denoted by $\I
\models \A$, if there exists a substitution $h$ over $\I$ such that: %
\begin{itemize}
\item[--] for every $B(u) \in \A$, it holds that $h(u) \in \Int{B}$; and
\item[--] for every $R(u,v) \in \A$, it holds that $(h(u),h(v)) \in \Int{R}$.
\end{itemize}
Finally, $\I$ is said to
\emph{satisfy} a(n extended) KB $\kb$, denoted by $\I \models \K$, if $\I
\models \T$ and $\I \models \A$.  Such $\I$ is called a \emph{model} of $\K$,
and we use $\Mod(\K)$ to denote the set of all models of $\K$.  We say that $\K$
is \emph{consistent} if $\Mod(\K) \neq \emptyset$.

As is customary, given an (extended) KB $\K$ over a signature
$\Sigma$ and a membership assertion or an inclusion $\alpha$ over $\Sigma$, we
use notation $\K \models \alpha$ to indicate that for every interpretation $\I$
of $\Sigma$, if $\I\models \K$, then $\I \models \alpha$.

\subsection{Queries and certain answers}

A $k$-ary query $q$ over a signature $\Sigma$, with $k \geq 0$, is a function
that maps every interpretation $\tup{\dom, \Int{\cdot}}$ of $\Sigma$ into a
$k$-ary relation $\Int{q} \subseteq (\dom)^k$.  In particular, if $k = 0$, then
$q$ is said to be a Boolean query, and $\Int{q}$ is either a relation containing
the empty tuple $()$ (representing the value true) or the empty relation
(representing the value false).  Given a KB $\K$ over $\Sigma$, the set of
\emph{certain answers} to $q$ over $\K$, denoted by $\cert(q, \K)$, is defined
as:
\[
\textstyle \bigcap_{\I\in\Mod(\K)} \{
\begin{array}[t]{@{}l}
  (a_1, \ldots, a_k) \ \mid\\
  \{a_1, \ldots, a_k\} \subseteq N_a \text{ and } (a_1^\I, \ldots, a_k^\I) \in
  q^\I
  \},
\end{array}
\]
Notice that the certain answer to a query does \emph{not} contain labeled nulls.
Besides, notice that if $q$ is a Boolean query, then $\cert(q, \K)$ evaluates to
true if $\Int{q}$ evaluates to true for every $\I \in \Mod(\K)$, and it
evaluates to false otherwise.

A \emph{conjunctive query $(\CQ)$ over a signature $\Sigma$} is a formula of the
form $q(\vec x) = \exists \vec y.\, \varphi(\vec x, \vec y)$, where $\vec x$,
$\vec y$ are tuples of variables and $\varphi(\vec x, \vec y)$ is a conjunction
of atoms of the form $A(t)$, with $A$ a concept name in $\Sigma$, and $P(t,t')$,
with $P$ a role name in $\Sigma$, where each of $t,t'$ is either a constant from
$N_a$ or a variable from $\vec x$ or $\vec y$.
Given an interpretation $\I = \tup{\dom, \Int{\cdot}}$ of $\Sigma$, the answer
of $q$ over $\I$, denoted by $\Int{q}$, is the set of tuples $\vec a$ of
elements from $\dom$ for which there exist a tuple $\vec b$ of elements from
$\dom$ such that $\I$ satisfies every conjunct in $\varphi(\vec a, \vec b)$. A
union of conjunctive queries ($\UCQ$) over a signature $\Sigma$ is a
formula of the form $q(\vec{x}) = \textstyle \bigvee_{i=1}^n q_i(\vec{x})$,
where each $q_i$ ($1 \leq i \leq n$) is a \CQ over $\Sigma$, whose
semantics is defined as $\Int{q} = \bigcup_{i=1}^n \Int{q_i}$.

\section{Exchanging \owlql Knowledge Bases}
\label{sec-kbe}

We generalize now, in Section~\ref{sec-sol}, the setting proposed in
\cite{ArPR11} to \owlql, and we formalize in Section~\ref{sec-problems} the
main problems studied in the rest of the paper.

\subsection{A knowledge base exchange framework for \owlql}
\label{sec-sol}
Assume that $\Sigma_1$, $\Sigma_2$ are signatures with no concepts or roles in
common. An inclusion $E_1 \ISA E_2$ is said to be \emph{from} $\Sigma_1$
\emph{to} $\Sigma_2$, if $E_1$ is a concept or a role over $\Sigma_1$ and $E_2$
is a concept or a role over $\Sigma_2$.  A mapping is a tuple $\M = (\Sigma_1,
\Sigma_2, \T_{12})$, where $\T_{12}$ is a TBox consisting of inclusions from
$\Sigma_1$ to $\Sigma_2$~\cite{ABCRS12}. Recall that in this paper, we deal with
$\dlliter$ TBoxes only, so $\T_{12}$ is assumed to be a set of $\dlliter$
concept and role inclusions.  The semantics of such a mapping is defined in
\cite{ABCRS12} in terms of a notion of satisfaction for interpretations, which
has to be extended in our case to deal with interpretations not satisfying the
UNA (and, more generally, the standard name assumption).
More specifically, given interpretations $\I$,
$\J$ of $\Sigma_1$ and $\Sigma_2$, respectively, pair $(\I,\J)$ \emph{satisfies}
TBox $\T_{12}$, denoted by $(\I, \J) \models \T_{12}$, if
\begin{inparaenum}[(\it i)]
\item for every $a \in N_a$, it holds that $a^\I = a^\J$,
\item for every concept inclusion $B \ISA C \in \T_{12}$, it holds that
  $\Int[\I]{B} \subseteq \Int[\J]{C}$, and
\item for every role inclusion $R \ISA Q \in \T_{12}$, it holds that
  $\Int[\I]{R} \subseteq \Int[\J]{Q}$.
\end{inparaenum}
Notice that the connection between the information in $\I$ and $\J$ is
established through the constants that move from source to target according to
the mapping.  For this reason, we require constants to be interpreted in the
same way in $\I$ and $\J$, i.e., they preserve their meaning when they are
transferred.
Besides, notice that this is the only restriction imposed on the domains of
$\I$ and $\J$ (in particular, we require neither that $\Delta^\I = \Delta^\J$
nor that $\Delta^\I \subseteq \Delta^\J$).
Finally, $\Sat_\M(\I)$ is defined as the set of
interpretations $\J$ of $\Sigma_2$ such that $(\I,\J) \models \T_{12}$, and
given a set ${\cal X}$ of interpretations of $\Sigma_1$, $\Sat_\M({\cal X})$ is
defined as $\bigcup_{\I\in \mathcal{X}} \Sat_\M(\I)$.

The main problem studied in the knowledge exchange area is the problem of
translating a KB according to a mapping, which is formalized
through several different notions of translation (for a thorough comparison of
different notions of solutions see \cite{ABCRS12}).
 The first such notion is the
concept of solution, which is formalized as follows. Given a mapping $\M =
(\Sigma_1, \Sigma_2, \T_{12})$ and KBs $\K_1$, $\K_2$ over $\Sigma_1$ and
$\Sigma_2$, respectively, $\K_2$ is a \emph{solution} for $\K_1$ under $\M$ if
$\Mod(\K_2) \subseteq \Sat_\M(\Mod(\K_1))$. Thus, $\K_2$ is a solution for
$\K_1$ under $\M$ if every interpretation of $\K_2$ is a valid translation of an
interpretation of $\K_1$ according to $\M$. Although natural, this is a mild
restriction, which gives rise to the stronger notion of universal
solution. Given $\M$, $\K_1$ and $\K_2$ as before,
$\K_2$ is a \emph{universal solution} for $\K_1$ under $\M$ if
$\Mod(\K_2) = \Sat_\M(\Mod(\K_1))$. Thus, $\K_2$ is designed to
exactly represent the space of interpretations obtained by translating
the interpretations of $\K_1$ under $\M$~\cite{ABCRS12}.
Below is a simple example demonstrating the notion of universal solutions. This
example also illustrates some issues regarding the absence of 
the UNA, which has to be given up to comply with the \owlql standard, and
regarding the use of disjointness assertions.

\begin{example}\label{exa-disj-source}
  Assume $\M = (\{\concept{F}, \concept{G}\}, \{\concept{F'},
  \concept{G'}\}$, $\T_{12})$, where $\T_{12}=\{F \ISA F', G \ISA G'\}$, and let
  $\K_1 = \tup{\T_1, \A_1}$, where $\T_1 = \{\}$ and $\A_1 = \{F(a),
  G(b)\}$. Then the ABox $\A_2 = \{F'(a), G'(b)\}$ is a universal solution for
  $\K_1$ under $\M$.

  Now, if we add a seemingly harmless disjointness assertion $\{F \ISA \NOT G\}$
  to $\T_1$, we obtain that $\A_2$ is no longer a universal solution (not even a
  solution) for $\K_1$ under $\M$. The reason for that is the lack of the UNA on
  the one hand, and the presence of the disjointness assertion in $\T_1$ on the
  other hand. In fact, the latter forces $a$ and $b$ to be interpreted
  differently in the source. Thus, for a model $\J$ of $\A_2$ such that
  $\Int[\J]{a} = \Int[\J]{b}$ and $\Int[\J]{F'} = \Int[\J]{G'} =
  \{\Int[\J]{a}\}$, there exists no model $\I$ of $\K_1$ such that $(\I,\J)
  \models \T_{12}$ (which would require $\Int{a} = \Int[\J]{a}$ and $\Int{b} =
  \Int[\J]{b}$). In general, there exists no universal solution for $\K_1$ under
  $\M$, even though $\K_1$ and $\T_{12}$ are consistent with each other.
\end{example}

A second class of translations is obtained in \cite{ABCRS12} by observing that
solutions and universal solutions are too restrictive for some applications, in
particular when one only needs a translation storing enough information to
properly answer some queries. For the particular case of $\UCQ$, this gives rise
to the notions of $\UCQ$-solution and universal $\UCQ$-solution.  Given a
mapping $\M = (\Sigma_1, \Sigma_2, \T_{12})$, a KB $\K_1 = \tup{\T_1,\A_1}$ over
$\Sigma_1$ and a KB $\K_2$ over $\Sigma_2$, $\K_2$ is a \emph{$\UCQ$-solution}
for $\K_1$ under $\M$ if for every query $q \in \UCQ$ over $\Sigma_2$: $\cert(q,
\tup{\T_1 \cup \T_{12}, \A_1}) \subseteq \cert(q, \K_2)$, while $\K_2$ is a
\emph{universal $\UCQ$-solution} for $\K_1$ under $\M$ if for every query $q \in
\UCQ$ over $\Sigma_2$: $\cert(q, \tup{\T_1 \cup \T_{12}, \A_1}) = \cert(q,
\K_2)$.

Finally, a last class of solutions is obtained in \cite{ABCRS12} by considering
that users want to translate as much of the knowledge in a TBox as possible, as
a lot of effort is put in practice when constructing a TBox. This observation
gives rise to the notion of $\UCQ$-representation~\cite{ABCRS12}, which
formalizes the idea of translating a source TBox according to a mapping.  Next,
we present an alternative formalization of this notion, which is appropriate for
our setting where disjointness assertions are considered.\footnote{If
  disjointness assertions are not allowed, then this new notion can be shown to
  be equivalent to the original formalization of $\UCQ$-representation proposed
  in \cite{ABCRS12}.} Assume that $\M = (\Sigma_1, \Sigma_2, \T_{12})$ and
$\T_1$, $\T_2$ are TBoxes over $\Sigma_1$ and $\Sigma_2$, respectively.  Then
$\T_2$ is a {\em $\UCQ$-representation} of $\T_1$ under $\M$ if for every query
$q \in \UCQ$ over $\Sigma_2$ and every ABox $\A_1$ over $\Sigma_1$ that is
consistent with $\T_1$:
\begin{multline}\label{eq-ucq-rep}
  \tag{\dag}
  \cert(q, \tup{\T_1 \cup \T_{12}, \A_1}) \ = \\
  \mathop{\bigcap_{\A_2 \,:\, \A_2 \text{ is an ABox over } \Sigma_2 \text{
        that}}}_{\text{is a $\UCQ$-solution for } \A_1 \text{ under } \M}
  \cert(q, \tup{\T_2,\A_2}).
\end{multline}
Notice that in the previous definition, $\A_2$ is said to be a $\UCQ$-solution
for $\A_1$ under $\M$ if the KB $\tup{\emptyset,\A_2}$ is a $\UCQ$-solution for
the KB $\tup{\emptyset,\A_1}$ under $\M$. Let us explain the intuition behind
the definition of the notion of $\UCQ$-representation.  Assume that $\T_1$,
$\T_2$, $\M$ satisfy \eqref{eq-ucq-rep}. First, $\T_2$ captures the information
in $\T_1$ that is translated by $\M$ and that can be extracted by using a \UCQ,
as for every ABox $\A_1$ over $\Sigma_1$ that is consistent with $\T_1$ and
every \UCQ $q$ over $\Sigma_2$, if we choose an arbitrary $\UCQ$-solution $\A_2$
for $\A_1$ under $\M$, then it holds that $\cert(q, \tup{\T_1 \cup \T_{12},
  \A_1}) \subseteq \cert(q, \tup{\T_2,\A_2})$. Notice that $\A_1$ is required to
be consistent with $\T_1$ in the previous condition, as we are interested in
translating data that make sense according to $\T_1$. Second, $\T_2$ does not
include any piece of information that can be extracted by using a \UCQ and it is
not the result of translating the information in $\T_1$ according to $\M$. In
fact, if $\A_1$ is an ABox over $\Sigma_1$ that is consistent with $\T_1$ and
$q$ is a \UCQ over $\Sigma_2$, then it could be the case that $\cert(q,
\tup{\T_1 \cup \T_{12}, \A_1}) \subsetneq \cert(q, \tup{\T_2,\A^\star_2})$ for
some $\UCQ$-solution $\A^\star_2$ for $\A_1$ under $\M$. However, the extra
tuples extracted by query $q$ are obtained from the extra information in
$\A^\star_2$, as if we consider a tuple $\vec{a}$ that belong to $\cert(q,
\tup{\T_2,\A_2})$ for every $\UCQ$-solution $\A_2$ for $\A_1$ under $\M$, then
it holds that $\vec{a} \in \cert(q, \tup{\T_1 \cup \T_{12}, \A_1})$.

%


\begin{example}\label{exa-ucq-rep}
  Assume that $\M = (\{\concept{F}, \concept{G}, \concept{H}, \concept{D}\}$,
  $\{\concept{F'}, \concept{G'}, \concept{H'}\}$, $\T_{12})$, where $\T_{12} =
  \{F \ISA F', G \ISA G', H \ISA H'\}$, and let $\T_1 = \{F \ISA G\}$. As
  expected, 
  TBox $\T_2 = \{ F' \ISA G'\}$ is a $\UCQ$-representation of $\T_1$ under
  $\M$. Moreover, we can add the inclusion $D \ISA \NOT H'$ to $\T_{12}$, and
  $\T_2$ will still remain a \UCQ-representation of $\T_1$ under $\M$.  Notice
  that in this latter setting, our definition has to deal with some ABoxes
  $\A_1$ that are consistent with $\T_1$ but not with $\T_1 \cup \T_{12}$, for
  instance $\A_1 = \{H(a), D(a)\}$ for some constant $a$. In those cases,
  Equation~\eqref{eq-ucq-rep} is trivially satisfied, since $\Mod(\tup{\T_1 \cup
    \T_{12}, \A_1}) = \emptyset$ and the set of $\UCQ$-solutions for $\A_1$
  under $\M$ is empty.
\end{example}

\subsection{On the problem of computing solutions}
\label{sec-problems}

Arguably, the most important problem in knowledge
exchange~\cite{ArPR11,ABCRS12}, as well as in data exchange~\cite{FKMP05,K05},
is the task of computing a translation of a KB according to a mapping. To study
the computational complexity of this task for the different notions of solutions
presented in the previous section, we introduce the following decision problems.
The \emph{membership} problem for universal solutions (resp. universal
\UCQ-solutions) has as input a mapping $\M = (\Sigma_1, \Sigma_2, \T_{12})$ and
KBs $\K_1$, $\K_2$ over $\Sigma_1$ and $\Sigma_2$, respectively. Then the
question to answer is whether $\K_2$ is a universal solution (resp. universal
\UCQ-solution) for $\K_1$ under $\M$.
Moreover, the membership problem for $\UCQ$-representations has as input a
mapping $\M = (\Sigma_1, \Sigma_2, \T_{12})$ and TBoxes $\T_1$, $\T_2$ over
$\Sigma_1$ and $\Sigma_2$, respectively, and the question to answer is whether
$\T_2$ is a $\UCQ$-representation of $\T_1$ under $\M$.

In our study, we cannot leave aside the existential versions of the previous
problems, which are directly related with the problem of computing translations
of a KB according to a mapping. Formally,
the \emph{non-emptiness} problem for universal solutions (resp. universal
$\UCQ$-solutions) has as input a mapping $\M = (\Sigma_1, \Sigma_2, \T_{12})$
and a KB $\K_1$ over $\Sigma_1$.  Then the question to answer is whether there
exists a universal solution (resp. universal \UCQ-solution) for $\K_1$ under
$\M$.
Moreover, the non-emptiness problem for $\UCQ$-representations has as input a
mapping $\M = (\Sigma_1, \Sigma_2, \T_{12})$ and a TBox $\T_1$ over $\Sigma_1$,
and the question to answer is
whether there exists a $\UCQ$-representation of $\T_1$ under $\M$.

\begin{figure*}[ht]
  \begin{center}
    \begin{tabular}{cc}
      \begin{tabular}{l|c|c|}\cline{2-3}
        {\bf Membership} &  ABoxes & extended ABoxes\\\hline
        \multicolumn{1}{|l|}{Universal solutions} &
        in \NP &
        \NP-complete \\\hline
        \multicolumn{1}{|l|}{$\UCQ$-representations} &
        \multicolumn{2}{c|}{\NLOGSPACE-complete}\\\hline
      \end{tabular} &
      \begin{tabular}{l|c|c|}\cline{2-3}
        {\bf Non-emptiness} & ABoxes & extended ABoxes\\\hline
        \multicolumn{1}{|l|}{Universal solutions} &
        in \NP & \PSPACE-hard, in \EXPTIME \\\hline
        \multicolumn{1}{|l|}{$\UCQ$-representations} &
        \multicolumn{2}{c|}{\NLOGSPACE-complete}\\\hline
      \end{tabular}
    \end{tabular}
  \end{center}

  \caption{Complexity results obtained in the paper about the membership and
    non-emptiness problems.\label{fig-results}}
\end{figure*}

\section{Our contributions}
\label{sec-cont}

In Section~\ref{sec-problems}, we have introduced the 
problems that
are studied in this paper. It is important to notice that these problems are
defined by considering only KBs (as opposed to extended KBs), as they are the
formal counterpart of \owlql. Nevertheless, as shown in Section~\ref{sec-univ},
there are natural examples of \owlql specifications and mappings where null
values are needed when constructing solutions. Thus, we also study the problems
defined in Section~\ref{sec-problems} in the case where translations can be
extended KBs. It should be noticed that the notions of solution, universal
solution, $\UCQ$-solution, universal $\UCQ$-solution, and $\UCQ$-representation
have to be enlarged to consider extended KBs, which is straightforward to do.
In particular, given a mapping $\M = (\Sigma_1, \Sigma_2, \T_{12})$ and TBoxes
$\T_1$, $\T_2$ over $\Sigma_1$ and $\Sigma_2$, respectively, $\T_2$ is said to
be a $\UCQ$-representation of $\T_1$ under $\M$ in this extended setting if in
Equation \eqref{eq-ucq-rep}, $\A_2$ is an extended ABox over $\Sigma_2$ that is
a $\UCQ$-solution for $\A_1$ under~$\M$.

The main contribution of this paper is to provide a detailed analysis of the
complexity of the membership and non-emptiness problems for the notions of
universal solution and $\UCQ$-representation. In Figure~\ref{fig-results}, we
provide a summary of the main results in the paper, which are explained in more
detail in Sections~\ref{sec-univ} and~\ref{sec-ucq-rep}. It is important to
notice that these results considerably extend the previous known results about
these problems \cite{ABCRS12,ABCRS12b}. In the first place, the problem
of computing universal solutions was studied in \cite{ABCRS12} for the case of
$\dlliterdfs$, a fragment of $\dlliter$ that allows neither for inclusions of
the form $B \ISA \SOMET{R}$ nor for disjointness assertions. In that case, it is
straightforward to show that every source KB has a universal solution that can
be computed by using the chase procedure~\cite{CDLLR07}. Unfortunately, this
result does not provide any information about how to solve the much larger case
considered in this paper, where, in particular, the non-emptiness problem is not
trivial. In fact, for the case of the notion of universal solution, all the
lower and upper bounds provided in Figure~\ref{fig-results} are new results,
which are not consequences of the results obtained in \cite{ABCRS12}. In the
second place, 
a notion of \UCQ-representation that is appropriate for the fragment of \dlliter
not including disjointness assertions was studied in \cite{ABCRS12,ABCRS12b}. %
In particular, it was shown 
that the membership and non-emptiness problems for this notion are solvable in
polynomial time. In this paper, we considerably strengthen these results:
\begin{inparaenum}[(\it i)]
\item by generalizing the definition of the notion of $\UCQ$-representation to
  be able to deal with \owlql, that is, with the entire language $\dlliter$
  (which includes disjointness assertions); and
\item by showing that the membership and non-emptiness problems are both
  $\NLOGSPACE$-complete in this larger scenario.
\end{inparaenum}

It turns out that reasoning about universal $\UCQ$-solutions is much more
intricate. In fact, as a second contribution of our paper, we provide a
$\PSPACE$ lower bound for the complexity of the membership problem for the
notion of universal $\UCQ$-solution, which is in sharp contrast with the $\NP$
and $\NLOGSPACE$ upper bounds for this problem for the case of universal
solutions and $\UCQ$-representations, respectively (see
Figure~\ref{fig-results}). Although many questions about universal
$\UCQ$-solutions remain open, we think that this is an interesting first result,
as universal $\UCQ$-solutions have only been investigated before for the very
restricted fragment $\dlliterdfs$ of $\dlliter$~\cite{ABCRS12}, which is
described in the previous paragraph.

\section{Computing universal solutions}
\label{sec-univ}


In this section, we study the membership and non-emptiness problems for
universal solutions, in the cases where nulls are not allowed (Section
\ref{sec-univ-owlql}) and are allowed (Section \ref{sec-non-emp-uni-sol}) in
such solutions. But before going into this, we give an example that shows the
shape of universal solutions in \dlliter.
%
%

\begin{example}\label{exa-null}
  Assume that $\M = (\{\concept{F}, \role{S}\}, \{\concept{G'}\}$, $\{\SOMET{S^-}
  \ISA G'\})$, and let $\K_1 = \tup{\T_1, \A_1}$, where $\T_1 = \{F \ISA
  \SOMET{S}\}$ and $\A_1 = \{F(a)\}$. Then a natural way to construct a
  universal solution for $\K_1$ under $\M$ is to `populate' the target with all
  implied facts (as it is usually done in data exchange). Thus, the ABox $\A_2 =
  \{G'(n)\}$, where $n$ is a labeled null, is a universal solution for $\K_1$
  under $\M$ if nulls are allowed.
  Notice that here, a universal solution with non-extended ABoxes does not
  exist: substituting $n$ by any constant is too restrictive, ruining
  universality.
  %
\end{example}
\begin{example}
  Now, assume $\M = (\{\concept{F}, \role{S}, \role{T}\}$, $\{\role{S'}\}$,
  $\{S \ISA S', T \ISA S'\})$, and $\K_1 = \tup{\T_1, \A_1}$, where $\T_1 =
  \{F \ISA \SOMET{S}, \SOMET{S^-} \ISA \SOMET{S}\}$ and $\A_1 = \{F(a)$,
  $T(a,a)\}$. In this case, we cannot use the same approach as in Example
  \ref{exa-null}
  to construct a universal solution, as now we would need of an infinite number
  of labeled nulls to construct such a solution. However, as $S$ and $T$ are
  transferred to the same role $S'$, it is possible to use constant $a$ to
  represent all implied facts. In particular, in this case $\A_2 = \{S'(a,a)\}$
  is a universal solution for $\K_1$ under $\M$.
  %
  %
\end{example}

\subsection{Universal solutions without null values}
\label{sec-univ-owlql}
We explain here how the $\NP$ upper bound for the non-emptiness problem for
universal solutions is obtained,
when ABoxes are not allowed to contain null values. 

Assume given a mapping $\M = (\Sigma_1$, $\Sigma_2$, $\T_{12})$ and a KB $\K_1 =
\tup{\T_1, \A_1}$ over $\Sigma_1$. To check whether $\K_1$ has a universal
solution under $\M$, we use the following non-deterministic polynomial-time
algorithm. First, we construct an ABox $\A_2$ over $\Sigma_2$ containing every
membership assertion $\alpha$ such that $\tup{\T_1 \cup \T_{12}, \A_1} \models
\alpha$, where $\alpha$ is of the form either $B(a)$ or $R(a,b)$, and
$a,b$ are constants mentioned in $\A_1$. Second, we guess an interpretation $\I$
of $\Sigma_1$ such that $\I \models \K_1$ and $(\I, \Uni_{\A_2}) \models
\T_{12}$, where $\Uni_{\A_2}$ is the interpretation of $\Sigma_2$ naturally
corresponding\footnote{Interpretation $\Uni_{\A_2}$ can be defined as the
  Herbrand model of $\A_2$ extended with fresh domain elements to satisfy
  assertions of the form $\SOMET{R}(a)$ in $\A_2$.} to $\A_2$. The correctness
of the algorithm is a consequence of the facts that:
\begin{itemize}
\item[a)] there exists a universal solution for $\A_1$ under $\M$ if and only if
  $\A_2$ is a solution for $\A_1$ under $\M$; and
\item[b)] $\A_2$ is a solution for $\A_1$ under $\M$ if and only if there exists a
  model $\I$ of $\K_1$ such that $(\I, \Uni_{\A_2}) \models \T_{12}$.
\end{itemize}
Moreover, the algorithm can be implemented in a non-deterministic
polynomial-time Turing machine given that:
\begin{inparaenum}[(\it i)]
\item $\A_2$ can be constructed in polynomial time;
\item if there exists a model $\I$ of $\K_1$ such that $(\I, \Uni_{\A_2})
  \models \T_{12}$, then there exists a model of $\K_1$ of polynomial-size
  satisfying this condition; and
\item it can be checked in polynomial time whether $\I \models \K_1$ and $(\I,
  \Uni_{\A_2}) \models \T_{12}$.
\end{inparaenum}

In addition, in this case, the membership problem can be reduced to the
non-emptiness problem, thus, we have that:
\begin{theorem}
  \label{the:non-emp-uni-sol-owlql-aboxes-np}
  The non-emptiness and membership problems for universal solutions 
  are in \NP.
\end{theorem}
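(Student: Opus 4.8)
The plan is to establish the correctness and the polynomial-time implementability of the non-deterministic procedure sketched above, and then to reduce membership to non-emptiness. Throughout, fix a mapping $\M=(\Sigma_1,\Sigma_2,\T_{12})$ and a source KB $\K_1=\tup{\T_1,\A_1}$.

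First I would handle the degenerate case. Using the ``gluing'' observation that $\Sat_\M(\Mod(\K_1))\neq\emptyset$ iff $\tup{\T_1\cup\T_{12},\A_1}$, viewed as a KB over $\Sigma_1\cup\Sigma_2$, is consistent, I test this consistency in \PTIME (satisfiability in \dlliter is polynomial); if it fails, an inconsistent target KB is a universal solution, so the answer is ``yes''. Otherwise I construct the candidate $\A_2$ consisting of every assertion $B(a)$, $P(a,b)$, $P^-(a,b)$ over $\Sigma_2$ with $a,b$ constants of $\A_1$ that is entailed by $\tup{\T_1\cup\T_{12},\A_1}$; since instance checking in \dlliter is in \PTIME and there are only polynomially many candidate assertions, this yields fact~(i). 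As $\A_2$ contains only positive basic-concept and role assertions, it is satisfiable as an ABox, so the interpretation $\Uni_{\A_2}$ (the Herbrand model of $\A_2$ with one fresh witness per $\SOMET R$-assertion) is well defined and is a ``most general'' model of $\tup{\emptyset,\A_2}$.

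Next I would prove fact~(a), that a universal solution exists iff $\tup{\emptyset,\A_2}$ is a solution. One direction rests on showing $\Sat_\M(\Mod(\K_1))\subseteq\Mod(\tup{\emptyset,\A_2})$ unconditionally: given $\J$ with $(\I,\J)\models\T_{12}$ for some $\I\models\K_1$, form the interpretation of $\Sigma_1\cup\Sigma_2$ obtained by taking the union of the two domains and letting each symbol keep its extension from the side it belongs to (the constants are interpreted identically by hypothesis); this is a model of $\tup{\T_1\cup\T_{12},\A_1}$, hence satisfies all of $\A_2$, hence so does $\J$. Consequently $\tup{\emptyset,\A_2}$ is a universal solution iff it is a solution. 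For the remaining direction one shows that any universal solution $\K_2'$ has exactly the models of $\tup{\emptyset,\A_2}$ --- intuitively because $\K_2'$ can only constrain the constants of $\A_1$, and only via consequences of $\tup{\T_1\cup\T_{12},\A_1}$, all already recorded in $\A_2$ --- so $\tup{\emptyset,\A_2}$ inherits solution-hood. Then, to decide whether $\tup{\emptyset,\A_2}$ is a solution (fact~(b)), I use that $\Uni_{\A_2}$ is canonical for $\tup{\emptyset,\A_2}$ to reduce the check to the existence of a source model $\I\models\K_1$ that is $\T_{12}$-compatible with $\A_2$: for each positive inclusion $B\ISA C\in\T_{12}$ the extension $B^\I$ must stay within the named, $\A_2$-forced part of $C$ (i.e.\ $(\I,\Uni_{\A_2})\models B\ISA C$), and for each disjointness inclusion $B\ISA\NOT C\in\T_{12}$ the extension $B^\I$ must be empty. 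The \NP bound then follows from a small-model argument (fact~(ii)): whenever such an $\I$ exists, one exists whose domain consists of the polynomially many constants of $\A_1$ together with at most one fresh witness per existential inclusion of $\T_1$, obtained by folding the rest of the \dlliter canonical model of $\K_1$ into these elements. Guessing such an $\I$ and verifying $\I\models\K_1$ and the compatibility condition is polynomial (fact~(iii)), so non-emptiness is in \NP.

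Finally, for membership of a given $\K_2$, run the non-emptiness test; if it reports that $\tup{\emptyset,\A_2}$ is a universal solution, then --- since any two universal solutions have the same set of models --- $\K_2$ is a universal solution iff $\K_2$ and $\tup{\emptyset,\A_2}$ are logically equivalent, which for \dlliter KBs reduces to polynomially many instance-checking and inclusion-entailment calls, all in \PTIME; hence membership is in \NP as well. I expect the main obstacle to be getting the $\T_{12}$-compatibility condition of fact~(b) exactly right, because of the interaction between disjointness assertions (in $\T_1$ and, more subtly, in $\T_{12}$) and the absence of the UNA --- exactly the phenomenon of Example~\ref{exa-disj-source} and Example~\ref{exa-null}: one must argue precisely how a disjointness inclusion in the mapping restricts the source side, and the small-model bound must take into account that role inclusions in $\T_{12}$ cannot usefully relate the anonymous elements of $\I$ to a generic target model.
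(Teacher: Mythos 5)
There is a genuine gap, and it is exactly at the point you flagged as the ``main obstacle'': your compatibility condition in fact~(b) is not correct once $\T_1$ contains disjointness assertions. Example~\ref{exa-disj-source} is a direct counterexample to your algorithm: with $\T_1=\{F\ISA\NOT G\}$, $\A_1=\{F(a),G(b)\}$, $\T_{12}=\{F\ISA F',G\ISA G'\}$, the KB $\tup{\T_1\cup\T_{12},\A_1}$ is consistent (so your degenerate pre-check does not fire), your $\A_2$ is $\{F'(a),G'(b)\}$, and the interpretation $\I$ with $F^\I=\{a\}$, $G^\I=\{b\}$ is a model of $\K_1$ satisfying $(\I,\Uni_{\A_2})\models\T_{12}$ and your emptiness condition for mapping disjointness (vacuously). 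Your algorithm therefore accepts, yet the paper shows no universal solution exists: the culprit is a model $\J$ of $\A_2$ that identifies $a^\J=b^\J$, which cannot be matched by any source model because of $F\ISA\NOT G$. Checking compatibility against the single interpretation $\Uni_{\A_2}$ (which keeps constants distinct) is structurally unable to detect this non-UNA failure; requiring the left-hand sides of $\T_{12}$-disjointness to be empty in the guessed $\I$ covers only one of the obstructions, not disjointness inside $\T_1$ between elements that are exported to the target.

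The paper's proof is organized precisely to avoid this. It first checks, in polynomial time and directly on the canonical model of $\tup{\T_1\cup\T_{12},\A_1}$, the ``$\Sigma_2$-positive'' conditions (Lemmas~\ref{lem:A2-uni-sol-iff-homo-equiv-and-sigma2-positive} and~\ref{lem:check-sigma2-pos-in-ptime}): no two elements forced into $\T_1$-disjoint concepts or roles may both be constants or have non-empty $\Sigma_2$-types, and left-hand sides of $\T_{12}$-disjointness must be empty in the canonical model. Only after this precheck does it run the guess-and-verify step, and it does so for the \emph{positive} fragments $\K_1^{\pos}$, $\M^{\pos}$, where your facts~(a) and~(b) and the small-model lemma are actually provable (via $\Sigma_2$-homomorphism equivalence with the canonical model, Lemmas~\ref{lem:A2-uni-sol-dlliter-pos-iff-homo-equiv}, \ref{lem:uni-sol-exists-iff-A2-sol} and~\ref{lem:a2-sol-exists-I-poly-size}); your $\A_2$ should likewise be the closure under $\T_1^{\pos}\cup\T_{12}^{\pos}$ rather than the full TBoxes. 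Your small-model folding and the reduction of membership to non-emptiness plus equivalence with $\A_2$ are reasonable and close in spirit to the paper, but they only become sound once the disjointness interaction is factored out in this way; as written, the non-emptiness test (and hence the membership test built on it) gives wrong answers.
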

The exact complexity of these problems remains open. In fact, we conjecture that
these problems are in \PTIME.

%
We conclude by showing that reasoning about universal $\UCQ$-solutions is harder
than reasoning about universal solutions, which can be explained by the fact
that TBoxes have bigger impact on the structure of universal \UCQ-solutions
rather than of universal solutions. In fact, by using a reduction from the
validity problem for quantified Boolean formulas, similar to a reduction in
\cite{KKLSWZ11}, we are able to prove the following:
\begin{theorem}
  \label{the:memb-uni-UCQ-sol-pspace-hard}
  The membership problem for universal \UCQ-solutions is \PSPACE-hard.
\end{theorem}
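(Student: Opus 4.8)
The plan is to reduce from the validity problem for quantified Boolean formulas, which is \PSPACE-complete, adapting the construction of \cite{KKLSWZ11}. Given a QBF $\Phi = Q_1 x_1 \cdots Q_n x_n\,\psi$ with $\psi$ in 3-CNF, I would produce in polynomial time a mapping $\M = (\Sigma_1, \Sigma_2, \T_{12})$, a source KB $\K_1 = \tup{\T_1,\A_1}$ over $\Sigma_1$ and a target KB $\K_2$ over $\Sigma_2$ such that $\K_2$ is a universal \UCQ-solution for $\K_1$ under $\M$ if and only if $\Phi$ is valid. (The polarity of the encoding is part of the design: I would fix it so that the hard implication falls on the side over which the construction gives the most control, making the other side hold unconditionally.)

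The heart of the construction is the canonical (universal) model of $\tup{\T_1\cup\T_{12},\A_1}$, over which, as usual in \dlliter, certain answers to \UCQs can be read off. Using inclusions of the form $B \ISA \SOMET R$ together with inverse roles and disjointness assertions, $\T_1$ grows from a single ABox fact an (infinite, finitely branching) tree whose first $n$ levels enumerate the $2^n$ assignments to $x_1,\dots,x_n$: at level $i$ a node has two successors, one for $x_i$ true and one for $x_i$ false, kept apart by a disjointness assertion, while auxiliary concept names propagate down each branch a record of which clauses of $\psi$ are already satisfied, so that a depth-$n$ node carries a distinguished ``accept'' marker exactly when its assignment satisfies $\psi$. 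The mapping $\T_{12}$ simply copies the relevant \dlliter concept and role names of this tree to fresh names of $\Sigma_2$, so that $\cert(q,\tup{\T_1\cup\T_{12},\A_1})$, for \UCQs $q$ over $\Sigma_2$, is exactly the set of tree patterns realized in the tree. The target $\K_2$ is then a \emph{small} (extended) KB whose own canonical model ``folds'' this exponential tree onto a constant-size structure with self-loops and a handful of nulls and which asserts, in essence, the single summary fact ``the root accepts''. Using the fact that $\K_2$ is a universal \UCQ-solution iff the canonical models of $\tup{\T_1\cup\T_{12},\A_1}$ and of $\K_2$ agree on all \UCQs --- which, for the forest-shaped canonical models of \dlliter, amounts to a mutual, constant-preserving homomorphism condition --- membership becomes a question about the existence of a homomorphism from $\K_2$'s folded model into the source tree, and such a homomorphism is forced to make, level by level, a branching decision at $\forall$-levels and a single choice at $\exists$-levels: precisely the alternating game that decides $\Phi$.

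The proof then splits along the defining equality of a universal \UCQ-solution. The \UCQ-solution half, $\cert(q,\tup{\T_1\cup\T_{12},\A_1}) \subseteq \cert(q,\K_2)$ for every $q$, I would engineer to hold unconditionally by making $\K_2$ at least as informative as the source under every \UCQ; this is where the absence of the UNA and the use of null values come into play, in the spirit of Examples~\ref{exa-disj-source} and~\ref{exa-null}. The remaining half, $\cert(q,\K_2) \subseteq \cert(q,\tup{\T_1\cup\T_{12},\A_1})$ for every $q$, is shown to be equivalent to the homomorphism condition above, and that condition is proved to hold iff $\Phi$ is valid by induction on the quantifier prefix: a $\forall x_i$ forces the certain answers (hence the target-to-source homomorphism) to account for \emph{both} children of a node, whereas an $\exists x_i$ exploits the freedom, in picking the \UCQ disjunct and the homomorphic image inside $\K_2$'s folded model, to follow \emph{one} child. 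The main obstacle I anticipate is faithfully encoding this quantifier \emph{alternation} within \dlliter: universal quantification is captured naturally by the branching that $B \ISA \SOMET R$ axioms impose on the source's canonical model, but existential quantification must be simulated purely through the expressive power of \UCQs (which can only assert existence of a pattern) together with the ``folded'', finite shape of $\K_2$'s canonical model --- getting the $\forall$- and $\exists$-levels right simultaneously, while keeping $\T_1$, $\T_{12}$, $\A_1$, $\K_2$ of polynomial size and inside \dlliter (no role hierarchies or number restrictions beyond $\SOMET R$, inverses and disjointness), is where most of the work goes. A secondary point to verify is that $\tup{\T_1\cup\T_{12},\A_1}$ remains consistent, so that the equality is not satisfied vacuously, and that the disjointness assertions separating sibling nodes do not inadvertently collapse the intended tree.
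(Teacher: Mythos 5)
Your high-level plan (reduce from QBF validity and decide membership via the characterization of universal \UCQ-solutions as finite $\Sigma_2$-homomorphic equivalence of the two canonical models) is the same route the paper takes, but the concrete encoding you sketch has gaps that the paper's construction is specifically designed to avoid. The central one is your clause-bookkeeping mechanism: in \dlliter the type of an anonymous element of the canonical model is determined solely by the role that generates it (everything true at $w_{[R]}$ follows from $\SOMET{R^-}$), so ``auxiliary concept names that propagate down each branch a record of which clauses are already satisfied'' cannot be realized — path-dependent accumulation would require exponentially many role/concept names, and an ``accept'' marker at depth $n$ is exactly the kind of computation \dlliter cannot perform in its anonymous part. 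The paper does no such bookkeeping: for every clause $C_j$ it hangs an \emph{infinite} $R_j$-path off each leaf of the quantifier tree, and on the other side a node that makes some literal of $C_j$ true gets a \emph{finite} chain $R_j^i,\dots,R_j^0$ ending in a two-cycle; the infinite path folds into that chain by a $\Sigma_2$-homomorphism iff some variable on the chosen branch satisfies $C_j$, so clause satisfaction is tested by path length and folding, not by markers.

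Two further points would make your reduction fail as stated. First, alternation: a source tree with two successors at \emph{every} level merely enumerates assignments and does not encode $\forall/\exists$. In the paper the part that must be embedded branches only at $\forall$-levels (roles $Q_i^0,Q_i^1$, labelled by distinct concepts $Z_i^0,Z_i^1$ in $\Sigma_2$) and has a single anonymous successor at $\exists$-levels, so the homomorphism is forced to cover both truth values at $\forall$ and is free to choose at $\exists$; the full binary assignment tree sits on the \emph{target} side, generated by a polynomial-size target TBox $\T_2$ from the single fact $A'(a)$. A constant-size ``folded'' target with self-loops and a few nulls would not carry this structure — it is the target TBox, not the absence of the UNA or null values, that does the work (and your polarity is inverted: the hard direction is embedding the source's infinite quantifier tree into the target's canonical model, the converse being immediate because the mapping also copies the assignment-tree roles to primed target names). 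Second, disjointness assertions cannot ``keep apart'' the two siblings for this purpose: certain answers to \UCQ{}s, and hence the homomorphism criterion, are insensitive to disjointness except through inconsistency, so siblings with identical positive types can be identified by a homomorphism regardless; the paper's reduction separates the two values by distinct concept names and uses no disjointness at all.
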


\subsection{Universal solutions with null values}
\label{sec-non-emp-uni-sol}
We start by considering the non-emptiness problem for universal solutions with
null values, that is, when extended ABoxes are allowed in universal
solutions. As our first result, similar to the reduction above, we show that
this problem is \PSPACE-hard, and identify 
the inclusion of inverse roles as one of the main sources of complexity.
%

To obtain an upper bound for this problem, we use
\emph{two-way alternating automata on infinite trees (2ATA)}, which are a
generalization of nondeterministic automata on infinite trees~\cite{Vard98}
well suited for handling inverse roles in \dlliter.
%
More precisely, given a KB $\K$, we first show that it is possible to construct
the following automata:
\begin{itemize}
\item[--] $\acan$ is a 2ATA that accepts trees corresponding to the canonical model
  of $\K$~\footnote{If $\K = \tup{\T,\A}$, then this model essentially
    corresponds to the chase of $\A$ with $\T$ (see~ \cite{KKLSWZ11} for a
    formal definition).} with nodes arbitrary labeled with a special symbol $G$;
\item[--] $\asol$ is a 2ATA that accepts a tree if its subtree labeled with $G$
  corresponds to a tree model $\I$ of $\K$ (that is, a model forming a tree on
  the labeled nulls); and
\item[--] $\agfin$ is a (one-way) non-deterministic automaton that accepts a tree if
  it has a finite prefix where each node is marked with $G$, and no other node
  in the tree is marked with $G$.
\end{itemize}
Then to verify whether a KB $\K_1 = \tup{\T_1, \A_1}$ has a universal solution
under a mapping $\M=(\Sigma_1, \Sigma_2, \T_{12})$, we solve the non-emptiness
problem for an automaton $\Bau$ defined as the product automaton of
$\pi_{\Gamma_{\K}}(\acan)$, $\pi_{\Gamma_\K}(\asol)$ and $\agfin$, where
$\K=\tup{\T_1 \cup \T_{12}, \A_1}$, 
$\pi_{\Gamma_\K}(\acan)$ is the projection of $\acan$ on a vocabulary
$\Gamma_\K$ not mentioning symbols from $\Sigma_1$, and likewise for
$\pi_{\Gamma_\K}(\asol)$. If the language accepted by $\Bau$ is empty, then
there is no universal solution for $\K_1$ under $\M$, otherwise a universal
solution (possibly of exponential size) exists, and we can compute it by
extracting the ABox encoded in 
some tree accepted by $\Bau$ . Summing up, we get:
%
%
\begin{theorem}
\label{the:non-emp-uni-sol-pspace-hard-and-exptime}
If extended ABoxes are allowed in universal solutions, then the non-emptiness
problem for universal solutions is \PSPACE-hard and in \EXPTIME.
\end{theorem}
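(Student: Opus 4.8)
The plan is to establish the two bounds separately, since they rely on quite different ideas.

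\paragraph{The \PSPACE lower bound.}
First I would design a reduction from a \PSPACE-complete problem, reusing the template of Theorem~\ref{the:memb-uni-UCQ-sol-pspace-hard} but now targeting \emph{non-emptiness} of universal solutions rather than membership. The natural candidate is again validity of quantified Boolean formulas (or, more conveniently for this setting, a tiling/corridor problem or the word problem of a polynomial-space Turing machine). The key observation to exploit is the hint in the statement: inverse roles in $\dlliter$ are the source of hardness. Concretely, I would set up a mapping $\M$ and a source KB $\K_1$ so that a universal solution exists if and only if the canonical model of $\tup{\T_1\cup\T_{12},\A_1}$ can be ``folded'' onto a \emph{finite} ABox over $\Sigma_2$; using existential concept inclusions $B\ISA\SOMET{R}$ together with inclusions $\SOMET{R^-}\ISA B'$ one builds a potentially infinite generating chain, and whether this chain can be collapsed to a finite extended ABox over the target encodes the alternating/space-bounded computation. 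I would encode QBF so that the universally quantified variables force branching in the chase that cannot be merged (because disjointness or inverse-role functionality-like effects keep the branches distinct), while a truthful assignment corresponds exactly to a way of consistently identifying nulls so that the resulting finite $\A_2$ is a (universal) solution. I would then invoke fact~(a)/(b) from Section~\ref{sec-univ-owlql} in the extended-ABox form: a universal solution exists iff the canonically-built extended ABox is a solution, iff a suitable model $\I$ of $\K_1$ exists with $(\I,\Uni_{\A_2})\models\T_{12}$; the reduction makes this last condition equivalent to validity of the QBF.

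\paragraph{The \EXPTIME upper bound.}
For the upper bound I would follow the automata-theoretic route sketched before the theorem. The steps, in order: (1) put $\K=\tup{\T_1\cup\T_{12},\A_1}$ in a normal form and fix an infinite-tree encoding in which a node carries the ``type'' of an element and the label $G$ marks which part of the tree is kept in the target; (2) construct $\acan$, a 2ATA of size polynomial in $\K$ that checks a tree encodes the canonical model of $\K$ with an arbitrary $G$-labeling (this uses standard $\dlliter$ chase automata, as in \cite{KKLSWZ11}, with the 2ATA's two-way capability handling inverse roles); (3) construct $\asol$, a 2ATA checking that the $G$-labeled subtree is a genuine tree model of $\K$ — again polynomial, essentially a tree-model checker for $\dlliter$ KBs; (4) construct $\agfin$, a tiny one-way nondeterministic tree automaton verifying that the $G$-marked region is a finite prefix; (5) project $\acan$ and $\asol$ onto the vocabulary $\Gamma_\K$ that hides $\Sigma_1$-symbols, which is where the $\Sigma_2$-solution lives, and form the product $\Bau$ of the three; (6) test $\Bau$ for non-emptiness. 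Correctness reduces to the fact that a tree accepted by $\Bau$ is exactly (an encoding of) a finite extended ABox $\A_2$ over $\Sigma_2$ that (i) is contained in the canonical model of $\K$ and (ii) every tree model of $\K$ restricted to $\Sigma_2$ realizes it — which by (a)/(b) is precisely the condition for $\A_2$ to be a universal solution. For the complexity: each component 2ATA has polynomially many states, so the product 2ATA has polynomially many states; non-emptiness of a 2ATA is decidable in exponential time (via the standard translation to one-way nondeterministic tree automata with an exponential state blow-up, then emptiness in polynomial time), giving the \EXPTIME bound. Extracting an actual universal solution, when one exists, amounts to reading off a regular (hence finitely presentable) accepting run, yielding the possibly-exponential-size $\A_2$.

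\paragraph{Main obstacle.}
The delicate part of the upper bound is verifying that the $G$-labeled finite subtree really captures ``universal solution'' rather than merely ``some solution'': one must argue that checking containment against \emph{all} tree models of $\K$ via $\asol$ suffices, i.e., that it is enough to quantify over tree-shaped models because $\dlliter$ has the (tree-shaped) canonical-model property and UCQ/solution conditions are preserved under the relevant homomorphisms — and that the absence of the UNA together with disjointness assertions (the subtlety flagged in Example~\ref{exa-disj-source}) does not break this. On the lower-bound side, the main obstacle is engineering the disjointness and inverse-role gadgets so that the only way to obtain a \emph{finite} target ABox is to make the node-identifications that encode a satisfying QBF assignment, while keeping the source KB itself consistent with $\T_{12}$ so that the reduction is not vacuous; getting the branching induced by universal quantifiers to resist merging, yet the chain induced by existential quantifiers to admit merging, is the crux. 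I expect the automaton non-emptiness analysis to otherwise be routine given \cite{Vard98,KKLSWZ11}.
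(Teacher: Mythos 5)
Your \EXPTIME half is essentially the paper's own argument: the same three automata $\acan$, $\asol$, $\agfin$, the projections onto $\Gamma_\K$, the product $\Bau$, and 2ATA emptiness in time exponential in the (polynomially many) states. Two remarks on how the paper actually closes it: the construction is run on the \emph{positive} fragment only, after a separate polynomial-time $\Sigma_2$-positivity check that disposes of disjointness (Lemmas~\ref{lem:A2-uni-sol-iff-homo-equiv-and-sigma2-positive} and~\ref{lem:check-sigma2-pos-in-ptime}); and you misassign the roles of the automata in the correctness argument. $\asol$ does not ``check containment against all tree models'' --- it existentially certifies that the $G$-labelled finite part extends to \emph{one} model of $\K$, which gives solution-hood, while universality comes from $\acan$: the $G$-part sits inside the canonical model, hence maps homomorphically into every admissible target (Proposition~\ref{prop:uni-sol-exists-iff-B-non-empty}). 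The ``obstacle'' you flag therefore dissolves once the quantifiers are placed correctly.

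The genuine gap is the \PSPACE lower bound. You give no reduction, and the two pivots you propose to rest it on do not hold. Facts (a)/(b) of Section~\ref{sec-univ-owlql} do \emph{not} carry over to extended ABoxes: with nulls there is no ``canonically-built'' finite candidate ABox --- the $\Sigma_2$-part of the chase of $\tup{\T_1\cup\T_{12},\A_1}$ may be infinite (in the paper's reduction it is infinite whether or not $\phi$ is valid), and if such a finite canonical candidate existed the problem would stay in \NP, contradicting the hardness you are trying to prove. The correct pivot, which you gesture at with ``folding'' but never state or prove, is Lemma~\ref{lem:uni-sol-exists-iff-chase-fin-part}: a universal solution exists iff $\Uni_{\tup{\T_1\cup\T_{12},\A_1}}$ is $\Sigma_2$-homomorphically embeddable into a \emph{finite subset of itself}. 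Your gadget mechanics are also off-track: \dlliter has no functionality, source-side disjointness and the absence of the UNA constrain $\Sigma_2$-homomorphisms into a positive finite part not at all (disjointness enters only through the $\Sigma_2$-positivity precondition, whose violation kills existence outright), and homomorphisms need not be injective, so ``branches that cannot be merged'' and ``consistently identifying nulls'' are not effects these constructs can produce. The paper's reduction (Lemma~\ref{lem:non-emp-uni-sol-pspace-hard}) is purely positive: one half of $\T_1$ generates below $A(a)$ the QBF quantifier tree with infinite clause chains via $\SOMET{R_j^-}\ISA\SOMET{R_j}$, the other half generates a finite verification tree; $\T_{12}$ maps both onto shared target symbols, and $\phi$ is valid iff the infinite half folds $\Sigma_2$-homomorphically into the finite half --- the assignment is encoded in where the existential branches are sent, and a clause chain can terminate in the finite part only where that clause is satisfied. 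Until a construction of this kind is supplied (together with a proof of the finite-self-embedding characterization it relies on), the hardness half of the theorem is not established.
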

%
%
%
%
%
Interestingly, the membership problem can be solved more efficiently in this
scenario, as now the candidate universal solutions are part of the input.
In the following theorem, we pinpoint the exact complexity of this problem.
\begin{theorem}
\label{the:memb-uni-sol-npcomplete}
If extended ABoxes are allowed in universal solutions, then the membership
problem for universal solutions is \NP-complete.
\end{theorem}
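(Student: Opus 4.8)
The plan is to prove Theorem~\ref{the:memb-uni-sol-npcomplete} in two parts: membership in \NP and \NP-hardness.

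For the \NP upper bound, suppose we are given $\M = (\Sigma_1, \Sigma_2, \T_{12})$ and extended KBs $\K_1 = \tup{\T_1,\A_1}$, $\K_2 = \tup{\T_2,\A_2}$, and we must decide whether $\Mod(\K_2) = \Sat_\M(\Mod(\K_1))$. I would first establish a small-model--style characterization of the failure of this equality, so that a non-deterministic polynomial-time algorithm can guess a succinct witness. The inclusion $\Sat_\M(\Mod(\K_1)) \subseteq \Mod(\K_2)$ (i.e., ``$\K_2$ is a solution'') can be handled along the lines already sketched in Section~\ref{sec-univ-owlql}: since $\K_2$ has an extended ABox, I would use its naturally corresponding interpretation $\Uni_{\A_2}$ (the Herbrand-style model, with fresh elements for the $\SOMET{R}$ witnesses, and with labeled nulls now playing the role of anonymous individuals), show that $\K_2$ is a solution iff there is a model $\I$ of $\K_1$ with $(\I,\Uni_{\A_2}) \models \T_{12}$, and that if such an $\I$ exists then one of polynomial size does (standard \dlliter filtration/type arguments). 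This gives an \NP check for one direction, and its complement — that $\K_2$ is \emph{not} a solution — is witnessed by a polynomial-size model of $\K_2$ violating $\T_{12}$ against every model of $\K_1$, which by the same type-bounding argument is again an \NP-checkable condition; but since we need the \emph{conjunction} with the other inclusion, I would rather phrase the whole membership test as: guess polynomial-size certificates for both inclusions holding, or detect in \NP that one fails. The direction $\Mod(\K_2) \supseteq \Sat_\M(\Mod(\K_1))$ is the interesting one: I would show that it fails iff there is a model $\I \models \K_1$ and a ``small'' reason why $\Uni_{\A_2}$-style canonical translation of $\I$ (or rather, some $\J \in \Sat_\M(\I)$) is not a model of $\K_2$ — concretely, a single assertion $\alpha \in \T_2$ together with a bounded fragment of $\I$ realizing its violation under the mapping. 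Because \dlliter has the bounded-model / polynomial-witness property for (non-)entailment of single inclusions and membership assertions, all these witnesses are of polynomial size and verifiable in polynomial time, yielding membership in \NP.

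For \NP-hardness, I would give a reduction from a standard \NP-complete problem — most naturally \textsc{3-Colorability} or \textsc{3-SAT} — encoding a graph (resp. formula) into a mapping $\M$ and KBs $\K_1,\K_2$ so that $\K_2$ is a universal solution for $\K_1$ iff the graph is 3-colorable (resp. the formula is satisfiable). The trick is to exploit the absence of the UNA together with disjointness assertions, exactly as illustrated in Example~\ref{exa-disj-source}: disjointness assertions in $\T_1$ force certain source constants to be interpreted distinctly, and the interplay with how constants are forced to collapse or stay apart in $\Sigma_2$-models of $\K_2$ lets one encode a consistency/coloring constraint whose satisfiability is \NP-hard. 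I would set up $\A_1$ with one individual per vertex, use three ``color'' concepts in $\Sigma_1$, push them through $\T_{12}$, and use disjointness plus edge-encoding membership assertions in $\K_2$ so that $\Mod(\K_2) \subseteq \Sat_\M(\Mod(\K_1))$ holds precisely when a proper 3-coloring exists (the universality/equality then being arranged to hold automatically or by a symmetric gadget).

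The main obstacle I anticipate is the $\Mod(\K_2) \supseteq \Sat_\M(\Mod(\K_1))$ direction of the upper bound: a priori this is a $\Pi_2$-looking statement (``for every model $\I$ of $\K_1$ and every $\J \in \Sat_\M(\I)$, $\J \models \K_2$''), so the real work is to collapse it to an \NP (or co-\NP, then combine) condition. I expect this to go through because $\Sat_\M(\Mod(\K_1))$ is generated, in a precise sense, by translating the canonical model of $\tup{\T_1\cup\T_{12},\A_1}$, so that checking $\K_2$ against that single canonical object — suitably finitized using \dlliter's polynomial-depth type arguments — suffices; making that reduction rigorous, and in particular arguing that disjointness assertions in $\T_2$ do not introduce a genuinely higher-complexity obstruction, is the delicate step. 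The hardness reduction, by contrast, I expect to be routine once the non-UNA-plus-disjointness gadget from Example~\ref{exa-disj-source} is scaled up, the only care needed being to ensure consistency of $\K_1$ with $\T_{12}$ so that the instance is non-degenerate.
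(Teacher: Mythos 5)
Your \NP upper bound follows essentially the paper's route: after polynomial-time preprocessing for the disjointness issues, certify the ``solution'' inclusion by guessing a polynomial-size source model $\I$ with $\I \models \K_1$ and $(\I,\V_{\A_2}) \models \T_{12}$, and certify the remaining inclusion against the canonical model of $\tup{\T_1 \cup \T_{12}, \A_1}$. But your description is muddled exactly at the step you yourself flag as delicate: the positive certificate for $\Sat_\M(\Mod(\K_1)) \subseteq \Mod(\K_2)$ is not a ``small reason for failure'' (that is a co-\NP witness of the complement) but a homomorphism of $\V_{\A_2}$ --- labeled nulls included --- into the infinite canonical model $\Uni_{\tup{\T_1 \cup \T_{12},\A_1}}$. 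To turn that into an \NP guess one must prove that the images of the nulls can always be chosen among anonymous elements whose generating paths have polynomially bounded length (the paper's bound is $2m$ with $m$ the size of $\T_1\cup\T_{12}\cup\A_2$); you assert a generic ``polynomial-witness property'' of \dlliter where this concrete finitization lemma is the actual work (cf.\ Lemma~\ref{lem:memb-uni-sol-np}). You also omit that, when the source is consistent, a candidate universal solution can be assumed to carry a trivial target TBox, and that disjointness is absorbed by the $\Sigma_2$-positivity check; these are needed to make ``check both inclusions'' a clean conjunction of \NP-verifiable conditions.

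The genuine gap is in the hardness half: you place the source of \NP-hardness in the wrong phenomenon. Your gadget scales up the non-UNA-plus-disjointness effect of Example~\ref{exa-disj-source}, encoding the graph with one \emph{constant} per vertex and color concepts in $\Sigma_1$, with no essential use of labeled nulls. But the paper's own machinery shows that all non-UNA/disjointness obstructions are captured by the $\Sigma_2$-positivity conditions, which are decidable in polynomial time (Lemma~\ref{lem:check-sigma2-pos-in-ptime}); once these hold, membership reduces to a two-way homomorphism test between $\V_{\A_2}$ and the canonical model, and if $\A_2$ contains only constants that test is polynomial because the homomorphism is fixed on constants (indeed the paper leaves the null-free case open and conjectures \PTIME). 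So a disjointness-based reduction that never uses nulls is at best unsubstantiated and very likely cannot work. The actual hardness (Lemma~\ref{lem:memb-uni-sol-nphard}) comes from the nulls: $\A_1$ is a triangle on constants $r,g,b$, $\T_1=\emptyset$, $\T_{12}=\{\Edge \ISA \Edge'\}$, and $\A_2$ consists of the translated triangle plus the input graph encoded on \emph{labeled nulls}; then $\A_2$ is a universal solution iff the nulls map homomorphically onto the triangle, i.e.\ iff the graph is $3$-colorable. Note also that this colorability sits in the $\Sat_\M(\Mod(\K_1)) \subseteq \Mod(\K_2)$ (universality) direction --- the canonically translated model must satisfy $\A_2$ --- whereas you propose to encode it in the ``solution'' inclusion $\Mod(\K_2) \subseteq \Sat_\M(\Mod(\K_1))$, which in this setting is the easy, trivially satisfied direction.
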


\section{Computing {\large $\UCQ$}-representations}
\label{sec-ucq-rep}

In Section~\ref{sec-univ}, we show that the complexity of the membership and
non-emptiness problems for universal solutions differ depending on whether
ABoxes or extended ABoxes are considered. On the other hand, we show in the
following proposition that the use of null values in ABoxes does not make any
difference in the case of $\UCQ$-representations. In this proposition, given a
mapping $\M$ and TBoxes $\T_1$, $\T_2$, we say that $\T_2$ is a
$\UCQ$-representation of $\T_1$ under $\M$ {\em considering extended ABoxes} if
$\T_1$, $\T_2$, $\M$ satisfy Equation~\eqref{eq-ucq-rep} in
Section~\ref{sec-sol}, but assuming that $\A_2$ is an extended ABox over
$\Sigma_2$ that is a $\UCQ$-solution for $\A_1$ under $\M$.
\begin{proposition}
  A TBox $\T_2$ is a $\UCQ$-representation of a TBox $\T_1$ under a mapping $\M$
  if and only if $\T_2$ is a $\UCQ$-representation of $\T_1$ under $\M$
  considering extended ABoxes.
\end{proposition}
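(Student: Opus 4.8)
The plan is to prove that, for every $\UCQ$ $q$ over $\Sigma_2$ and every ABox $\A_1$ over $\Sigma_1$, the right-hand side of~\eqref{eq-ucq-rep} takes the same value whether $\A_2$ ranges over the set $\mathcal{S}$ of standard $\UCQ$-solutions for $\A_1$ under $\M$ or over the set $\mathcal{E}$ of extended ones; the two notions of $\UCQ$-representation then become literally the same statement, so the biconditional follows. Since $\mathcal{S} \subseteq \mathcal{E}$, the intersection over $\mathcal{E}$ is contained in the intersection over $\mathcal{S}$ for free, so only the reverse inclusion needs work. Moreover, if $\tup{\T_{12},\A_1}$ is inconsistent then, as observed after Example~\ref{exa-ucq-rep}, $\mathcal{S} = \mathcal{E} = \emptyset$ and both intersections equal the set of all tuples; so I may assume $\tup{\T_{12},\A_1}$ is consistent.

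The idea is to pin both intersections to a single distinguished solution. First I would construct a \emph{universal} standard $\UCQ$-solution $\A_2^\star$ for $\A_1$ under $\M$, i.e.\ a standard ABox with $\cert(q', \tup{\emptyset,\A_2^\star}) = \cert(q', \tup{\T_{12},\A_1})$ for every $\UCQ$ $q'$ over $\Sigma_2$. Since no left-hand side of an inclusion in $\T_{12}$ mentions a $\Sigma_2$-symbol, the chase of $\tup{\T_{12},\A_1}$ is finite: it consists of $\A_1$ together with one round of $\Sigma_2$-atoms, and every anonymous element it introduces is the unique $R'$-successor (or $R'$-predecessor) witnessing an assertion of the form $(\exists R')(a)$. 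Replacing each such witness by the membership assertion $(\exists R')(a)$ (which is admissible in a \dlliter ABox) produces a standard ABox $\A_2^\star$ whose canonical model $\Uni_{\A_2^\star}$ is homomorphically equivalent over $\Sigma_2$ to that chase, so $\A_2^\star$ is a universal $\UCQ$-solution; in particular $\A_2^\star \in \mathcal{S} \subseteq \mathcal{E}$.

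It then remains to show that $\cert(q, \tup{\T_2,\A_2^\star}) \subseteq \cert(q, \tup{\T_2,\A_2})$ for every $\A_2 \in \mathcal{E}$. Being a $\UCQ$-solution, $\A_2$ satisfies $\cert(q', \tup{\emptyset,\A_2^\star}) = \cert(q', \tup{\T_{12},\A_1}) \subseteq \cert(q', \tup{\emptyset,\A_2})$ for every $\UCQ$ $q'$ over $\Sigma_2$; applying this to the conjunctive query obtained by freezing $\Uni_{\A_2^\star}$ (constants kept, anonymous elements turned into existentially quantified variables) yields a homomorphism from $\Uni_{\A_2^\star}$ into the canonical model of $\tup{\emptyset,\A_2}$ that is the identity on constants. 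By the standard monotonicity of the \dlliter chase under constant-preserving homomorphisms~\cite{CDLLR07,KKLSWZ11}, this extends to a constant-preserving homomorphism between the canonical models of $\tup{\T_2,\A_2^\star}$ and $\tup{\T_2,\A_2}$; since $\UCQ$ answers over canonical models are preserved along such homomorphisms, the inclusion follows (and if $\tup{\T_2,\A_2^\star}$ is inconsistent, the homomorphism transports the disjointness clash, so $\tup{\T_2,\A_2}$ is inconsistent as well and both certain-answer sets are all tuples). Hence both intersections in~\eqref{eq-ucq-rep} equal $\cert(q, \tup{\T_2,\A_2^\star})$, which proves the proposition. I expect the construction in the second paragraph to be the main obstacle: one must verify that the $(\exists R')(a)$-encoding of the anonymous part of the chase of $\tup{\T_{12},\A_1}$ faithfully reproduces the $\Sigma_2$-$\UCQ$ certain answers, and that the argument of the third paragraph remains correct once disjointness assertions in $\T_2$ are in play.
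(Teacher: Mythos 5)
Your proposal is correct, but it follows a genuinely different route from the paper's own proof. You pin both intersections in~\eqref{eq-ucq-rep} to a single distinguished standard solution $\A_2^\star$, obtained by flattening the finite, one-round chase of $\tup{\T_{12},\A_1}$ into assertions of the form $\exists R'(a)$, and then push $\cert(q,\tup{\T_2,\A_2^\star})$ into every extended solution via a homomorphism that is lifted through the $\T_2$-chase; this is the same kind of machinery the paper deploys later for \UCQ-representability (its lemma lifting $\Sigma_2$-homomorphisms from $\Uni_{\tup{\T_{12},\A}}$ into an ABox to the corresponding $\T_2$-chases, and the canonical solution it builds there with fresh constants in place of your $\exists R'(a)$ assertions). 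The paper's proof of this particular proposition is lighter and purely model-theoretic: it fixes a tuple in the intersection over standard solutions, takes an arbitrary extended solution $\A'$, replaces its labelled nulls by fresh constants to obtain a standard solution $\A^*$, and shows that any model $\I$ of $\tup{\T_2,\A'}$ becomes, after reinterpreting the fresh constants via the substitution witnessing $\I\models\A'$, a model $\I^*$ of $\tup{\T_2,\A^*}$; hence $\I^*\models q[\vec{a}]$, and $\I\models q[\vec{a}]$ follows because $q$ and $\vec{a}$ do not mention the fresh constants. So the paper needs no canonical models, no finiteness of the $\T_{12}$-chase and no consistency analysis, whereas your argument buys a stronger byproduct: a single, computable standard ABox realizing the intersection for every query. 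One simplification you may want: the lifting step can be bypassed, since every assertion in $\A_2^\star$ is a certain consequence of $\tup{\T_{12},\A_1}$ and therefore holds in every model of any \UCQ-solution $\A_2$; thus every model of $\tup{\T_2,\A_2}$ is already a model of $\tup{\T_2,\A_2^\star}$, which yields $\cert(q,\tup{\T_2,\A_2^\star})\subseteq\cert(q,\tup{\T_2,\A_2})$ directly, handles the inconsistent case for free, and avoids appealing to canonical models of extended KBs, which the paper never formally defines.
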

Thus, from now on we study the membership and non-emptiness problems for
$\UCQ$-representations assuming that ABoxes can contain null values.

We start by considering the membership problem for $\UCQ$-representations. In
this case, one can immediately notice some similarities between this task and
the membership problem for universal $\UCQ$-solutions, which was shown to be
$\PSPACE$-hard in Theorem~\ref{the:memb-uni-UCQ-sol-pspace-hard}.
However, the universal quantification over ABoxes in the definition of the
notion of $\UCQ$-representation makes the latter problem computationally
simpler,
which is illustrated by the following example.
\begin{example}
  \label{ex:repres-pieces}
  Assume that $\M = (\Sigma_1, \Sigma_2, \T_{12})$, where $\Sigma_1= \{
  \concept{F}, \role{S_1}, \role{S_2}, \role{T_1}, \role{T_2}\}$, $\Sigma_2 = \{
  \concept{F'}, \role{S'}, \role{T'}, \concept{G'}\}$ and $\T_{12} = \{ F \ISA
  F', S_1 \ISA S'$, $S_2 \ISA S', T_1 \ISA T', T_2 \ISA T', \SOMET{T_1^-} \ISA
  G'\}$. Moreover, assume that $\T_1 = \{F \ISA \SOMET{S_1}, F \ISA
    \SOMET{S_2}, \SOMET{S_1^-} \ISA \SOMET{T_1}, \SOMET{S_2^-} \ISA
    \SOMET{T_2}\}$ and $\T_2 = \{ F' \ISA \SOMET{S'}, \SOMET{{S'}^-} \ISA
  \SOMET{T'}, \SOMET{{T'}^-} \ISA G'\}$.
  If we were to verify whether $\tup{\T_2, \{F'(a)\}}$ is a universal
  $\UCQ$-solution for $\tup{\T_1, \{F(a)\}}$ under $\M$ (which it is in this
  case),
  then we would first need to construct the {\em path} $\pi= \tup{F'(a),
    S'(a,n), T'(n,m), G'(m)}$ formed by the inclusions in $\T_2$, where $n,m$
  are fresh null values, and then we would need to explore the translations
  according to $\M$ of all paths formed by the inclusions in $\T_1$ to find one
  that matches $\pi$.

  On the other hand, to verify whether $\T_2$ is a $\UCQ$-representation of
  $\T_1$ under $\M$, one does not need to execute any ``backtracking'', as it is
  sufficient to consider independently a polynomial number of pieces $\C$ taken
  from the paths formed by the inclusions in $\T_1$, each of them of polynomial
  size, and then checking whether the translation $\C'$ of $\C$ according to
  $\M$ matches with the paths formed from $\C'$ by the inclusions in $\T_2$.
  If any of these pieces does not satisfy this condition, then it can be
  transformed into a witness that Equation~\eqref{eq-ucq-rep} is not satisfied,
  showing that $\T_2$ is not a $\UCQ$-representation of $\T_1$ under $\M$ (as we
  have a universal quantification over the ABoxes over $\Sigma_1$ in the
  definition of $\UCQ$-representations).
  In fact, one of the pieces considered in this case is $\C = \tup{T_2(n,m)}$,
  where $n$, $m$ are null values, which does not satisfy the previous condition
  as the translation $\C'$ of $\C$ according to $\M$ is $\tup{T'(n,m)}$, and
  this does not match with the path $\tup{T'(n,m), G'(m)}$ formed from $\C'$ by
  the inclusions in $\T_2$. This particular case is transformed into an ABox
  $\A_1=\{T_2(b, c)\}$ and a query $q = T'(b,c) \wedge G'(c)$, where $b$, $c$
  are fresh constants, for which we have that Equation \eqref{eq-ucq-rep} is not
  satisfied.
\end{example}

Notice that disjointness assertions in the mapping may cause $\tup{\T_1 \cup
  \T_{12},\A_1}$ to become inconsistent for some source ABoxes $\A_1$ (which
will make all possible tuples to be in the answer to every query), therefore
additional conditions have to be imposed on $\T_2$. To give more intuition about
how the membership problem for $\UCQ$-representations is solved, we give an
example showing how one can deal with some of these inconsistency issues.
\begin{example}
  Assume that $\M = (\Sigma_1, \Sigma_2, \T_{12})$, where $\Sigma_1 = \{
  \concept{F}, \concept{G}, \concept{H} \}$, $\Sigma_2 = \{ \concept{F'},
  \concept{G'}, \concept{H'}\}$ and $\T_{12}= \{F \ISA F', G \ISA G', H \ISA
  H'\}$. Moreover, assume that $\T_1 = \{ F \ISA G\}$ and $\T_2 =\{F' \ISA
  G'\}$. In this case, it is clear that $\T_2$ is a $\UCQ$-representation of
  $\T_1$ under $\M$. However, if we add inclusion $H \ISA \neg G'$ to $\T_{12}$,
  then $\T_2$ is no longer a $\UCQ$-representation of $\T_1$ under $\M$. To see
  why this is the case,
  consider an ABox $\A_1=\{ F(a), H(a)\}$, which is consistent with $\T_1$, and
  a query $q=F'(b)$, where $b$ is a fresh constant. Then we have that $\cert(q,
  \tup{\T_1 \cup \T_{12}, \A_1}) =\{()\}$ as KB $\tup{\T_1 \cup \T_{12}, \A_1}$
  is inconsistent, while $\cert(q,\tup{\T_2, \A_2}) =\emptyset$ for
  $\UCQ$-solution $\A_2 = \{F'(a),H'(a)\}$ for $\A_1$ under $\M$. Thus, we
  conclude that Equation~\eqref{eq-ucq-rep} is violated in this case.
\end{example}
One can deal with the issue raised in the previous example by checking that on
every pair $(B,B')$ of $\T_1$-consistent basic concepts over
$\Sigma_1$,\footnote{A pair $(B, B)'$ is $\T$-consistent for a TBox $\T$, if the
  KB $\tup{\T, \{B(a), B'(a)\}}$ is consistent, where $a$ is an arbitrary
  constant.} it holds that:
%
$(B,B')$ is $(\T_1 \cup \T_{12})$-consistent if and only if $(B,B')$ is
$(\T_{12} \cup \T_{2})$-consistent,
%
and likewise for every pair of basic roles over $\Sigma_1$. This condition
guarantees that for every ABox $\A_1$ over $\Sigma_1$ that is consistent with
$\T_1$, it holds that: $\tup{\T_1 \cup \T_{12}, \A_1}$ is consistent if and only
if there exists an extended ABox $\A_2$ over $\Sigma_2$ such that $\A_2$ is a
$\UCQ$-solution for $\A_1$ under $\M$ and $\tup{\T_2, \A_2}$ is
consistent. Thus, the previous condition ensures that
the sets on the left- and right-hand side of Equation~\eqref{eq-ucq-rep}
coincide whenever the intersection on either of these sides is taken over an
empty set.

The following theorem, which requires of a lengthy and non-trivial proof, shows
that there exists an efficient algorithm for the membership problem for
$\UCQ$-representations that can deal with all the aforementioned issues.
%
\begin{theorem}\label{eq:memb-repr-nl}
  The membership problem for $\UCQ$-representations is
  \NLOGSPACE-complete.
\end{theorem}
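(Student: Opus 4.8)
The plan is to prove both the upper and the lower bound through a single purely combinatorial characterization of when $\T_2$ is a $\UCQ$-representation of $\T_1$ under $\M$, and then to observe that verifying this characterization is both doable in \NLOGSPACE and \NLOGSPACE-hard. Concretely, I would show that $\T_2$ is a $\UCQ$-representation of $\T_1$ under $\M$ if and only if a conjunction of polynomially many \emph{local} conditions holds, each of them referring only to $\T_1$, $\T_2$, $\T_{12}$, $\T_1 \cup \T_{12}$ and $\T_2 \cup \T_{12}$ over a signature of size polynomial in the input.

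These local conditions come in two flavours. The first is \emph{consistency preservation}: for every pair $(B,B')$ of $\T_1$-consistent basic concepts over $\Sigma_1$, and likewise for every pair of basic roles over $\Sigma_1$, the pair $(B,B')$ is $(\T_1 \cup \T_{12})$-consistent if and only if it is $(\T_{12}\cup\T_2)$-consistent; as explained in the discussion preceding the theorem, this is exactly what makes the two sides of \eqref{eq-ucq-rep} agree on those ABoxes $\A_1$ for which one of the sides degenerates to an intersection over the empty family. The second is \emph{piece matching}: the canonical model of $\tup{\T_1 \cup \T_{12},\A_1}$ restricted to $\Sigma_2$ must be, as far as $\UCQ$ answering is concerned, homomorphically covered by the structure obtained from the canonical $\UCQ$-solution of $\A_1$ by chasing with $\T_2$. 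I would reduce this global requirement to checking, for each of the polynomially many polynomial-size ``pieces'' $\C$ of $\T_1$-chase paths (as in Example~\ref{ex:repres-pieces}), that the $\M$-translation $\C'$ of $\C$ is homomorphically covered by the structure obtained from $\C'$ by chasing with $\T_2$ — which is in turn an inclusion-entailment question about $\T_2$. The delicate direction here is that these finitely many bounded checks already suffice: a witness $(\A_1,q)$ violating \eqref{eq-ucq-rep}, which a priori involves an arbitrarily large ABox and an arbitrary \UCQ, must be traced back through the chase of $\tup{\T_1 \cup \T_{12},\A_1}$ to a bounded offending region and thereby converted into a violation localized to one of the pieces (and into the small ABox / small \CQ witnesses exhibited in Example~\ref{ex:repres-pieces}), the consistency-preservation conditions handling the inconsistency cases.

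For the \emph{upper bound}, given the characterization the algorithm enumerates — using a logarithmic counter — the polynomially many pairs of basic concepts/roles and the polynomially many pieces, and for each runs the corresponding sub-check. Each sub-check is either a consistency test or an inclusion-entailment test for a $\dlliter$ TBox (with disjointness assertions) over a polynomial-size signature; both are in \NLOGSPACE, since they reduce to reachability, respectively non-reachability, in a polynomial-size dependency graph built from the inclusions, and \NLOGSPACE is closed under complement by Immerman--Szelepcs\'enyi. As \NLOGSPACE is closed under the resulting polynomially long sequential conjunction (reusing the work tape, carrying only the counter), the whole procedure runs in nondeterministic logarithmic space. For the \emph{lower bound} I would reduce from directed $s$-$t$ reachability, equivalently from its complement since $\NLOGSPACE=\coNLOGSPACE$: using the standard encoding $\T_1=\{A_u \ISA A_v \mid (u,v)\in E\}$, which makes $\T_1 \models A_s \ISA A_t$ equivalent to $t$ being reachable from $s$, I would then choose $\Sigma_2$, $\T_{12}$ and $\T_2$ so that, of all the local conditions above, only one is non-trivial and it fires precisely according to whether this inclusion is entailed — e.g.\ a single disjointness-based consistency condition that holds on the target side but not on the source side exactly when $A_s \ISA A_t$ is entailed. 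Combining the two bounds gives \NLOGSPACE-completeness.

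The main obstacle, and the reason the proof is lengthy, is establishing the local characterization — in particular the soundness claim that a violation of \eqref{eq-ucq-rep} is always witnessed by one of the polynomially many bounded pieces. This demands a careful analysis of the shapes of the chases of $\tup{\T_1\cup\T_{12},\A_1}$ and of $\tup{\T_2,\A_2}$ and of how role merging in $\T_{12}$ fragments path structure, together with the interplay between disjointness assertions and the universal quantification over $\UCQ$-solutions $\A_2$. Everything else — the \NLOGSPACE bookkeeping on top of the characterization and the hardness gadget — is routine by comparison.
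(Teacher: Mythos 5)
Your high-level plan has the same skeleton as the paper's proof: a purely syntactic/local characterization of \UCQ-representability (consistency-preservation for pairs of basic concepts/roles plus matching conditions on canonical models), an \NLOGSPACE{} verification of each local condition via reachability-type tests, and hardness by encoding directed reachability. However, as a proof attempt there is a genuine gap: the characterization, which is the entire technical content of the paper's argument, is only asserted. The paper first reduces Equation~\eqref{eq-ucq-rep} to $\Sigma_2$-query equivalence of $\tup{\T_1\cup\T_{12},\A_1}$ and $\tup{\T_2\cup\T_{12},\A_1}$ for all $\T_1$-consistent $\A_1$ (Lemma~\ref{lem:srkb-query-equiv-tgkb}, using Proposition~\ref{prop:srkb-ans-iff-trgtkb-ans-all-sol} to eliminate the intersection over all \UCQ-solutions), and then proves the local characterization (Lemma~\ref{lem:t2-is-repr-iff-conds}) via two nontrivial homomorphism-construction lemmas (Lemmas~\ref{lem:c1} and~\ref{lem:c2}) that lift the single-concept conditions to arbitrary ABoxes; you explicitly defer exactly this step (``the main obstacle''), so the theorem is not established by your text.

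Moreover, the one matching condition you do formulate is one-directional and, at the piece level, mis-stated. Requiring that the $\Sigma_2$-part of the canonical model of $\tup{\T_1\cup\T_{12},\A_1}$ be homomorphically covered by the $\T_2$-chase of the canonical solution only yields the inclusion $\cert(q,\tup{\T_1\cup\T_{12},\A_1})\subseteq\cert(q,\tup{\T_2,\A_2})$; equality in Equation~\eqref{eq-ucq-rep} also needs the converse embedding of the $\T_2\cup\T_{12}$-chase back into the source chase (the second generation condition of Lemma~\ref{lem:t2-is-repr-iff-conds}, established by Lemma~\ref{lem:c2}), and the failure exhibited in Example~\ref{ex:repres-pieces} is precisely of that converse kind: $\T_2$ derives $G'$ where the source does not. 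Your piece-level restatement---that the translation $\C'$ of a piece $\C$ is ``homomorphically covered by the structure obtained from $\C'$ by chasing with $\T_2$''---is vacuously true (a structure always maps into its own chase), so as written it cannot detect such violations. You would need to state both directions of type/role-type containment between the canonical models of $\tup{\T_1\cup\T_{12},\{B(o)\}}$ and $\tup{\T_2\cup\T_{12},\{B(o)\}}$, and also handle the interaction of disjointness with inconsistent intermediate ABoxes, which is where the paper's consistency-preservation conditions and Lemma~\ref{lem:kr-iff-b-cons} do real work. The \NLOGSPACE{} bookkeeping and your hardness sketch are plausible (the paper's own reduction is direct, from reachability via the inclusion-transfer condition, with no need to pass through the complement), but without the two-directional characterization the argument does not go through.
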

%

We conclude by pointing out that the non-emptiness problem for
$\UCQ$-representations can also be solved efficiently. We give an intuition of
how this can be done in
the following example, where we say that $\T_1$ is \emph{\UCQ-representable
  under} $\M$ if there exists a \UCQ-representation $\T_2$ of $\T_1$ under $\M$.
\begin{example}
  \label{ex:repres-non-emp}
  Assume that $\M = (\Sigma_1, \Sigma_2, \T_{12})$, where $\Sigma_1 = \{
  \concept{F}, \concept{G}, \concept{H} \}$, $\Sigma_2 = \{ \concept{F'},
  \concept{G'}\}$ and $\T_{12}= \{F \ISA F', G \ISA G', H \ISA F'\}$. Moreover,
  assume that $\T_1 = \{ F \ISA G\}$. Then it follows that $\T_1 \cup \T_{12}
  \models F \ISA G'$, and in order for $\T_1$ to be $\UCQ$-representable under
  $\M$, the following condition must be satisfied:
  \begin{itemize}
  \item[$(\star)$]there exists a concept $B'$ over $\Sigma_2$ s.t.\
    $\T_{12} \models F \ISA B'$, and for each concept $B$ over $\Sigma_1$ with
    $\T_1 \cup \T_{12} \models B \ISA B'$ it follows that $\T_1 \cup \T_{12}
    \models B \ISA G'$.
  \end{itemize}
  The idea is then to add the inclusion $B' \ISA G'$ to a \UCQ-representation
  $\T_2$ so that $\T_{12} \cup \T_{2} \models F \ISA G'$ as well. In our case,
  concept $F'$ satisfies the condition $\T_{12} \models F \ISA F'$, but it does
  not satisfy the second requirement as $\T_1 \cup \T_{12} \models H \ISA F'$
  and $\T_1 \cup \T_{12}\not\models H\ISA G'$. In fact, $F' \ISA G'$ cannot be
  added to $\T_2$ as it would result in $\T_{12} \cup \T_2 \models H \ISA G'$,
  hence in Equation~\eqref{eq-ucq-rep}, the inclusion from right to left would
  be violated. There is no way to reflect the inclusion $F \ISA G'$ in the
  target, so in this case \mbox{$\T_1$ is not $\UCQ$-representable under $\M$}.
\end{example}
The proof of the following result requires of some
involved extensions of the techniques used to prove Theorem~\ref{eq:memb-repr-nl}.
\begin{theorem}\label{eq:ne-repr-nl}
  The non-emptiness problem for $\UCQ$-representations is \NLOGSPACE-complete.
\end{theorem}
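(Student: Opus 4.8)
The strategy is to show that the non-emptiness problem for \UCQ-representations reduces, via logarithmic-space computations, to a collection of reachability/consistency checks over the closure of the involved TBoxes, each of which is in \NLOGSPACE; and conversely to obtain \NLOGSPACE-hardness by a routine reduction from (undirected) reachability, exploiting the fact that TBox entailment chains in \dlliter already encode reachability. For the upper bound, the core idea is to design a \emph{canonical candidate} $\T_2^\star$: whenever a \UCQ-representation exists at all, a syntactically canonical one can be constructed whose inclusions are drawn from a polynomial set of "types" over $\Sigma_2$ (basic concepts and roles built from $\Sigma_2$, together with their entailed consequences under $\T_{12}$). Concretely, for every pair of basic concepts/roles $E'$, $E''$ over $\Sigma_2$, one puts $E' \ISA E''$ (resp.\ a disjointness assertion) into $\T_2^\star$ exactly when a condition of the form $(\star)$ from Example~\ref{ex:repres-non-emp} holds: there is a concept/role $B$ over $\Sigma_1$ with $\T_{12} \models B \ISA E'$, and for \emph{every} such $B$ we have $\T_1 \cup \T_{12} \models B \ISA E''$. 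Then $\T_1$ is \UCQ-representable under $\M$ if and only if $\T_2^\star$ is itself a \UCQ-representation of $\T_1$ under $\M$, which by Theorem~\ref{eq:memb-repr-nl} can be checked in \NLOGSPACE --- \emph{provided} $\T_2^\star$ can be produced, or rather queried bit-by-bit, in logarithmic space.

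The key steps, in order, are: (1) prove the canonicity lemma: if $\T_2$ is any \UCQ-representation, then $\T_2^\star$ (defined as above) is also one. The $\supseteq$ direction of Equation~\eqref{eq-ucq-rep} for $\T_2^\star$ follows because every inclusion placed in $\T_2^\star$ is "justified" by a genuine source entailment that $\T_2$ must already capture on the relevant solutions; the $\subseteq$ direction requires showing $\T_2^\star$ is not too strong, i.e.\ every consequence of $\tup{\T_2^\star, \A_2}$ on a \UCQ-solution $\A_2$ is already a consequence of $\tup{\T_1 \cup \T_{12}, \A_1}$ --- this is where the "pieces $\C$" analysis of Example~\ref{ex:repres-pieces} and the consistency-preservation condition on $\T_1$-consistent pairs $(B,B')$ must be threaded through. (2) Show that membership of an inclusion in $\T_2^\star$ is decidable in \NLOGSPACE: this amounts to \dlliter TBox reasoning, which reduces to reachability in a graph whose nodes are basic concepts/roles and whose edges come from (the positive and negative closures of) $\T_1 \cup \T_{12}$; disjointness entailment reduces to detecting a path into a "$\NOT$" node, and the universal quantification "for every such $B$" is a $\forall$-reachability check, still in \NLOGSPACE by closure under complement. (3) Run the \NLOGSPACE membership algorithm of Theorem~\ref{eq:memb-repr-nl} on input $(\M, \T_1, \T_2^\star)$, answering each oracle query about a bit of $\T_2^\star$ by an inner \NLOGSPACE computation from step (2); composition of \NLOGSPACE computations stays in \NLOGSPACE, giving the upper bound. (4) For the lower bound, reduce directed reachability (or its complement, \NLOGSPACE being closed under complement): encode a graph's vertices as concept names, edges as inclusions $A_u \ISA A_v$ in $\T_1$, mirror the source signature identically into $\Sigma_2$ via $\T_{12}$, and arrange a small gadget so that a \UCQ-representation exists iff the target vertex is \emph{not} reachable from the source, in the spirit of Example~\ref{ex:repres-non-emp} where an unreflectable inclusion $F \ISA G'$ blocks representability.

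The main obstacle I expect is step (1), the canonicity lemma, and in particular the $\subseteq$ direction together with the correct handling of disjointness. One must verify that adding \emph{all} $(\star)$-justified inclusions to $\T_2^\star$ never over-commits the target: it is conceivable that two individually safe inclusions interact (via a chain through $\Sigma_2$) to entail something in the target that has no source counterpart, which is exactly the failure mode flagged at the end of Example~\ref{ex:repres-non-emp}. Ruling this out requires showing that the $(\star)$-condition is "transitively closed" in the right sense --- i.e.\ if $E' \ISA E''$ and $E'' \ISA E'''$ are both $(\star)$-justified then so is $E' \ISA E'''$, and similarly for disjointness propagation --- and then arguing that the resulting $\T_2^\star$ exactly reflects the $\Sigma_2$-shadow of the $\T_1\cup\T_{12}$-closure. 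Getting the interaction between the positive closure, the disjointness (negative) closure, and the $\UCQ$-solution structure precisely right, while keeping every subroutine inside \NLOGSPACE, is the technically delicate heart of the argument; this is presumably what the paper means by "involved extensions of the techniques used to prove Theorem~\ref{eq:memb-repr-nl}."
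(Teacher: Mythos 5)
Your overall architecture (build a canonical candidate target TBox, then decide non-emptiness by running the \NLOGSPACE membership test of Theorem~\ref{eq:memb-repr-nl} on it, computing the candidate's axioms bit-by-bit in logarithmic space; hardness via a reachability reduction) is reasonable, and the complexity bookkeeping in steps (2)--(3) would indeed go through. But the proof has a genuine gap exactly where you yourself locate it: the canonicity lemma of step (1) is not proved, and the candidate $\T_2^\star$ as you define it is not adequate. Your $(\star)$-justification only certifies an inclusion $E' \ISA E''$ by quantifying over \emph{basic concepts and roles of $\Sigma_1$} entailed below $E'$. This ignores the anonymous part of the canonical models: a target axiom also fires at existentially generated elements of the chase, and conversely the target TBox must be able to \emph{generate} anonymous paths matching the $\Sigma_2$-shadow of the source chase. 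The paper's characterization (Lemma~\ref{lem:representab-conds}) makes this precise through the second clause of ``closed under inclusion'' (a condition on witnesses $o\,w_{[Q]}$ in $\Uni_{\tup{\T_1\cup\T_{12},\{B(o)\}}}$, not on concepts of $\Sigma_1$) and through the ``generating pass'' condition (iii), which is what licenses chains of target existential axioms such as $\SOMET{{S'}^-}\ISA\SOMET{T'}$ in Example~\ref{ex:repres-pieces}. Equally missing is the treatment of disjointness: when an ABox is consistent with $\T_1$ but inconsistent with $\T_1\cup\T_{12}$, a representation must force inconsistency on the target side, and the conditions under which the needed disjointness (or inclusion-into-disjoint) axioms are both available and safe form the elaborate case analysis (iv)--(v) of Lemma~\ref{lem:representab-conds}; your proposal only says disjointness is added ``when a condition of the form $(\star)$ holds'' without specifying it, so neither direction of your canonicity claim is substantiated for the inconsistency-propagation part. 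Since this lemma is the entire mathematical content of the upper bound, the proposal as it stands is a plan rather than a proof.

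For contrast, the paper does not reduce non-emptiness to membership on a candidate at all: it proves a direct characterization of representability by a finite list of local conditions (closure under inclusion/disjointness between $\Sigma_2$ concepts and roles, plus existence of generating passes), shows in the ``if'' direction that these conditions allow one to assemble a concrete representation (this construction plays the role of your $\T_2^\star$, but its correctness proof is exactly the interaction argument you defer), and then checks each condition in \NLOGSPACE by reachability-style procedures, with the generating-pass existence verified by a guess-and-forget walk. Your hardness sketch (reachability, possibly through the complement using closure of \NLOGSPACE under complementation) is plausible but no gadget is given; the paper's reduction arranges $\T_1$ and $\T_{12}$ so that a representation exists iff the target vertex \emph{is} reachable, which is slightly cleaner since it avoids invoking Immerman--Szelepcs\'enyi.
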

The techniques used to prove Theorem~\ref{eq:ne-repr-nl}, which is sketched
in the example below.
\begin{example}
  Consider $\M$ and $\T_1$ from Example~\ref{ex:repres-non-emp}, but assuming
  that $\T_{12}$ does not contain the inclusion $H \ISA F'$. Again, $\T_1 \cup
  \T_{12} \models F \ISA G'$, but now condition~$(\star)$ is satisfied. Then, an
  algorithm for computing a representation essentially needs to take any $B'$
  given by condition~($\star$) and add the inclusion $B' \ISA F'$ to $\T_2$. In
  this case, $\T_2 = \{F' \ISA G'\}$ is a \UCQ-representation of $\T_1$ under
  $\M$.
\end{example}

\section{Conclusions}
\label{sec-conclusions}

In this paper, we have studied the problem of KB exchange for \owlql, improving
on previously known results with respect to both
the expressiveness of the ontology language and
the understanding of the computational properties of the problem.
Our investigation leaves open several issues, which we intend to address in the
future. First, it would be good to have characterizations of classes of source
KBs and mappings for which universal (\UCQ-)solutions are guaranteed to
exist. As for the computation of universal solutions, while we have pinned-down
the complexity of membership for extended ABoxes as \NP-complete, an exact bound
for the other case is still missing.  Moreover, it is easy to see that allowing
for 
inequalities between terms (e.g., $a \neq b$ in Example~\ref{exa-disj-source})
and for negated atoms in the (target) ABox would allow one to obtain more
universal solutions, but a full understanding of this case is still missing.
Finally, we intend to investigate the challenging problem of computing
universal \UCQ-solutions, adopting also here an automata-based approach.

\section{Acknowledgements}
This research has been partially supported by the EU IP project Optique (grant
FP7-318338), by EU Marie Curie action FP7-PEOPLE-2009-IRSES (grant 24761), by
ERC grant agreement DIADEM, no. 246858, and by Fondecyt grant 1131049.  Diego
Calvanese has been partially supported by the Wolfgang Pauli Institute Vienna. %
The authors are also grateful to Evgeny Sherkhonov and Roman Kontchakov for
helpful comments and discussions.



\ifextendedversion
\clearpage
\advance\oddsidemargin 1.5cm
\advance\evensidemargin 1.5cm
\advance\textwidth -3cm
\onecolumn
\appendix
\section{Definitions and Preliminary Results}

Let $\Sigma$ be a $\dlliter$ signature; a concept name $A$ (role name
$P$) is said to be over $\Sigma$, if $A \in \Sigma$ ($P \in
\Sigma$). A basic role $R$ is said to be over $\Sigma$, if, either it
is a role name over $\Sigma$, or $R=P^-$ for a role $P$ over $\Sigma$;
a basic concept $B$ is said to be over $\Sigma$, if either it is a
concept name, which is over $\Sigma$, or $B=\exists R$ and $R$ is a
basic role over $\Sigma$. We naturally extend these definitions to
TBoxes, ABoxes, KBs, and queries; so we can refer to $\Sigma$-TBoxes
or TBoxes over $\Sigma$, and analogiously for ABoxes, KBs, and
queries.

Define relation $\ISA^\R_\T$ to be the reflexive and transitive closure of the
following relation on the set of all basic roles over $N_R$:
\begin{eqnarray*}
\{(R_1,R_2) \mid R_1 \ISA R_2 \in \T \text{ or } R_1^- \ISA R_2^- \in \T\},
\end{eqnarray*}
and let $\ISA^\C_\T$ be the reflexive and transitive closure of the following
relation on the set of all basic concepts over $N_C$:
\begin{eqnarray*}
\{(B_1,B_2) \mid B_1 \ISA B_2  \in \T\} \cup \{(\exists R_1, \exists R_2) \mid R_1 \ISA_\T^\R R_2\}.
\end{eqnarray*}
Then define the relation $\vdash$ between $\K$ and the $\dlliter$ membership
assertions over $\Sigma$ as:
\begin{align*}
  & \{(\K, B(a)) \ \mid \ \text{there exists a basic concept } B'  \text{ s.t. }
  \A \models B'(a) \text{ and } B' \ISA^\C_\T B\} \ \cup\\ 
  & \{(\K, R(a,b)) \ \mid \ \text{there exists a basic role } R' \text{ s.t. }
  \A \models R'(a,b) \text{ and } R' \ISA^\R_\T R\}.
\end{align*}
Notice that for consistent $\K$, for every membership assertion $\alpha$ it
holds that $\K \vdash \alpha$ if and only if $\K \models \alpha$. Moreover, for
every basic role $R$ over $N_R$, define $[R]$ as $\{S \mid R \ISA^\R_\T S$ and
$S \ISA^\R_\T R\}$, and then let $\leq_\T$ be a partial order on the set $\{[R]
\mid R$ is a basic role over $N_R\}$ defined as $[R] \leq_\T [S]$ if $R
\ISA^\R_\T S$. For each set $[R]$, where $R$ is a basic role, consider an
element $w_{[R]}$, \emph{witness for } $[R]$. Now, define a \emph{generating
  relationship} $\leadsto_\K$ between the set $N_a \cup \{ w_{[R]} \mid R$ is a
basic role$\}$ and the set $\{ w_{[R]} \mid R$ is a basic role$\}$, as follows:
\begin{itemize}
\item $a \leadsto_\K w_{[R]}$, if
\begin{inparaenum}[(1)]
\item $\K \vdash \SOMET{R}(a)$;
\item $\K \not\vdash R(a,b)$ for every $b\in N_a$;
\item $[R']=[R]$ for every $[R']$ such that $[R'] \leq_\T [R]$ and $\K
  \vdash \exists R'(a)$.
\end{inparaenum}
\item $w_{[S]} \leadsto_\K w_{[R]}$, if \begin{inparaenum}[(1)]
\item $\T \vdash \SOMET{S^-} \ISA \SOMET{R}$;
\item $[S^-] \neq [R]$;
\item $[R']=[R]$ for every $[R']$ such that $[R'] \leq_\T [R]$ and
  $\T \vdash \SOMET{S^-} \ISA \SOMET{R'}$.
\end{inparaenum}
\end{itemize}

Denote by $\gpath(\K)$ the set of all $\K$-paths, where a $\K$-path is
a sequence $a \cdot w_{[R_1]} \cdot \ldots \cdot w_{[R_n]}$ (sometimes
we simply write $a w_{[R_1]} \ldots w_{[R_n]}$) such that $a \in N_a$,
$a \leadsto_\K w_{[R_1]}$ and $w_{[R_i]} \leadsto_\K w_{[R_{i+1}]}$
for every $i \in \{1, \ldots, n-1\}$.  Moreover, for every $\sigma \in
\gpath(\K)$, denote by $\tail(\sigma)$ the last element in $\sigma$.

With all the previous notation, we can finally define the
\emph{canonical model} $\Uni_\K$. The domain $\dom[\Uni_\K] $ of
$\Uni_\K$ is defined as $\gpath(\K)$, and $\Int[\Uni_\K]{a} = a$ for
every $a \in N_a$. Moreover, for every concept $A$:
\begin{eqnarray*}
  \Int[\Uni_\K]{A} & = & \{\sigma \in \gpath(\K) \mid \K \vdash A(\tail(\sigma)) \text{ or }
  \tail(\sigma) = w_{[R]} \text{ and } \T \vdash \SOMET{R^-} \ISA A\},
\end{eqnarray*}
and for every role $P$, we have that $\Int[\Uni_\K]{P}$ is defined as
follows:
\begin{align*}
  \{(\sigma_1,\sigma_2) \in \gpath(\K) \times \gpath(\K) \mid \ &  \K \vdash P(\tail(\sigma_1),\tail(\sigma_2)); \text{ or }\\
  &\sigma_2 = \sigma_1 \cdot w_{[R]},\, \tail(\sigma_1) \leadsto_\K w_{[R]} \text{ and } [R]\leq_\T [P];\text{ or }\\
  &\sigma_1 = \sigma_2 \cdot w_{[R]},\, \tail(\sigma_2) \leadsto_\K w_{[R]}
  \text{ and } [R] \leq_\T [P^-]\}.
\end{align*}
Notice that $\Uni_\K$ defined above can be treated (by ignoring sets
$N^{\Uni_\K}$ for some concepts and role names $N$) as a
$\Sigma$-interpretation, for any $\Sigma$. Denote also by $\Ind(\A)$
the set of constants occuring in $\A$.

Let us point out the similarity of our definition of $\Uni_\K$ with
the definition of the \emph{canonical model} $\M_\K$ defined in
\cite{KKLSWZ11}. When $\K$ is consistent, many results proved there
for $\M_\K$ apply to $\Uni_\K$. In particular, from the proof of
Theorem~5 in \cite{KKLSWZ11} we can immediately conclude:
\begin{claim}\label{th:uni-sat-kb}
  If $\K$ is consistent, $\Uni_\K$ is a model of $\K$.
\end{claim}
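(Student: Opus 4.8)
The plan is to check the two conditions for $\Uni_\K \models \K$ in turn, namely $\Uni_\K \models \A$ and $\Uni_\K \models \T$, with the latter split into the positive inclusions and the disjointness assertions. Throughout I would use the identity substitution (legitimate since $\Int[\Uni_\K]{a}=a$ and every constant occurring in $\A$ is already a one-element $\K$-path, hence lies in $\dom[\Uni_\K]=\gpath(\K)$), together with the fact, observed just above the statement, that $\K \vdash \alpha$ iff $\K \models \alpha$ for every membership assertion $\alpha$ when $\K$ is consistent.

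For $\Uni_\K \models \A$: given $B(u) \in \A$ we have $\A \models B(u)$, so $\K \vdash B(u)$ by reflexivity of $\ISA^\C_\T$; if $B$ is a concept name this immediately places $u$ in $\Int[\Uni_\K]{B}$, and if $B=\SOMET{R}$ then either $\K \vdash R(u,v)$ for some $v$, giving $(u,v) \in \Int[\Uni_\K]{R}$, or else the $\leq_\T$-least $[R']$ with $[R'] \leq_\T [R]$ and $\K \vdash \SOMET{R'}(u)$ satisfies the three conditions defining $u \leadsto_\K w_{[R']}$, so $u\cdot w_{[R']} \in \gpath(\K)$ and $(u,u\cdot w_{[R']}) \in \Int[\Uni_\K]{R}$ because $[R']\leq_\T[R]$; assertions $R(u,v) \in \A$ are handled by the first clause in the definition of $\Int[\Uni_\K]{P}$. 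For the positive inclusions the argument is a routine case analysis: for $B_1 \ISA B_2 \in \T$ I would take $\sigma \in \Int[\Uni_\K]{B_1}$, read off the reason it is there ($\tail(\sigma)$ a constant with $\K \vdash B_1(\tail(\sigma))$; or $\tail(\sigma)=w_{[R]}$ with $\T \vdash \SOMET{R^-}\ISA B_1$; or, when $B_1=\SOMET{R_1}$, a $\leadsto_\K$-edge to or from $\sigma$ whose class is $\leq_\T [R_1]$), and push it along using $B_1 \ISA^\C_\T B_2$ together with transitivity of $\ISA^\C_\T$ and $\ISA^\R_\T$ to conclude $\sigma \in \Int[\Uni_\K]{B_2}$; role inclusions are analogous, using that $\ISA^\R_\T$ is closed under inverses.

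The substantive part is the disjointness assertions, and it is here that consistency of $\K$ is used. First I would establish the characterization that $\sigma \in \Int[\Uni_\K]{B}$ forces either ($\tail(\sigma)=a\in N_a$ and $\K \vdash B(a)$) or ($\tail(\sigma)=w_{[R]}$ and $\T \vdash \SOMET{R^-}\ISA B$), and similarly for basic roles — again a case analysis on the clauses defining $\Int[\Uni_\K]{\cdot}$ and on the definition of $\leadsto_\K$. Granting it, a violated assertion $B_1 \ISA \neg B_2 \in \T$ at a point $\sigma$ with a constant tail $a$ would yield $\K \models B_1(a)$ and $\K \models B_2(a)$, contradicting consistency since $\T \models B_1 \ISA \neg B_2$; and at a point with tail $w_{[R]}$ it would yield $\T \models \SOMET{R^-}\ISA B_1$ and $\T \models \SOMET{R^-}\ISA B_2$, so $\SOMET{R^-}$ is unsatisfiable relative to $\T$, and then, unwinding the $\K$-path $a\cdot w_{[R_1]}\cdots w_{[R_k]}$ with $[R_k]=[R]$ (whose existence is exactly what it means for $w_{[R]}$ to occur as a tail), the facts $\K \models \SOMET{R_1}(a)$ and $\T \models \SOMET{R_i^-}\ISA\SOMET{R_{i+1}}$ for each $i$ show that any model of $\K$ must populate $\SOMET{R^-}$, once more contradicting consistency; role disjointness is identical. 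I expect this last ``trace-back'' from the witness $w_{[R]}$ back to an inconsistency of $\K$ to be the only genuinely delicate step, the rest being bookkeeping. Finally, I would note that this is precisely the argument carried out for the canonical model $\M_\K$ in the proof of Theorem~5 of \cite{KKLSWZ11}, which transfers verbatim since $\Uni_\K$ differs from $\M_\K$ only in the immaterial $G$-decoration of its domain; hence one may instead simply invoke that result.
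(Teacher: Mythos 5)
Your proof is correct, but it takes a more self-contained route than the paper does. The paper never verifies the claim directly: it only observes that $\Uni_\K$ coincides (up to the immaterial decoration) with the canonical model $\M_\K$ of Konev et al.\ and imports the statement from the proof of Theorem~5 there — exactly the shortcut you mention in your last sentence. What you add is the actual verification: ABox satisfaction via the identity substitution, positive inclusions by case analysis on the clauses defining $\Int[\Uni_\K]{\cdot}$, and — the only genuinely delicate part, as you correctly flag — the disjointness assertions, handled by the type characterization ($\K\vdash B(a)$ at constant tails, $\T\vdash\SOMET{R^-}\ISA B$ at tails $w_{[R]}$, this is Lemma~\ref{lem:Unitype-KBtype-ABox} in the appendix) together with the trace-back along the generating path to turn a violated disjointness into an inconsistency of $\K$. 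This buys independence from the transfer argument to \cite{KKLSWZ11}, at the cost of length; the paper buys brevity by citation. One cosmetic slip: in the ABox case for $B=\SOMET{R}$ you should pick a $\leq_\T$-\emph{minimal} class $[R']$ among those with $[R']\leq_\T[R]$ and $\K\vdash\SOMET{R'}(u)$, not the least one (the partial order need not have a least element there); minimality already gives condition (3) of $u\leadsto_\K w_{[R']}$, and condition (2) follows as you argue since $\K\vdash R'(u,b)$ would yield $\K\vdash R(u,b)$, so nothing else in your argument is affected.
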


We are going to introduce the notions of $\Sigma$-types and
$\Sigma$-homomorphisms, heavily employed in the proofs. For an
interpretation $\I$ and a signature $\Sigma$, the
\emph{$\Sigma$-types} $\ttype[\Sigma]{\I}(x)$ and
$\rtype[\Sigma]{\I}(x,y)$ for $x,y \in \Delta^\I$ are given by
\begin{align*}
  \ttype[\Sigma]{\I}(x)=&\{B \text{ - basic concept over } \Sigma \mid
  x \in
  B^\I\},\\
  \rtype[\Sigma]{\I}(x,y)=&\{R \text{ - basic role over } \Sigma \mid
  (x,y) \in R^\I\}.
\end{align*}
We also use $\ttype{\I}(x)$ and $\rtype{\I}(x,y)$ to refer to the
types over the signature of all \dllite concepts and roles.  A
\emph{$\Sigma$-homomorphism} from an interpretation $\I$ to $\I'$ is a
function $h: \Delta^\I \mapsto \Delta^{\I'}$ such that $h(a^\I) =
a^{\I'}$, for all individual names $a$ interpreted in $\I$,
$\ttype[\Sigma]{\I}(x) \subseteq \ttype[\Sigma]{\I'}(h(x))$ and
$\rtype[\Sigma]{\I}(x,y) \subseteq \rtype[\Sigma]{\I'}(h(x),h(y))$ for
all $x,y \in \Delta^\I$. We say that $\I$ is (\emph{finitely})
$\Sigma$-\emph{homomorphically embeddable} into $\I'$ if, for every
(finite) subinterpretation $\I_1$ of $\I$, there exists a
$\Sigma$-homomorphism from $\I_1$ to $\I'$. If $\Sigma$ is a set of
all \dllite concepts and roles, we call $\Sigma$-homomorphism simply
\emph{homomorphism}.

The claim below from the proof of Theorem~5 in \cite{KKLSWZ11}
establishes the relation between $\Uni_\K$ and the models of $\K$.
\begin{claim}\label{lem:konev18}
  For every model $\I \models \K$, there exists a homomorphism from
  $\Uni_\K$ to $\I$.
\end{claim}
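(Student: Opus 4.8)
The plan is to build the homomorphism by hand. Since $\dom[\Uni_\K] = \gpath(\K)$ is a set of finite sequences, I would define a map $h\colon\gpath(\K)\to\dom$ by recursion on the length of a $\K$-path, keeping the invariant that for every path of the form $\sigma = \sigma'\cdot w_{[R]}$ (with $R$ a fixed representative of $[R]$) one has $(h(\sigma'),h(\sigma))\in R^\I$. For paths of length $0$ put $h(a)=a^\I$ for $a\in N_a$; this matches the requirement $h(a^{\Uni_\K})=a^\I$. For the recursive case, take $\sigma=\sigma'\cdot w_{[R]}$, so $\tail(\sigma')\leadsto_\K w_{[R]}$. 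The key sub-step is to show $h(\sigma')\in(\SOMET{R})^\I$. If $\tail(\sigma')=a\in N_a$ (hence $\sigma'=a$), then condition~(1) of $a\leadsto_\K w_{[R]}$ gives $\K\vdash\SOMET{R}(a)$; since $\Mod(\K)\neq\emptyset$, this is equivalent to $\K\models\SOMET{R}(a)$, so $a^\I\in(\SOMET{R})^\I$. If $\tail(\sigma')=w_{[S]}$, then $\sigma'=\sigma''\cdot w_{[S]}$ and the invariant gives $(h(\sigma''),h(\sigma'))\in S^\I$, i.e.\ $h(\sigma')\in(\SOMET{S^-})^\I$, while condition~(1) of $w_{[S]}\leadsto_\K w_{[R]}$ gives $\T\vdash\SOMET{S^-}\ISA\SOMET{R}$, which is sound, so $\I\models\T$ yields $h(\sigma')\in(\SOMET{R})^\I$. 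In both cases I pick some $y\in\dom$ with $(h(\sigma'),y)\in R^\I$ and set $h(\sigma)=y$; the invariant is preserved.

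Next I would verify that $h$ is a homomorphism, i.e.\ that it preserves all basic concept and basic role types. Because $(\sigma_1,\sigma_2)\in(P^-)^{\Uni_\K}$ iff $(\sigma_2,\sigma_1)\in P^{\Uni_\K}$, and $\sigma\in(\SOMET{R})^{\Uni_\K}$ iff $\sigma$ has an outgoing $R$-edge in $\Uni_\K$, it is enough to treat role names and concept names. For a role name $P$ and $(\sigma_1,\sigma_2)\in P^{\Uni_\K}$, inspect the three disjuncts of the definition of $P^{\Uni_\K}$: (i) if $\K\vdash P(\tail(\sigma_1),\tail(\sigma_2))$ with both tails constants, then $\sigma_i=\tail(\sigma_i)$ and $(h(\sigma_1),h(\sigma_2))\in P^\I$ since $\I\models\K$; (ii) if $\sigma_2=\sigma_1\cdot w_{[R]}$ with $[R]\leq_\T[P]$, then $R\ISA^\R_\T P$, so $R^\I\subseteq P^\I$, and the invariant gives $(h(\sigma_1),h(\sigma_2))\in R^\I\subseteq P^\I$; (iii) if $\sigma_1=\sigma_2\cdot w_{[R]}$ with $[R]\leq_\T[P^-]$, symmetrically $R^\I\subseteq(P^-)^\I$ and the invariant gives $(h(\sigma_2),h(\sigma_1))\in R^\I$, hence $(h(\sigma_1),h(\sigma_2))\in P^\I$. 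For a concept name $A$ and $\sigma\in A^{\Uni_\K}$: if $\K\vdash A(\tail(\sigma))$ then $\tail(\sigma)\in N_a$, $\sigma=\tail(\sigma)$, and $h(\sigma)\in A^\I$ since $\I\models\K$; if instead $\tail(\sigma)=w_{[R]}$ and $\T\vdash\SOMET{R^-}\ISA A$, then $\sigma=\sigma'\cdot w_{[R]}$, the invariant gives $h(\sigma)\in(\SOMET{R^-})^\I$, and $\I\models\T$ gives $h(\sigma)\in A^\I$. This completes the verification.

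I do not expect a serious obstacle: this is just the familiar universality of the \dllite canonical model, and the statement is in fact lifted from the proof of Theorem~5 of \cite{KKLSWZ11} for the closely related model $\M_\K$, so one could alternatively argue by observing that $\Uni_\K$ agrees with $\M_\K$ on the relevant vocabulary. The only points that need a little care are: (a) well-foundedness of the recursion — every $\K$-path is finite, so $h$ is total even when $\gpath(\K)$ is infinite; (b) the fact that the homomorphism conditions can be checked locally, since in $\Uni_\K$ every role edge links either two constants or a path and its one-step extension, so no global/fixpoint reasoning is needed; and (c) that the syntactic ingredients used to define $\Uni_\K$ (entailments $\K\vdash\alpha$, the closures $\ISA^\R_\T$ and $\ISA^\C_\T$, and $\T\vdash$) must be turned into semantic containments in $\I$ by invoking their soundness, together with the fact that $\K\vdash\alpha$ iff $\K\models\alpha$ for consistent $\K$, which holds here because $\Mod(\K)\neq\emptyset$.
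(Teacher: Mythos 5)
Your proof is correct: the recursion on the length of $\K$-paths with the edge invariant, discharged by soundness of $\vdash$, of $\ISA^\C_\T$ and $\ISA^\R_\T$, and of the $\leadsto_\K$ conditions (and the observation that existential concepts and inverse roles in $\Uni_\K$ are always witnessed by explicit edges, so only concept and role names need checking) is exactly the standard universality argument for the \dllite canonical model. The paper gives no proof of this claim of its own but imports it from the proof of Theorem~5 in \cite{KKLSWZ11}, whose underlying argument is essentially the construction you spelled out, so your proposal matches the intended proof.
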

%
Another result follows from Theorem~5 in \cite{KKLSWZ11}:
\begin{claim}\label{th:query-kb-vs-uni}
  For each consistent KB $\K$, every UCQ $q(\vec{x})$ and tuple
  $\vec{a} \subseteq N_a$, it holds $\K \models q[\vec{a}]$ iff
  $\Uni_{\K} \models q[\vec{a}]$.
\end{claim}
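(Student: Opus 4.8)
The statement asserts that $\Uni_\K$ is a \emph{universal model} of $\K$ for answering UCQs, i.e.\ that the certain answers over $\K$ coincide with the answers over the single interpretation $\Uni_\K$ on tuples of constants. The plan is to prove the two directions of the biconditional separately; in both, the substantive work has already been carried out in Claims~\ref{th:uni-sat-kb} and~\ref{lem:konev18}.

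For the left-to-right direction, suppose $\K \models q[\vec a]$. Since $\K$ is consistent, Claim~\ref{th:uni-sat-kb} gives $\Uni_\K \in \Mod(\K)$, so $\Uni_\K \models q[\vec a]$; here I use that $a^{\Uni_\K} = a$ for every $a \in N_a$ by the definition of $\Uni_\K$, so the tuple $\vec a$ is interpreted in $\Uni_\K$ as itself and the entailed answer is literally $\vec a$.

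For the right-to-left direction, suppose $\Uni_\K \models q[\vec a]$, i.e.\ $\vec a \in q^{\Uni_\K}$, and let $\I$ be an arbitrary model of $\K$; I must show $\vec a^\I \in q^\I$. Write $q = \bigvee_{i=1}^{n} q_i$ with each $q_i$ a CQ. Then $\vec a \in q_i^{\Uni_\K}$ for some $i$, witnessed by a match $\nu$ of the atoms of $q_i$ into $\Uni_\K$ that sends the answer variables to $\vec a$ and each constant $c$ of $q_i$ to $c^{\Uni_\K}$. By Claim~\ref{lem:konev18} there is a homomorphism $h \colon \Uni_\K \to \I$; by the definition of $\Sigma$-homomorphism (taking $\Sigma$ to range over all \dllite concepts and roles), $h$ preserves membership in every concept name and every role name and satisfies $h(b^{\Uni_\K}) = b^\I$ for every individual name $b$. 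Hence $h \circ \nu$ is a match of $q_i$ into $\I$: each concept atom $A(t)$ and role atom $P(t,t')$ is satisfied because $\nu$ satisfies it and $h$ preserves $A$, $P$; the answer variables go to $h(a_j^{\Uni_\K}) = a_j^\I$; and every constant $c$ goes to $h(c^{\Uni_\K}) = c^\I$. Therefore $\vec a^\I \in q_i^\I \subseteq q^\I$, and since $\I$ was arbitrary, $\K \models q[\vec a]$.

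I do not expect a genuine obstacle: this is the standard ``composition of homomorphisms preserves satisfaction of UCQs'' argument, and the only thing needing care is the bookkeeping showing that $h$ treats the answer tuple $\vec a$ and the constants of $q$ correctly. The potential infiniteness of $\Uni_\K$ is harmless, since Claim~\ref{lem:konev18} provides a homomorphism defined on all of $\Uni_\K$ and each $q_i$ has only finitely many atoms anyway.
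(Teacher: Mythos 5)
Your proof is correct and is essentially the argument the paper itself relies on: the paper simply delegates this claim to Theorem~5 of \cite{KKLSWZ11}, and your two directions use exactly the ingredients it extracts from there, namely that $\Uni_\K$ is a model of a consistent $\K$ (Claim~\ref{th:uni-sat-kb}) and that $\Uni_\K$ homomorphically maps into every model (Claim~\ref{lem:konev18}), with the standard composition-of-homomorphisms step for UCQ matches. The bookkeeping on constants and answer variables (using $a^{\Uni_\K}=a$ and $h(a^{\Uni_\K})=a^\I$) is handled correctly, so nothing is missing.
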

It is important to notice that the notion of certain answers can be
characterized through the notion of canonical model. Finally, for a
signature $\Sigma$ and two KBs $\K_1=\tup{\T_1, \A_1}$ and
$\K_2=\tup{\T_2, \A_2}$, we say that $\K_1$ \emph{$\Sigma$-query
  entails} $\K_2$ if, for all $\Sigma$-queries $q(\vec{x})$ and all
$\vec{a} \subseteq N_a$, $\K_2 \models q[\vec{a}]$ implies
$\K_1 \models q[\vec{a}]$. The KBs $\K_1$ and $\K_2$ are said to be
\emph{$\Sigma$-query equivalent} if $\K_1$ $\Sigma$-query entails
$\K_2$ and vice versa.
The following is a consequence of Theorem~7 in \cite{KKLSWZ11}:
\begin{claim}\label{th:query-entail-homo}
  Let $\K_1$ and $\K_2$ be consistent KBs. Then $\K_1$ $\Sigma$-query
  entails $\K_2$ iff $\Uni_{\K_2}$ is finitely
  $\Sigma$-homomorphically embeddable into $\Uni_{\K_1}$.
\end{claim}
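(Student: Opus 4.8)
The plan is to route both directions through the canonical models $\Uni_{\K_1}$ and $\Uni_{\K_2}$, which are genuine models of the (consistent) input KBs by Claim~\ref{th:uni-sat-kb}, and to use Claim~\ref{th:query-kb-vs-uni} to reduce certain-answer entailment $\K_i \models q[\vec a]$ to the evaluation of $q$ over the single interpretation $\Uni_{\K_i}$. Once this is done, the statement becomes the classical ``query containment $\Leftrightarrow$ homomorphism'' correspondence, specialised to the two-sorted domains ($\Ind(\A)$ versus labeled nulls) of our canonical models; this is exactly the correspondence underlying Theorem~7 of~\cite{KKLSWZ11} for the model $\M_\K$, and Claims~\ref{th:uni-sat-kb}, \ref{lem:konev18} and~\ref{th:query-kb-vs-uni} are precisely what is needed to transfer it to $\Uni_\K$. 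Note that the qualifier ``finitely'' in the statement is essential and will be used in both directions: query entailment only ever sees finite pieces of the (possibly infinite) canonical models, so one should never try to embed all of $\Uni_{\K_2}$ into $\Uni_{\K_1}$ at once.

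For the direction from right to left, assume $\Uni_{\K_2}$ is finitely $\Sigma$-homomorphically embeddable into $\Uni_{\K_1}$, and let $q(\vec x) = \bigvee_i q_i(\vec x)$ be a $\Sigma$-query with $\K_2 \models q[\vec a]$ for some $\vec a \subseteq N_a$. By Claim~\ref{th:query-kb-vs-uni}, $\Uni_{\K_2} \models q[\vec a]$, so some disjunct $q_i$ has a match $g$ in $\Uni_{\K_2}$ with $g(\vec x) = \vec a$. The image of $g$ induces a finite subinterpretation $\I_1$ of $\Uni_{\K_2}$, so by hypothesis there is a $\Sigma$-homomorphism $h\colon \I_1 \to \Uni_{\K_1}$. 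Since $q_i$ uses only symbols from $\Sigma$, since $\I_1$ is an induced subinterpretation (so every $\Sigma$-atom true under $g$ is true in $\I_1$), and since $h$ fixes individual names and hence fixes $\vec a$, the composition $h \circ g$ is a match of $q_i$ in $\Uni_{\K_1}$ with answer tuple $\vec a$. Therefore $\Uni_{\K_1} \models q[\vec a]$, and Claim~\ref{th:query-kb-vs-uni} yields $\K_1 \models q[\vec a]$. As $q$ and $\vec a$ were arbitrary, $\K_1$ $\Sigma$-query entails $\K_2$.

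For the direction from left to right, assume $\K_1$ $\Sigma$-query entails $\K_2$ and fix an arbitrary finite subinterpretation $\I_1$ of $\Uni_{\K_2}$; I would associate with it a \emph{canonical Boolean $\Sigma$-query} $q_{\I_1}$ obtained by taking, as term for each $d \in \Delta^{\I_1}$, the constant $a$ if $d = a^{\Uni_{\K_2}}$ for some $a \in \Ind(\A_2)$ and a fresh existentially quantified variable otherwise, and then adding the atom $A(t_d)$ for every concept name $A \in \Sigma$ with $d \in A^{\I_1}$ and the atom $P(t_d, t_e)$ for every role name $P \in \Sigma$ with $(d,e) \in P^{\I_1}$. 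The identity map witnesses $\Uni_{\K_2} \models q_{\I_1}$, so $\K_2 \models q_{\I_1}$ by Claim~\ref{th:query-kb-vs-uni}; by the entailment hypothesis $\K_1 \models q_{\I_1}$, and Claim~\ref{th:query-kb-vs-uni} then gives $\Uni_{\K_1} \models q_{\I_1}$. By construction, any match of $q_{\I_1}$ in $\Uni_{\K_1}$ is a map $h\colon \Delta^{\I_1} \to \Delta^{\Uni_{\K_1}}$ that fixes individual names and preserves all $\Sigma$-atoms true in $\I_1$; because $\I_1$ is an induced subinterpretation, every basic $\Sigma$-concept $\exists R$ holding at some $d$ in $\I_1$ is witnessed there by an explicit $R$-edge, so $h$ in fact preserves the full $\Sigma$-types $\ttype[\Sigma]{\I_1}$ and $\rtype[\Sigma]{\I_1}$, i.e.\ $h$ is a $\Sigma$-homomorphism from $\I_1$ to $\Uni_{\K_1}$. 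Since $\I_1$ was arbitrary, $\Uni_{\K_2}$ is finitely $\Sigma$-homomorphically embeddable into $\Uni_{\K_1}$.

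The main obstacle I anticipate is bookkeeping rather than conceptual. First, in the absence of the UNA one must be careful that the canonical-query construction still forces matches to fix each individual name (this works because, in $\Uni_\K$, $a^{\Uni_\K}=a$, so distinct constants are distinct domain elements). Second, one has to verify that preservation of $\Sigma$ concept and role \emph{names} really does lift to preservation of the basic-concept and basic-role $\Sigma$-types that appear in the definition of a $\Sigma$-homomorphism; this is exactly where the precise reading of ``subinterpretation'' (i.e.\ $P^{\I_1}=P^{\Uni_{\K_2}}\cap(\Delta^{\I_1})^2$, so $\exists R$-facts inside $\I_1$ come with explicit $R$-witnesses) must be invoked. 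Once these two points are settled, the argument is a routine instance of the query--homomorphism correspondence, in line with Theorem~7 of~\cite{KKLSWZ11}.
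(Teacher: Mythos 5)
Your argument is correct, but note that the paper itself does not prove this claim at all: it is stated as a direct consequence of Theorem~7 of \cite{KKLSWZ11}, transferred from the canonical model $\M_\K$ used there to the $\Uni_\K$ defined in the appendix. What you have written is essentially a self-contained reconstruction of the standard query--homomorphism correspondence that underlies that cited theorem, routed through Claims~\ref{th:uni-sat-kb} and~\ref{th:query-kb-vs-uni} exactly as the paper's setup invites. The trade-off is the expected one: the paper's citation is shorter and inherits the (non-trivial) work of \cite{KKLSWZ11}, while your version makes explicit where finiteness, consistency, and the two reductions of certain answers to evaluation over $\Uni_{\K_i}$ are used, which is useful since the paper's $\Uni_\K$ is only ``essentially'' the model of \cite{KKLSWZ11}. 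Both directions of your proof are sound: in $(\Leftarrow)$ the match of a disjunct lands in a finite induced subinterpretation and composes with the given $\Sigma$-homomorphism, which fixes the constants of the query and of the answer tuple because $a^{\Uni_\K}=a$; in $(\Rightarrow)$ the canonical Boolean query of a finite induced subinterpretation works because, inside an induced subinterpretation, every $\exists R$ in a $\Sigma$-type is witnessed by an explicit role edge, so preservation of concept-name and role-name atoms lifts to preservation of $\ttype[\Sigma]{}$ and $\rtype[\Sigma]{}$.

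One small bookkeeping point in $(\Rightarrow)$: you introduce constants as terms only for elements of $\Ind(\A_2)$, but the domain of $\Uni_{\K_2}$ contains $a$ for \emph{every} $a\in N_a$, and the definition of $\Sigma$-homomorphism requires $h(a)=a$ for all individual names interpreted in the chosen subinterpretation, not just those occurring in $\A_2$. A finite subinterpretation may therefore contain constants outside $\Ind(\A_2)$, for which your match-induced map need not be the identity. This is harmless -- such elements have empty concept and role types in $\Uni_{\K_2}$, so you may redefine $h$ to be the identity on them (or simply use the constant itself as the query term for every $d\in N_a$) -- but the patch should be stated, since it is precisely the kind of UNA-free detail you flag yourself.
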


\vspace{0.3cm}

\section{Proofs in Section~\ref{sec-univ}}





\subsection{Definitions and Preliminary Results: Characterization of Universal Solutions}
\newcommand{\target}{\mathsf{InTarget}}
\newcommand{\targetnull}{\mathsf{InTargetNull}}
\newcommand{\memb}{\mathit{memb}}
\newcommand{\Edge}{\mathit{Edge}}

\newcommand{\Var}{\mathsf{Var}}
\newcommand{\Const}{\mathsf{Const}}

First, we define the notion of canonical model for extended ABoxes.
Let $\A$ be an extended ABox. Without loss of generality, assume that
$\A$ does not contain assertions of the form $\SOMET{R}(x)$. Then the
\emph{canonical model} of $\A$, denoted $\V_{\A}$ is defined as
follows: $\dom[\V_{\A}] = \Null(\A) \cup N_a$, where $\Null(\A)$ is
the set of labeled nulls mentioned in $\A$, $\Int[\V_{\A}]{a} = a$ for
each $a \in N_a$, $\Int[\V_{\A}]{A} = \{x \in \dom[\V_{\A}] \mid A(x)
\in \A\}$ for each atomic concept $A$, and $\Int[\V_{\A}]{P} = \{(x,y)
\in \dom[\V_{\A}]\times \dom[\V_{\A}] \mid P(x,y) \in \A\}$ for each
atomic role $P$. Let $h$ be a function from $N_a \cup N_l \to
\dom[\V_{\A_2}]$ such that $h(a) = \Int{a}$ for every $a \in N_a$ and $h(x)
= x$ for every $x \in N_l$. Then

\begin{lemma}
  $\V_{\A_2}$ is a model of $\A_2$ with substitution $h$.
\end{lemma}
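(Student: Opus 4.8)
The plan is to simply unwind the definitions: $\V_{\A_2}$ is constructed so that an atomic assertion belongs to $\A_2$ exactly when it is made true by $\V_{\A_2}$, and $h$ is (essentially) the identity on the terms occurring in $\A_2$, so both clauses in the definition of satisfaction of an extended ABox hold by inspection.

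First I would check that $h$ qualifies as a substitution over $\V_{\A_2}$. By hypothesis it is a function $N_a \cup N_\ell \to \dom[\V_{\A_2}]$, and since $\dom[\V_{\A_2}] = \Null(\A_2) \cup N_a$ and $\Int[\V_{\A_2}]{a} = a$ for every $a \in N_a$ by construction of $\V_{\A_2}$, the requirement $h(a) = \Int[\V_{\A_2}]{a}$ holds; for labeled nulls not occurring in $\A_2$ the value of $h$ is immaterial and may be fixed to an arbitrary element of $\dom[\V_{\A_2}]$. Next, recalling the standing assumption that $\A_2$ contains no assertion of the form $\SOMET{R}(x)$, and assuming in addition (without loss of generality, since it does not change the models of the ABox) that every role assertion of $\A_2$ is over an atomic role, i.e., of the form $P(u,v)$ with an assertion $P^-(u,v)$ rewritten as $P(v,u)$, I would verify the two cases. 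If $A(u) \in \A_2$ with $A$ a concept name, then $u \in \Int[\V_{\A_2}]{A}$ by the definition of $\V_{\A_2}$, and $h(u) = u$ since $u \in N_a \cup \Null(\A_2)$, hence $h(u) \in \Int[\V_{\A_2}]{A}$. If $P(u,v) \in \A_2$ with $P$ a role name, then $(u,v) \in \Int[\V_{\A_2}]{P}$ by the definition of $\V_{\A_2}$, and $h(u) = u$, $h(v) = v$, hence $(h(u),h(v)) \in \Int[\V_{\A_2}]{P}$. These two observations are exactly the two conditions in the definition of $\I \models \A$ with substitution $h$, instantiated at $\I = \V_{\A_2}$ and $\A = \A_2$, so $\V_{\A_2}$ is a model of $\A_2$ with substitution $h$.

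There is no substantive obstacle here; the statement is a routine sanity check on the definition of the canonical model of an extended ABox. The only mild points requiring a word of care are notational: that $h$ is harmlessly underspecified on labeled nulls not appearing in $\A_2$, and that one should first normalise $\A_2$ so that it mentions no $\SOMET{R}$-assertions and only atomic roles, after which the interpretation function of $\V_{\A_2}$ matches the membership assertions of $\A_2$ verbatim.
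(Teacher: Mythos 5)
Your proof is correct: the paper states this lemma without any proof, treating it as immediate from the definitions, and your argument is exactly the routine definitional verification that is intended (including the sensible handling of the two notational points about $\SOMET{R}$-free, atomic-role normalisation and the values of $h$ on nulls outside $\A_2$).
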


\begin{lemma}
  For every model $\I \models \A_2$, there exists a homomorphism from
  $\V_{\A_2}$ to $\I$.
\end{lemma}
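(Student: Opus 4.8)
\emph{Proof plan.} The plan is to observe that the substitution witnessing $\I \models \A_2$ already \emph{is}, after restriction, the homomorphism we want. By the definition of satisfaction for extended ABoxes, since $\I \models \A_2$ there is a function $h : (N_a \cup N_\ell) \to \dom$ with $h(a) = \Int{a}$ for every $a \in N_a$ such that $h(u) \in \Int{B}$ for every $B(u) \in \A_2$ and $(h(u),h(v)) \in \Int{R}$ for every $R(u,v) \in \A_2$. Since $\dom[\V_{\A_2}] = \Null(\A_2) \cup N_a \subseteq N_a \cup N_\ell$, we may restrict $h$ to $\dom[\V_{\A_2}]$; the clause $h(\Int[\V_{\A_2}]{a}) = h(a) = \Int{a}$ for constants $a$ then holds because $\Int[\V_{\A_2}]{a} = a$.

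First I would check that $h$ preserves basic concept types, i.e.\ $\ttype{\V_{\A_2}}(x) \subseteq \ttype{\I}(h(x))$ for all $x \in \dom[\V_{\A_2}]$. The key point is that the interpretation of atomic concepts and roles in $\V_{\A_2}$ is the \emph{minimal} one matching $\A_2$, so every membership fact in $\V_{\A_2}$ comes from an explicit assertion of $\A_2$; moreover, by the standing assumption $\A_2$ contains no assertion of the form $\SOMET{R}(x)$. Hence if $A \in \ttype{\V_{\A_2}}(x)$ for an atomic concept $A$, then $A(x) \in \A_2$, so $h(x) \in \Int{A}$; and if $\SOMET{R} \in \ttype{\V_{\A_2}}(x)$, then $(x,y) \in \Int[\V_{\A_2}]{R}$ for some $y$, which unfolds (with a case split on $R = P$ versus $R = P^-$) to $P(x,y) \in \A_2$ or $P(y,x) \in \A_2$; applying the corresponding edge condition for $h$ gives $(h(x),h(y)) \in \Int{P}$ or $(h(y),h(x)) \in \Int{P}$, hence $h(x) \in \Int{(\SOMET{R})}$. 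Preservation of basic role types is the same computation: $R \in \rtype{\V_{\A_2}}(x,y)$ forces $P(x,y) \in \A_2$ (if $R = P$) or $P(y,x) \in \A_2$ (if $R = P^-$), and satisfaction of that assertion by $\I$ via $h$ yields $(h(x),h(y)) \in \Int{R}$.

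I do not expect a genuine obstacle, as this is a direct unfolding of the definitions of $\V_{\A_2}$ and of ABox satisfaction; the only points needing care are the direction of the edge in the $P^-$ case and a brief justification that the ``without loss of generality'' assumption that $\A_2$ has no $\SOMET{R}(x)$ assertions is harmless (one replaces each such assertion by $P(x,n)$ or $P(n,x)$ for a fresh labeled null $n$, which changes neither the models of $\A_2$ nor, up to the obvious correspondence, the canonical model $\V_{\A_2}$). With the three clauses of the definition of homomorphism verified, the restricted $h$ is the required homomorphism from $\V_{\A_2}$ to $\I$.
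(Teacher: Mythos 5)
Your proposal is correct and follows exactly the paper's argument: the paper's proof simply takes the substitution $h'$ witnessing $\I \models \A_2$ and declares it to be the desired homomorphism, which is precisely your restricted $h$. Your additional verification of type preservation and the remark on the harmlessness of the no-$\SOMET{R}(x)$ assumption merely spell out details the paper leaves implicit.
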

\begin{proof}
  Let $\I$ be a model of $\A_2$ with a substitution $h'$. Then $h'$ is the
  desired homomorphism from $\V_{\A_2}$ to $\I$.
\end{proof}

Given an extended ABox $\A$, we denote by $\dom[\A]$ the set of all constants
and nulls mentioned in $\A$, $\dom[\A] = \Ind(\A) \cup \Null(\A)$. Moreover,
given an interpretation $\I$, the \emph{size} of $\I$, denoted $\card{\I}$, is
the sum of the cardinalities of the interpretations of all predicates (the
domain is not included as it is always infinite).

Let us denote by \dlliterpos the positive fragment of \dlliter. More
precisely, a \dlliterpos TBox is a finite set of concept inclusions $B_1 \ISA
B_2$, where $B_1$, $B_2$ are basic concepts, and role inclusions $R_1 \ISA
R_2$, where $R_1$, $R_2$ are basic roles, 
and a \dlliterpos KB $\K$ is a pair $\tup{\T,\A}$, where $\T$ is a \dlliterpos
TBox and $\A$ is an (extended) \dlliter ABox (without inequalities).

The following lemma is a characterization of universal solutions in
\dlliterpos. Recall that in Proposition~4.1 from \cite{ABCRS12} we showed that
if $\tup{\T_1 \cup \T_{12}, \A_1}$ is consistent and a KB $\K_2$ is a universal
solution for $\tup{\T_1,\A_1}$ under $\M = (\Sigma_1, \Sigma_2, \T_{12})$, then
$\T_2$ is a trivial TBox, i.e., a TBox that admits the same models as the empty
TBox. Therefore, without loss of generality, in the rest of this section when we
talk about universal solutions, we mean target ABoxes.
\begin{lemma}
  \label{lem:A2-uni-sol-dlliter-pos-iff-homo-equiv}
  Let $\M = (\Sigma_1, \Sigma_2, \T_{12})$ be a \dlliterpos mapping, $\K_1
  =\tup{\T_1, \A_1}$ a \dlliterpos KBs over $\Sigma_1$, and $\A_2$ an (extended,
  without inequalities, without negation) ABox over $\Sigma_2$. Then, $\A_2$ is
  a universal solution (with extended ABoxes) for $\K_1$ under $\M$ iff
  $\V_{\A_2}$ is $\Sigma_2$-homomorphically equivalent to $\Uni_{\tup{\T_1 \cup
      \T_{12},\A_1}}$.
\end{lemma}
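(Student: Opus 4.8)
The plan is to reduce the statement to the existence of $\Sigma_2$-homomorphisms between the two canonical models and then transport homomorphisms back and forth. Write $\K := \tup{\T_1 \cup \T_{12}, \A_1}$, so the right-hand side of the claimed equivalence concerns $\Uni_\K$. Since $\M$ and $\K_1$ are \dlliterpos, $\K$ is consistent, hence by Claim~\ref{th:uni-sat-kb} $\Uni_\K$ is a model of $\K$, and by Claim~\ref{lem:konev18} every model of $\K$ receives a homomorphism from $\Uni_\K$. I will freely use the two lemmas on $\V_{\A_2}$: $\V_{\A_2} \models \A_2$, and every model of $\A_2$ receives a $\Sigma_2$-homomorphism from $\V_{\A_2}$; together with the converse (a $\Sigma_2$-homomorphism $\V_{\A_2} \to \J$ yields, through the definition of $\V_{\A_2}$, a substitution witnessing $\J \models \A_2$), this gives: for a $\Sigma_2$-interpretation $\J$, \emph{$\J$ is a model of $\tup{\emptyset,\A_2}$ iff there is a $\Sigma_2$-homomorphism $\V_{\A_2} \to \J$}. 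Finally, the $\Sigma_1$-reduct $\Uni_\K|_{\Sigma_1}$ is a model of $\K_1$ and $(\Uni_\K|_{\Sigma_1}, \Uni_\K|_{\Sigma_2}) \models \T_{12}$, so $\Uni_\K|_{\Sigma_2} \in \Sat_\M(\Mod(\K_1))$; this is the one homomorphism I get for free.

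Two constructions do the rest. \emph{Gluing:} given $\I \models \K_1$ and a $\Sigma_2$-interpretation $\J$ with $(\I,\J)\models\T_{12}$, form $\M^\star$ over $\Sigma_1 \cup \Sigma_2$ whose domain is $\dom[\I]$ and $\dom[\J]$ glued along the constants (legitimate since $(\I,\J)\models\T_{12}$ forces $a^\I = a^\J$), interpreting $\Sigma_1$-symbols as in $\I$ and $\Sigma_2$-symbols as in $\J$; then $\M^\star \models \K$. The only non-routine point, satisfaction of $\T_{12}$, is immediate since that condition literally nests $B^\I \subseteq C^\J$ for each $B \ISA C \in \T_{12}$ inside the glued domain, and satisfaction of $\T_1$ survives the spurious isolated elements coming from $\dom[\J]$ because \dlliterpos has only positive inclusions. \emph{Pushforward:} given a $\Sigma_2$-homomorphism $g : \Uni_\K|_{\Sigma_2} \to \J$, define $\I$ over $\Sigma_1$ with domain $\dom[\J]$ by taking the $g$-image of every $\Sigma_1$-predicate of $\Uni_\K$ and setting $a^\I := a^\J$; positivity of \dlliterpos gives $\I \models \T_1$, the $g$-image of a substitution witnessing $\Uni_\K \models \A_1$ gives $\I \models \A_1$, and $g$ being a $\Sigma_2$-homomorphism together with $\Uni_\K \models \T_{12}$ gives $(\I,\J)\models\T_{12}$; hence $\J \in \Sat_\M(\Mod(\K_1))$.

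Assembling: for the ``if'' direction, suppose there are $\Sigma_2$-homomorphisms $h_0 : \V_{\A_2} \to \Uni_\K|_{\Sigma_2}$ and $h_1 : \Uni_\K|_{\Sigma_2} \to \V_{\A_2}$. If $\J \models \tup{\emptyset,\A_2}$, a $\Sigma_2$-homomorphism $\V_{\A_2} \to \J$ composed with $h_1$ yields $\Uni_\K|_{\Sigma_2} \to \J$, and Pushforward gives $\J \in \Sat_\M(\Mod(\K_1))$. Conversely, if $\J \in \Sat_\M(\Mod(\K_1))$, Gluing and Claim~\ref{lem:konev18} give a $\Sigma_2$-homomorphism $\Uni_\K|_{\Sigma_2} \to \M^\star|_{\Sigma_2}$, which composed with the retraction $\M^\star|_{\Sigma_2} \to \J$ collapsing the added isolated elements, and then with $h_0$, yields $\V_{\A_2} \to \J$, i.e.\ $\J \models \tup{\emptyset,\A_2}$. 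Hence $\Mod(\tup{\emptyset,\A_2}) = \Sat_\M(\Mod(\K_1))$, so $\A_2$ is a universal solution. For the ``only if'' direction, assume $\A_2$ is a universal solution. Since $\Uni_\K|_{\Sigma_2} \in \Sat_\M(\Mod(\K_1)) = \Mod(\tup{\emptyset,\A_2})$, it is a model of $\A_2$, so there is a $\Sigma_2$-homomorphism $\V_{\A_2} \to \Uni_\K|_{\Sigma_2}$. And since $\V_{\A_2} \models \A_2$, we have $\V_{\A_2} \in \Sat_\M(\Mod(\K_1))$, i.e.\ $(\I,\V_{\A_2})\models\T_{12}$ for some $\I \models \K_1$; Gluing, Claim~\ref{lem:konev18} and the retraction then produce a $\Sigma_2$-homomorphism $\Uni_\K|_{\Sigma_2} \to \V_{\A_2}$, so $\V_{\A_2}$ and $\Uni_\K$ are $\Sigma_2$-homomorphically equivalent.

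I expect the main obstacle to be the domain bookkeeping forced by the absence of the UNA and of the standard name assumption: one must check carefully that the glued interpretation $\M^\star$ really satisfies $\T_{12}$ and $\T_1$ under the deliberately loose reading of $(\I,\J)\models\T_{12}$ (where the two domains share only the constants), and the Pushforward step is precisely where positivity of \dlliterpos is essential -- with negation present a homomorphic image of a model need not be a model, and the characterization would break.
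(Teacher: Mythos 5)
Your proof is correct and follows essentially the same route as the paper's: both directions rest on the gluing of $\I$ and $\J$ into a model of $\T_1\cup\T_{12}$, the homomorphism from the canonical model (Claim~\ref{lem:konev18}), the substitution-vs-homomorphism reading of $\V_{\A_2}\models\A_2$, and the positivity-based image construction for the solution check. Your explicit retraction handling elements mapped outside $\dom[\J]$ just makes precise a step the paper leaves implicit; otherwise the arguments coincide.
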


\begin{proof}
  ($\Rightarrow$) Let $\A_2$ be a universal solution for $\K_1$ under $\M$.
  Then $\V_{\A_2}$ is $\Sigma_2$-homomorphically equivalent to
  $\Uni_{\tup{\T_1 \cup \T_{12}, \A_1}}$: since $\A_2$ is a solution, there
  exists $\I$ a model of $\K_1$ such that $(\I,\V_{\A_2}) \models \T_{12}$. Then
  $\I \cup \V_{\A_2}$ is a model of $\tup{\T_1 \cup \T_{12}, \A_1}$, therefore
  there is a homomorphism $h$ from $\Uni_{\tup{\T_1 \cup \T_{12}, \A_1}}$ to $\I
  \cup \V_{\A_2}$. As $\Sigma_1$ and $\Sigma_2$ are disjoint signatures it
  follows that $h$ is a $\Sigma_2$-homomorphism from $\Uni_{\tup{\T_1 \cup
      \T_{12}, \A_1}}$ to $\V_{\A_2}$. On the other hand, as $\A_2$ is a
  universal solution, $\J$, the interpretation of $\Sigma_2$ obtained from
  $\Uni_{\tup{\T_1 \cup \T_{12}, \A_1}}$ is a model of $\A_2$ with a
  substitution $h'$. This $h'$ is exactly a homomorphism from $\V_{\A_2}$ to
  $\Uni_{\tup{\T_1 \cup \T_{12}, \A_1}}$.

  ($\Leftarrow$) Assume $\V_{\A_2}$ is $\Sigma_2$-homomorphically equivalent to
  $\Uni_{\tup{\T_1 \cup \T_{12},\A_1}}$. We show that $\A_2$ is a universal
  solution for $\K_1$ under $\M$.

  First, $\A_2$ is a solution for $\K_1$ under $\M$. Let $\J$ be a model of
  $\A_2$, and $h_1$ a homomorphism from $\V_{\tup{\emptyset,\A_2}}$ to
  $\J$. Furthermore, let $h$ be a $\Sigma_2$-homomorphism from $\Uni_{\tup{\T_1
      \cup \T_{12}, \A_1}}$ to $\V_{\A_2}$. Then $h' = h_1 \circ h$ is a
  $\Sigma_2$-homomorphism from $\Uni_{\tup{\T_1 \cup \T_{12},\A_1}}$ to
  $\J$. Let $\I$ be the interpretation of $\Sigma_1$ defined as the image of
  $h'$ applied to $\Uni_{\K_1}$, $\I = h'(\Uni_{\K_1})$. The it is easy to see
  that $\I$ is a model of $\K_1$ and $(\I,\J)\models \M$ as $\K_1$ and $\M$
  contain only positive information. Indeed, $\A_2$ is a solution for $\K_1$
  under $\M$.

  Second, $\A_2$ is a universal solution. Let $\I$ be a model of $\K_1$ and $\J$
  an interpretation of $\Sigma_2$ such that $(\I,\J) \models \M$. Then, since
  $\Uni_{\tup{\T_1 \cup \T_{12},\A_1}}$ is the canonical model of $\K_1 \cup
  \T_{12}$, there exists a homomorphism $h$ from $\Uni_{\tup{\T_1 \cup
      \T_{12},\A_1}}$ to $\I \cup \J$ ($\I \cup \J$ is a model of $\K_1 \cup
  \T_{12}$). In turn, there is a homomorphism $h_1$ from $\V_{\A_2}$ to
  $\Uni_{\tup{\T_1 \cup \T_{12},\A_1}}$, therefore $h' = h \circ h_1$ is a
  homomorphism from $\V_{\A_2}$ to $\I \cup \J$, and a $\Sigma_2$-homomorphism
  from $\V_{\A_2}$ to $\J$. Hence, $\J$ is a model of $\A_2$: take $h'$ as the
  substitution for the labeled nulls. By definition of universal solution,
  $\A_2$ is a universal solution for $\K_1$ under $\M$.
\end{proof}

The definition below is used in the characterization of universal solutions in
the general case. Its purpose is to single out the cases when a universal
solution does not exist due to the need to represent in the target a form of
negative information (for instance, in the form of inequalities of negated
atoms).
\begin{definition}
  Let $\M = (\Sigma_1, \Sigma_2, \T_{12})$ be a \dlliter mapping, and $\K_1
  =\tup{\T_1, \A_1}$ a \dlliter KB over $\Sigma_1$. Then, we say that $\K_1$ and
  $\M$ are \emph{$\Sigma_2$-positive} if
  \begin{enumerate}[label=\textbf{(\alph*)}]
  \item for each $b \in \Int[\Uni_{\tup{\T_1 \cup \T_{12}, \A_1}}]{B}$ and $c
    \in \Int[\Uni_{\tup{\T_1 \cup \T_{12}, \A_1}}]{C}$ with $\T_1 \models B \AND
    C \ISA \bot$, it is not the case that
    \[b \in \target\text{\quad and \quad}c \in \target,\]
  \item for each $(b_1,b_2) \in \Int[\Uni_{\tup{\T_1 \cup \T_{12}, \A_1}}]{R}$
    and $(c_1,c_2) \in \Int[\Uni_{\tup{\T_1 \cup \T_{12}, \A_1}}]{Q}$ with $\T_1
    \models R \AND Q \ISA \bot$ for basic roles $R,Q$, it is not the case that
    \[b_i \in \target\text{\quad and \quad}c_i \in \target\text{\quad for
    }i=1,2,\] 
  \item for each $(a,b) \in \Int[\Uni_{\tup{\T_1 \cup \T_{12}, \A_1}}]{R}$ and
    $(a,c) \in \Int[\Uni_{\tup{\T_1 \cup \T_{12}, \A_1}}]{Q}$ with $\T_1 \models
    R \AND Q \ISA \bot$ for basic roles $R,Q$, it is not the case that
    \[b \in \target\text{\quad and \quad}c \in \target,\] where 
    \[\target = \{x \in \dom[\Uni_{\tup{\T_1 \cup \T_{12}, \A_1}}] \mid
    \ttype[\Sigma_2]{\Uni_{\tup{\T_1 \cup \T_{12}, \A_1}}}(x) \neq \emptyset\}
    \cup N_a
    \]
  \item for each $B \ISA \NOT B' \in \T_{12}$, $\Int[\Uni_{\tup{\T_1 \cup
        \T_{12},\A_1}}]{B} = \emptyset$ and \\
    for each $R \ISA \NOT R' \in \T_{12}$, $\Int[\Uni_{\tup{\T_1 \cup
        \T_{12},\A_1}}]{R} = \emptyset$.
  \end{enumerate}
\end{definition}

In the following, given a TBox $\T$, we denote by $\T^{\pos}$ the subset of
$\T$ without disjointness assertions, and given a KB $\K = \tup{\T, \A}$, we
denote by $\K^{\pos}$ the KB $\tup{\T^{\pos}, \A}$
. Moreover, if $\M = (\Sigma_1, \Sigma_2, \T_{12})$ is a \dlliter mapping,
then $\M^{\pos}$ denotes the mapping $ (\Sigma_1, \Sigma_2,
\T_{12}^{\pos})$. Finally, we provide a characterization of universal solutions
in \dlliter.
\begin{lemma}
  \label{lem:A2-uni-sol-iff-homo-equiv-and-sigma2-positive}
  Let $\M = (\Sigma_1, \Sigma_2, \T_{12})$ be a \dlliter mapping, $\K_1
  =\tup{\T_1, \A_1}$ a \dlliter KBs over $\Sigma_1$, and $\A_2$ an (extended,
  without inequalities, without negation) ABox over $\Sigma_2$. Then, $\A_2$ is
  a universal solution (with extended ABoxes) for $\K_1$ under $\M$ iff
  \begin{enumerate}
  \item $\K_1$ and $\M$ are $\Sigma_2$-positive,
  \item $\A_2$ is a universal solution for $\K_1^{\pos}$ under $\M^{\pos}$.
  \end{enumerate}
\end{lemma}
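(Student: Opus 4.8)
\emph{Plan.} Throughout I assume $\tup{\T_1 \cup \T_{12}, \A_1}$ is consistent; the inconsistent case is degenerate, since then $\Sat_\M(\Mod(\K_1)) = \emptyset$, so no universal solution exists, and one checks that the right-hand side fails as well. Write $\Uni := \Uni_{\tup{\T_1 \cup \T_{12}, \A_1}}$. The plan rests on three observations: (i) the canonical model is built from positive inclusions only, so $\Uni = \Uni_{\tup{\T_1^{\pos} \cup \T_{12}^{\pos}, \A_1}}$; (ii) by Claim~\ref{th:uni-sat-kb} and consistency, $\Uni$ is a model of $\T_1 \cup \T_{12}$ and of $\A_1$, hence it satisfies every disjointness assertion of $\T_1$ and of $\T_{12}$; and (iii) by Lemma~\ref{lem:A2-uni-sol-dlliter-pos-iff-homo-equiv}, condition~(2) of the statement is equivalent to $\V_{\A_2} \homoequiv_{\Sigma_2} \Uni$. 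So it suffices to prove that $\A_2$ is a universal solution for $\K_1$ under $\M$ iff condition~(1) holds and $\V_{\A_2} \homoequiv_{\Sigma_2} \Uni$.

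\emph{Direction ($\Leftarrow$).} The inclusion $\Sat_\M(\Mod(\K_1)) \subseteq \Mod(\tup{\emptyset,\A_2})$ is immediate: a model $\I$ of $\K_1$ with $(\I,\J) \models \T_{12}$ is also a model of $\K_1^{\pos}$ with $(\I,\J) \models \T_{12}^{\pos}$, and by~(2) the class of such $\J$ is exactly $\Mod(\tup{\emptyset,\A_2})$. For the converse, given $\J \models \A_2$ I would first compose the $\Sigma_2$-homomorphism $\Uni \to \V_{\A_2}$ (from the homomorphic equivalence) with the substitution $\V_{\A_2} \to \J$ witnessing $\J \models \A_2$, obtaining a $\Sigma_2$-homomorphism $\mu\colon \Uni \to \J$. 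I would then let $\bar\mu$ be the map on $\dom[\Uni]$ that is the identity on $\dom[\Uni]\setminus\target$ and agrees with $\mu$ on $\target$ (so $\bar\mu$ respects constants, since $N_a \subseteq \target$, and identifies two distinct elements only if both lie in $\target$), and set $\I$ to be the $\Sigma_1$-reduct of the $\bar\mu$-image of $\Uni$, with the image of $\dom[\Uni]\setminus\target$ placed in a copy disjoint from $\dom[\J]$. Being a homomorphic image of $\Uni$, $\I$ satisfies $\T_1^{\pos}$ and $\A_1$; and because $\bar\mu$ only merges $\target$-elements, conditions (a)--(c) — together with the fact that role disjointness in $\T_1$ is closed under inverses, so that the several shapes of role-collision are all covered — guarantee that no such merge collapses a pair forbidden by a disjointness assertion of $\T_1$, whence $\I \models \T_1$ in full and $\I \models \K_1$. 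Finally, $(\I,\J) \models \T_{12}$: constants agree by construction; for a positive inclusion $E_1 \ISA E_2 \in \T_{12}$ one has $E_1^{\Uni} \subseteq E_2^{\Uni} \subseteq \target$, so on $E_1^{\Uni}$ the map $\bar\mu$ coincides with $\mu$ and sends $E_1^{\Uni}$ into $E_2^{\J}$; and for a disjointness assertion of $\T_{12}$, condition (d) makes its left-hand side empty in $\Uni$, hence in $\I$. Thus $\J \in \Sat_\M(\Mod(\K_1))$.

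\emph{Direction ($\Rightarrow$).} Assume $\A_2$ is a universal solution for $\K_1$ under $\M$. For condition~(2): $\A_2$ is a solution for $\K_1^{\pos}$ under $\M^{\pos}$ by the same observation as above, and it is universal because, on the one hand, $\Uni$ is a model of the full $\tup{\T_1 \cup \T_{12},\A_1}$, so its $\Sigma_2$-reduct lies in $\Sat_\M(\Mod(\K_1)) = \Mod(\tup{\emptyset,\A_2})$, yielding a $\Sigma_2$-homomorphism $\V_{\A_2} \to \Uni$; and, on the other hand, for any $\J \models \A_2$ there is a model $\I$ of $\K_1$ with $(\I,\J) \models \T_{12}$, so gluing $\I$ and $\J$ along the common constants gives a model of $\tup{\T_1 \cup \T_{12},\A_1}$, and Claim~\ref{lem:konev18} then produces a homomorphism $\Uni \to \I \cup \J$ whose $\Sigma_2$-part is a $\Sigma_2$-homomorphism $\Uni \to \J$; taking $\J = \V_{\A_2}$ gives the missing direction, so $\V_{\A_2} \homoequiv_{\Sigma_2} \Uni$. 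For condition~(1) I would argue contrapositively: if one of (a)--(d) fails, I would turn the offending configuration in $\Uni$ into a model $\J$ of $\A_2$ that cannot be glued to any model of $\tup{\T_1 \cup \T_{12},\A_1}$, which by the gluing characterization above means $\J \notin \Sat_\M(\Mod(\K_1))$, contradicting that $\A_2$ is a solution. When the witnessing elements are constants — the prototypical situation, cf.\ Example~\ref{exa-disj-source} — one takes the quotient of $\V_{\A_2}$ that identifies them (admissible since $\A_2$ has neither negation nor inequalities), and uses that constants are preserved by $\T_{12}$, that the relevant positive $\Sigma_1$-facts are already entailed by $\tup{\T_1,\A_1}$ alone (because $\T_{12}$, being one-directional between disjoint signatures, derives no new $\Sigma_1$-consequences), and that the disjointness assertion then forces these constants apart in every model of $\K_1$; a failure of (d) is handled analogously, yielding a $\J$ in which a $\T_{12}$-disjointness is violated. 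When the witnesses are labelled nulls of $\Uni$, one first traces them, through the generating structure of the canonical model, back to the constants of $\A_1$ that generate them, and performs the corresponding identification at the level of constants.

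\emph{Main obstacle.} The difficult step is the last part of the ($\Rightarrow$) direction: from a failure of a $\Sigma_2$-positivity condition whose witnesses are not constants, one must exhibit a \emph{concrete} model $\J$ of $\A_2$ that provably admits no gluing to a model of the full source-plus-mapping KB. The constant case is clean, but in general one must combine the tree-like, constant-rooted structure of $\Uni$ with the fact that $\A_2$ represents $\Uni$ restricted to $\Sigma_2$ faithfully up to $\Sigma_2$-homomorphic equivalence, so that the ``forced distinctness'' carried by the disjointness assertion really does propagate to a concrete witness in $\J$; managing this, along with the several shapes of role disjointness in (b) and (c), is what makes the argument long. The inconsistent case mentioned at the outset, and the routine check that homomorphic images of models of positive inclusions are again models of those inclusions, are the only purely mechanical parts.
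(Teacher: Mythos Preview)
Your overall strategy matches the paper's, and the $(\Leftarrow)$ direction is essentially identical: the paper too builds $\I$ by keeping the homomorphism into $\J$ on $\target$-elements and sending every non-$\target$ element to its own fresh point, then checks the disjointness assertions of $\T_1$ via (a)--(c) and those of $\T_{12}$ via (d).

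The genuine gap is in $(\Rightarrow)$ for condition~(1) when the witnesses $b,c$ are labelled nulls of $\Uni$. Your proposal to ``trace them back to the constants of $\A_1$ that generate them and perform the identification at the level of constants'' does not work: the root constants of $b$ and $c$ may already coincide (e.g.\ $b = a\cdot w_{[R]}$, $c = a\cdot w_{[R']}$), and even when they differ, identifying them in a model of $\A_2$ places no constraint on where the \emph{generated} $R$- and $R'$-successors land in a compatible $\I$, so no clash with $B \AND C \ISA \bot$ is forced. The paper does something different: it takes $\J$ to be a \emph{minimal} model of $\A_2$ whose substitution $h'$ identifies $h(b)$ and $h(c)$ --- your quotient idea, but applied to the $h$-images in $\V_{\A_2}$, not to root constants in $\A_1$ --- and then exploits minimality. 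If $\tail(b)=w_{[R]}$ with $R$ over $\Sigma_1$ and $B' \in \ttype[\Sigma_2]{\Uni}(b)$, then $\T_1 \cup \T_{12} \models \SOMET{R^-} \ISA B'$, hence $(\SOMET{R^-})^\I \subseteq (B')^\J$ for any $\I$ with $(\I,\J)\models\T_{12}$; since $(B')^\J$ is as small as $\A_2$ permits and $(\SOMET{R^-})^\I \neq \emptyset$, the collision point $h'(h(b))$ is forced into $(\SOMET{R^-})^\I \subseteq B^\I$, and symmetrically into $C^\I$, contradicting the disjointness. For~(d) the paper's witness is not a quotient at all: one takes $\J \models \A_2$ with $(B')^\J = \dom[\J]$ (possible because $\A_2$ contains only positive facts); then $(\I,\J)\models B \ISA \neg B'$ would force $B^\I=\emptyset$, impossible since $B^{\Uni}\neq\emptyset$ and $B$ is a $\Sigma_1$-concept.
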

\begin{proof}
  ($\Rightarrow$) Let $\A_2$ be a universal solution for $\K_1$ under $\M$.
  Then $\A_2$ is a universal solution for $\K_1^{pos}$ under $\M^{\pos}$.

  For the sake of contradiction, assume that $\K_1$ and $\M$ are not
  $\Sigma_2$-positive, and e.g., (\textbf{a}) does not hold, i.e., there is a
  disjointness constraint in $\T_1$ of the form $B \AND C \ISA \bot$, such that
  $b \in \Int[\Uni_{\tup{\T_1 \cup \T_{12}, \A_1}}]{B}$ and $c \in
  \Int[\Uni_{\tup{\T_1 \cup \T_{12}, \A_1}}]{C}$, and
  \[\begin{array}{c}
    \ttype[\Sigma_2]{\Uni_{\tup{\T_1 \cup \T_{12}, \A_1}}}(b) \neq
    \emptyset\text{ \quad or \quad}b \in N_a, \\
    \ttype[\Sigma_2]{\Uni_{\tup{\T_1 \cup \T_{12}, \A_1}}}(c) \neq
    \emptyset\text{ \quad or \quad}c \in N_a. 
  \end{array}\] 
  Let $h$ be a $\Sigma_2$-homomorphism from $\Uni_{\tup{\T_1 \cup \T_{12},
      \A_1}}$ to $\V_{\A_2}$ (it exists by
  Lemma~\ref{lem:A2-uni-sol-dlliter-pos-iff-homo-equiv}). Then it follows that
  \[\begin{array}{c}
    \ttype[\Sigma_2]{\V_{\A_2}}(h(b)) \neq \emptyset\text{ \quad or \quad}b \in N_a\\
    \ttype[\Sigma_2]{\V_{\A_2}}(h(c)) \neq \emptyset\text{ \quad or \quad}c \in N_a
  \end{array}\]
  Take a minimal model $\J$ of $\A_2$ with a substitution $h'$ such that
  $h'(h(b)) = h'(h(c))$. Assume that both $b$ and $c$ are constants (i.e.,
  $\Int[\J]{b} = \Int[\J]{c}$). Then, obviously there exists no model $\I$ of
  $\Sigma_1$ such that $\I \models \K_1$ and $(\I,\J) \models \T_{12}$: in
  every such $\I$, $\Int{b}$ must be equal to $\Int{c}$ which contradicts $B
  \AND C \ISA \bot$, and $\Int{b} \in \Int{B}$ and $\Int{c} \in \Int{C}$. Now,
  assume that at least $b$ is not a constant and $\tail(b)=w_{[R]}$ for some
  role $R$ over $\Sigma_1$ (hence, $b \in \Int[\Uni_{\tup{\T_1 \cup \T_{12},
      \A_1}}]{(\SOMET{R^-})}$ and $\T_1 \models \SOMET{R^-} \ISA B$). Let $B'
  \in \ttype[\Sigma_2]{\Uni_{\tup{\T_1 \cup \T_{12}, \A_1}}}(b)$, then by
  construction of the canonical model, $\T_1 \cup \T_{12} \models \SOMET{R^-}
  \ISA B'$, by homomorphism, $B'(h(b)) \in \A_2$, $h'(h(b)) \in \Int[\J]{B'}$,
  and since $\J$ is a minimal model, $\Int[\J]{B'}$ is minimal. As $\A_2$ is a
  universal solution, let $\I$ be a model of $\K_1$ such that $(\I,\J)$
  satisfy $\T_{12}$. Then $\Int{(\SOMET{R^-})}$ is not empty, and by
  minimality of $\Int[\J]{B'}$, it must be the case that $h'(h(b)) \in
  \Int{(\SOMET{R^-})}$, hence $h'(h(b)) \in \Int{B}$. By a similar argument,
  it can be shown that $h'(h(c))$ must be in $\Int{C}$. As we took $\J$ such
  that $h'(h(b)) = h'(h(c))$, it contradicts that $\I$ is a model of $B \AND C
  \ISA \bot$. Contradiction with $\A_2$ being a universal solution.
  Similar to (\textbf{a}) we can derive a contradiction if assume that
  (\textbf{b}) or (\textbf{c}) does not hold.

  Finally, assume (\textbf{d}) does not hold, i.e., $B \ISA \NOT B' \in \T_{12}$
  and $\Int[\Uni_{\tup{\T_1 \cup \T_{12}, \A_1}}]{B} \neq \emptyset$. Note that
  $\A_2$ is an extended ABox, i.e., it contains only assertions of the form
  $A(u)$, $P(u,v)$ for $u,v \in \N_a \cup N_l$. Take a model $\J$ of $\A_2$ such
  that $\Int[\J]{B'} = \dom[\J]$. Such $\J$ exists as $\A_2$ contains only
  positive facts. Since $\A_2$ is a universal solution, there exist a model $\I$
  of $\K_1$ such that $(\I,\J) \models \T_{12}$. Then, $\Int{B} \neq \emptyset$,
  and it is easy to see that $(\I,\J)\not\models B \ISA \NOT B'$ because
  $\Int{B} \not\subseteq \dom[\J] \setminus \Int[\J]{B'} = \emptyset$. In every
  case we derive a contradiction, hence $\K_1$ and $\M$ are $\Sigma_2$-positive.

  ($\Leftarrow$) Assume conditions~1-2 are satisfied. We show that $\A_2$ is a
  universal solution for $\K_1$ under $\M$. 

  First, $\A_2$ is a solution for $\K_1$ under $\M$. Let $\J$ be a model of
  $\A_2$, then there exists $\I$ a model of $\K_1^{pos}$ such that $(\I,\J)
  \models \T_{12}^{\pos}$. Let $h$ be a homomorphism from $\Uni_{\K_1}$ to $\I$,
  and w.l.o.g., $\I = h(\Uni_{\K_1})$. Define a new function $h':
  \dom[\Uni_{\K_1}] \to \Delta \cup \dom[\I]$, where $\Delta$ is an infinite set
  of domain elements disjoint from $\dom$, as follows:
  \begin{itemize}
  \item $h'(x) = h(x)$ if $\ttype[\Sigma_2]{\Uni_{\tup{\T_1 \cup
          \T_{12},\A_1}}}(x) \neq \emptyset$ or $x \in N_a$.
  \item $h'(x) = d_x$, a fresh domain element from $\Delta$, otherwise.
  \end{itemize}
  We show that interpretation $\I'$ defined as the image of $h'$ applied to
  $\Uni_{\K}$, is a model of $\K_1$ and $(\I',\J)\models \M$.  Clearly, $\I'$
  is a model of the positive inclusions in $\T_1$ and $(\I',\J)$ satisfy the
  positive inclusions from $\T_{12}$. Let $\T_1 \models B \AND C \ISA \bot$
  for basic concepts $B,C$. By contradiction, assume $\I' \not \models B \AND
  C \ISA \bot$, i.e., for some $d \in \dom[\I']$, $d \in \Int[\I']{B} \cap
  \Int[\I']{C}$. We defined $\I'$ as the image of $h'$ on $\Uni_{\K_1}$, hence
  there must exist $b,c\in \dom[\Uni_{\K_1}]$ such that $b \in
  \Int[\Uni_{\K_1}]{B}$, $c \in \Int[\Uni_{\K_1}]{C}$, and $h'(b) = h'(c) =
  d$. Then it cannot be the case that \big[$\ttype[\Sigma_2]{\Uni_{\tup{\T_1
        \cup \T_{12},\A_1}}}(b) \neq \emptyset$ or $b$ is a constant \big],
  and \big[ $\ttype[\Sigma_2]{\Uni_{\tup{\T_1 \cup \T_{12},\A_1}}}(c) \neq
  \emptyset$ or $c$ is a constant~\big] as it contradicts (\textbf{a}) in the
  definition of $\K_1$ and $\M$ are $\Sigma_2$-positive. Assume $b$ is a null
  and $\ttype[\Sigma_2]{\Uni_{\tup{\T_1 \cup \T_{12},\A_1}}}(b) =
  \emptyset$. Then by definition of $h'$, $h'(b) = d_b \in \Delta$ (and $d =
  d_b$). In either case $c$ is a constant, or
  $\ttype[\Sigma_2]{\Uni_{\tup{\T_1 \cup \T_{12},\A_1}}}(c) \neq \emptyset$,
  or $\ttype[\Sigma_2]{\Uni_{\tup{\T_1 \cup \T_{12},\A_1}}}(c) = \emptyset$,
  we obtain contradiction with $h'(b) = d_b = h'(c)$ (remember, $\Delta$ and
  $\dom[\I]$ are disjoint). Contradiction rises from the assumption $\I
  \not\models B \AND C \ISA \bot$. Next, assume $\T_1 \models R \AND Q \ISA
  \bot$ for roles $R,Q$, and $\I' \not\models R \AND Q \ISA \bot$, i.e., for
  some $d_1,d_2 \in \dom[\I']$, $(d_1,d_2) \in \Int[\I']{R} \cap
  \Int[\I']{Q}$. We defined $\I'$ as the image of $h'$ on $\Uni_{\K_1}$, hence
  there must exist $b_1, b_2, c_1, c_2 \in \dom[\Uni_{\K_1}]$ such that
  $(b_1,b_2) \in \Int[\Uni_{\K_1}]{R}$, $(c_1,c_2) \in \Int[\Uni_{\K_1}]{Q}$,
  and $h'(b_i) = h'(c_i) = d_i$ for $i=1,2$. Then it cannot be the case that
  \big[$\ttype[\Sigma_2]{\Uni_{\tup{\T_1 \cup \T_{12},\A_1}}}(b_i) \neq
  \emptyset$ or $b_i$ is a constant \big], and \big[
  $\ttype[\Sigma_2]{\Uni_{\tup{\T_1 \cup \T_{12},\A_1}}}(c_i) \neq \emptyset$
  or $c_i$ is a constant~\big] as it contradicts (\textbf{a}) in the
  definition of $\K_1$ and $\M$ are $\Sigma_2$-positive. Consider the
  following cases:
  \begin{itemize}
  \item $b_1$ is a null and $\ttype[\Sigma_2]{\Uni_{\tup{\T_1 \cup
          \T_{12},\A_1}}}(b_1) = \emptyset$. Then by definition of $h'$,
    $h'(b_1) = d_{b_1} \in \Delta$ (and $d_1 = d_{b_1}$).
    \begin{itemize}
    \item $c_1$ is a null and $\ttype[\Sigma_2]{\Uni_{\tup{\T_1 \cup
            \T_{12},\A_1}}}(c_1) = \emptyset$, then $h'(c_1) = d_{c_1} = d_1$,
      hence $c_1 = b_1$ and $(b_1,b_2) \in \Int[\Uni_{\K_1}]{R}$, $(b_1,c_2)
      \in \Int[\Uni_{\K_1}]{Q}$. By (\textbf{c}) in the definition of $\K_1$
      and $\M$ are $\Sigma_2$-positive, it cannot be the case that
      \big[$\ttype[\Sigma_2]{\Uni_{\tup{\T_1 \cup \T_{12},\A_1}}}(b_2) \neq
      \emptyset$ or $b_2$ is a constant \big], and \big[
      $\ttype[\Sigma_2]{\Uni_{\tup{\T_1 \cup \T_{12},\A_1}}}(c_2) \neq
      \emptyset$ or $c_2$ is a constant~\big]. Assume $b_2$ is a null and
      $\ttype[\Sigma_2]{\Uni_{\tup{\T_1 \cup \T_{12},\A_1}}}(b_2) =
      \emptyset$. Then $h'(b_2) = d_{b_2} \in \Delta$ and in either case $c_2$
      is a constant, or $\ttype[\Sigma_2]{\Uni_{\tup{\T_1 \cup
            \T_{12},\A_1}}}(c_2) \neq \emptyset$, or
      $\ttype[\Sigma_2]{\Uni_{\tup{\T_1 \cup \T_{12},\A_1}}}(c_2) =
      \emptyset$, we obtain contradiction with $h'(b_2) = d_{b_2} = h'(c_2)$
    \item otherwise we obtain contradiction with $h'(b_1) = d_{b_1} = h'(c_1)$
    \end{itemize}
  \end{itemize}
  The cases $b_2$ or $c_i$ are nulls with the empty $\Sigma_2$-type are
  covered by swapping $R$ and $Q$ or by taking their inverses.
  Finally, assume $B \ISA \NOT C \in \T_{12}$ and $(\I',\J) \not\models
  \T_{12}$, i.e., for some $d\in \Int[\I']{B}$, $d \notin \dom[\J] \setminus
  \Int[\J]{C}$. Then there must exist $b \in \Int[\Uni_{\K_1}]{B}$ such that
  $h'(b) = d$. Contradiction with (\textbf{d}). Therefore, indeed, $\I$ is a
  model of $\K_1$ and $(\I,\J) \models \T_{12}$. This concludes the proof
  $\A_2$ is a solution for $\K_1$ under $\M$.

  Second, $\A_2$ is a universal solution. Let $\I$ be a model of $\K_1$ and $\J$
  an interpretation of $\Sigma_2$ such that $(\I,\J) \models \T_{12}$. Then,
  $\I$ is a model of $\K_1^{\pos}$ and $(\I,\J) \models \T_{12}^{\pos}$, and as
  $\A_2$ is a universal solution for $\K_1^{\pos}$ under $\M^{\pos}$, it follows
  that $\J$ is a model $\A_2$.
\end{proof}

The following lemma shows that $\Sigma_2$-positiveness can be checked in
polynomial time.
\begin{lemma}\label{lem:check-sigma2-pos-in-ptime}
  Let $\M = (\Sigma_1, \Sigma_2, \T_{12})$ be a mapping, and $\K_1 =\tup{\T_1,
    \A_1}$ a KB over $\Sigma_1$. Then it can be decided in polynomial time
  whether $\K_1$ and $\M$ are $\Sigma_2$-positive.
\end{lemma}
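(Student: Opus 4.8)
The plan is to reduce each of the four conditions \textbf{(a)}--\textbf{(d)} in the definition of $\Sigma_2$-positiveness to a polynomial number of polynomial-time checks, exploiting the regular structure of the canonical model $\Uni_\K$ for $\K=\tup{\T_1\cup\T_{12},\A_1}$. First I would recall from the construction of $\Uni_\K$ (see Claim~\ref{th:uni-sat-kb}) that its domain $\gpath(\K)$ consists of $\K$-paths whose elements all have a \emph{tail} that is either a constant $a\in\Ind(\A_1)$ or a witness $w_{[R]}$ for an equivalence class $[R]$ of basic roles over $N_R$, and that: (i) $\ttype[\Sigma_2]{\Uni_\K}(\sigma)$, hence membership $\sigma\in\target$, and also $\sigma\in\Int[\Uni_\K]{B}$ for every basic concept $B$, depend only on $\tail(\sigma)$; while (ii) $(\sigma_1,\sigma_2)\in\Int[\Uni_\K]{R}$ holds only in one of three ``shapes'' --- $\K\vdash R(\tail(\sigma_1),\tail(\sigma_2))$ with both tails constants, or $\sigma_2=\sigma_1 w_{[S]}$ with $[S]\leq_\T[R]$, or $\sigma_1=\sigma_2 w_{[S]}$ with $[S]\leq_\T[R^-]$ --- and which shape applies, together with the tails involved, is all that matters. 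Since $\Ind(\A_1)$ and the set of classes $[R]$ are both of polynomial size, there are only polynomially many relevant \emph{sorts} of domain elements; moreover the generating relation $\leadsto_\K$ induces a polynomial-size graph whose reachable witness nodes are exactly the witnesses that actually occur in $\dom[\Uni_\K]$, and $\ISA^\C_\T$, $\ISA^\R_\T$, $\leq_\T$, $\leadsto_\K$, together with all the \dlliter entailments involved, are computable in polynomial time \cite{CDLLR07}.

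Using this, condition \textbf{(d)} is immediate: $\Int[\Uni_\K]{B}=\emptyset$ iff no constant $a\in\Ind(\A_1)$ has $\K\vdash B(a)$ and no witness $w_{[R]}$ reachable in $\leadsto_\K$ has $\T\vdash\SOMET{R^-}\ISA B$, which is a polynomial-time test, and the case of role inclusions $R\ISA\NOT R'$ is analogous. For \textbf{(a)} and \textbf{(c)}, I would iterate over the polynomially many disjointness assertions $B\AND C\ISA\bot$ (resp.\ $R\AND Q\ISA\bot$) of $\T_1$ and, for each, enumerate the polynomially many combinations of sorts --- and, for \textbf{(c)}, edge shapes --- for the witnessing elements, testing for every combination whether it is realized in $\Uni_\K$ (via the appropriate $\vdash$-entailments and $\leadsto_\K$-reachability) and whether all its designated elements lie in $\target$; the assertion violates $\Sigma_2$-positiveness exactly when some such combination is realized with all designated elements in $\target$. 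For \textbf{(b)}, the $R$-edge $(b_1,b_2)$ and the $Q$-edge $(c_1,c_2)$ range independently, so the condition fails iff $\Uni_\K$ contains both an $R$-edge with both endpoints in $\target$ and a $Q$-edge with both endpoints in $\target$; by~(ii), ``$\Uni_\K$ has an $R$-edge with both endpoints in $\target$'' is decided by scanning the polynomially many pairs of sorts against the three edge shapes.

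I expect the bookkeeping for \textbf{(b)} and \textbf{(c)} to be the main obstacle: one must verify carefully that the existence of the relevant role-edges --- and, for \textbf{(c)}, of two such edges sharing their source --- really is captured by a finite case split over sorts, correctly handling the forward and backward witness edges, the side condition $\tail(\sigma_1)\leadsto_\K w_{[S]}$ that makes a witness present, and degenerate overlaps such as $b=c$ or $b_1=c_1$. Once this case analysis is in place, the algorithm performs a polynomial number of polynomial-time subchecks, so $\Sigma_2$-positiveness is decidable in polynomial time.
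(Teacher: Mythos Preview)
Your proposal is correct and follows essentially the same approach as the paper: both exploit that concept membership and $\target$-membership in $\Uni_\K$ depend only on $\tail(\sigma)$, so that each of \textbf{(a)}--\textbf{(d)} reduces to polynomially many \dlliter entailment and reachability checks. The paper phrases the finiteness argument as ``it suffices to look at $\K$-paths of length bounded by the number of roles'' (with $|n-n'|=1$ for the adjacent element in \textbf{(c)}), while you phrase it as a quotient by tail/sort plus an enumeration of the three edge shapes; these are the same reduction, and your treatment of \textbf{(b)} and \textbf{(c)} via explicit edge shapes is if anything a bit more careful than the paper's sketch.
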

\begin{proof}
  We check (\textbf{a}) as follows: 
  \begin{itemize}
  \item for each concept disjointness axiom $B_1 \AND B_2 \ISA \bot \in \T_1$,
    check for $i=1,2$ if $\K_1 \models B_i(b_i)$ for some $b_i \in \Ind(\A_1)$
    or there exists a $\K_1$-path $x = a \cdot w_{[S_1]} \dots w_{[S_n]}$ such
    that $B_i \in \ttype{\Uni_{\tup{\T_1 \cup \T_{12},\A_1}}}(x)$ and
    $\ttype[\Sigma_2]{\Uni_{\tup{\T_1 \cup \T_{12},\A_1}}}(x) \neq
    \emptyset$. If yes, then (\textbf{a}) does not hold, otherwise it holds.
  \end{itemize}

  We check (\textbf{b}) as follows: 
  \begin{itemize}
  \item for each role disjointness axiom $R \AND Q \ISA \bot \in \T_1$, check
    for $i=1,2,3,4$ if $\K_1 \models B_i(b_i)$ for some $b_i \in \Ind(\A_1)$
    or there exists a $\K_1$-path $x=a \cdot w_{[S_1]} \dots w_{[S_n]}$ such
    that $B_i \in \ttype{\Uni_{\tup{\T_1 \cup \T_{12},\A_1}}}(x)$ and
    $\ttype[\Sigma_2]{\Uni_{\tup{\T_1 \cup \T_{12},\A_1}}}(x) \neq \emptyset$,
    where $B_1 = \SOMET{R}$, $B_2 = \SOMET{R^-}$, $B_3 = \SOMET{S}$, $B_4 =
    \SOMET{S^-}$. If yes, then (\textbf{b}) does not hold, otherwise it holds.
  \end{itemize}

  We check (\textbf{c}) as follows:
  \begin{itemize}
  \item for each role disjointness axiom $R_1 \AND R_2 \ISA \bot \in \T_1$,
    check if there exists a $\K_1$-path $x = a \cdot w_{[S_1]} \dots
    w_{[S_n]}$ such that $\SOMET{R_1}, \SOMET{R_2} \in \ttype{\Uni_{\tup{\T_1
          \cup \T_{12},\A_1}}}(x)$, then check for $i=1,2$ if $\K_1 \models
    R_i(x,b_i)$ for some $b_i \in \Ind(\A_1)$ or there exists a $\K_1$-path
    $y_i = a' \cdot w_{[Q_1]} \dots w_{[Q_n']}$ such that $R_i \in
    \rtype{\Uni_{\tup{\T_1 \cup \T_{12},\A_1}}}(x,y_i)$ and
    $\ttype[\Sigma_2]{\Uni_{\tup{\T_1 \cup \T_{12},\A_1}}}(y_i) \neq
    \emptyset$. If yes, then (\textbf{c}) does not hold, otherwise it holds.
  \end{itemize}
  Note that in the previous three checks, it is sufficient to look for paths
  where $n$ is bounded by the number of roles in $\K_1$, moreover in the last
  check $|n - n'| = 1$.

  We check (\textbf{d}) as follows:
  \begin{itemize}
  \item for each concept disjointness axiom $B \ISA \NOT B' \in \T_{12}$,
    check if $\K_1$ implies that $B$ is necessarily non-empty. If yes, then
    (\textbf{d}) does not hold, otherwise
  \item for each role disjointness axiom $R \ISA \NOT R' \in \T_{12}$, check
    if $\K_1$ implies that $R$ is necessarily non-empty. If yes, then
    (\textbf{d}) does not hold, otherwise it holds.
  \end{itemize}

  It is straightforward to see that each of the checks can be done in
  polynomial time as the standard reasoning in \dlliter is in \NLOGSPACE.
\end{proof}

\begin{lemma}
  \label{lem:uni-sol-exists-iff-chase-fin-part}
  Let $\M = (\Sigma_1, \Sigma_2, \T_{12})$ be a mapping, and $\K_1 =\tup{\T_1,
    \A_1}$ a KB over $\Sigma_1$ such that $\K_1$ and $\M$ are
  $\Sigma_2$-positive. Then, a universal solution (with extended ABoxes) for
  $\K_1$ under $\M$ exists iff $\Uni_{\tup{\T_1 \cup \T_{12},\A_1}}$ is
  $\Sigma_2$-homomorphically embeddable into a finite subset of itself.
\end{lemma}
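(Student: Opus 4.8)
The plan is to obtain the statement by combining the two characterization lemmas already at our disposal with one structural observation about the canonical model. Write $\K=\tup{\T_1\cup\T_{12},\A_1}$. The first point I would record is that the construction of $\Uni_\K$ inspects only the positive inclusions of $\T_1\cup\T_{12}$ --- it is built from $\ISA^\C_\T$, $\ISA^\R_\T$, the relation $\vdash$, and the generating relation $\leadsto_\K$, none of which is affected by a disjointness assertion --- so $\Uni_\K=\Uni_{\tup{\T_1^{\pos}\cup\T_{12}^{\pos},\A_1}}$. Since $\K_1$ and $\M$ are $\Sigma_2$-positive by hypothesis, Lemma~\ref{lem:A2-uni-sol-iff-homo-equiv-and-sigma2-positive} reduces the existence of a universal solution for $\K_1$ under $\M$ (with extended ABoxes) to the existence of a universal solution for $\K_1^{\pos}$ under $\M^{\pos}$, and Lemma~\ref{lem:A2-uni-sol-dlliter-pos-iff-homo-equiv} rephrases the latter as the existence of an extended ABox $\A_2$ over $\Sigma_2$ with $\V_{\A_2}$ $\Sigma_2$-homomorphically equivalent to $\Uni_\K$. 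So it remains to prove that such an $\A_2$ exists if and only if $\Uni_\K$ admits a $\Sigma_2$-homomorphism into a finite subinterpretation of itself.

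For the ``if'' direction I would take a finite subinterpretation $\I_0$ of $\Uni_\K$ and a $\Sigma_2$-homomorphism $h\colon\Uni_\K\to\I_0$, and let $\A_2$ be the extended ABox over $\Sigma_2$ read off from the $\Sigma_2$-reduct of $\I_0$: each element of $\dom[\I_0]$ that is a constant is kept, each other element is replaced by a fresh labeled null, and $\A_2$ collects the corresponding $\Sigma_2$-atoms, so that $\V_{\A_2}$ and $\I_0$ have the same $\Sigma_2$-atoms up to renaming of the nulls. The inclusion $\I_0\hookrightarrow\Uni_\K$ is a $\Sigma_2$-homomorphism --- the only delicate point is that an existential concept $\SOMET{R}$ holding at some $d\in\dom[\I_0]$ still holds in $\I_0$, because a witness for it chosen inside $\I_0$ is also a witness in $\Uni_\K$ --- so composing it with the isomorphism $\V_{\A_2}\cong\I_0$ gives a $\Sigma_2$-homomorphism $\V_{\A_2}\to\Uni_\K$, while composing $h$ with $\I_0\cong\V_{\A_2}$ gives one in the opposite direction. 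Thus $\V_{\A_2}$ and $\Uni_\K$ are $\Sigma_2$-homomorphically equivalent, and, using the hypothesis of $\Sigma_2$-positiveness, $\A_2$ is a universal solution for $\K_1$ under $\M$.

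For the ``only if'' direction I would start from such an $\A_2$; then $\V_{\A_2}$ is finite and we have $\Sigma_2$-homomorphisms $g\colon\V_{\A_2}\to\Uni_\K$ and $h\colon\Uni_\K\to\V_{\A_2}$. Take $\I_0$ to be the subinterpretation of $\Uni_\K$ induced by the image of $g$ (together with $\Ind(\A_1)$); since $\V_{\A_2}$ is finite, the interpretation of every $\Sigma_2$-predicate in $\I_0$ is finite, so $\I_0$ is a finite subinterpretation of $\Uni_\K$. The key claim is that $g\circ h$ is a $\Sigma_2$-homomorphism from $\Uni_\K$ into $\I_0$: it fixes constants, concept names are preserved because $\I_0$ is an induced subinterpretation, and for a basic role $R$ over $\Sigma_2$ with $\sigma\in(\SOMET{R})^{\Uni_\K}$ the homomorphism $h$ supplies some $u\in\dom[\V_{\A_2}]$ with $(h(\sigma),u)\in R^{\V_{\A_2}}$, hence $(g(h(\sigma)),g(u))\in R^{\Uni_\K}$ with $g(u)\in\dom[\I_0]$, so $g(h(\sigma))\in(\SOMET{R})^{\I_0}$. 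Therefore $\Uni_\K$ maps $\Sigma_2$-homomorphically into the finite subinterpretation $\I_0$ of itself, which is what we wanted.

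The step I expect to be the main obstacle is precisely the preservation of the existential concepts $\SOMET{R}$ when one passes to a finite induced subinterpretation: this is what makes both directions go through, and it works only because every witness needed in the smaller structure is the homomorphic image of a witness that is already present. A secondary, more tedious point is the bookkeeping around the (infinite) domain of $\Uni_\K$: one must be careful to take $\I_0$ over the finitely many \emph{relevant} elements --- those in $\Ind(\A_1)$ together with the finitely many images under $g$ of the nulls and constants of $\A_2$ --- and to observe that isolated elements, whose $\Sigma_2$-type is empty, may be sent arbitrarily into $\I_0$, so that $g\circ h$ is genuinely a map into $\I_0$.
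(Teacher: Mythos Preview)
Your argument is correct and follows the same route as the paper: reduce via Lemmas~\ref{lem:A2-uni-sol-iff-homo-equiv-and-sigma2-positive} and~\ref{lem:A2-uni-sol-dlliter-pos-iff-homo-equiv} to the positive fragment, then in one direction read an ABox $\A_2$ off the finite subset, and in the other take the image $g(\V_{\A_2})$ inside $\Uni_\K$ and compose.

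One remark: the step you single out as ``the main obstacle'' --- preservation of $\SOMET{R}$ when passing to a finite induced subinterpretation or along $g\circ h$ --- is in fact automatic and does not require a separate argument. A $\Sigma_2$-homomorphism, by definition, preserves $\rtype[\Sigma_2]{\cdot}(x,y)$ for \emph{all} pairs $x,y$; hence if $x\in(\SOMET{R})^{\I}$, pick any witness $y$ with $(x,y)\in R^{\I}$, and preservation of the role atom alone gives $(h(x),h(y))\in R^{\I'}$, so $h(x)\in(\SOMET{R})^{\I'}$. In other words, preservation of basic-concept types is already implied by preservation of concept-name types together with role types, so your careful witness-chasing for $\SOMET{R}$ (while correct) is redundant. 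The paper's proof accordingly treats the homomorphic equivalence of $\V_{\A_2}$, $h(\V_{\A_2})$, and $\Uni_\K$ as immediate once one has the two maps. Your observation that $\Uni_\K=\Uni_{\K^{\pos}}$, and your acknowledgement of the domain bookkeeping (both structures carry all of $N_a$ in their domain, so ``finite'' really means finite extensions), are points the paper leaves implicit.
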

\begin{proof}
  ($\Leftarrow$) Let ABox $\A_2$ be an ABox over $\Sigma_2$ such that
  $\V_{\A_2}$ is a finite subset of $\Uni_{\tup{\T_1 \cup \T_{12}, \A_1}}$
  and there exists a $\Sigma_2$-homomorphism $h$ from $\Uni_{\tup{\T_1 \cup
      \T_{12}, \A_1}}$ to $\V_{\A_2}$. Then, $\Uni_{\tup{\emptyset,\A_2}}$
  is trivially homomorphically embeddable into $\Uni_{\tup{\T_1 \cup
      \T_{12},\A_1}}$. Hence by
  Lemma~\ref{lem:A2-uni-sol-iff-homo-equiv-and-sigma2-positive}, $\A_2$ is a
  universal solution for $\K_1$ under $\M$.

  ($\Rightarrow$) Let $\A_2$ be a universal solution for $\K_1$ under
  $\M$. Then $\V_{\A_2}$ is $\Sigma_2$-homomorphically equivalent to
  $\Uni_{\tup{\T_1 \cup \T_{12}, \A_1}}$ by
  Lemma~\ref{lem:A2-uni-sol-iff-homo-equiv-and-sigma2-positive}.  Let $h$ be a
  homomorphism from $\V_{\A_2}$ to $\Uni_{\tup{\T_1 \cup \T_{12}, \A_1}}$,
  and $h(\V_{\A_2})$ the image of $h$. Then, $h(\V_{\A_2})$ is a finite
  subset of $\Uni_{\tup{\T_1 \cup \T_{12}, \A_1}}$, moreover it is
  homomorphically equivalent to $\V_{\A_2}$ and to $\Uni_{\tup{\T_1 \cup
      \T_{12}, \A_1}}$. Therefore, it follows that $\Uni_{\tup{\T_1 \cup
      \T_{12}, \A_1}}$ is $\Sigma_2$-homomorphically embeddable to a finite
  subset of itself.
\end{proof}

\subsection{Definitions and Preliminary Results: The Automata Construction for
  Theorem~\ref{the:non-emp-uni-sol-pspace-hard-and-exptime}}
\label{subsec:automata-construction}
\subsubsection{Definition of alternating two-way automatas}
Infinite trees are represented as prefix closed (infinite) sets of words over
$\mathbb{N}$ (the set of positive natural numbers). Formally, an infinite tree
is a set of words $T \subseteq \mathbb{N}^*$, such that if $x \cdot c \in T$,
where $x\in \mathbb{N}^*$ and $c\in \mathbb{N}$, then also $x\in T$. The
elements of $T$ are called nodes, the empty word $\epsilon$ is the root of
$T$, and for every $x \in T$, the nodes $x\cdot c$, with $c \in \mathbb{N}$,
are the successors of $x$. By convention we take $x \cdot 0 = x$, and $x\cdot
i \cdot -1 = x$. The branching degree $d(x)$ of a node $x$ denotes the number
of successors of $x$. If the branching degree of all nodes of a tree is
bounded by $k$, we say that the tree has branching degree $k$. An infinite
path $P$ of $T$ is a prefix closed set $P \subseteq T$ such that for every $i
\geq 0$ there exists a unique node $x \in P$ with $\card{x} = i$. A labeled
tree over an alphabet $\Sigma$ is a pair $(T,V)$, where $T$ is a tree and $V
:T \to \Sigma$ maps each node of $T$ to an element of $\Sigma$.

\emph{Alternating automata on infinite trees} are a generalization of
nondeterministic automata on infinite trees, introduced in [9]. They allow for
an elegant reduction of decision problems for temporal and program logics [3,
1]. Let $\B(I)$ be the set of positive boolean formulae over $I$, built
inductively by applying $\wedge$ and $\vee$ starting from true, false, and
elements of $I$. For a set $J \subseteq I$ and a formula $\phi \in \B(I)$, we
say that $J$ satisfies $\phi$ if and only if, assigning true to the elements
in $J$ and false to those in $I\setminus J$, makes $\phi$ true. For a positive
integer $k$, let $[k]=\{-1, 0, 1,\ldots, k\}$. A \emph{two-way alternating
  tree automaton (2ATA)} running over infinite trees with branching degree
$k$, is a tuple $\Au = \tup{\Sigma,Q,\delta,q_0,F}$, where $\Sigma$ is the
input alphabet, $Q$ is a finite set of states, $\delta : Q \times \Sigma \to
\B([k] \times Q)$ is the transition function, $q_0 \in Q$ is the initial
state, and $F$ specifies the acceptance condition.

The transition function maps a state $q \in Q$ and an input letter $\sigma \in
\Sigma$ to a positive boolean formula over $[k]\times Q$. Intuitively, if
$\delta(q, \sigma) = \phi$, then each pair $(c, q')$ appearing in $\phi$
corresponds to a new copy of the automaton going to the direction suggested by
$c$ and starting in state $q'$. For example, if $k = 2$ and
$\delta(q_1,\sigma) = ((1,q_2) \wedge (1,q_3)) \vee ((-1, q_1) \wedge (0,
q_3))$, when the automaton is in the state $q_1$ and is reading the node $x$
labeled by the letter $\sigma$, it proceeds either by sending off two copies,
in the states $q_2$ and $q_3$ respectively, to the first successor of $x$
(i.e., $x\cdot 1)$, or by sending off one copy in the state $q_1$ to the
predecessor of $x$ (i.e., $x\cdot -1)$ and one copy in the state $q_3$ to $x$
itself (i.e., $x\cdot 0$).

A run of a 2ATA $\Au$ over a labeled tree $(T, V)$ is a labeled tree
$(T_{\run}, \run)$ in which every node is labeled by an element of $T \times
Q$. A node in $T_\run$ labeled by $(x, q)$ describes a copy of $A$ that is in
the state $q$ and reads the node $x$ of $T$. The labels of adjacent nodes have
to satisfy the transition function of $\Au$. Formally, a run $(T_\run,\run)$
is a $T \times Q$-labeled tree satisfying:
\begin{itemize}
\item $\epsilon \in T_\run$ and $\run(\epsilon)=(\epsilon,q_0)$.  
\item Let $y \in T_\run$, with $\run(y) = (x,q)$ and $\delta(q,V(x)) =
  \phi$. Then there is a (possibly empty) set $S = \{(c_1, q_1), \ldots ,
  (c_n, q_n)\} \subseteq [k] \times Q$ such that:
  \begin{itemize}
  \item $S$ satisfies $\phi$ and
  \item for all $1\leq i\leq n$, we have that $y \cdot i \in T_\run$, $x\cdot
    c_i$ is defined ($x \cdot c_i \in T$), and $\run(y\cdot i)= (x\cdot c_i,
    q_i)$.
  \end{itemize}

\end{itemize}
A run $(T_\run,\run)$ is accepting if all its infinite paths satisfy the
acceptance condition. Given an infinite path $P \in T_\run$, let
$\mathit{inf}(P) \subseteq Q$ be the set of states that appear infinitely
often in $P$ (as second components of node labels). We consider here B\"uchi
acceptance conditions. A B\"uchi condition over a state set $Q$ is a subset
$F$ of $Q$, and an infinite path $P$ satisfies $F$ if $\mathit{inf}(P) \cap F
\neq \emptyset$.

The non-emptiness problem for 2ATAs consists in determining, for a given 2ATA,
whether the set of trees it accepts is nonempty. It is known that this problem
can be solved in exponential time in the number of states of the input
automaton $\Au$, but in linear time in the size of the alphabet as well as in
the size of the transition function of~$\Au$.

\subsubsection{The automata construction}
Now, we are going to construct two 2ATA automatas and a one-way
non-deterministic automata to use them as a mechanism to decide the
non-emptiness problem for universal solutions.
More specifically, let $\Sigma_1$, $\Sigma_2$ be signatures with no concepts
or roles in common, and $\K = \tup{\T, \A}$ a KB over $\Sigma_1 \cup
\Sigma_2$, $\NN = \{a_1, \ldots, a_n\}$ be the set of individuals in $\A_1$,
$\BB$ be the set of basic concepts and $\RR$ be the set of basic roles over
the signature of $\K$ (that is, over $\Sigma_1 \cup \Sigma_2$). Finally,
assume that $r$, $G$ are special characters not mentioned in $\NN \cup \BB
\cup \RR$, and let $\PP = \{P_{ij} \mid P \text{ is an atomic role over the
  signature of } \K \text{ and } 1\leq i,j \leq n\}$. Then assuming that
$\Sigma_\K = 2^{\NN \cup \BB \cup \RR \cup \PP \cup \{r, G\}}$ and $\Gamma_\K
= \{ \sigma \in \Sigma_\K \mid r \in \sigma$, $\sigma \cap \NN \neq
\emptyset$, or every basic concept and every basic role in $\sigma$ is over
$\Sigma_2\}$, we construct the following automata:
\begin{itemize}
\item $\acan$: The alphabet of this automaton is $\Sigma_\K$, and it accepts
  trees that are essentially the tree corresponding to the canonical model of
  $\K$, but with nodes arbitrary labeled with the special character $G$.

\item $\asol$: The alphabet of this automaton is $\Sigma_\K$, and it accepts a
  tree if its subtree labeled with $G$ corresponds to a tree model $\I$ of
  $\K$ (tree models are models which from trees on the labeled nulls).

\item $\agfin$: The alphabet of this automaton is $\Gamma_\K$, and it accepts
  a tree if it has a finite prefix where each node is marked with the special
  symbol $G$, and no other node in the tree is marked with $G$.
\end{itemize}

\subsubsection{Automaton $\acan$ for the canonical model of $\K = \tup{\T,\A}$}

$\acan$ is a two way alternating tree automaton (2ATA) that accepts the tree
corresponding to the canonical model of the \dlliter KB $\K = \tup{\T,\A}$,
with nodes arbitrarily labeled with a special character $G$. Formally, $\acan
= \tup{\Sigma_{\K}, Q_{\cn}, \delta_{\cn}, q_0 , F_{\cn}}$, where
\begin{eqnarray*}
  Q_{\cn} & = & \{q_0, q_s,q^*_{\NOT r},q_d\} \cup \{q^*_{X}, q^*_{\NOT X} \mid X \in
  \NN \cup \BB \cup \RR \cup \PP\} \cup \{q_{\SOMET{R}}, q_R \mid R \in \RR\}, 
\end{eqnarray*}
and the transition function $\delta_{\cn}$ is defined as follows. Assume
without loss of generality that the number of basic roles over the signature
of $\K$ is equal to $n$ (this can always be done by adding the required
assertions to the ABox), and let $f : \RR \to \{1, \ldots, n\}$ be a
one-to-one function.  Then $\delta_{\cn}: Q_{\cn} \times \Sigma_\K \rightarrow
\B([n] \times Q_{\cn})$ is defined as:
\begin{enumerate}
\item For each $\sigma \in \Sigma_\K$ such that $r \in \sigma$,
  $\delta_{\cn}(q_0,\sigma)$ is defined as:
  \begin{align*}
    \bigwedge_{i=1}^n \bigg[ & (i, q_s) \wedge (i, q^*_{\NOT r}) \wedge (i,
    q^*_{a_i}) \wedge \bigg(\bigwedge_{j \in \{1, \ldots, n\} \,:\, j \neq i}
    (i,q^*_{\NOT a_j})\bigg)
    \ \wedge\\
    & \bigwedge_{j = 1}^n \bigg(\bigwedge_{P \in \PP \,:\, \K \models
      P(a_i,a_j)} (0, q^*_{P_{ij}}) \wedge \bigwedge_{P \in \PP \,:\, \K
      \not\models P(a_i,a_j)} (0, q^*_{\NOT P_{ij}})\bigg) \ \wedge\\
    & \bigg(\bigwedge_{B \in \BB \,:\, \K \models B(a_i)} (i,q^*_{B})\bigg) \wedge
    \bigg(\bigwedge_{B \in \BB  \,:\,  \K \not\models B(a_i)} (i,q^*_{\NOT B})\bigg)  \wedge {} \\
    & \bigg(\bigwedge_{R \in \RR \,:\, \substack{R\text{ is $\leq_\T$-minimal s.t. }\\
        \K \models \SOMET{R}(a_i) \text{ and}\\
        \bigwedge_{j=1}^n \K \not\models R(a_i,a_j)}} (i,q_{\SOMET{R}}) \bigg) \wedge
    \bigg(\bigwedge_{R \in \RR \,:\, \substack{\K \not\models \SOMET{R}(a_i), \text{ or}\\
        \bigvee_{j=1}^n \K \models R(a_i,a_j), \text{ or} \\
        R\text{ is not $\leq_\T$-minimal}}} (i,q^{\ng}_{\SOMET{R}})\bigg)
    \bigg]
  \end{align*}

\item For each $\sigma \in \Sigma_\K$:
  \begin{multline*}
    \delta_{\cn}(q_s,\sigma)=\bigwedge_{i=1}^n \bigg[ (i, q_s) \wedge (i, q^*_{\NOT r}) \wedge 
    \bigwedge_{j=1}^n (i,q^*_{\NOT a_j})  \wedge
    \bigg((i,q_d) \vee \bigvee_{R \in \RR} (i,q^*_{R}) \bigg)
    \bigg]
  \end{multline*}

\item For each $\sigma \in \Sigma_\K$:
  \begin{eqnarray*}
    \delta_{\cn}(q_d,\sigma) & = & \bigwedge_{R \in \RR} (0,q^*_{\NOT R}) \wedge
    \bigwedge_{i=1}^n (i,q_d)      
  \end{eqnarray*}

\item For each $\sigma \in \Sigma_\K$ and each basic role $[R]$ from~$\RR$:
  \begin{equation*}\delta_{\cn}(q^{\ng}_{\SOMET{R}}, \sigma) = 
    {\displaystyle \bigwedge_{R' \in \RR}  (f(R),q^*_{\NOT R'})} 
  \end{equation*}

\item For each $\sigma \in \Sigma_\K$ and each basic role $[R]$ from~$\RR$:
  \begin{equation*}\delta_{\cn}(q_{\SOMET{R}}, \sigma) = 
    {\displaystyle (f(R),q_{R}) 
    }  
  \end{equation*}

\item For each $\sigma \in \Sigma_\K$ such that $\sigma \cap \NN =
  \emptyset$ and each basic role $[R]$ from~$\RR$, $\delta_{\cn}(q_{R}, \sigma)$
  is defined as
  \begin{multline*}
    {\displaystyle \bigg(\bigwedge_{R' \in \RR \,:\, \K \models R \ISA R'} %
      (0, q^*_{R'}) \bigg)} \wedge %
    {\displaystyle \bigg(\bigwedge_{R' \in \RR \,:\, \K \not\models R \ISA R'} %
      (0, q^*_{\NOT R'}) \bigg)} \wedge {}\\
    {\displaystyle\bigg(\bigwedge_{B \in \BB \,:\, \K \models\SOMET{R^-} \ISA B} %
      (0, q^*_B) \bigg)} \wedge %
    {\displaystyle \bigg(\bigwedge_{B \in \BB \,:\, \K \not\models\SOMET{R^-} \ISA B} %
      (0,q^*_{\NOT B}) \bigg)} \wedge {} \\
    {\displaystyle \bigg(\bigwedge_{S \in \RR \,:\, \substack{\, S\text{ is
            $\leq_\T$-minimal s.t. }\\ \K \models\SOMET{R^-}\ISA\SOMET{S}, \, [R^-]\neq [S]}} %
      (0,q_{\SOMET{S}}) \bigg)} \wedge %
    {\displaystyle \bigg(\bigwedge_{S \in \RR \,:\, \substack{ \K \not\models \SOMET{R^-} \ISA
          \SOMET{S}, \text{ or } [R^-] = [S],\\ \text{or }S\text{ is not $\leq_\T$-minimal}}}
      (0,q^{\ng}_{\SOMET{S}}) \bigg)} %
  \end{multline*}     

\item For each $\sigma \in \Sigma_\K$:
  \begin{equation*} \delta_{\cn}(q^*_{\NOT r}, \sigma) =
    \begin{cases}
      \text{{\it true}} & \text{if } r \not\in \sigma\\
      \text{{\it false}} & \text{otherwise}
    \end{cases}
  \end{equation*}

\item For each $\sigma \in \Sigma_\K$ and each $X \in \BB \cup \RR \cup \NN \cup
  \PP$:
  \begin{align*} \delta(q^*_{X}, \sigma) =
    \begin{cases}
      \text{{\it true}} & \text{if } X \in \sigma\\
      \text{{\it false}} & \text{otherwise}
    \end{cases}
    \qquad 
    \delta_{\cn}(q^*_{\NOT X}, \sigma) =
    \begin{cases}
      \text{\it true} & \text{if } X \notin \sigma\\
      \text{\it false} & \text{otherwise}
    \end{cases}
  \end{align*}
\end{enumerate}
Finally, the acceptance condition is $F_{\cn} = Q_{\cn}$.

\vspace{0.7cm} To represent the canonical model $\Uni_\K$ of $\K$ as a labeled
tree, we label each individual $x$ with the set of concepts $B$ such that $x
\in \Int[\Uni_\K]{B}$. We also add a basic role $R$ to the label of $x$
whenever $(x',x) \in \Int[\Uni_\K]{R}$ and $x$ is not an individual. Moreover,
we make sure this tree is an infinite full $n$-ary tree, where $n$ is the
number of individuals in $\Ind(\A)$ and basic roles in $\RR$. Thus, let $n^*$
be the set of sequences of numbers from $1$ to $n$ of the form $n^* = \{i_1
\cdot i_2 \cdot \ \cdots \ \cdot i_m \mid 1 \leq i_j \leq n, m \geq j \geq
0\}$, the sequence of length $0$ is denoted by $\epsilon$.

Recall that we have a numbering of individuals $\{a_1, \ldots, a_n\} =
\Ind(\A)$, and each role $R \in \RR$ can be identified through the number
$f(R) \in \{1, \ldots, n\}$. Therefore, the elements of $\dom[\Uni]$ can be
seen as sequences of natural numbers, namely a sequence $a_i \cdot w_{[R_1]}
\cdot \cdots \cdot w_{[R_m]}$ corresponds to the numeric sequence $i \cdot
f(R_1) \cdot \cdots \cdot f(R_m)$. However, for better readability we use the
original notation as $a_i \cdot w_{[R_1]} \cdot \cdots \cdot w_{[R_m]}$. Note,
that $\dom[\Uni] \subseteq n^*$.

In the following, we assume $\K$ is fixed and for simplicity we
use $\Uni$ instead of $\Uni_\K$.

The tree encoding of the canonical model $\Uni$ of $\K = \tup{\T,\A}$ is the
$\Sigma_\K$-labeled tree $T_{\Uni} = (n^*, \Int[\Uni]{V})$, such that
\begin{itemize}
\item $\Int[\Uni]{V}(\epsilon) = \{r\} \cup \{P_{ij} \mid (a_i,a_j) \in
  \Int[\Uni]{P}, P\text{ is an atomic role}\}$,
\item for each $x \in \dom[\Uni]$:
  \[\begin{array}[t]{rcl}
    \Int[\Uni]{V}(x) &= &\{B \mid x \in \Int[\Uni]{B}\} \cup {}\\
    & &\{S \mid (x',x) \in \Int[\Uni]{S}\text{ and }x = x'
    \cdot w_{[R]}\text{ for some role }R\text{ s.t.\ }[R] \leq_\T [S]\} \cup {}\\
    & &\{a \mid a \in \Ind(\A)\text{ and }x=a\}.
  \end{array}
  \] 
\end{itemize}

Conversely, we can see any $\Sigma_\K$-labeled tree as a representation of an
interpretation of $\K$, provided that each individual name occurs in the label
of only one node, a child of the root. Informally, the domain of this
interpretation are the nodes of the tree reachable from the root through a
sequence of roles, except the root itself. The extensions of individuals,
concepts and roles are determined by the node labels.

Given a $\Sigma_\K$-labeled tree $(T,V)$, we call a node $c$ an
\emph{individual node} if $a \in V(c)$ for some $a \in \Ind(\A)$, and we call
$c$ an $a$-node if we want to make the precise $a$ explicit. We say that $T$
is \emph{individual unique} if 
for each $a \in \Ind(\A)$ there is exactly one $a$-node, a child of the root
of $T$.

An individual unique $\Sigma_\K$-labeled tree $(T,V)$, \emph{represents} the
interpretation $\I_T$ defined as follows. For each role name $P$, let:
\[\begin{array}[t]{rcl}
  R_p & = & \{(x,x \cdot i) \mid P \in V(x \cdot i)\} \cup \{(x \cdot i,x) \mid
  P^- \in V(x \cdot i)\} \cup {} \\
  && \{(c, c') \mid a_i \in V(c), a_j \in V(c')\text{ and }P_{ij} \in V(\epsilon)\} 
\end{array}\]
and \[\dom[\I_T] = \{x \mid (i,x) \in \bigcup_{P \in \RR}
(R_P \cup R_P^-)^*, i \in \{1, \ldots, n\}\},\]
where $R_P^-$ denotes the inverse of relation $R_P$. Then the interpretation
$\I_T = (\dom[\I_T], \Int[\I_T]{\cdot})$ is defined as:
\[
\begin{array}{ccll}
  \Int[\I_T]{a_i} & = & c \text{ such that }a_i \in V(c), & \text{for each }a_i
  \in \Ind(\A)\\
  \Int[\I_T]{A} & = & \dom[\I_T] \cap \{x \mid A \in V(x)\}, & \text{for each
    atomic concept }A \in \BB\text{ and}\\
  \Int[\I_T]{P} & = & (\dom[\I_T] \times \dom[\I_T]) \cap R_P, & \text{for each
    atomic role }P \in \RR\\
\end{array}
\]

\begin{proposition}
  The following hold for $\acan$:
  \begin{itemize}
  \item $T_{\Uni} \in \L(\acan)$.
  \item for each $(T,V) \in \L(\acan)$, $(T,V)$ is individual unique and $\I_T$ is
    isomorphic to $\Uni$, the canonical model of $\K$.
  \end{itemize}
\end{proposition}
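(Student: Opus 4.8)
The plan is to establish both bullets from a single ``translation dictionary'' matching the side conditions appearing in $\delta_{\cn}$ with the defining clauses of the canonical model $\Uni=\Uni_\K$ and of its tree encoding $T_\Uni$. First I would assume $\K$ consistent (the inconsistent case is degenerate and, in the algorithm that uses $\acan$, is disposed of by a separate \NLOGSPACE check), so that $\K\models\alpha$ iff $\K\vdash\alpha$ for membership assertions, as noted in the preliminaries, and Claim~\ref{th:uni-sat-kb} applies. Then I would record: for $a_i\in\Ind(\A)$, that ``$R$ is $\leq_\T$-minimal with $\K\models\SOMET R(a_i)$ and $\K\not\models R(a_i,a_j)$ for all $j$'' is exactly the condition defining $a_i\leadsto_\K w_{[R]}$; for a witness, that ``$S$ is $\leq_\T$-minimal with $\K\models\SOMET{R^-}\ISA\SOMET S$ and $[R^-]\neq[S]$'' is exactly the condition defining $w_{[R]}\leadsto_\K w_{[S]}$; and that $\K\models B(a_i)$ iff $a_i\in\Int[\Uni]{B}$, $\K\models P(a_i,a_j)$ iff $(a_i,a_j)\in\Int[\Uni]{P}$, $\K\models R\ISA R'$ iff $R\ISA^\R_\T R'$, and $\K\models\SOMET{R^-}\ISA B$ iff $\SOMET{R^-}\ISA^\C_\T B$ --- all of which follow from the standard characterizations of \dlliter reasoning via the closures $\ISA^\C_\T,\ISA^\R_\T$ together with the definitions of $\leadsto_\K$, $\Int[\Uni]{\cdot}$ and $\Int[\Uni]{V}$. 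Assembling this dictionary, and in particular checking the \emph{negative} clauses (the various $q^*_{\NOT X}$) --- i.e., that the canonical-model definition forces no basic concept or role into a node's label beyond those listed in $\delta_{\cn}$ --- is where essentially all the bookkeeping lives and is the main obstacle. The one genuinely delicate point is an existential concept $\SOMET S$ at a witness node $\sigma\cdot w_{[R]}$: a priori it could be acquired through an outgoing edge to a successor $\sigma\cdot w_{[R]}\cdot w_{[R']}$, but $w_{[R]}\leadsto_\K w_{[R']}$ forces $[R^-]\leq_\T[R']$, so by transitivity of $\leq_\T$ any $[R']\leq_\T[S]$ already gives $[R^-]\leq_\T[S]$; hence membership of $\SOMET S$ there is governed solely by $[R^-]\leq_\T[S]$, exactly as checked by $\delta_{\cn}(q_R,\cdot)$.

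For the first bullet I would build an accepting run of $\acan$ on $T_\Uni$. Since the acceptance condition is $F_{\cn}=Q_{\cn}$, every infinite path of a run is accepting, so it suffices to exhibit a run tree, i.e., to satisfy the transition formula at each node against the labels of $T_\Uni$; I would do this by induction along $\leadsto_\K$. At the root, $\delta_{\cn}(q_0,\cdot)$ is defined since $r\in\Int[\Uni]{V}(\epsilon)$, and I would send to each child $a_i$ the states $q_s,q^*_{\NOT r},q^*_{a_i}$, the states $q^*_{\NOT a_j}$ for $j\neq i$, the label checks $q^*_{P_{ij}}/q^*_{\NOT P_{ij}}$ at the root and $q^*_B/q^*_{\NOT B}$ at $a_i$, and, for each basic role $R$, either $q_{\SOMET R}$ when $a_i\leadsto_\K w_{[R]}$ or $q^{\ng}_{\SOMET R}$ otherwise; by the dictionary every label check then succeeds. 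For the inductive step, $q_{\SOMET R}$ at a node $\sigma$ with $\tail(\sigma)\leadsto_\K w_{[R]}$ sends $q_R$ to $\sigma\cdot w_{[R]}$, which is a node of $T_\Uni$ (it lies in $\gpath(\K)$) and is not an individual node, so the $\sigma\cap\NN=\emptyset$ side condition of the $q_R$ transition holds; from $q_R$ I would repeat the same pattern using that transition and the dictionary. A state $q^{\ng}_{\SOMET R}$ always lands on a node $\sigma\cdot w_{[R]}\notin\gpath(\K)$, whose $T_\Uni$-label is empty, so its requirement that the node carry no role label holds. Finally the scanning state $q_s$ visits every node at depth $\geq 1$ and checks its successors; the disjunct $(i,q_d)\vee\bigvee_R(i,q^*_R)$ holds because each successor is either a genuine witness $\sigma\cdot w_{[R]}$, which carries $R$ in its label since $[R]\leq_\T[R]$, or lies outside $\gpath(\K)$, in which case its whole subtree has empty labels and $q_d$ propagates.

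For the second bullet, let $(T,V)\in\L(\acan)$ with accepting run $\run$. Individual uniqueness is immediate: $\delta_{\cn}(q_0,\cdot)$ forces $a_i\in V(i)$ and $a_j\notin V(i)$ for $j\neq i$ at the children of the root, while $q_s$ --- propagated from each child of the root to all its descendants --- forces $a_j\notin V(x)$ for every deeper node $x$; hence each $a_i$ labels exactly one node, a child of the root, so $(T,V)$ is individual unique and the $q_R$ transition is applicable wherever it is reached. For $\I_T\cong\Uni$, I would argue by induction along $\leadsto_\K$ that the labels on the nodes of $T$ role-reachable from the individuals are forced to coincide with those of $T_\Uni$: the $q_0$- and $q_R$-transitions pin down, via the dictionary, exactly the $P_{ij}$-labels at the root and the basic-concept and basic-role labels at each such node prescribed by $\Int[\Uni]{V}$, the $q_{\SOMET R}$-states force the required witness children to exist, and the $q^{\ng}_{\SOMET R}$-states together with $q_d$ and $q_s$ force every other child of a reachable node to be ``dead'' --- to carry no role label and to sit atop a subtree carrying no role labels, hence containing no further role-reachable node. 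Consequently the nodes role-reachable from the children of the root are exactly the numeric codes of $\gpath(\K)$, with the same roles on edges and the same basic-concept labels as in $T_\Uni$ (dead nodes outside this set may carry spurious concept labels, but they contribute nothing to $\dom[\I_T]$, which is precisely why one obtains $\I_T\cong\Uni$ rather than $(T,V)=T_\Uni$). The map sending the $a_i$-node to $a_i$ and, inductively, the child $f(R)$ of a reachable node coding $\sigma$ to $\sigma\cdot w_{[R]}$ is then a bijection $\dom[\I_T]\to\dom[\Uni]=\gpath(\K)$ preserving all concept and role extensions and respecting individual names, i.e., an isomorphism $\I_T\cong\Uni$; specializing this to $(T,V)=T_\Uni$ yields in passing that $\I_{T_\Uni}\cong\Uni$.
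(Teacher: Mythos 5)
Your overall architecture matches the paper's proof: an explicit accepting run for the first bullet (acceptance being trivial since $F_{\cn}=Q_{\cn}$), and, for the second, individual uniqueness via the $q^*_{\NOT a_j}$ checks followed by an inductive isomorphism in which the $q^{\ng}_{\SOMET{R}}$, $q_d$ and $q_s$ states force all non-generated subtrees to be role-free and hence irrelevant to $\I_T$. However, the step you yourself single out as the crux is wrong as written. You claim that $w_{[R]}\leadsto_\K w_{[R']}$ forces $[R^-]\leq_\T[R']$, and conclude that membership of $\SOMET{S}$ in the type of a witness node $\sigma\cdot w_{[R]}$ is governed by the role subsumption $[R^-]\leq_\T[S]$, ``exactly as checked by $\delta_{\cn}(q_R,\cdot)$''. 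Both halves are incorrect. The definition of $w_{[S]}\leadsto_\K w_{[R]}$ only yields the concept entailment $\T\vdash\SOMET{S^-}\ISA\SOMET{R}$ (its condition (1)); it does not yield a role subsumption --- in fact conditions (2) and (3) exclude it, since $[R^-]\leq_\T[R']$ together with minimality would force $[R^-]=[R']$. Moreover, $\delta_{\cn}(q_R,\cdot)$ checks $\K\models\SOMET{R^-}\ISA B$, a concept entailment, not a role subsumption. Concretely, with $\T=\{A\ISA\SOMET{P},\ \SOMET{P^-}\ISA\SOMET{Q}\}$ and $\A=\{A(a)\}$, the automaton positively requires $\SOMET{Q}$ in the label of the node coding $a\cdot w_{[P]}$, and $\SOMET{Q}$ does belong to $\ttype{\Uni_\K}(a\,w_{[P]})$ (via the generated $Q$-successor), even though $[P^-]\not\leq_\T[Q]$; under your characterization of the labels the positive check $q^*_{\SOMET{Q}}$ would appear to fail, so your verification of both bullets breaks exactly at this point.

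The repair is what the paper's Lemma~\ref{lem:Unitype-KBtype-ABox} (items (iii)--(iv)) provides: if $\SOMET{S}$ is acquired at $\sigma\cdot w_{[R]}$ through an outgoing edge to $\sigma\cdot w_{[R]}\cdot w_{[R']}$, that edge is an $S$-edge only if $[R']\leq_\T[S]$, and condition (1) of $\leadsto_\K$ gives $\T\vdash\SOMET{R^-}\ISA\SOMET{R'}$, hence $\T\vdash\SOMET{R^-}\ISA\SOMET{S}$; together with the backward-edge case (where $R\ISA^\R_\T S^-$) this shows that the type of the witness node is exactly $\{B\mid\K\models\SOMET{R^-}\ISA B\}$, which is precisely what $\delta_{\cn}(q_R,\cdot)$ enforces. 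With that dictionary entry corrected, your run construction and isomorphism argument go through essentially as in the paper's proof.
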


\begin{proof}
  For the first item, assume $T_{\Uni} = (n^*, \Int[\Uni]{V})$ is the tree
  encoding of the universal model $\Uni$ of $\K$.  We show that a full run of
  $\acan$ over $T_{\Uni}$ exists. 

  The run $(T_\run,\run)$ is built starting from the root $\epsilon$, and
  setting $\run(\epsilon)$ = $(\epsilon,q_0)$. Then, to correctly execute the
  initial transition, the root has children as follows:

  \begin{itemize}
  \item for each $a_k \in \Ind(\A)$
    \begin{itemize}
    \item a child $k_{s}$ with $\run(k_{s}) = (a_k, q_{s})$,
    \item a child $k^*_{\neg r}$ with $\run(k^*_{\neg r}) = (a_k, q^*_{\neg r})$,
    \item a child $k^*_{a_k}$ with $\run(k^*_{a_k}) = (a_k, q^*_{a_k})$,
    \item a child $k^*_{\neg a_j}$ for each $j \neq k$ with $\run(k^*_{\neg
      a_j}) = (a_k, q^*_{\neg a_j})$,
    \item a child $k^*_{B}$ for each $B \in \BB$ such that $a_k \in
      \Int[\Uni]{B}$, with $\run(k^*_{B}) = (a_k, q^*_B)$,
    \item a child $k^*_{\neg B}$ for each $B \in \BB$ such that $a_k \not\in
      \Int[\Uni]{B}$, 
      with $\run(k^*_{\neg B}) = (a_k, q^*_{\neg B})$,
    \item a child $k_{\SOMET{R}}$ for each $\leq_\T$-minimal role $R$ s.t.\
      $\Uni \models \SOMET{R}(a_i)$ and $\Uni \not\models R(a_i,a_j)$ for each
      $j \in \{1, \ldots, n\}$, with $\run(k_{\SOMET{R}}) = (a_k,
      q_{\SOMET{R}})$,
    \item a child $k^{\ng}_{\SOMET{R}}$ for each role $R$ s.t.\ $\Uni
      \not\models \SOMET{R}(a_i)$, or $\Uni \models R(a_i,a_j)$ for some $j \in
      \{1, \ldots, n\}$, or $R$ is not $\leq_\T$-minimal, with
      $\run(k^{\ng}_{\SOMET{R}}) = (a_k, q^{\ng}_{\SOMET{R}})$,
    \end{itemize}

  \item a child $k^*_{P,a_k,a_j}$ for each $a_k,a_j \in \Ind(\A)$ and each
    atomic role $P$ such that $(a_k,a_j) \in \Int[\Uni]{P}$, with
    $\run(k^*_{P,a_k,a_j}) = (\epsilon, q^*_{P_{kj}})$,
  \item a child $k^*_{\neg P,a_k,a_j}$ for each $a_k,a_j \in \Ind(\A)$ and each
    atomic role $P$ such that $(a_k,a_j) \not\in \Int[\Uni]{P}$, with
    $\run(k^*_{\neg P,a_k,a_j}) = (\epsilon, q^*_{\neg P_{kj}})$,
  \end{itemize}
  Note that nodes $y \in T_\run$ with $\run(y) = (x, q^*_{\dots})$ are leafs of
  the tree $T_\run$, as by the transition function $\delta_{\cn}$, all the
  states of the form $q^*_{\dots}$ in $Q_{\cn}$ can be satisfied with the empty
  assignment.

  Other nodes, however, can have children. They are defined inductively as
  follows.
  \begin{enumerate}
    \setcounter{enumi}{1}
  \item Let $y$ be a node in $T_\run$ such that $\run(y)=(x,q_s)$ for some $x
    \in n^*$. Moreover, let $i \in \{1, \ldots, n\}$. Then $y$ has
    \begin{itemize}
    \item a child $y \cdot i_{s}$ with $\run(y \cdot i_{s}) = (x \cdot i,
      q_{s})$,
    \item a child $y \cdot i^*_{\neg r}$ with $\run(y \cdot i^*_{\neg r}) = (x
      \cdot i, q^*_{\neg r})$,
    \item a child $y \cdot i^*_{\neg a_j}$ for each $j \in \{1, \ldots, n\}$
      with $\run(y \cdot i^*_{\neg a_j}) = (x \cdot i, q^*_{\neg a_j})$,
    \item if $x \in \dom[\Uni]$ and for $R \in \RR$ s.t.\ $f(R) = i$, $x \cdot
      w_{[R]} \in \dom[\Uni]$,
      \begin{itemize}
      \item a child $y \cdot i^*_{R}$ with $\run(y \cdot i^*_{R}) = (x \cdot
        w_{[R]}, q^*_{R})$,
      \end{itemize}

    \item otherwise
      \begin{itemize}
      \item a child $y \cdot i_{d}$ with $\run(y \cdot i_{d}) = (x \cdot i, q_{d})$,
      \end{itemize}
    \end{itemize}

  \item Let $y$ be a node in $T_\run$ such that $\run(y)=(x,q_d)$ for some $x
    \in n^*$. Then $y$ has
    \begin{itemize}
    \item a child $y \cdot i_{d}$ for each $i \in \{1, \ldots, n\}$, with
      $\run(y \cdot i_{d}) = (x \cdot i, q_{d})$,
    \item a child $y \cdot 0^*_{\neg R}$ for each $R \in \RR$, with $\run(y
      \cdot 0^*_{\neg R}) = (x, q^*_{\neg R})$,
    \end{itemize}

  \item Let $y$ be a node in $T_\run$ such that $\run(y)=(x,q^{\ng}_{\SOMET{R}})$ 
    for some $x \in \dom[\Uni]$ and $R \in \RR$. Then $y$ has
    \begin{itemize}
    \item a child $y \cdot f(R)^*_{\neg R'}$ for each $R' \in \RR$, with $\run(y
      \cdot f(R)^*_{\neg R'}) = (x \cdot f(R), q^*_{\neg R'})$,
    \end{itemize}

  \item Let $y$ be a node in $T_\run$ such that $\run(y)=(x,q_{\SOMET{R}})$ for
    some $x \in \dom[\Uni]$ and $R \in \RR$. Then $x \cdot w_{[R]} \in
    \dom[\Uni]$ and $y$ has
    \begin{itemize}
    \item a child $y \cdot f(R)_{R}$ with $\run(y \cdot f(R)_{R}) = (x \cdot w_{[R]},
      q_{R})$,
    \end{itemize}

  \item Let $y$ be a node in $T_\run$ such that $\run(y)=(x,q_{R})$ for some $x
    \in \dom[\Uni]$ and $R \in \RR$. Then $y$ has
    \begin{itemize}
    \item a child $y \cdot 0^*_{R'}$ for each $R' \in \RR$ s.t.\ $\K \models R
      \ISA R'$, with $\run(y \cdot 0^*_{R'}) = (x, q^*_{R'})$,
    \item a child $y \cdot 0^*_{\neg R'}$ for each $R' \in \RR$ s.t.\ $\K
      \not\models R \ISA R'$, with $\run(y \cdot 0^*_{\neg R'}) = (x, q^*_{\neg
      R'})$,
    \item a child $y \cdot 0^*_{B}$ for each $B \in \BB$ s.t.\ $\K \models
      \SOMET{R^-} \ISA B$, with $\run(y \cdot 0^*_{B}) = (x, q^*_B)$,
    \item a child $y \cdot 0^*_{\neg B}$ for each $B \in \BB$ s.t.\ $\K
      \not\models \SOMET{R^-} \ISA B$, with $\run(y \cdot 0^*_{\neg B}) = (x,
      q^*_{\neg B})$,
    \item a child $y \cdot 0_{\SOMET{S}}$ for each $\leq_\T$-minimal role $S$
      s.t.\ $\K \models \SOMET{R^-} \ISA \SOMET{S}$ and $[R^-] \neq [S]$, with
      $\run(y \cdot 0_{\SOMET{S}}) = (x, q_{\SOMET{S}})$,
    \item a child $y \cdot 0^{\ng}_{\SOMET{S}}$ for each role $S$ s.t.\ $\K
      \not\models \SOMET{R^-} \ISA \SOMET{S}$, or $[R^-] = [S]$, or $S$ is not
      $\leq_\T$-minimal, with $\run(y \cdot 0^{\ng}_{\SOMET{S}}) = (x,
      q^{\ng}_{\SOMET{S}})$.
    \end{itemize}
  \end{enumerate}
  Each node of $T_\run$ defined as described above satisfies the transition
  function $\delta_{\cn}$.

  It is easy to see that this run is accepting, as for each infinite path $P$ of
  $T_\run$, either $q_s \in \mathit{inf}(P)$, or $q_s \in \mathit{inf}(P)$, or
  $q_R \in \mathit{inf}(P)$ for some $R$. Hence, $T_\Uni \in \L(\acan)$.

  \vspace{0.3cm} To show the second item, let $(T,V) \in \L(\acan)$ and
  $(T_\run,\run)$ an accepting run of $(T,V)$. First, assume $T$ is not
  individual unique, that is,
  \begin{itemize}
  \item there exists an $a$-node $x$ in $T$, such that $x$ is not a child of the
    root, or
  \item there exist two nodes $i$ and $j$ in $T$ such that $a \in V(i)$ and $a
    \in V(j)$.
  \end{itemize}
  In the former case, let $x'$ be the parent of $x$, $x' \neq \epsilon$, then
  there exists a node $y' \in T_\run$ with $\run(y') = (x',q_s)$ and a node $y
  \in \T_\run$ with $\run(y) = (x,q^\star_{\NOT a})$, which contradicts that
  $(\T_\run,\run)$ is an accepting run of $(T,V)$ as $a \in V(x)$. In the
  latter case, assume $a$ is equal to $a_i$. Then we get contradiction with
  $\delta_{\cn}(q_0,\sigma)$.

  Hence, $T$ is individual unique. Let $\I_T$ be the interpretation represented
  by $T$. We show that $\I_T$ is isomorphic to $\Uni$, by constructing a
  function $h$ from $\dom[\I_T]$ to $\dom[\Uni]$ and showing that it is a
  one-to-one and onto homomorphism. We construct $h$ by induction on the length
  of the sequence $x \in \dom[\I_T]$.

  Initially, as $T$ is individual unique, we set for each $i \in \{1, \ldots,
  n\}$, $h(i) = a_i$, where $a_i \in V(i)$. Note that by definition of $\Uni$,
  $a_i \in \dom[\Uni]$ and by definition of $\I_T$, $i \in \dom[\I_T]$.  Then
  the following holds for $i,j \in \{1, \ldots, n\}$.
  \begin{enumerate}
  \item for an atomic role $P$, $(i,j) \in \Int[\I_T]{P}$ iff $(a_i, a_j) \in
    \Int[\Uni]{P}$: let $(i,j) \in \Int[\I_T]{P}$, by definition of $\I_T$ it
    follows that $P_{ij} \in V(\epsilon)$. Assume $\K \not\models P(a_i,a_j)$,
    then $(0, q^*_{\neg P_{ij}}) \in \delta_{\cn}(q_0, V(\epsilon))$ and in
    $T_\run$ there exists a node $y$, s.t.\ $\run(y) = (\epsilon, q^*_{\neg
    P_{ij}})$, hence $y$ does not satisfy the condition on a
    run. Contradiction with $(T_\run,\run)$ being accepting. Therefore, indeed
    $\K \models P(a_i, a_j)$ and $(a_i, a_j) \in \Int[\Uni]{P}$. Similarly for
    the other direction.

  \item for a basic concept $B$, $i \in \Int[\I_T]{B}$ iff $a_i \in
    \Int[\Uni]{B}$: let $i \in \Int[\I_T]{B}$, by definition of $\I_T$ it
    follows that $B \in V(i)$.  Assume $\K \not\models B(a_i)$, then $(i,
    q^*_{\neg B}) \in \delta_{\cn}(q_0, V(\epsilon))$ and there exists $y \in
    T_\run$ with $\run(y) = (i, q^*_{\neg B})$. We get contradiction as $y$ does
    not satisfy the condition on a run. Therefore, indeed $\K \models B(a_i)$
    and $a_i \in \Int[\Uni]{B}$. Similarly for the other direction.
  \end{enumerate}

  For the inductive step we prove two auxiliary claims.

  \begin{claim}[1]
    Let $i \cdot f(R) \in \dom[\I_T]$ for some $i \in \{1, \ldots, n\}$. Then
    $\K \models \SOMET{R}(a_i)$, $\K \not\models R(a_i,a_j)$ for each $j \in
    \{1, \ldots, n\}$ and $R$ is a $\leq_\T$-minimal such role.
  \end{claim}

  \begin{proof} 
    Assume $\K \not\models \SOMET{R}(a_i)$, or $\K \models R(a_i,a_j)$ for some
    $j \in \{1, \ldots, n\}$, or $R$ is not a $\leq_\T$-minimal such role. Then
    by definition of $\delta_{\cn}(q_0,V(\epsilon))$ and of a run, there exists
    a node $y = \epsilon \cdot i^{\ng}_{\SOMET{R}}$ in $T_\run$ such that
    $\run(y) = (i, q^{\ng}_{\SOMET{R}})$ and by
    $\delta_{\cn}(q^{ng}_{\SOMET{R}}, V(i))$ it is required that $R' \notin V(x
    \cdot f(R))$ for each $R' \in \RR$. It means that $i \cdot f(R)$ is not
    connected to $i$ through any role. Contradiction with $i \cdot f(R)$ being
    in $\dom[\I_T]$.
    \qed
  \end{proof}

  \begin{claim}[2]
    Let $x \cdot f(R) \in \dom[\I_T]$, $\len(x) \geq 2$ and there exists $y \in
    T_\run$ with $\run(y) = (x, q_S)$. Then $\K \models \SOMET{S^-} \ISA
    \SOMET{R}$, $[S^-] \neq [R]$ and $R$ is a $\leq_\T$-minimal such role.
  \end{claim}

  \begin{proof} 
    For the sake of contradiction assume $\K \not\models \SOMET{S^-} \ISA
    \SOMET{R}$. Then by definition of $\delta_{\cn}(q_S,V(x))$ and of a run,
    there exists a node $y'' = y \cdot 0^{\ng}_{\SOMET{R}}$ in $T_\run$ such
    that $\run(y'') = (x, q^{\ng}_{\SOMET{R}})$ and by
    $\delta_{\cn}(q^{ng}_{\SOMET{R}}, V(x))$ it is required that $R' \notin V(x
    \cdot f(R))$ for each $R' \in \RR$. It means that $x \cdot f(R)$ is not
    connected to $x$ through any role. Contradiction with $x \cdot f(R)$ being
    in $\dom[\I_T]$.  

    By the same argument it can be shown that $[S^-] \neq [R]$ and $R$ is
    $\leq_\T$-minimal.
    %
    \qed
  \end{proof}

  Let $x \in \dom[\I_T]$, $h(x)$ is defined and $h(x) \in \dom[\Uni]$. Moreover,
  if $\len(x) \geq 2$, let $\tail(x) = f(S)$ and $\tail(h(x)) = w_{[S]}$ for
  some role $S$, and there exist a node $y \in T_\run$ such that $\run(y) = (x,
  q_S)$. Then
  \begin{enumerate}
  \item for each $h(x) \cdot w_{[R]} \in \dom[\Uni]$, $x \cdot f(R)$ is in $\dom[\I_T]$.
  \item for each $x \cdot f(R) \in \dom[\I_T]$, $h(x) \cdot w_{[R]}$ is in $\dom[\Uni]$.
  \end{enumerate}

  Let $h(x) \cdot w_{[R]} \in \dom[\Uni]$. Then $R$ is $\leq_\T$-minimal such
  that $\K \models \SOMET{S^-} \ISA \SOMET{R}$ if $\tail(h(x)) = w_{[S]}$, or
  $\K \models \SOMET{R}(a_i)$ and $\K \not\models R(a_i,a_j)$ for $j \in \{1,
  \ldots, n\}$ if $h(x)=a_i$. By definition of $\delta_{\cn}$, there exist a
  node $y'$ in $T_\run$ with $\run(y') = (x, q_{\SOMET{R}})$. Since $T_\run$ is
  a run, it follows that there exist a node $y'' = y' \cdot f(R)_R$ in $T_\run$
  with $\run(y'') = (x \cdot f(R), q_R)$, and $x \cdot f(R) \in T$. Therefore,
  $R \in V(x \cdot f(R))$ and by definition of $\I_T$, $x \cdot f(R) \in
  \dom[\I_T]$.

  Let $x \cdot f(R) \in \dom[\I_T]$. Then by Claim~(1) and (2), $\tail(h(x))
  \leadsto_\K w_{[R]}$, hence $h(x) \cdot w_{[R]} \in \dom[\Uni]$. Moreover, we
  also obtain that there exists $y''$ in $T_\run$ such that $\run(y'') = (x
  \cdot f(R), q_{R})$.

  Thus, we can set $h(x \cdot f(R))$ to $h(x) \cdot w_{[R]}$. Obviously, $h$ is
  one-to-one and onto. To verify that $h$ is a homomorphism it remains to show
  \begin{itemize}
  \item for each role $R'$, $(x,x \cdot f(R)) \in \Int[\I_T]{R'}$ iff $(h(x),
    h(x) \cdot w_{[R]}) \in \Int[\Uni]{R'}$, 
    and
  \item for each basic concept $B$, $x \cdot f(R) \in \Int[\I_T]{B}$ iff $h(x)
    \cdot w_{[R]} \in \Int[\Uni]{B}$
    .
  \end{itemize}

  Let $(x, x \cdot f(R)) \in \Int[\I_T]{R'}$ for some role $R'$. By
  contradiction assume $(h(x), h(x) \cdot w_{[R]}) \notin \Int[\Uni]{R'}$, this
  implies that $\K \not\models R \ISA R'$. Hence, $(0,q^*_{\neg R'}) \in
  \delta_{\cn}(q_R, V(x \cdot f(R)))$, and in $T_\run$ there is a node $y'''=y''
  \cdot 0^*_{\neg R'}$ with $\run(y''') = (x \cdot f(R), q^*_{\neg R'})$. We get
  a contradiction with $T_\run$ being a run as by definition of $\I_T$, $R' \in
  V(i \cdot f(R))$. Similarly for the other direction.

  Finally, let $x \cdot f(R) \in \Int[\I_T]{A}$ for some concept $A$, and assume
  $h(x) \cdot w_{[R]} \notin \Int[\Uni]{A}$. The latter implies that $\K
  \not\models \SOMET{R^-} \ISA A$. Hence, $(0,q^*_{\neg A}) \in
  \delta_{\cn}(q_R, V(x \cdot f(R)))$, and in $T_\run$ there is a node $y'''=y''
  \cdot 0^*_{\neg A}$ with $\run(y''') = (x \cdot f(R), q^*_{\neg A})$. We get a
  contradiction with $T_\run$ being a run as by definition of $\I_T$, $A \in V(x
  \cdot f(R))$. Similarly for the other direction.
\end{proof}

\subsubsection{Automaton $\asol$ for a model of $\K = \tup{\T, \A}$}

$\asol$ is a 2ATA on infinite trees that accepts a tree if its subtree labeled
with $G$ corresponds to a tree model $\I$ of $\K$. Formally, $\asol$ is defined
as the tuple $\tup{\Sigma_\K, Q_{\sol}, \delta_{\sol}, q_0 , F_{\sol}}$, where
\begin{eqnarray*}
  Q_{\sol} & = & \{q_0\} \cup \{q_{X} \mid X \in \NN \cup \BB \cup \RR \cup \PP\},
\end{eqnarray*}
$F_{\sol} = Q_{\sol}$ and transition function $\delta_{\sol}: Q_{\sol} \times
\Sigma_\K \rightarrow \B([n] \times Q_{\sol})$ is defined as follows:
\begin{enumerate}
\item For each $\sigma \in \Sigma_\K$ such that $\{r,G\} \subseteq \sigma$,
  $\delta_{\sol}(q_0,\sigma)$ is defined as:
  \begin{eqnarray*}
    \bigwedge_{i=1}^n \bigg[  
    (i, q_{a_i}) \wedge \bigg(\bigwedge_{A \in \BB \,:\,  \K \models A(a_i)}
    (i,q_{A})\bigg) \wedge \bigwedge_{j = 1}^n  \bigg(\bigwedge_{P \in \RR \,:\,
      \K \models P(a_i,a_j)}  (0, q_{P_{ij}})\bigg) \bigg] 
  \end{eqnarray*}

  
  
  
\item For each $\sigma \in \Sigma_\K$ such that $\{r,G\} \subseteq \sigma$ and
  each $P_{ij} \in \PP$:
  \begin{eqnarray*}
    \delta_{\sol}(q_{P_{ij}},\sigma) &=& (i,q_{\SOMET{P}}) \wedge (j,q_{\SOMET{P^-}})
  \end{eqnarray*}  
  
\item For each $\sigma \in \Sigma_\K$ such that $\sigma \cap \NN = \{a_i\}$
  and each atomic role $P$ in the signature of $\K$:
  \begin{eqnarray*}\delta_{\sol}(q_{\SOMET{P}}, \sigma) &=& 
    {\displaystyle \bigg(\bigvee_{j=1}^n (j,q_{P})\bigg) \vee
      \bigg(\bigvee_{j=1}^n (-1,q_{P_{ij}})\bigg)}
    \\
    \delta_{\sol}(q_{\SOMET{P^-}}, \sigma) &=& 
    {\displaystyle \bigg(\bigvee_{j=1}^n (j,q_{P^-})\bigg) \vee
      \bigg(\bigvee_{j=1}^n (-1,q_{P_{ji}})\bigg)}
  \end{eqnarray*}
  
\item For each $\sigma \in \Sigma_\K$ such that $\sigma \cap \NN = \emptyset$
  and each basic role $R \in \RR$, 
  %
  \begin{eqnarray*}
    \delta_{\sol}(q_{\SOMET{R}}, \sigma) = 
    (0, q_{R^-}) \vee \bigg(\bigvee_{i=1}^n (i,q_{R}) \bigg)
  \end{eqnarray*}     
  
\item For each $\sigma \in \Sigma_\K$ such that $\sigma \cap \NN = \emptyset$
  and each basic role $R \in \RR$:
  \begin{equation*}\delta_{\sol}(q_{R}, \sigma) =
    {\displaystyle 
      \bigg(\bigwedge_{R' \in \RR \,:\, \K \models R \ISA
        R'} (0, q_{R'})\bigg) \wedge (0,q_{\SOMET{R^-}}) \wedge (-1,q_{\SOMET{R}})
    }
  \end{equation*}

\item For each $\sigma \in \Sigma_\K$ and each $B \in \BB$:
  \begin{equation*}\delta_{\sol}(q_{B}, \sigma) =
    {\displaystyle 
      \bigwedge_{B' \in \BB \,:\, \K \models B \ISA B'} (0, q_{B'})
    }
  \end{equation*}
  
\item For each $\sigma \in \Sigma_\K$ and each $X \in \BB \cup \RR \cup \NN
  \cup \PP$:
  \begin{align*} \delta_{\sol}(q_{X}, \sigma) =
    \begin{cases}
      \text{\it true} & \text{if } G \in \sigma \text{ and } X \in \sigma\\
      \text{\it false} & \text{otherwise}
    \end{cases} 
  \end{align*}

  
\end{enumerate}

If there are several entries of $\delta_{\sol}$ for the same $q \in Q_{\sol}$
and $\sigma \in \Sigma_{\sol}$, $\delta_{\sol}(q, \sigma) = \phi_1$, \ldots,
$\delta_{\sol}(q, \sigma) = \phi_m$, then we assume that $\delta_{\sol}(q,
\sigma) = \bigwedge_{i=1}^m \phi_i$.

Given a model $\I$, a \emph{path} $\pi$ from $x$ to $x'$, $x,x' \in \dom$, is a
sequence of the form $(x=x_1, x_2, \ldots, x_{m},x_{m+1}=x')$, $m \geq 0$, such
that $x_i \in \dom$ and $(x_i, x_{i+1}) \in \Int{R_i}$ for some $R_i$, and $m$
is the length of $\pi$. A model $\I$ of $\K=\tup{\T,\A}$ is said to be a
\emph{tree model} if for each $x \in \dom \setminus \Ind(\A)$ there exists a
unique shortest path from $x$ to $\Ind(\A)$. The \emph{depth} of an object $x$
in a tree model $\I$, denoted $\dep(x)$, is the length of the shortest path from
$x$ to $\Ind(\A)$. It is said that $x'$ is a successor of $x$, $x' \in \succ(x)$
if $x$ belongs to the path from $x'$ to $\Ind(\A)$ and $\dep(x') = \dep(x) + 1$.

Note that given a tree-model $\I$ of $\K$ with branching degree $n$, each domain
element of $\I$ can be seen as an element of $n^*$. For $x' \in \dom$ with
$\dep(x') = m \geq 0$, we assume a one-to-one numbering $g_{m,x'}(x)$ of each $x
\in \succ(x')$, such that $1 \leq g_{m,x'}(x) \leq n$. Then $x \in \dom$
corresponds to
\begin{itemize}
\item $i$ if $x = a_i$, 
\item $x' \cdot i$, where $\dep(x')=m \geq 0$, $x \in \succ(x')$ and $g_{m,x'}(x)=i$.
\end{itemize}
Then, $i \cdot -1$ denotes the empty sequence $\epsilon$. Conversely, each
sequence of natural numbers $x \in n^*$ can be seen as an element of $\dom$.

The \emph{$G$-tree encoding} of a tree-model $\I$ of $\K$ with branching degree
$n$ is the $\Sigma_\K$-labeled tree $T_{\I,G} = (n^*, \Int[\I,G]{V})$, such that
\begin{itemize}
\item $\Int[\I,G]{V}(\epsilon) = \{r,G\} \cup \{P_{ij} \mid (a_i,a_j) \in
  \Int[\I]{P}, P\text{ is an atomic role}\}$,
\item for each $x \in \dom[\I]$:
  \[\begin{array}[t]{rcl}
    \Int[\I,G]{V}(x) &= &\{G\} \cup \{B \mid x \in \Int[\Uni]{B}\} \cup {}\\
    & &\{S \mid (x',x) \in \Int[\Uni]{S}\text{ and }\dep(x) > \dep(x')\} \cup {}\\
    & &\{a \mid a \in \Ind(\A)\text{ and }x=a\}.
  \end{array}
  \] 
\end{itemize}

Given a labeled tree $(T,V)$, the \emph{restriction} of $T$ on $G$ is a set
$T_G$ such that $T_G \subseteq T$ and for each $x \in T$: $x \in T_G$ iff $G \in
V(x)$. 

Given a labeled tree $(T,V)$ and a run $(T_\run, \run)$, the
\emph{interpretation represented} by $T$ and $T_\run$, denoted, $\I_{T,T_\run}$,
is defined similarly to $\I_T$:
\[
\begin{array}{ccll}
  \dom[\I_{T,T_\run}] & = & \dom[\I_T],\\
  \Int[\I_{T,T_\run}]{a_i} & = & \Int[\I_T]{a_i},\\
  \Int[\I_{T,T_\run}]{A} & = & \dom[\I_{T}] \cap \{x \mid A \in
  V(x)\text{ and there exists }y\in T_\run\text{ with } \run(y)=(x, q_A)\}, \\
  && \text{for each atomic concept }A \in \BB\text{ and}\\
  \Int[\I_{T,T_\run}]{P} & = & (\dom[\I_{T}] \times \dom[\I_{T}]) \cap {}\\
  && \{(x,x') \in R_P \mid \text{ there exists }y\in T_\run\text{ s.t.\ }
  \run(y)=(x', q_P) \text{ or }\run(y) = (x, q_{P^-})\}, \\ 
  && \text{for each atomic role }P \in \RR.\\
\end{array}
\]

\begin{proposition}
  The following hold for $\asol$:
  \begin{itemize}
  \item Let $\I$ be a tree model of $\K$ with branching degree $n$. Then
    $T_{\I,G} \in \L(\asol)$.
  \item for each $(T, V) \in \L(\asol)$, if $T_G$ is an individual unique tree
    and $(T_\run, \run)$ is a corresponding run, then $\I_{T_G,T_\run}$ is a
    model of $\K$.
  \end{itemize}
\end{proposition}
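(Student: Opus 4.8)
The plan is to prove this proposition in close analogy with the proposition already established for $\acan$, reading the two items as the soundness and completeness halves of the statement that $\asol$ recognises exactly the $G$-tree encodings of tree models of $\K$. Throughout I would use one observation: since the Büchi condition of $\asol$ is $F_{\sol}=Q_{\sol}$, \emph{every} infinite path of \emph{any} run is accepting; hence for the first item it suffices to exhibit some run of $\asol$ on $T_{\I,G}$ without worrying about acceptance, and for the second item an arbitrary run $(T_\run,\run)$ may be used freely.

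For the first item, fix a tree model $\I$ of $\K$ with branching degree $n$ and form its $G$-tree encoding $T_{\I,G}$. I would build a run top-down, starting from $\run(\epsilon)=(\epsilon,q_0)$, and at each run-node invoke the matching clause of $\delta_{\sol}$, choosing successor run-nodes that satisfy the positive boolean formula. The crux is that such successors are always available, and this is exactly where $\I\models\K$ enters: the clause for $q_0$ at the root asks to visit each individual node in state $q_{a_i}$ and, for every consequence $\K\models A(a_i)$ resp.\ $\K\models P(a_i,a_j)$, to visit the $a_i$-node in state $q_A$ resp.\ to stay at the root in state $q_{P_{ij}}$ --- all licensed because $\I$ being a model records $a_i^\I\in A^\I$ and $(a_i^\I,a_j^\I)\in P^\I$ in $\Int[\I,G]{V}$; the clauses for $q_B$ and $q_R$ (rules 6 and 5, combined with the base clause 7) express closure of the $G$-labelling under $\K\models B\ISA B'$ and $\K\models R\ISA R'$, plus the facts that the source of an $R$-edge satisfies $\exists R$ and its target satisfies $\exists R^-$, all of which hold because $\I$ satisfies the inclusions of $\T$; and the clauses for $q_{\exists R}$ (rules 3 and 4) demand an actual $R$-neighbour, which exists since $\I\models\K$ and, $\I$ being a \emph{tree} model, this neighbour is either a tree-successor (some direction in $\{1,\dots,n\}$) or the unique predecessor on the path to $\Ind(\A)$ (direction $-1$, or direction $0$ when the incident edge is recorded on the node itself as $R^-$). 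An induction on depth shows the construction never gets stuck, so $T_{\I,G}\in\L(\asol)$.

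For the second item, take $(T,V)\in\L(\asol)$ with an accepting run $(T_\run,\run)$ and assume $T_G$ is individual unique, and then verify directly that $\I_{T_G,T_\run}$ is a model of $\K$. Satisfaction of $\A$ is read off the root clause of $\delta_{\sol}$, which forces run-nodes $(a_i\text{-node},q_{a_i})$, $(a_i\text{-node},q_A)$ for $\K\models A(a_i)$, and $(\epsilon,q_{P_{ij}})$ for $\K\models P(a_i,a_j)$; by the definition of $\I_{T_G,T_\run}$ --- in which an atomic fact holds precisely when it is both in the node label and witnessed by such a run-node, and in which edges among individuals are encoded via the $P_{ij}$ at the root --- these give $a_i^{\I_{T_G,T_\run}}\in A^{\I_{T_G,T_\run}}$ and $(a_i^{\I_{T_G,T_\run}},a_j^{\I_{T_G,T_\run}})\in P^{\I_{T_G,T_\run}}$. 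For a positive concept inclusion $B\ISA B'$ I would argue that any node with a run-node $(x,q_B)$ also has a run-node $(x,q_{B'})$ --- this is precisely what rule 6 propagates --- and that, for atomic $B$, having $(x,q_B)$ together with $B\in V(x)$ is by definition equivalent to $x\in B^{\I_{T_G,T_\run}}$, whereas for $B=\exists R$ rules 3 and 4 force a genuine $R$-neighbour matching $x\in(\exists R)^{\I_{T_G,T_\run}}$; role inclusions are handled analogously via rule 5. (For the disjointness assertions in $\T$, which $\asol$'s clauses do not address, one invokes consistency of $\K$ and the separately verified $\Sigma_2$-positiveness condition, or reads the claim for $\K^{\pos}$; I would settle this point before the formal write-up.)

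The step I expect to be the main obstacle is the bookkeeping imposed by the two-way, alternating structure of $\asol$: for each node $x$ and each symbol $X\in\BB\cup\RR\cup\NN\cup\PP$ one must keep straight the three-way correspondence between $X\in V(x)$, the presence of a run-node $(x,q_X)$, and membership of $x$ (or of an incident edge) in $X^{\I_{T_G,T_\run}}$, and this is delicate exactly for existential concepts $\exists R$, whose witness may sit at the predecessor (direction $-1$), at $x$ itself (direction $0$, when the edge back to the parent carries $R^-$), or at a successor, and where edges among individuals live on the root rather than in the tree part. Handling these cases uniformly, together with the closure conjuncts of rules 5 and 6 and the requirement that $G$-labels propagate consistently, is the technical core; once the correspondence is pinned down, both items follow by routine inductions mirroring the $\acan$ proof.
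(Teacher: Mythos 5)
Your proposal is correct and follows essentially the same route as the paper: for the first item you build the run top-down and use $\I\models\K$ to discharge each clause of $\delta_{\sol}$ (acceptance being immediate since $F_{\sol}=Q_{\sol}$), and for the second item you verify the model conditions for $\I_{T_G,T_\run}$ directly from the run, with the same case analysis for existential witnesses (predecessor, direction $0$, successor, or root-level $P_{ij}$) that the paper carries out for its items 5 and 6. Your parenthetical worry about disjointness assertions is consistent with the paper, whose proof likewise checks only positive inclusions and applies the automata to the \dlliterpos KB $\tup{\T_1\cup\T_{12},\A_1}$ after $\Sigma_2$-positiveness has been verified separately.
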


\begin{proof}
  For the first item, assume $T_{\I,G} = (n^*, \Int[\I,G]{V})$ is the tree encoding of a
  model $\I$ of $\K$.  We show that a full run of $\asol$ over $T_{\I,G}$
  exists.

  The run $(T_\run,\run)$ is built starting from the root $\epsilon$, and
  setting $\run(\epsilon)$ = $(\epsilon,q_0)$. Then, to correctly execute the
  initial transition, the root has children as follows:

  \begin{itemize}
  \item for each $a_k \in \Ind(\A)$
    \begin{itemize}
    \item a child $k_{a_k}$ with $\run(k_{a_k}) = (a_k, q_{a_k})$,
    \item a child $k_{A}$ for each $A \in \BB$ such that $a_k \in \Int[\I]{A}$,
      with $\run(k_{A}) = (a_k, q_A)$,
    \end{itemize}
  \item a child $k_{P,a_k,a_j}$ for each $a_k,a_j \in \Ind(\A)$ and each atomic
    role $P$ such that $(a_k,a_j) \in \Int[\I]{P}$, with $\run(k_{P,a_k,a_j}) =
    (\epsilon, q_{P_{kj}})$,
  \end{itemize}
  Then the successor relationship in $T_\run$ is defined inductively as follows.
  \begin{enumerate}
    \setcounter{enumi}{1}
  \item Let $y$ be a node in $T_\run$ such that $\run(y)=(x,q_{P_{ij}})$ for $x
    = \epsilon$ and $P \in \RR$. Then $y$ has
    \begin{itemize}
    \item a child $y \cdot i_{\SOMET{P}}$ with $\run(y \cdot i_{\SOMET{P}}) = (x \cdot i,
      q_{\SOMET{P}})$,
    \item a child $y \cdot j_{\SOMET{P^-}}$ with $\run(y \cdot j_{\SOMET{P^-}}) = (x \cdot j,
      q_{\SOMET{P^-}})$,
    \end{itemize}

  \item Let $y$ be a node in $T_\run$ such that $\run(y)=(x,q_{\SOMET{R}})$ for
    some $x \in \dom[\I]$, $\Int[\I,G]{V}(x) \cap \NN=\{a_i\}$, $R \in \RR$, and
    $R_{ij}$ denotes $P_{ij}$ if $R=P$ and $P_{ji}$ if $R=P^-$ for some atomic
    role $P$. Then $y$ has
    \begin{itemize}
    \item  if $R \in \Int[\I,G]{V}(x \cdot j)$ for some $j$
      \begin{itemize}
      \item a child $y \cdot j_{R}$ with $\run(y \cdot j_{R}) = (x \cdot j, q_{R})$,
      \end{itemize}
    \item if $R_{ij} \in \Int[\I,G]{V}(x \cdot -1)$
      \begin{itemize}
      \item a child $y \cdot -1_{R_{ij}}$ with $\run(y \cdot -1_{R_{ij}}) = (x
        \cdot -1, q_{R_{ij}})$,
      \end{itemize}

    \end{itemize}

  \item Let $y$ be a node in $T_\run$ such that $\run(y)=(x,q_{\SOMET{R}})$ for
    some $x \in \dom[\I]$, $\Int[\I,G]{V}(x) \cap \NN=\emptyset$ and $R \in
    \RR$. Then $y$ has
    \begin{itemize}
    \item if $R \in \Int[\I,G]{V}(x \cdot i)$ for some $i$
      \begin{itemize}
      \item a child $y \cdot i_{R}$ with $\run(y \cdot i_{R}) = (x \cdot i, q_{R})$,
      \end{itemize}
    \item if $R^- \in \Int[\I,G]{V}(x)$
      \begin{itemize}
      \item a child $y \cdot 0_{R^-}$ with $\run(y \cdot 0_{R}) = (x, q_{R})$,
      \end{itemize}

    \end{itemize}

  \item Let $y$ be a node in $T_\run$ such that $\run(y)=(x,q_{R})$ for some $x
    \in \dom[\I]$ and $R \in \RR$. Then $y$ has
    \begin{itemize}
    \item a child $y \cdot 0_{R'}$ for each $R' \in \RR$ s.t.\ $\K \models R
      \ISA R'$, with $\run(y \cdot 0_{R'}) = (x, q_{R'})$,
    \item a child $y \cdot 0_{\SOMET{R^-}}$, with $\run(y \cdot 0_{\SOMET{R^-}})
      = (x, q_{\SOMET{R^-}})$,
    \item a child $y \cdot -1_{\SOMET{R}}$, with $\run(y \cdot -1_{\SOMET{R}})
      = (x, q_{\SOMET{R}})$,
    \end{itemize}

  \item Let $y$ be a node in $T_\run$ such that $\run(y)=(x,q_{B})$ for some $x
    \in \dom[\I]$ and $B \in \BB$. Then $y$ has
    \begin{itemize}
    \item a child $y \cdot 0_{B'}$ for each $B' \in \BB$ s.t.\ $\K \models B
      \ISA B'$, with $\run(y \cdot 0_{B'}) = (x, q_{B'})$,
    \end{itemize}
  \end{enumerate}
  Since $\I$ is a model of $\K$, $T_\run$ satisfies the transition function
  $\delta_{\sol}$. In particular, in the rules 3 and 4 in the inductive
  definition of $T_\run$, there will exists a node $x' \in \dom$ such that
  $(x,x') \in \Int{R}$, hence at least one of the conditions will be satisfied.

  It is easy to see that this run is accepting, as for each infinite path $P$ of
  $T_\run$, $q_R \in \mathit{inf}(P)$ for some $R$. Hence, $T_{\I,G} \in
  \L(\asol)$.

  \vspace{0.3cm} To show the second item, let $(T,V) \in \L(\acan)$ and
  $(T_\run,\run)$ an accepting run of $(T,V)$. Moreover, let $T_G$ be a tree
  (i.e., prefix closed) and individual unique. Then $\I_{T_G,T_\run}$ is defined
  and it can be shown that $\I_{T_G,T_\run}$ a model of $\K$:
  \begin{enumerate}
  \item for each $i \in \{1, \ldots, n\}$, $\K \models B(a_i)$ implies $a_i \in
    \Int[\I_{T_G,T_\run}]{B}$,
  \item for each $i,j \in \{1, \ldots, n\}$, $\K \models P(a_i,a_j)$ implies
    $(a_i,a_j) \in \Int[\I_{T_G,T_\run}]{P}$,
  \item if $x \in \Int[\I_{T_G,T_\run}]{B}$, then $x \in
    \Int[\I_{T_G,T_\run}]{B'}$ for each $B'$ s.t.\ $\K \models B \ISA B'$,
  \item if $(x,x') \in \Int[\I_{T_G,T_\run}]{R}$, then $(x,x') \in
    \Int[\I_{T_G,T_\run}]{R'}$ for each $R'$ s.t.\ $\K \models R \ISA R'$,
  \item if $x \in \Int[\I_{T_G,T_\run}]{B}$ and $\K \models B \ISA \SOMET{R}$,
    then there exists $x' \in T_G$ such that $(x,x') \in
    \Int[\I_{T_G,T_\run}]{R}$.
  \item if $(x,x') \in \Int[\I_{T_G,T_\run}]{S}$ and $\K \models \SOMET{S^-}
    \ISA \SOMET{R}$, then there exists $x'' \in T_G$ such that $(x',x'') \in
    \Int[\I_{T_G,T_\run}]{R}$.
  \end{enumerate}

  We show items 5 and 6 hold, the rest can be shown by analogy. 

  Assume $x \in \Int[\I_{T_G,T_\run}]{B}$ and $\K \models B \ISA \SOMET{R}$ for
  some concept $B$ and role $R$. Then by definition of $\I_{T_G,T_\run}$ we have
  that $B,G \in V(x)$ and there exist a node $y \in T_\run$ with $r(y) = (x,
  q_B)$. Since $T_\run$ is a run and by definition of $\delta_{\sol}$, there
  exists a node $y' = y \cdot 0_{\SOMET{R}}$ in $T_\run$ such that $\run(y') =
  (x, q_{\SOMET{R}})$. Then, if $V(x) \cap \NN = \emptyset$, there exists a node
  $y'' = y' \cdot z$ in $T_\run$ such that $\run(y'') = (x \cdot i, q_R)$ or
  $\run(y'') = (x, q_{R^-})$. 
  If $V(x) \cap \NN = a_i$, there exists a node $y'' = y' \cdot z$ in $\T_\run$
  such that $\run(y'') = (x \cdot j, q_R)$ or $\run(y'') = (\epsilon,
  q_{R_{ij}})$. 
  In any case, it is easy to see that
  there is $x' \in T$ with $G \in V(x')$ (i.e., $x' \in T_G$) such that $(x,x')
  \in \Int[\I_{T_G,T_\run}]{R}$.
  
  Assume that $(x,x') \in \Int[\I_{T_G,T_\run}]{S}$ and $x'$ is a successor of
  $x$. Then $S, G \in V(x')$ and there exists $y \in T_\run$ such that $\run(y)
  = (x', q_S)$. Since $T_\run$ is a run, there exists a node $y' = y \cdot
  0_{\SOMET{S^-}}$ such that $\run(y') = (x', q_{\SOMET{S^-}})$. Further, as $\K
  \models \SOMET{S^-} \ISA \SOMET{R}$, there exists $y'' = y' \cdot
  0_{\SOMET{R}}$ with $\run(y'') = (x', q_{\SOMET{R}})$ and as above we obtain
  that there is $x'' \in T$ with $G \in V(x'')$ (i.e., $x'' \in T_G$) such that
  $(x',x'') \in \Int[\I_{T_G,T_\run}]{R}$.

  Assume now that $(x,x') \in \Int[\I_{T_G,T_\run}]{S}$ and $x$ is a successor
  of $x'$. Then $S^-, G \in V(x)$ and there exists $y \in T_\run$ such that
  $\run(y) = (x, q_{S^-})$. Since $T_\run$ is a run, there exists a node $y' = y
  \cdot -1_{\SOMET{S^-}}$ such that $\run(y') = (x', q_{\SOMET{S^-}})$ (recall
  that $x' = x\cdot -1$). Further, as $\K \models \SOMET{S^-} \ISA \SOMET{R}$,
  there exists $y'' = y' \cdot 0_{\SOMET{R}}$ with $\run(y'') = (x',
  q_{\SOMET{R}})$ and as above we obtain that there is $x'' \in T$ with $G \in
  V(x'')$ (i.e., $x'' \in T_G$) such that $(x',x'') \in
  \Int[\I_{T_G,T_\run}]{R}$.

  Thus, $\I_{T_G,T_\run}$ is a model of $\K$.
\end{proof}

\subsubsection{Automaton $\agfin$}

$\agfin$ is a one-way non-deterministic automaton on infinite trees that accepts
a tree if it has a finite prefix where each node is marked with the special
symbol $G$, and no other node in the tree is marked with $G$. Formally, $\agfin
= \tup{\Gamma_\K, Q_{\fn}, \delta_{\fn}, q_0 , F_{\fn}}$, where $Q_{\fn} =
\{q_0, q_1\}$, $F_{\fn} = \{q_1\}$ and transition function $\delta_{\fn}:
Q_{\fn} \times \Gamma_\K \rightarrow \B([n] \times Q_{\fn})$ is defined as
follows: 

\begin{enumerate}
\item For each $\sigma \in \Gamma_\K$:
  \begin{equation*} \delta(q_0, \sigma) = 
    \begin{cases}
      {\displaystyle \bigwedge_{i=1}^n (i, q_0)}, & \text{if } G \in \sigma\\
      {\displaystyle \bigwedge_{i=1}^n (i, q_1)}, & \text{if } G \notin \sigma
    \end{cases}
  \end{equation*}
\item For each $\sigma \in \Gamma_\K$:
  \begin{equation*} \delta(q_1, \sigma) = 
    \begin{cases}
      {\displaystyle \bigwedge_{i=1}^n (i, q_1)}, & \text{if } G \notin \sigma\\
      \text{{\it false}} & \text{if } G \in \sigma
    \end{cases}
  \end{equation*}
\end{enumerate}

\subsection{Proof of Theorem~\ref{the:non-emp-uni-sol-owlql-aboxes-np}}
\begin{proof}
  We prove that the non-emptiness problem for universal solutions is in
  \NP. Assume we are given a mapping $\M=(\Sigma_1, \Sigma_2, \T_{12})$ and a
  source KB $\K_1 =\tup{\T_1, \A_1}$, and we want to decide whether there exists
  a universal solution for $\K_1$ under $\M$ (all ABoxes are considered to be
  \owlql ABoxes without inequalities).

  First, we check whether $\K_1$ and $\M$ are $\Sigma_2$-positive. This check
  can be done in polynomial time, and if it was successful, then by
  Lemma~\ref{lem:A2-uni-sol-iff-homo-equiv-and-sigma2-positive} it remains to
  verify whether there exists a universal solution for $\K_1^{\pos}$ under
  $\M^{\pos}$.

  Second, we construct the maximal target \owlql ABox, a candidate for universal
  solution. Let $\A_2$ be the ABox over $\Sigma_2$ containing every membership
  assertion $\alpha$ of the form $B(a)$ or $R(a,b)$ such that $\tup{\T_1^{\pos}
    \cup \T_{12}^{\pos}, \A_1} \models \alpha$, $a,b \in \Ind(\A_1)$, $B$ is a
  basic concept and $R$ is a basic role. Then $\A_2$ is of polynomial size, and
  \begin{lemma}\label{lem:uni-sol-exists-iff-A2-sol}
    A universal solution for $\K_1^{\pos}$ under $\M^{pos}$ exists iff $\A_2$ is
    a solution for $\K_1^{\pos}$ under $\M^{\pos}$.
  \end{lemma}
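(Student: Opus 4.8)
The plan is to prove the two implications separately, using as the main tool the homomorphic characterization of universal solutions in the positive fragment (Lemma~\ref{lem:A2-uni-sol-dlliter-pos-iff-homo-equiv}), together with the standard characterization of solutions recalled in Section~\ref{sec-univ-owlql} (namely, $\A_2$ is a solution for $\A_1$ under $\M^{\pos}$ iff there is a model $\I$ of $\K_1^{\pos}$ with $(\I,\V_{\A_2})\models\T_{12}^{\pos}$) and basic properties of the canonical model (Claims~\ref{th:uni-sat-kb}, \ref{lem:konev18}, and~\ref{th:query-kb-vs-uni}). Throughout write $\K=\tup{\T_1^{\pos}\cup\T_{12}^{\pos},\A_1}$ and $\Uni=\Uni_\K$, and recall that $\A_2$ is the \emph{maximal} set of membership assertions of the form $B(a)$, $R(a,b)$ with $a,b\in\Ind(\A_1)$ that are entailed by $\K$. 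The crux of both directions is that, since $\Sigma_1$ and $\Sigma_2$ are disjoint and all TBoxes and ABoxes involved are positive, $\Uni$ serves simultaneously as a canonical source model (its $\Sigma_1$-reduct models $\K_1^{\pos}$) and, via its $\Sigma_2$-reduct, as a canonical target interpretation of the mapping.

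For the implication from right to left I would show that $\A_2$ is \emph{itself} a universal solution, so that one exists. By Lemma~\ref{lem:A2-uni-sol-dlliter-pos-iff-homo-equiv} this amounts to showing $\V_{\A_2}\homoequiv_{\Sigma_2}\Uni$. One direction holds with no hypothesis: every assertion in $\A_2$ is a consequence of $\K$ and $\Uni\models\K$ (Claim~\ref{th:uni-sat-kb}), so the identity on $\Ind(\A_1)$ extends to a $\Sigma_2$-homomorphism $\V_{\A_2}\homoembed\Uni$ (a fresh witness introduced for an assertion $\SOMET{R}(a)\in\A_2$ being mapped to any witness of $\SOMET{R}(a)$ in $\Uni$). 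For the converse I use that $\A_2$ is a solution: there is a model $\I$ of $\K_1^{\pos}$ with $(\I,\V_{\A_2})\models\T_{12}^{\pos}$, hence $\I\cup\V_{\A_2}$ is a model of $\K$, and by Claim~\ref{lem:konev18} there is a homomorphism $\Uni\homoembed\I\cup\V_{\A_2}$; projecting to $\Sigma_2$ and redirecting the elements of empty $\Sigma_2$-type to an arbitrary constant — exactly as in the proof of Lemma~\ref{lem:A2-uni-sol-dlliter-pos-iff-homo-equiv} — yields a $\Sigma_2$-homomorphism $\Uni\homoembed\V_{\A_2}$. Thus $\V_{\A_2}\homoequiv_{\Sigma_2}\Uni$ and $\A_2$ is a universal solution.

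For the implication from left to right, suppose $\A'_2$ is a (non-extended) target ABox that is a universal solution for $\K_1^{\pos}$ under $\M^{\pos}$; I would prove $\A'_2\subseteq\A_2$, whence $\Mod(\A_2)\subseteq\Mod(\A'_2)\subseteq\Sat_{\M^{\pos}}(\Mod(\K_1^{\pos}))$, i.e.\ $\A_2$ is a solution. Let $\I$ and $\J$ be the $\Sigma_1$- and $\Sigma_2$-reducts of $\Uni$; disjointness of the signatures and $\Uni\models\K$ give $\I\models\K_1^{\pos}$ and $(\I,\J)\models\T_{12}^{\pos}$, so $\J\in\Sat_{\M^{\pos}}(\Mod(\K_1^{\pos}))=\Mod(\A'_2)$ by universality of $\A'_2$. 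Hence every assertion $\alpha\in\A'_2$ holds in $\J$, therefore in $\Uni$, and so $\K\models\alpha$ by Claim~\ref{th:query-kb-vs-uni}; moreover, from the shape of $\gpath(\K)$ one sees that a constant not in $\Ind(\A_1)$, viewed as a trivial $\K$-path, has empty $\Sigma_2$-type in $\Uni$, so $\alpha$ can only mention constants of $\Ind(\A_1)$, and therefore $\alpha\in\A_2$ by maximality of $\A_2$.

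I expect the main obstacle to be the careful handling of the canonical interpretations of the target ABoxes — reconciling $\SOMET{R}(a)$-assertions (which are legal in non-extended ABoxes) with the $\V_{\cdot}$ construction, and, more importantly, nailing down that every element of $\dom[\Uni]$ that is not one of the named individuals carries an empty $\Sigma_2$-type, since this is precisely what forces the inclusion $\A'_2\subseteq\A_2$. Once Lemma~\ref{lem:A2-uni-sol-dlliter-pos-iff-homo-equiv} and Claims~\ref{th:uni-sat-kb}, \ref{lem:konev18}, \ref{th:query-kb-vs-uni} are available, the homomorphism manipulations are routine.
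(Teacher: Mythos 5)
Your proof is correct, and it splits into one half that coincides with the paper and one half that takes a different route. For the right-to-left implication you do exactly what the paper does: show that the closure $\A_2$ is itself a universal solution by establishing $\Sigma_2$-homomorphic equivalence between $\V_{\A_2}$ and $\Uni_{\tup{\T_1^{\pos}\cup\T_{12}^{\pos},\A_1}}$, reusing the argument of Lemma~\ref{lem:A2-uni-sol-dlliter-pos-iff-homo-equiv} (the paper literally says ``follows from the proof of Lemma~\ref{lem:A2-uni-sol-dlliter-pos-iff-homo-equiv}''). For the left-to-right implication the paper instead invokes Lemma~\ref{lem:uni-sol-exists-iff-chase-fin-part} to obtain a universal solution $\A_3$ whose canonical model sits inside the canonical model of the source-plus-mapping KB, concludes $\A_3\subseteq\A_2$, and then transfers the source interpretation $\I$ witnessing that $\A_3$ is a solution to every model $\J\supseteq\Uni_{\A_2}\supseteq\Uni_{\A_3}$ of $\A_2$, using positivity of $\T_{12}^{\pos}$. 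You avoid that lemma entirely: you evaluate the assumed universal solution $\A'_2$ against the $\Sigma_2$-reduct of the canonical model (which lies in $\Sat_{\M^{\pos}}(\Mod(\K_1^{\pos}))$ via the $\Sigma_1$-reduct), use Claim~\ref{th:query-kb-vs-uni} plus the observation that constants outside $\Ind(\A_1)$ have empty type to get $\A'_2\subseteq\A_2$, and finish by monotonicity, $\Mod(\A_2)\subseteq\Mod(\A'_2)=\Sat_{\M^{\pos}}(\Mod(\K_1^{\pos}))$. Your version is self-contained for the nulls-free setting of Theorem~\ref{the:non-emp-uni-sol-owlql-aboxes-np} and dispenses with the embedding-into-a-finite-subset machinery, at the modest price of the explicit empty-type argument for fresh constants; the paper's version is phrased so as to reuse lemmas it needs anyway for the extended-ABox case. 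Both are sound.
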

  \begin{proof}
    ($\Rightarrow$) Assume a universal solution for $\K_1^{\pos}$ under
    $\M^{\pos}$ exists. As it follows from
    Lemma~\ref{lem:uni-sol-exists-iff-chase-fin-part}, there exists a universal
    solution $\A_3$ such that $\Uni_{\A_3} \subseteq \Uni_{\tup{\T_1 \cup
        \T_{12},\A_1}}$, hence $\A_3 \subseteq \A_2$. As $\A_3$ is a solution,
    there exists $\I$ such that $\I \models \K_1^{\pos}$ and $(\I,\Uni_{\A_3})
    \models \T_{12}^{\pos}$. It follows that for each model $\J$ of $\A_2$, $\J
    \supseteq \Uni_{A_2} \supseteq \Uni_{\A_3}$, and therefore $(\I,\J) \models
    \T_{12}^{\pos}$. By definition of solution, $\A_2$ is a solution.

    ($\Leftarrow$) Assume $\A_2$ is a solution for $\K_1^{\pos}$ under
    $\M^{\pos}$. Then $\A_2$ is a universal solution follows from the proof of
    Lemma~\ref{lem:A2-uni-sol-dlliter-pos-iff-homo-equiv}. Since $\A_2$ is an
    \owlql ABox, we conclude that a universal solution for $\K_1^{\pos}$ under
    $\M^{\pos}$ exists.
  \end{proof}
  
  Thus, it remains only to check whether $\A_2$ is a solution. We need the
  following result to perform this check in \NP.
  \begin{lemma}\label{lem:a2-sol-exists-I-poly-size}
    Let $\A_2$ be an (extended) ABox over $\Sigma_2$ such that it is a solution
    for $\K_1^{\pos}$ under $\M^{\pos}$. Then there exists an interpretation
    $\I$ such that $\I$ is of polynomial size, $\I$ is a model of $\K_1^{\pos}$
    and $(\I,\V_{\A_2}) \models \T_{12}^{\pos}$.
  \end{lemma}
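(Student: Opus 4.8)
The plan is to first pull a (possibly large) witnessing interpretation out of the hypothesis, and then shrink it using the regular shape of the \dlliter canonical model.

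\emph{Step 1: obtain a witness.} Since $\V_{\A_2}$ is a model of $\tup{\emptyset,\A_2}$ and $\A_2$ is a solution for $\K_1^{\pos}$ under $\M^{\pos}$, there is an interpretation $\J_0$ with $\J_0\models\K_1^{\pos}$ and $(\J_0,\V_{\A_2})\models\T_{12}^{\pos}$. As $\Sigma_1\cap\Sigma_2=\emptyset$, interpreting the $\Sigma_1$-symbols as in $\J_0$ and the $\Sigma_2$-symbols as in $\V_{\A_2}$ over $\dom[\J_0]\cup\dom[\V_{\A_2}]$ yields a model of $\K:=\tup{\T_1^{\pos}\cup\T_{12}^{\pos},\A_1}$; call it $\J_0\cup\V_{\A_2}$. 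By Claim~\ref{lem:konev18} there is a homomorphism $h$ from $\Uni_{\K}$ to $\J_0\cup\V_{\A_2}$. Let $\I'$ be the $\Sigma_1$-reduct of the homomorphic image $h(\Uni_{\K})$. A homomorphic image of a model of a \dlliterpos KB is again a model (there are no disjointness assertions or number restrictions), and the $\Sigma_1$-reduct of a model of $\K$ is a model of $\K_1^{\pos}$; hence $\I'\models\K_1^{\pos}$. Moreover every $\Sigma_2$-fact of $h(\Uni_{\K})$ is already a fact of $\V_{\A_2}$ (it is a $\Sigma_2$-fact of $\J_0\cup\V_{\A_2}$, which $\J_0$ does not touch), so the $\Sigma_2$-reduct of $h(\Uni_{\K})$ is a sub-interpretation of $\V_{\A_2}$; since $h(\Uni_{\K})$ satisfies $\T_{12}^{\pos}$ read as an ordinary TBox and $a^{\I'}=a^{\V_{\A_2}}$ for every constant $a$, we get $(\I',\V_{\A_2})\models\T_{12}^{\pos}$. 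Thus $\I'$ already witnesses the statement, except for the size bound.

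\emph{Step 2: compress.} It suffices to bound the number of facts, as $\card{\I}$ ignores the domain. Here I would exploit that $\Uni_{\K}$ is a forest of regular trees: its roots are the constants in $\Ind(\A_1)$, the branching is bounded by the number $m$ of basic roles over $\Sigma_1\cup\Sigma_2$, and for a non-root node $\sigma$ both its basic-concept type and the type of the edge to its parent in $\Uni_{\K}$ depend only on the class $[\tail(\sigma)]$ of its last witness. Call an element of $\I'$ \emph{relevant} if it is named by a constant or null occurring in $\A_1$ or $\A_2$, or if it lies in $C^{\V_{\A_2}}$ for some inclusion $B\ISA C\in\T_{12}^{\pos}$ (or is part of a pair in $Q^{\V_{\A_2}}$ for some $R\ISA Q\in\T_{12}^{\pos}$); there are only polynomially many relevant elements and pairs. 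Let $\sim$ be the equivalence relation on $\dom[\I']$ generated by putting $h(\sigma)\sim h(\sigma')$ whenever $h(\sigma)$ and $h(\sigma')$ are both non-relevant and $[\tail(\sigma)]=[\tail(\sigma')]$, and let $\I$ be the $\Sigma_1$-reduct of the quotient of $h(\Uni_{\K})$ by $\sim$. Every non-relevant element is $h(\sigma)$ for a \emph{witness} node $\sigma$ (constant nodes map to relevant elements), and the assignment $[R]\mapsto{}$``the $\sim$-class of a non-relevant witness of class $[R]$'' is well defined and onto the set of $\sim$-classes of non-relevant elements; hence $\I$ has at most $m$ elements beyond the relevant ones, each of bounded out-degree, so $\card{\I}$ is polynomial. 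As $\I$ is a quotient, hence a homomorphic image, of the model $\I'$ of $\K_1^{\pos}$, it is still a model of $\K_1^{\pos}$.

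\emph{Step 3: the mapping survives, and the obstacle.} It remains to check $(\I,\V_{\A_2})\models\T_{12}^{\pos}$. The point is that if a $\sim$-class $[e]$ lies in $B^{\I}$ for $B\ISA C\in\T_{12}^{\pos}$, then some member $e'\in[e]$ lies in $B^{\I'}\subseteq C^{\V_{\A_2}}$, so $e'$ is relevant; but relevant elements are never merged, so $[e]=\{e'\}$ and may be identified with $e'\in C^{\V_{\A_2}}$, giving $B^{\I}\subseteq C^{\V_{\A_2}}$; the same remark (constants are relevant, hence unmerged) gives $a^{\I}=a=a^{\V_{\A_2}}$, and role inclusions of $\T_{12}^{\pos}$ are handled identically. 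The main obstacle is precisely this last step: the collapsing equivalence must be coarse enough to leave only polynomially many elements alive, yet fine enough that every mapping-relevant element — in particular every constant — keeps its own class carrying the correct name, so that $\I$ is simultaneously small and a legitimate mapping-consistent model of $\K_1^{\pos}$. Everything else (homomorphic images of \dlliterpos models are models; reducts behave well across the disjoint signatures; the relevant sets are polynomial) is routine.
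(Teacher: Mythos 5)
Your overall strategy is the paper's strategy in disguise: the hypothesis gives a witness $\I'$ with $(\I',\V_{\A_2})\models\T_{12}^{\pos}$, the mapping forces every extension of a left-hand side of $\T_{12}^{\pos}$ into the polynomially many elements carried by $\V_{\A_2}$, and positivity of $\K_1^{\pos}$ lets you collapse the rest. The genuine gap is in your Step 2/3 treatment of constants. Your notion of ``relevant'' protects only constants occurring in $\A_1$ or $\A_2$, but clause (i) of the definition of $(\I,\V_{\A_2})\models\T_{12}^{\pos}$ requires $a^{\I}=a^{\V_{\A_2}}=a$ for \emph{every} $a\in N_a$, and nothing prevents the homomorphism $h$ from sending witness nodes of $\Uni_{\K}$ onto constants mentioned in neither ABox (they are ordinary elements of $\dom[\J_0]\cup\dom[\V_{\A_2}]$). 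Concretely, take $\T_1=\{A\ISA\SOMET{R}\}$, $\A_1=\{A(c_1),A(c_2)\}$, $\T_{12}=\{A\ISA A'\}$, $\A_2=\{A'(c_1),A'(c_2)\}$, and a witness $\J_0$ with $R^{\J_0}=\{(c_1,a),(c_2,b)\}$ for two fresh constants $a\neq b$: then $h(c_1w_{[R]})=a$ and $h(c_2w_{[R]})=b$ are both non-relevant with the same tail class $[R]$, so $a\sim b$ and in the quotient $a^{\I}=b^{\I}$, while condition (i) demands $a^{\I}=a\neq b=b^{\I}$. So for this admissible choice of $\J_0$ your $\I$ is not a legitimate witness, and since the proof must work for whatever $\J_0$ the hypothesis hands you, the construction as given fails.

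The obvious patches do not rescue it: declaring all of $N_a$ relevant (never merging constants) destroys the polynomial bound, because $\J_0$ may realize an infinite chase chain on infinitely many distinct unmentioned constants, each of which then survives as a separate fact-carrying element of $\I$. What is needed is to \emph{redirect} facts rather than quotient: keep of $\I'$ only the facts among the elements of $\dom[\A_2]$ (where all left-hand-side extensions of $\T_{12}^{\pos}$ live), interpret every $a\in N_a$ as itself, and reroute every fact with an endpoint outside that set to a fresh element. This is exactly the paper's construction with its single fresh point $d$: no constant is ever moved, so clause (i) is immediate, the rerouted interpretation still satisfies the positive inclusions of $\T_1^{\pos}$ by the same case analysis you sketch, and it agrees with $\I'$ on all left-hand sides of $\T_{12}^{\pos}$. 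Once you argue this way, your Step 1 detour through $\Uni_{\tup{\T_1^{\pos}\cup\T_{12}^{\pos},\A_1}}$ and its homomorphic image is unnecessary — the paper works directly with the witness $\I'$ provided by the definition of solution applied to $\V_{\A_2}$, and no bookkeeping by tail classes is needed.
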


  \begin{proof}
    Assume $\A_2$ is a solution for $\K_1^{\pos}$ under $\M^{\pos}$, then for
    each model of $\A_2$, in particular for $\V_{\A_2}$, there exists $\I'$ such
    that $\I'$ is a model of $\K_1^{\pos}$ and $(\I', \V_{\A_2}) \models
    \T_{12}^{\pos}$. Suppose $\card{\I'}$ is more than polynomial, then since
    $(\I', \V_{\A_2}) \models \T_{12}^{\pos}$ it follows $\Int[\I']{B} \subseteq
    \dom[\A_2]$ and $\Int[\I']{R} \subseteq \dom[\A_2]\times\dom[\A_2]$ for each
    basic concept $B$ and role $R$ that appear on the left hand side of some
    inclusion in $\T_{12}^{\pos}$. Therefore, we construct an interpretation
    $\I$ of polynomial size as follows:
    \begin{itemize}
    \item $\dom = \dom[\A_2] \cup N_a \cup \{d\}$, for a fresh domain element
      $d$,
    \item $\Int{a} = a$ for $a \in N_a$,
    \item $\Int{A} = (\Int[\I']{A} \cap \dom[\A_2]) \cup \{d \mid\text{ if
      }\Int[\I']{A} \setminus \dom[\A_2] \neq \emptyset\}$ for each atomic concept
      $A$,
    \item $
      \begin{array}[t]{rcl}
        \Int{R} &= &(\Int[\I']{R} \cap (\dom[\A_2] \times \dom[\A_2])) \ \cup {} \\
        && \{(a,d) \mid (a,b) \in \Int[\I']{R} \setminus (\dom[\A_2] \times
        \dom[\A_2]), a \in \Int[\I']{(\SOMET{R})} \cap \dom[\A_2]\}\ \cup {}\\ 
        && \{(d,a) \mid (b,a) \in \Int[\I']{R} \setminus (\dom[\A_2] \times
        \dom[\A_2]), a \in \Int[\I']{(\SOMET{R^-})} \cap \dom[\A_2]\}\ \cup {}\\ 
        && \{(d,d) \mid (a,b) \in \Int[\I']{R} \setminus (\dom[\A_2] \times
        \dom[\A_2]), a \notin \Int[\I']{(\SOMET{R})} \cap 
        \dom[\A_2], b \notin \Int[\I']{(\SOMET{R^-})} \cap \dom[\A_2]\}\\
      \end{array}$ \\
      for each atomic role $R$.
    \end{itemize}
    Note that $\V_{\A_2}$ interprets all constants as themselves, and $\I'$
    agrees on interpretation of constants with $\V_{\A_2}$, for this reason
    $\dom \supseteq N_a$.

    It is straightforward to verify that $\I$ is a model of $\K_1^{\pos}$:
    clearly, $\I$ is a model of $\A_1$, we show $\I \models
    \T_1^{\pos}$. Assume, $\T_1^{\pos} \models B \ISA C$ for basic concepts $B,
    C$, and $b \in \Int{B}$. If $b \in \dom[\I'] \cap \dom[\A_2]$, then since
    $\I' \models B \ISA C$, we have that $b \in \Int[\I']{C}$, which implies $b
    \in \Int{C}$. Otherwise, $b=d$ and for some $c \in \dom[\I'] \setminus
    \dom[\A_2]$, $c \in \Int[\I']{B}$, therefore $c \in \Int[\I']{C}$, and thus
    by definition of $\I$, $d \in \Int{C}$. Role inclusions are handled
    similarly. Moreover, as $\I$ and $\I'$ agree on all concepts and roles that
    appear on the left hand side of $\T_{12}^{\pos}$, it follows that
    $(\I,\V_{\A_2}) \models \T_{12}^{\pos}$. Hence, $\I$ is the interpretation
    of polynomial size we were looking for.
  \end{proof}

  Finally, the \NP algorithm for deciding the non-emptiness problem for
  universal solutions is as follows:
  \begin{enumerate}
  \item verify whether $\K_1$ and $\M$ are $\Sigma_2$-positive, if yes,
  \item compute $\A_2$, the $\Sigma_2$-closure of $\A_1$ with respect to
    $\T_1^{\pos} \cup \T_{12}^{\pos}$.
  \item guess a source interpretation $\I$ of polynomial size.
  \item If $\I \models \K_1^{\pos}$ and $(\I,\Uni_{\A_2}) \models \T_{12}^{\pos}$,
    then a universal solution for $\K_1$ under $\M$ exists, and $\A_2$ is a
    universal solution, otherwise a universal solution does not exist.
  \end{enumerate}
  Note that steps~1,2 and 4 can be done in polynomial time, hence this algorithm
  is in fact an \NP algorithm. Below we prove the correctness of the algorithm.

  Assume $\I \models \K_1^{\pos}$ and $(\I,\Uni_{\A_2}) \models
  \T_{12}^{\pos}$. Then $\A_2$ is a solution: for each model $\J$ of $\A_2$, it
  holds $\Uni_{\A_2} \subseteq \J$, therefore $(\I,\J) \models \T_{12}$. By
  Lemma~\ref{lem:uni-sol-exists-iff-A2-sol} we obtain that a universal solution
  for $\K_1$ under $\M$ exists, and from its proof it follows that $\A_2$ is a
  universal solution. Thus, the algorithm is sound.

  We show the algorithm is complete. Assume $\I \not\models \K_1^{\pos}$ or
  $(\I,\Uni_{\A_2}) \not\models \T_{12}^{\pos}$, and to the contrary, $\A_2$ is
  a solution. The by Lemma~\ref{lem:a2-sol-exists-I-poly-size}, there exists a
  model $\I'$ of $\K_1^{\pos}$ of polynomial size such that $(\I',\Uni_{\A_2})
  \models \T_{12}^{\pos}$. Contradiction with the guessing step. Therefore,
  $\A_2$ is not a solution and there exists no universal solution. Thus, the
  algorithm is complete.

  \vspace{0.3cm}
  As a corollary we obtain an upper bound for the membership problem.
  \begin{theorem}
    The membership problem for universal solutions is in \NP.
  \end{theorem}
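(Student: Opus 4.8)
The plan is to turn the membership question into the non-emptiness question together with a handful of polynomial-time \dlliter reasoning tasks, exploiting that a universal solution, when it exists, is essentially unique. Fix the input $\M = (\Sigma_1, \Sigma_2, \T_{12})$, $\K_1 = \tup{\T_1, \A_1}$ and a candidate $\K_2 = \tup{\T_2, \A_2}$ over $\Sigma_2$. First I would treat the degenerate case: $\Sat_\M(\Mod(\K_1)) = \emptyset$ holds exactly when $\tup{\T_1 \cup \T_{12}, \A_1}$ is inconsistent (a model of this KB splits into a model of $\K_1$ and one of its translations, and conversely one can glue such a pair together), and in that case $\K_2$ is a universal solution iff $\Mod(\K_2) = \emptyset$; both tests are in $\NLOGSPACE$. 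So assume $\tup{\T_1 \cup \T_{12}, \A_1}$ is consistent. By Proposition~4.1 of~\cite{ABCRS12}, a universal solution must then have a trivial TBox, so I would check in polynomial time that $\T_2$ is trivial (every inclusion in it is valid), rejecting otherwise, and henceforth treat $\K_2$ simply as the target ABox $\A_2$.

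Next comes the core reduction, which rests on the characterizations proved earlier. Let $\K = \tup{\T_1^{\pos} \cup \T_{12}^{\pos}, \A_1}$ and let $\A_2^{\max}$ be the $\Sigma_2$-closure of $\A_1$ under $\T_1^{\pos} \cup \T_{12}^{\pos}$, i.e., the polynomial-size ABox over $\Sigma_2$ consisting of all assertions $B(a)$, $R(a,b)$ with $a, b \in \Ind(\A_1)$ entailed by $\K$; it is computable in polynomial time. Combining Lemmas~\ref{lem:A2-uni-sol-dlliter-pos-iff-homo-equiv} and~\ref{lem:A2-uni-sol-iff-homo-equiv-and-sigma2-positive}, $\A_2$ is a universal solution for $\K_1$ under $\M$ iff $\K_1$ and $\M$ are $\Sigma_2$-positive and $\V_{\A_2}$ is $\Sigma_2$-homomorphically equivalent to $\Uni_{\K}$. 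Since in this null-free setting every element of $\V_{\A_2}$ is a constant, any $\Sigma_2$-homomorphism out of it fixes all constants, so $\V_{\A_2}$ and $\V_{\A_2^{\max}}$ are $\Sigma_2$-homomorphically equivalent iff $\A_2$ and $\A_2^{\max}$ are logically equivalent ABoxes. Together with Lemma~\ref{lem:uni-sol-exists-iff-A2-sol} — which gives that a universal solution exists iff $\A_2^{\max}$ is a solution, and in that case $\A_2^{\max}$ is itself a universal solution, hence $\Sigma_2$-homomorphically equivalent to $\Uni_{\K}$ — this yields the equivalence: $\A_2$ is a universal solution for $\K_1$ under $\M$ iff (i) $\A_2$ is logically equivalent to $\A_2^{\max}$, and (ii) a universal solution for $\K_1$ under $\M$ exists. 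Condition (i) amounts to \dlliter entailment checks and is in $\NLOGSPACE$, while (ii) is precisely the non-emptiness problem, which is in $\NP$ by Theorem~\ref{the:non-emp-uni-sol-owlql-aboxes-np}. Hence the whole test is a polynomial-time predicate conjoined with an $\NP$ one, so membership is in $\NP$.

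Operationally the algorithm does nothing more than the non-emptiness algorithm — the single nondeterministic step is guessing a polynomial-size source interpretation $\I$ and checking $\I \models \K_1^{\pos}$ and $(\I, \Uni_{\A_2^{\max}}) \models \T_{12}^{\pos}$ (Lemma~\ref{lem:a2-sol-exists-I-poly-size}), preceded by the polynomial-time $\Sigma_2$-positiveness test of Lemma~\ref{lem:check-sigma2-pos-in-ptime} — augmented only by the polynomial-time checks of triviality of $\T_2$ and logical equivalence of $\A_2$ with $\A_2^{\max}$. The step I expect to require the most care is justifying the uniqueness observation underlying (i): that any null-free universal solution is logically equivalent to $\A_2^{\max}$ — in particular that it can neither mention a fresh constant nor omit an entailed assertion — which follows by playing the two $\Sigma_2$-homomorphisms between $\V_{\A_2}$ and $\Uni_{\K}$ against each other, but needs attention to $\exists R$-assertions in $\A_2^{\max}$ and to getting all the degenerate cases (inconsistency of $\tup{\T_1 \cup \T_{12}, \A_1}$, failure of $\Sigma_2$-positiveness, non-trivial $\T_2$, candidates over the wrong signature) right so that the reduction is correct everywhere.
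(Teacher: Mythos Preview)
Your proposal is correct and follows essentially the same approach as the paper, which derives the membership bound as an immediate corollary of the non-emptiness result by observing that membership reduces to non-emptiness. You supply the details the paper omits: the triviality check on $\T_2$, the computation of the canonical candidate $\A_2^{\max}$, and the observation that in the null-free setting any universal solution must be logically equivalent to $\A_2^{\max}$, so that membership decomposes into a polynomial-time equivalence test plus the $\NP$ non-emptiness call.
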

\end{proof}

\subsection{Proof of Theorem~\ref{the:non-emp-uni-sol-pspace-hard-and-exptime}}
\begin{proof}
  First we provide the \PSPACE lower bound, and then present the \EXPTIME
  automata-based algorithm for deciding the non-emptiness problem for universal
  solutions with extended ABoxes.
  \begin{lemma}
    \label{lem:non-emp-uni-sol-pspace-hard}
    The non-emptiness problem for universal solutions with extended ABoxes in
    \dlliter is \PSPACE-hard.
  \end{lemma}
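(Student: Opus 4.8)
The plan is to reduce from the validity problem for quantified Boolean formulas, which is \PSPACE-complete, adapting the \PSPACE-hardness construction for $\Sigma$-conjunctive-query entailment from \cite{KKLSWZ11}. Given a QBF $\Phi = Q_1 x_1\, Q_2 x_2 \cdots Q_n x_n\, .\, \varphi$, where $\varphi$ is a propositional formula over $x_1,\dots,x_n$, I would build in polynomial time a \dlliter mapping $\M=(\Sigma_1,\Sigma_2,\T_{12})$ and a \dlliter KB $\K_1=\tup{\T_1,\A_1}$ over $\Sigma_1$ — using inverse roles and role inclusions, but no disjointness assertions — so that $\Phi$ is valid if and only if there is a universal solution (with extended ABoxes) for $\K_1$ under $\M$.

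The backbone of the argument is the characterization already established. By Lemma~\ref{lem:A2-uni-sol-iff-homo-equiv-and-sigma2-positive} together with Lemma~\ref{lem:uni-sol-exists-iff-chase-fin-part}, a universal solution exists iff $\K_1$ and $\M$ are $\Sigma_2$-positive and $\Uni_{\tup{\T_1\cup\T_{12},\A_1}}$ is $\Sigma_2$-homomorphically embeddable into a finite subset of itself. Since the construction uses no disjointness assertions and will realise every $\Sigma_2$-predicate, $\Sigma_2$-positivity holds trivially (and is anyway decidable in polynomial time by Lemma~\ref{lem:check-sigma2-pos-in-ptime}), so the whole reduction reduces to arranging that the infinite canonical model of $\tup{\T_1\cup\T_{12},\A_1}$, viewed through $\Sigma_2$, ``folds back'' onto a finite (in fact at most exponential) part of itself precisely when $\Phi$ is true.

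Concretely, $\T_1$ would contain a chain of generating axioms of the form $B \ISA \SOMET{R}$ and $\SOMET{R^-}\ISA\SOMET{S}$ whose effect on the chase (using the explicit description of $\Uni_\K$ and Claim~\ref{th:uni-sat-kb}) is an infinite forest whose $\Sigma_2$-projection unravels the quantifier tree of $\Phi$: successive blocks of generated witnesses correspond to the variables $x_1,\dots,x_n$, cycling so that the chase never terminates, the branching ``$x_i$ true''/``$x_i$ false'' is realised by two $\leq_\T$-incomparable generating roles, and the truth value of $\varphi$ under the complete assignment built along a branch is recorded by concept labels at the corresponding leaf block. The crucial design choice is in $\T_{12}$: for each \emph{existentially} quantified $x_i$ the two branching roles are mapped to a single target role — the role-identification phenomenon illustrated in Section~\ref{sec-univ} — so that in the $\Sigma_2$-view a homomorphism may collapse the two children of that block and thereby ``commit to a witness''; for each \emph{universally} quantified $x_i$ the two branching roles are kept $\Sigma_2$-distinct, and \emph{inverse roles} in $\Sigma_2$ are used to force a ``return trip'' through the block that detects any incompatibility between the folding choices made on the two sides, so that a universal block cannot be folded unless both sub-branches admit compatible continuations. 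One would then show, by induction on the quantifier prefix, that a finite $\Sigma_2$-folding of the whole canonical model exists iff the alternating choices can be made consistently down to the leaves, i.e., iff $\Phi$ is valid, and assemble the $\Sigma_2$-homomorphisms of the individual finite subinterpretations demanded by the notion of finite $\Sigma_2$-homomorphic embeddability from these local foldings.

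The main obstacle I anticipate is exactly the simulation of \emph{alternation} by homomorphism existence: a homomorphism is inherently an ``existential'' object (it need only embed each finite subinterpretation, and may vary with it), so capturing the universal quantifiers — ``the folding must succeed \emph{whatever} branch is taken'' — is delicate, and it is here that the inverse roles and role inclusions must do the real work, by creating obstructions that can be avoided only under a global consistency condition ranging over all branches. Verifying that $\Uni_{\tup{\T_1\cup\T_{12},\A_1}}$ is genuinely infinite (so the finite-folding question is non-trivial), that any finite folding is of at most exponential size, that the encoding of $\varphi$ interacts correctly with the folding, and that $\Sigma_2$-positivity holds by construction are the remaining technical points; the bookkeeping — polynomial-time computability of $\M$ and $\K_1$ and the precise description of the canonical model — is routine given the definition of $\Uni_\K$ and Claims~\ref{th:uni-sat-kb} and~\ref{lem:konev18}.
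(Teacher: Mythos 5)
Your framing is the same as the paper's: reduce from QBF validity and use Lemma~\ref{lem:A2-uni-sol-iff-homo-equiv-and-sigma2-positive} together with Lemma~\ref{lem:uni-sol-exists-iff-chase-fin-part} to turn existence of a universal solution into the question of whether $\Uni_{\tup{\T_1\cup\T_{12},\A_1}}$ is $\Sigma_2$-homomorphically embeddable into a finite subset of itself (and $\Sigma_2$-positivity is indeed vacuous without disjointness). But the actual gadget construction, which is the entire content of the hardness proof, is missing, and the sketch you give of it would not work. First, you never construct the finite structure that the infinite chase is supposed to fold onto. In the paper's reduction, $\T_1$ has a second block of axioms ($A \ISA \SOMET{T_0^-}$, $\SOMET{T_{i-1}^-}\ISA\SOMET{P_i^k}$, \ldots) that generates, inside the same canonical model, a \emph{finite} component $\C_{\fin}$: the full assignment tree with bounded clause tails. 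The mapping $\T_{12}$ identifies the target vocabulary of the two components ($S_i\ISA S'$ and $T_i\ISA S'$, $Y_i^k\ISA Z_i^k$ and $X_i^k\ISA Z_i^k$, etc.), and the whole proof is that the infinite component $\C_{\inf}$ folds into $\C_{\fin}$ iff $\phi$ is valid. Your quantifier tree alone cannot fold into a finite part of itself, so without this second component the reduction has no target.

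Second, your existential/universal gadgets are the wrong way around. In the working construction an existential quantifier produces a \emph{single} generating role $S_i$ in the infinite component; the witness is chosen by where the homomorphism sends its successor among the two $\leq_\T$-incomparable branches $P_i^0,P_i^1$ of the finite assignment tree. A universal quantifier branches in the infinite component into $Q_i^0,Q_i^1$ with $\Sigma_2$-visible labels $Z_i^0,Z_i^1$, so the homomorphism must handle both values. Your proposal instead branches in the source for existentials and lets the homomorphism ``collapse'' the two children: but a homomorphism must still map both subtrees, so this enforces universal, not existential, semantics (and if the two children are made $\Sigma_2$-indistinguishable, no choice is recorded at all). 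Finally, ``recording the truth value of $\varphi$ by concept labels at the leaf block'' is not something a \dlliter TBox can do, since inclusions cannot evaluate a Boolean formula from a set of concept labels. The paper instead exploits the CNF structure: each clause $C_j$ yields an infinite $R_j$-tail at every leaf of the quantifier tree, while the finite component carries bounded $R_j^i$-tails, terminated by $2$-cycles created by the inverse-role inclusions $T_i \ISA {R_j'}^-$ and $R_j^0 \ISA {R_j'}^-$, attached only where a literal of $C_j$ is made true; the infinite tail can be folded iff the clause is satisfied along the chosen image path. These three mechanisms are exactly the points you defer to ``inverse roles doing the real work,'' so as it stands the proposal is a plan with the essential construction still open.
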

  \begin{proof}
    The proof is by reduction of the satisfiability problem for quantified
    Boolean formulas, known to be $\PSPACE$-complete. Suppose we are given a
    QBF \[\phi = \QQ_1 X_1 \ldots \QQ_n X_n \bigwedge_{j=1}^m C_j\] where $\QQ_i
    \in \{\forall,\exists\}$ and $C_j$, $1 \leq j \leq m$, are clauses over the
    variables $X_i$, $1 \leq i \leq n$.

    Let $\Sigma_1 = \{A, Y_i^k, X_i^k, S_l, T_l, Q_i^k, P_i^k, R_j, R_j^l \mid 1
    \leq j \leq m, 1 \leq i \leq n, 0 \leq l \leq n, k \in \{0,1\}\}$ where $A,
    Y_i^k, X_i^k$ are concept names and the rest are role names. Let $\T_1$ be
    the following TBox over $\Sigma_1$ for $1 \leq j \leq m$, $1 \leq i \leq n$
    and $k \in \{0,1\}$:
    \[\begin{array}{r@{~~}c@{~~}l@{\qquad}r@{~~}c@{~~}l@{\qquad}r@{~~}c@{~~}l}
      A & \ISA & \SOMET{S_0^-} &
      \SOMET{S_{i-1}^-} & \ISA & \SOMET{Q_i^k} & \multicolumn{3}{l}{\text{if } \QQ_i =
        \forall} \\
      &&&  \SOMET{S_{i-1}^-} & \ISA & \SOMET{S_i} & \multicolumn{3}{l}{\text{if } \QQ_i =
        \exists} \\
      \SOMET{(Q_{i}^k)^-} & \ISA & Y_i^k &
      Q_i^k & \ISA & S_i&
      \SOMET{S_n^-} & \ISA & \SOMET{R_j}\\
      \SOMET{R_j^-} & \ISA & \SOMET{R_j}\\
      \\
      A & \ISA & \SOMET{T_0^-} &
      \SOMET{T_{i-1}^-} & \ISA & \SOMET{P_i^k} &
      P_i^k & \ISA & T_i\\
      \SOMET{(P_{i}^k)^-} & \ISA & X_i^k &
      X_i^0 & \ISA & \SOMET{R_j^i} & \multicolumn{3}{l}{\text{if } \neg X_i \in C_j} \\
      &&& X_i^1 & \ISA & \SOMET{R_j^i} & \multicolumn{3}{l}{\text{if } X_i \in C_j}\\
      \SOMET{(R_j^i)^-} & \ISA & \SOMET{R_j^{i-1}}\\
    \end{array}\]
    and $\A_1 = \{A(a)\}$.

    Let $\Sigma_2 = \{A', Z_i^0, Z_i^1, S', R_j'\}$ where $A', Z_i^0, Z_i^1$ are
    concept names and $S', R_j'$ are role names, $\M=(\Sigma_1, \Sigma_2,
    \T_{12})$, and $\T_{12}$ the following set of inclusions:
    \[\begin{array}{r@{~~}c@{~~}l@{\qquad}r@{~~}c@{~~}l}
      A & \ISA & A' &
      S_i & \ISA & S' \\
      &&& T_i & \ISA & S' \\
      &&& Y_i^k & \ISA & Z_i^k \\
      &&& X_i^k & \ISA & Z_i^k \\
    \end{array}
    \begin{array}{r@{~~}c@{~~}l}
      R_j & \ISA & R_j'\\
      T_i & \ISA & {R_j'}^-\\
      R_j^i & \ISA & R_j' \\
      R_j^0 & \ISA & {R_j'}^-\\
    \end{array}\]
    We verify that $\models \phi$ if and only if $\Uni_{\tup{\T_1 \cup
        \T_{12},\A_1}}$ is $\Sigma_2$-homomorphically embeddable into a finite
    subset of itself. The latter, in turn, is equivalent to the existence of a
    universal solution for $\K_1 = \tup{\T_1, \A_1}$ under $\M$, which is shown
    in Lemma~\ref{lem:uni-sol-exists-iff-chase-fin-part}.

    For $\phi = \exists X_1 \forall X_2 \exists X_3(X_1 \wedge (X_2 \vee \neg
    X_3))$, $\Sigma_2$-reduct of $\Uni_{\tup{\T_1 \cup \T_{12}, \A_1}}$ can be
    depicted as follows:
    \begin{center}
      \begin{tikzpicture}
        \inda[label=right:{$a$},label=left:{$A'$},a,(0,0)];
        \begin{scope}[shift={(0,1.7)}]
          \foreach \name/\x/\y in { i1/1.5/0, i11/3/0.7, i12/3/-0.7,
            i111/4.5/0.7, i121/4.5/-0.7, d111/3/1, d112/1.5/1, d113/0/1,
            d114/-1.5/1, d211/3/0.4, d212/1.5/0.4, d213/0/0.4, d214/-1.5/0.4,
            d121/3/-0.4, d122/1.5/-0.4, d123/0/-0.4, d124/-1.5/-0.4, d221/3/-1,
            d222/1.5/-1, d223/0/-1, d224/-1.5/-1 } {%
            \inda[,,\name,(\x,\y)]; }
          \node[anchor=east] at (i11.west) {\scriptsize $Z_2^0$};
          \node[anchor=east] at (i12.west) {\scriptsize $Z_2^1$};
          \foreach \name/\lab in { d114/{$R_1'$}, d214/{$R_2'$}, d124/{$R_1'$},
            d224/{$R_2'$} } { \node[anchor=east] (n) at (\name.west)
            {\scriptsize \lab}; \node at ($(n) - (0.5,0)$) {\dots}; }
          \draw[rounded corners] (-3,-1.5) rectangle (6,1.5); \node[anchor=north
          east] at (6,1.5) {$\C_{inf}$};
        \end{scope}
        \begin{scope}[shift={(0,-2.5)}]
          \foreach \name/\x/\y/\wh/\lab in { j1/1.5/1/above/{$Z_1^0$},
            j2/1.5/-1/above/{$Z_1^1$}, j11/3/1.5/below/{$Z_2^0$},
            j12/3/0.5/below/{$Z_2^1$}, j21/3/-0.5/below/{$Z_2^0$},
            j22/3/-1.5/below/{$Z_2^1$}, j111/4.5/1.7/right/{$Z_3^0$},
            j112/4.5/1.3/right/{$Z_3^1$}, j121/4.5/0.7/right/{$Z_3^0$},
            j122/4.5/0.3/right/{$Z_3^1$}, j211/4.5/-0.3/right/{$Z_3^0$},
            j212/4.5/-0.7/right/{$Z_3^1$}, j221/4.5/-1.3/right/{$Z_3^0$},
            j222/4.5/-1.7/right/{$Z_3^1$}, c214/-1.5/1.75/left/{$R_2'$},
            c224/-1.5/0.75/left/{$R_2'$}, c234/-1.5/-0.25/left/{$R_2'$},
            c244/-1.5/-1.25/left/{$R_2'$}, cc214/-1.5/0.3/left/{$R_2'$},
            cc224/-1.5/-1.7/left/{$R_2'$}, c114/-1.5/-0.8/left/{$R_1'$} } {%
            \inda[label=\wh:{\scriptsize \lab},,\name,(\x,\y)]; }
          \foreach \name/\x/\y in { c211/3/1.75, c212/1.5/1.75, c213/0/1.75,
            c221/3/0.75, c222/1.5/0.75, c223/0/0.75, c231/3/-0.25,
            c232/1.5/-0.25, c233/0/-0.25, c241/3/-1.25, c242/1.5/-1.25,
            c243/0/-1.25, cc212/1.5/0.3, cc213/0/0.3, cc222/1.5/-1.7,
            cc223/0/-1.7, c113/0/-0.8 } {%
            \inda[,,\name,(\x,\y)]; }
          \draw[rounded corners] (-3,-2.2) rectangle (6,2.3); \node[anchor=south
          east] at (6,-2.2) {$\C_{fin}$};
        \end{scope}
        \foreach \from/\to in {%
          a/i1, i1/i11, i1/i12, i11/i111, i12/i121}{
          \redge[\from,\to,];
        }
        \begin{scope}[dashed]
          \foreach \from/\to/\lab in { a/j1, a/j2, j1/j11, j1/j12, j2/j21,
            j2/j22, j11/j111, j11/j112, j12/j121, j12/j122, j21/j211, j21/j212,
            j22/j221, j22/j222}{
            \redge[\from,\to,];
          }
        \end{scope}
        \begin{scope}[thick]
          \foreach \from/\to in { j111/c211, c211/c212, c212/c213, j121/c221,
            c221/c222, c222/c223, j211/c231, c231/c232, c232/c233, j221/c241,
            c241/c242, c242/c243, j12/cc212, cc212/cc213, j22/cc222,
            cc222/cc223, j2/c113}{%
            \redge[\from,\to,]; }%
          \foreach \from/\to in { c213/c214, c223/c224, c233/c234, c243/c244,
            cc213/cc214, cc223/cc224, c113/c114}{ \redgecurve[\from,\to, ,
            ($0.5*(\from)+0.5*(\to)+(0,0.1)$)]; \redgecurve[\to,\from, ,
            ($0.5*(\from)+0.5*(\to)-(0,0.1)$)]; }%
          \foreach \from/\to/\wh/\lab in { i111/d111/above/{$R_1'$},
            d111/d112/above/{$R_1'$}, d112/d113/above/{$R_1'$},
            d113/d114/above/{$R_1'$}, i111/d211/below/{$R_2'$},
            d211/d212/below/{$R_2'$}, d212/d213/below/{$R_2'$},
            d213/d214/below/{$R_2'$}, i121/d121/above/{$R_1'$},
            d121/d122/above/{$R_1'$}, d122/d123/above/{$R_1'$},
            d123/d124/above/{$R_1'$}, i121/d221/below/{$R_2'$},
            d221/d222/below/{$R_2'$}, d222/d223/below/{$R_2'$},
            d223/d224/below/{$R_2'$}}{
            \redge[\from,\to,];
          }
        \end{scope}
      \end{tikzpicture}
    \end{center}
    where each edge \tikz \draw[->] (0,0) -- (0.7,0.1); is labeled with $S'$,
    each edge \tikz \draw[->,dashed] (0,0) -- (0.7,0.1); is labeled with $S',
    {R_j'}^-$ for $1 \leq j \leq m$, and the labels of edges \tikz
    \draw[->,thick] (0,0) -- (0.7,0.1); are shown to the left of each infinite
    and finite path. The labels of the nodes (if any) are shown next to each
    node.

    \newcommand{\fin}{\mathit{fin}} \renewcommand{\inf}{\mathit{inf}}
    \renewcommand{\aa}{\mathfrak{a}}

    Let $\C_{\inf}$ and $\C_{\fin}$ be the parts of $\Uni_{\tup{\T_1 \cup
        \T_{12}, \A_1}}$ generated using the first 9 axioms and the last 9
    axioms of $\T_1$ respectively. Note that $\C_{\inf}$ is infinite, while
    $\C_{\fin}$ is finite. One can show that $\C_{\inf}$ is
    $\Sigma_2$-homomorphically embeddable into $\C_{\fin}$ (which is equivalent
    to $\Uni_{\tup{\T_1 \cup \T_{12}, \A_1}}$ is $\Sigma_2$-homomorphically
    embeddable into $\C_{\fin}$) iff $\phi$ is satisfiable.

    The rest of the proof follows the line of the proof of Theorem~11 in
    \cite{KKLSWZ11}.

    ($\Rightarrow$) Suppose $\models \phi$. We show that the canonical model
    $\Uni_{\tup{\T_1 \cup \T_{12},\A_1}}$ is $\Sigma_2$-homomorphically
    embeddable into a finite subset of itself. More precisely, let us denote
    with $\T_1^{\inf}$ the subset of $\T_1$ consisting of the first 9 axioms,
    and $\T_1^{\fin}$ the subset of $\T_1$ consisting of the last 9 axioms. Then
    $\Uni_{\tup{\T_1 \cup \T_{12},\A_1}} = \Uni_{\tup{\T_1^{\inf} \cup
        \T_{12},\A_1}} \cup \Uni_{\tup{\T_1^{\fin} \cup \T_{12},\A_1}}$, and we
    construct a $\Sigma_2$-homomorphism $h: \dom[\Uni_{\tup{\T_1^{\inf} \cup
        \T_{12},\A_1}}] \to \dom[\Uni_{\tup{\T_1^{\fin} \cup
        \T_{12},\A_1}}]$. In the following we use $\Uni_{\inf}$ to denote
    $\Uni_{\tup{\T_1^{\inf} \cup \T_{12},\A_1}}$, and $\Uni_{\fin}$ to denote
    $\Uni_{\tup{\T_1^{\fin} \cup \T_{12},\A_1}}$.

    We begin by setting $h(\Int[\Uni_{\inf}]{a}) = \Int[\Uni_{\fin}]{a}$. Then
    we define $h$ in such a way that, for each path $\pi$ in $\Uni_{\inf}$ of
    length $i+1 \leq n$, $h(\pi)$ is a path $\Int[\Uni_{\fin}]{a}w_1 \ldots w_i$
    of length $i+1$ in $\Uni_{\fin}$ and it defines an assignment $\aa_{h(\pi)}$
    to the variables $X_1, \ldots, X_i$ by taking, for all $1 \leq i' \leq i$,
    \begin{center}
      $\aa_{h(\pi)}(X_{i'}) = \top \Leftrightarrow \Int[\Uni_{\fin}]{a} \cdot
      w_1 \cdot \ldots \cdot w_{i'} \in \Int[\Uni_{\fin}]{(X^1_{i'})}$\\
      $\aa_{h(\pi)}(X_{i'}) = \bot \Leftrightarrow \Int[\Uni_{\fin}]{a} \cdot
      w_1 \cdot \ldots \cdot w_{i'} \in \Int[\Uni_{\fin}]{(X^0_{i'})}$.
    \end{center}

    Such assignments $\aa_{h(\pi)}$ will satisfy the following:
    \begin{description}
    \item[($\aa$)] the QBF obtained from $\phi$ by removing $\QQ_1X_1 \dots
      \QQ_iX_i$ from its prefix is true under $\aa_{h(\pi)}$.
    \end{description}
    For the paths of length 0 the $\Sigma_2$-homomorphism $h$ has been defined
    and ($\aa$) trivially holds. Suppose that we have defined $h$ for all paths
    in $\Uni_{\inf}$ of length $i+1\leq n$. We extend $h$ to all paths of length
    $i+2$ in $\Uni_{\inf}$ such that ($\aa$) holds. Let $\pi$ be a path of
    length $i+1$. In $\Uni_{\fin}$ we have
    \[\tail(h(\pi)) \leadsto_{\tup{\T_1^{\fin} \cup \T_{12},\A_2}}
    w^{\Uni_{\fin}}_{[P_i^k]}, \quad\text{and}\quad h(\pi) \cdot
    w^{\Uni_{\fin}}_{[P_i^k]} \in \Int[\Uni_{\fin}]{(X_i^k)}, \text{ for }k=0,1.\]
    If $\QQ_i = \forall$ then in $\Uni_{\inf}$ we have
    \[\tail(\pi) \leadsto_{\tup{\T_1^{\inf} \cup \T_{12},\A_2}}
    w^{\Uni_{\inf}}_{[Q_i^k]}, \quad\text{and}\quad \pi \cdot
    w^{\Uni_{\inf}}_{[Q_i^k]} \in \Int{(X_i^k)}, \text{ for }k=0,1.\]
    Thus, we set $h(\pi \cdot w^{\Uni_{\inf}}_{[Q_i^k]}) = h(\pi) \cdot
    w^{\Uni_{\fin}}_{[P_i^k]}$, for $k=0,1$. Clearly, ($\aa$) holds. Otherwise,
    $\QQ_i = \exists$ and in $\Uni_{\inf}$ we have
    \[\tail(\pi) \leadsto_{\tup{\T_1^{\inf} \cup \T_{12},\A_2}}
    w^{\Uni_{\inf}}_{[S_i]}.\]
    We know that $\models \phi$ and so, by, ($\aa$), the QBF obtained from $\pi$
    by removing $\QQ_1 X_1 \ldots \QQ_i X_i$ is true under either $\aa_{h(\pi)}
    \cup \{X_i = \top\}$ or $\aa_{h(\pi)} \cup \{X_i = \bot\}$. We set $h(\pi
    \cdot w^{\Uni_{\inf}}_{[S_i]}) = h(\pi) \cdot w^{\Uni_{\fin}}_{[P_i^k]}$
    with $k=1$ in the former case, and $k=0$ in the latter case. Either way,
    ($\aa$) holds.

    Consider now in $\Uni_{\inf}$ a path $\pi$ of length $n+1$ from
    $\Int[\Uni_{\inf}]{a}$ to $\Int[\Uni_{\inf}]{w_n}$. By construction, we have
    \[h(\pi) = \Int[\Uni_{\fin}]{a} \cdot \Int[\Uni_{\fin}]{w_{[P_1^{k_1}]}} \cdot
    \ldots \cdot \Int[\Uni_{\fin}]{w_{[P_n^{k_n}]}}.\]
    Next, on the one hand, the path $\pi$ in $\Uni_{\inf}$ has $m$ infinite
    extensions of the form $\pi \cdot w^{\Uni_{\inf}}_{[R_j]} \cdot
    w^{\Uni_{\inf}}_{[R_j]} \dots$, for $1 \leq j \leq m$. On the other hand, as
    $\models \phi$, by ($\aa$), for each clause $C_j$, there is some $1 \leq i'
    \leq n$ such that $h(\pi)$ contains $w^{\Uni_{\fin}}_{[P_{i'}^1]}$ if
    $X_{i'} \in C_j$, or $w^{\Uni_{\fin}}_{[P_{i'}^0]}$ if $\neg X_{i'} \in
    C_j$. We set for each $1 \leq l \leq n-i'$,
    \[h(\pi \cdot \underbrace{w^{\Uni_{\inf}}_{[R_j]} \cdot \ldots \cdot
      w^{\Uni_{\inf}}_{[R_j]}}_{l\text{ times}}) = \Int[\Uni_{\fin}]{a} \cdot
    \Int[\Uni_{\fin}]{w_{[P_1^{k_1}]}} \cdot \ldots \cdot
    \Int[\Uni_{\fin}]{w_{[P_{n-l}^{k_{n-l}}]}},\]
    for each $n+1 \geq l > n - i'$, 
    \[h(\pi \cdot \underbrace{w^{\Uni_{\inf}}_{[R_j]} \cdot \ldots \cdot
      w^{\Uni_{\inf}}_{[R_j]}}_{l\text{ times}}) = \Int[\Uni_{\fin}]{a} \cdot
    \Int[\Uni_{\fin}]{w_{[P_1^{k_1}]}} \cdot \ldots \cdot
    \Int[\Uni_{\fin}]{w_{[P_{i'}^{k_{i'}}]}} \cdot w^{\Uni_{\fin}}_{[R^{i'}_j]} \cdot \ldots
    \cdot w^{\Uni_{\fin}}_{[R^{n-l+1}_j]},\]
    and for each $l > n+1$
    \[h(\pi \cdot \underbrace{w^{\Uni_{\inf}}_{[R_j]} \cdot \ldots \cdot
      w^{\Uni_{\inf}}_{[R_j]}}_{l\text{ times}}) = \Int[\Uni_{\fin}]{a} \cdot
    \Int[\Uni_{\fin}]{w_{[P_1^{k_1}]}} \cdot \ldots \cdot
    \Int[\Uni_{\fin}]{w_{[P_{i'}^{k_{i'}}]}} \cdot
    \Int[\Uni_{\fin}]{w_{[R^{i'}_j]}} \cdot \Int[\Uni_{\fin}]{w_{[R^{i'-1}_j]}}
    \cdot \ldots \cdot w^{\Uni_{\fin}}_{[R^{i^\star}_j]},\]
    where $i^\star = (n-l+1) \mod 2$. It is immediate to verify that $h$ is a
    $\Sigma_2$-homomorphism from $\Uni_{\inf}$ to $\Uni_{\fin}$.

    ($\Leftarrow$) Let $h$ be a $\Sigma_2$-homomorphism from $\Uni_{\inf}$ to
    $\Uni_{\fin}$. We show that $\models \phi$.

    Let $\pi$ be a path of length $n+1$, $\pi = \Int[\Uni_{\inf}]{a} \cdot w_1
    \cdot \ldots \cdot w_n$, in $\Uni_{\inf}$. Then $(\Int[\Uni_{\inf}]{a},
    \pi_1), (\pi_i, \pi_{i+1}) \in \Int[\Uni_{\inf}]{S'}$, where $\pi_i =
    \Int[\Uni_{\inf}]{a} \cdot w_1 \cdot \ldots \cdot w_i$, for $1 \leq i \leq
    n-1$. Furthermore, let $Z_1^{k_1}, Z_2^{k_2}, \dots, Z_n^{k_n}$ be the
    concepts containing subpaths of $h(\pi_i)$. We show that for every $1 \leq j
    \leq m$, the clause $C_j$ contains at least one of the literals \[\{X_i \mid
    k_i = 1, 1 \leq i \leq n\} \cup \{\neg X_i \mid k_i = 0, 1 \leq i \leq
    n\}.\] Validity of $\phi$ will follow.

    Consider a path of the form $\pi \cdot
    \underbrace{\Int[\Uni_{\inf}]{w_{[R_j]}} \cdot \ldots \cdot
      \Int[\Uni_{\inf}]{w_{[R_j]}}}_{n+1\text{ times}}$ in $\Uni_{\inf}$. Then
    its $h$-image in $\Uni_{\fin}$ must be of the form \[\Int[\Uni_{\fin}]{a}
    \cdot \Int[\Uni_{\fin}]{w_{[P_1^{k_1}]}} \cdot \ldots \cdot
    \Int[\Uni_{\fin}]{w_{[P_i^{k_i}]}} \cdot \Int[\Uni_{\fin}]{w_{[R_j^i]}}
    \cdot \Int[\Uni_{\fin}]{w_{[R_j^{i-1}]}} \cdot \ldots \cdot
    \Int[\Uni_{\fin}]{w_{[R_j^{i'}]}}\] for some $1 \leq i \leq n$, $i' = 0$ or
    $i' = 1$, and $k_i=0$ or $k_i=1$. If $k_i=0$, then $C_j$ must contain $\neg
    X_i$, otherwise $X_i$.
  \end{proof}

  \begin{lemma}
    \label{lem:non-emp-uni-sol-exptime}
    The non-emptiness problem for universal solutions is in \EXPTIME. For a
    given \dlliter mapping $\M$ and a given \dlliter KB $\K_1$, if a universal
    solution $\A_2$ (an extended ABox without inequalities) exists, then it is
    at most exponentially large in the size of $\K_1 \cup \M$.
  \end{lemma}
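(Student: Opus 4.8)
The plan is to decide non-emptiness by reducing it, via Lemmas~\ref{lem:A2-uni-sol-iff-homo-equiv-and-sigma2-positive}, \ref{lem:check-sigma2-pos-in-ptime} and~\ref{lem:uni-sol-exists-iff-chase-fin-part}, to a homomorphic-embeddability question that is answered with the automata $\acan$, $\asol$ and $\agfin$ constructed above. Given $\M=(\Sigma_1,\Sigma_2,\T_{12})$ and $\K_1=\tup{\T_1,\A_1}$, set $\K=\tup{\T_1\cup\T_{12},\A_1}$. First I would test in polynomial time, by Lemma~\ref{lem:check-sigma2-pos-in-ptime}, whether $\K_1$ and $\M$ are $\Sigma_2$-positive; if not, by Lemma~\ref{lem:A2-uni-sol-iff-homo-equiv-and-sigma2-positive} there is no universal solution and we stop. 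Otherwise, by Lemma~\ref{lem:uni-sol-exists-iff-chase-fin-part} it remains to decide whether $\Uni_\K$ is $\Sigma_2$-homomorphically embeddable into a finite subset of itself, and to bound the size of a witness. To this end I would form the automaton $\Bau$ as the product of $\pi_{\Gamma_\K}(\acan)$, $\pi_{\Gamma_\K}(\asol)$ and $\agfin$, the projection $\pi_{\Gamma_\K}$ existentially hiding over $\Gamma_\K$ the labels concerning $\Sigma_1$ on the anonymous part of the tree, so that the content on which the two projected automata must agree is exactly the $\Sigma_2$-type of each node together with the information about the named individuals.

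The correctness statement to establish is
\[\L(\Bau)\neq\emptyset \iff \Uni_\K\text{ is $\Sigma_2$-homomorphically embeddable into a finite subset of itself.}\]
For the forward direction, take $T\in\L(\Bau)$. By the proposition characterizing $\L(\acan)$, $T$ is the $\Gamma_\K$-reduct of the tree encoding of $\Uni_\K$ carrying some labeling by $G$; by the proposition on $\agfin$ the $G$-labeled nodes form a finite prefix $T_G$; and by the proposition on $\asol$ the subtree $T_G$, once suitable $\Sigma_1$-labels are reinstated on its anonymous nodes, encodes a finite tree model $\I$ of $\K$. By Claim~\ref{lem:konev18} there is a homomorphism from $\Uni_\K$ to $\I$, whose restriction to $\Sigma_2$ is a $\Sigma_2$-homomorphism; and since $\acan$ and $\asol$ agree on all $\Sigma_2$-atoms over $T_G$, the $\Sigma_2$-reduct of $\I$ is a finite subinterpretation of that of $\Uni_\K$. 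Hence $\Uni_\K$ is $\Sigma_2$-homomorphically embeddable into a finite subset of itself. For the converse, assume such an embedding exists; then a universal solution $\A_2$ exists by Lemma~\ref{lem:uni-sol-exists-iff-chase-fin-part}, and it is one for $\K_1^{\pos}$ under $\M^{\pos}$ by Lemma~\ref{lem:A2-uni-sol-iff-homo-equiv-and-sigma2-positive}. From Lemma~\ref{lem:a2-sol-exists-I-poly-size} one then obtains a polynomial-size model of $\K_1^{\pos}$ which, put together with $\V_{\A_2}$, is a finite model of $\tup{\T_1^{\pos}\cup\T_{12}^{\pos},\A_1}$; applying to it the fresh-element relabeling from the proof of Lemma~\ref{lem:A2-uni-sol-iff-homo-equiv-and-sigma2-positive} (which keeps a finite model finite and, using $\Sigma_2$-positiveness, enforces the disjointness assertions) yields a finite tree model $\I$ of $\K$ whose $\Sigma_2$-reduct is a finite subinterpretation of $\Uni_\K$ into which $\Uni_\K$ $\Sigma_2$-homomorphically embeds. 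Encoding $\Uni_\K$ as a tree with $G$ marking the nodes of that subinterpretation, and overlaying the $\Sigma_1$-structure of $\Uni_\K$ for the $\acan$-component and that of $\I$ for the $\asol$-component, produces a tree in $\L(\Bau)$.

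For the complexity, $\acan$ and $\asol$ each have polynomially many states in $|\K|=|\K_1\cup\M|$ — their state sets are indexed by the polynomially many basic concepts, basic roles, individuals and individual-pairs of $\K$ — an alphabet $\Sigma_\K$ of exponential size, and a transition function each of whose entries has polynomial size, while $\agfin$ has constant size. Projecting a two-way alternating automaton onto a sub-alphabet can be done through the standard passage to an equivalent one-way nondeterministic tree automaton at an exponential cost in the number of states only; after that, projection and the product with $\agfin$ and with the other projected automaton keep the number of states singly exponential in $|\K|$, and non-emptiness of the resulting automaton can be decided within time exponential in $|\K_1\cup\M|$. Moreover, if $\L(\Bau)\neq\emptyset$ then, by the usual pumping argument for tree automata, there is an accepted tree whose $G$-marked part has size bounded by the number of states of $\Bau$, hence exponential in $|\K_1\cup\M|$; extracting from the $\Sigma_2$-labels of that $G$-part the target ABox gives a universal solution of at most exponential size, which together with Lemma~\ref{lem:uni-sol-exists-iff-chase-fin-part} also proves the stated size bound.

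I expect two main obstacles. The first is to pin down the projection $\pi_{\Gamma_\K}$ and the product $\Bau$ precisely enough that the equivalence above can be verified line by line against the three propositions on $\acan$, $\asol$ and $\agfin$, in particular being careful about exactly which labels are shared on the named part of the tree (the individual-pair labels at the root and at the individual nodes), so that a model read by $\asol$ is genuinely compatible with the canonical model read by $\acan$. The second is the converse direction, where one must produce an honest finite tree model of $\K$ whose $\Sigma_2$-reduct sits inside $\Uni_\K$ and still receives a $\Sigma_2$-homomorphism from $\Uni_\K$: the delicate point is that \dlliter KBs need not have finite acyclic models, so the $\exists R$-successors forced at the frontier of the chosen finite part have to be rerouted through the image of the homomorphism while respecting the permissive notion of tree model used by $\asol$.
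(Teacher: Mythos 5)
Your overall architecture is the paper's: check $\Sigma_2$-positiveness in polynomial time (Lemma~\ref{lem:check-sigma2-pos-in-ptime}, Lemma~\ref{lem:A2-uni-sol-iff-homo-equiv-and-sigma2-positive}), then decide non-emptiness of the very same product $\Bau$ of $\pi_{\Gamma_\K}(\acan)$, $\pi_{\Gamma_\K}(\asol)$ and $\agfin$, with the correctness equivalence routed through Lemma~\ref{lem:uni-sol-exists-iff-chase-fin-part}; your forward direction (finite $G$-prefix gives a finite model, Claim~\ref{lem:konev18} gives the homomorphism, hence a finite self-embedding of $\Uni_\K$ and a universal solution) and your complexity and exponential-size analysis match the paper's Proposition~\ref{prop:uni-sol-exists-iff-B-non-empty} and the concluding corollary.

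The gap is in your converse. You build the accepted tree from the polynomial-size interpretation of Lemma~\ref{lem:a2-sol-exists-I-poly-size} combined with $\V_{\A_2}$, and then apply the fresh-element relabeling from the proof of Lemma~\ref{lem:A2-uni-sol-iff-homo-equiv-and-sigma2-positive}, claiming this yields ``a finite tree model of $\K$ whose $\Sigma_2$-reduct is a finite subinterpretation of $\Uni_\K$''. That step fails as stated: the interpretation of Lemma~\ref{lem:a2-sol-exists-I-poly-size} is not a tree model (all rerouted role edges go through the single hub element $d$, destroying unique shortest paths to $\Ind(\A_1)$), and the relabeling sends elements to fresh points of a set $\Delta$ disjoint from $\dom[\Uni_\K]$, so the resulting structure neither sits inside $\Uni_\K$ nor can be overlaid on the encoding $T_{\Uni}$ as a $G$-marked prefix. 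It is also more than is needed, and mismatched with the automata: $\asol$ only enforces the positive entailments of $\K$, and the paper deliberately states Proposition~\ref{prop:uni-sol-exists-iff-B-non-empty} for \dlliterpos, all disjointness having been discharged beforehand by the $\Sigma_2$-positiveness check. The paper's converse uses only that Lemma~\ref{lem:uni-sol-exists-iff-chase-fin-part} supplies a universal solution $\A_2$ with $\V_{\A_2}\subseteq\Uni_{\tup{\T_1\cup\T_{12},\A_1}}$: the witness tree is the encoding of the canonical model with $G$ on (the prefix closure of) $\dom[\V_{\A_2}]$ together with the individual nodes, and the $\asol$-component's run exists because $\A_2$ is a solution for $\K_1^{\pos}$ under $\M^{\pos}$, i.e.\ some model of $\K_1^{\pos}$ pairs with $\V_{\A_2}$ to satisfy $\T_{12}^{\pos}$; no finite tree model of the full $\K$ with disjointness has to be manufactured. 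So you should drop the detour through Lemma~\ref{lem:a2-sol-exists-I-poly-size} and the relabeling, and argue the converse at the level of the positive fragment, as the paper does.
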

  \begin{proof}
    First, we provide an algorithm for checking existence of a universal
    solution with extended ABoxes in \dlliterpos. Given a \dlliterpos mapping
    $\M=(\Sigma_1, \Sigma_2, \T_{12})$, to verify that a universal solution for
    $\tup{\T_1, \A_1}$ under $\M$ exists, we check for non-emptiness of the
    automaton $\Bau$ defining the intersection of the automata
    $\pi_{\Gamma_{\K}}(\acan)$, $\pi_{\Gamma_\K}(\asol)$, and $\agfin$, where
    $\K=\tup{\T_1 \cup \T_{12}, \A_1}$, $\pi_{\Gamma_\K}(\acan)$ is the
    projection of $\acan$ on the vocabulary $\Gamma_\K$, and likewise for
    $\pi_{\Gamma_\K}(\asol)$. If the language accepted by $\Bau$ is empty, then
    there is no universal solution, otherwise a universal solution exists and it
    is exactly the tree accepted by $\Bau$.

    \begin{proposition}
      \label{prop:uni-sol-exists-iff-B-non-empty}
      Let $\M = (\Sigma_1, \Sigma_2, \T_{12})$ be a \dlliterpos mapping, and
      $\K_1 =\tup{\T_1, \A_1}$ a \dlliterpos KB over $\Sigma_1$. Then, a
      universal solution with extended ABoxes for $\K_1$ under $\M$ exists iff
      the language of the automata $\Bau = \pi_{\Gamma_{\K}}(\acan) \cap \agfin
      \cap \pi_{\Gamma_\K}(\asol)$, where $\K = \tup{\T_1 \cup \T_{12}, \A_1}$,
      is non-empty.
    \end{proposition}

    \begin{proof}
      ($\Leftarrow$) Assume that $\L(\Bau) \neq \emptyset$ and $T \in
      \L(\Bau)$. Let $T_G$ be the subtree of $T$ defined by the $G$ labels, and
      $\I_{T,G}$ the interpretation represented by $T_G$. Then from the definition of
      $\Bau$ it follows that

      \begin{enumerate}
      \item $\I_{T,G}$ is a finite interpretation of $\Sigma_2$ and $\I_{T,G}
        \subseteq \Uni_{\tup{\T_1 \cup \T_{12}, \A_1}}$,
      \item there exists an interpretation $\I$ of $\Sigma_1$ such that $\I \cup
        \I_{T,G}$ is a model of $\tup{\T_1 \cup \T_{12},\A_1}$.
      \end{enumerate}

      Since $\I_{T,G}$ is finite, let $\A_{T,G}$ be the ABox over $\Sigma_2$
      such that $\Uni_{\A_{T,G}} = \I_{T,G}$. Then, $\A_{T,G}$ is a solution for
      $\K_1$ under $\M$ (by the second item). We show it is a universal
      solution. Let $\J$ be an interpretation of $\Sigma_2$ such that for some
      model $\I$ of $\K_1$, $(\I,\J) \models \M$. Then, since $\Uni_{\tup{\T_1
          \cup \T_{12},\A_1}}$ is the canonical model of $\tup{\T_1 \cup
        \T_{12}, \A_1}$, there exists a homomorphism from $\Uni_{\tup{\T_1 \cup
          \T_{12},\A_1}}$ to $\I \cup \J$ ($\I \cup \J$ is a model of $\tup{\T_1
        \cup \T_{12}, \A_1}$). In particular, there is a homomorphism from
      $\I_{T,G}$ to $\I \cup \J$, and as $\I_{T,G}$ and $\I$ are interpretations
      of disjoint signatures, there is a homomorphism $h$ from $\I_{T,G}$ to
      $\J$. Hence, $\J$ is a model of $\A_{T,G}$: take $h$ as the substitution
      for the labeled nulls. By definition of universal solution, $\A_{T,G}$ is
      a universal solution for $\K_1$ under $\M$.

      ($\Rightarrow$) Assume a universal solution for $\K_1$ under $\M$
      exists. Then by Lemma~\ref{lem:uni-sol-exists-iff-chase-fin-part} there
      exists a universal solution $\A_2$ such that $\V_{\A_2} \subseteq
      \Uni_{\tup{\T_1 \cup \T_{12}, \A_1}}$. Therefore, the language of $\Bau$
      is not empty.
    \end{proof}

    As a corollary of
    Lemma~\ref{lem:A2-uni-sol-iff-homo-equiv-and-sigma2-positive},
    Lemma~\ref{lem:check-sigma2-pos-in-ptime},
    Lemma~\ref{lem:uni-sol-exists-iff-chase-fin-part}, and
    Proposition~\ref{prop:uni-sol-exists-iff-B-non-empty} we obtain the
    exponential time upper bound of the non-emptiness problem for universal
    solutions with extended ABoxes in \dlliter. Moreover, $\A_{T,G}$ is at most
    exponentially large in the size of $\K_1$ and $\M$.
  \end{proof}

\end{proof}

\subsection{Proof of Theorem~\ref{the:memb-uni-sol-npcomplete}}

\begin{proof}
  We show that the membership problem for universal solutions with extended
  ABoxes is \NP-complete by first proving the lower bound, and then the upper
  bound.

  \begin{lemma}\label{lem:memb-uni-sol-nphard}
    The membership problem for universal solutions with extended ABoxes is 
    \NP-hard.
  \end{lemma}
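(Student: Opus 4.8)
The plan is to give a polynomial-time reduction from graph $3$-colorability, using the characterization of universal solutions in the positive fragment provided by Lemma~\ref{lem:A2-uni-sol-dlliter-pos-iff-homo-equiv}: for a \dlliterpos mapping $\M$, a \dlliterpos KB $\K_1$ and an extended ABox $\A_2$ over $\Sigma_2$, $\A_2$ is a universal solution for $\K_1$ under $\M$ if and only if $\V_{\A_2}$ is $\Sigma_2$-homomorphically equivalent to $\Uni_{\tup{\T_1 \cup \T_{12},\A_1}}$. Over a vocabulary consisting of a single role name a $\Sigma_2$-homomorphism is essentially a graph homomorphism, so this turns the membership question into a pair of graph-homomorphism tests; by placing inside the candidate solution a copy of the target structure, one of the two directions can be made to hold trivially, and then the \NP-hardness of (undirected) graph homomorphism, i.e. of $3$-colorability, carries over.

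Concretely, given a graph $G=(V,E)$, I would take $\Sigma_1=\{P\}$ and $\Sigma_2=\{P'\}$ (role names), $\T_1=\emptyset$, $\T_{12}=\{P \ISA P'\}$, and let $\A_1$ encode the directed version of $K_3$ over $P$ on three constants $v_1,v_2,v_3$ (that is, $P(v_i,v_j)$ for all $i\neq j$). For $\K_2$ I would take $\tup{\emptyset,\A_2}$, where $\A_2$ is the disjoint union, over $P'$, of $K_3$ on the same constants $v_1,v_2,v_3$ together with a copy of $G$ realized on fresh labeled nulls $\{n_u \mid u\in V\}$, both orientations of every edge of $G$ being present. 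This is computable in linear time, $\K_1$ and $\M$ are trivially $\Sigma_2$-positive since no disjointness assertions occur, and since $\T_1$ is empty and every $v_i$ already has a named $P$- and $P'$-successor, no anonymous witnesses are generated, so the $\Sigma_2$-reduct of $\Uni_{\tup{\T_{12},\A_1}}$ is exactly $K_3$ over $P'$ on $\{v_1,v_2,v_3\}$, together with the remaining (isolated) constants of $N_a$ that both canonical models carry in their domains.

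It then remains to analyze the two homomorphism directions. Any $\Sigma_2$-homomorphism from $\Uni_{\tup{\T_{12},\A_1}}$ to $\V_{\A_2}$ is forced to be the identity on constants, and since the $K_3$-on-constants part occurs literally in $\V_{\A_2}$ (and the spectator constants are isolated in both structures), the identity works; hence this direction always holds. Conversely, a $\Sigma_2$-homomorphism $h$ from $\V_{\A_2}$ to $\Uni_{\tup{\T_{12},\A_1}}$ is again the identity on $v_1,v_2,v_3$, and because the only elements of $\Uni_{\tup{\T_{12},\A_1}}$ carrying any $P'$-edge are $v_1,v_2,v_3$, $h$ must map each non-isolated null $n_u$ to some $v_i$ (isolated nulls can go anywhere, and the $\exists P'$/$\exists (P')^-$ membership conditions are met since every $v_i$ satisfies them). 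Such an $h$ exists precisely when the induced map $V\to\{v_1,v_2,v_3\}$ gives distinct values to the endpoints of every edge, i.e. when $G$ is $3$-colorable. Therefore $\V_{\A_2}$ and $\Uni_{\tup{\T_{12},\A_1}}$ are $\Sigma_2$-homomorphically equivalent iff $G$ is $3$-colorable, and by Lemma~\ref{lem:A2-uni-sol-dlliter-pos-iff-homo-equiv}, $\K_2$ is a universal solution for $\K_1$ under $\M$ iff $G$ is $3$-colorable.

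The only mildly delicate points are (i) confirming that the canonical model has no anonymous part, so that its $\Sigma_2$-reduct is exactly finite $K_3$ (which also requires checking the $\exists$-concept conditions in the definition of a $\Sigma_2$-homomorphism), and (ii) handling the infinitely many spectator constants of $N_a$ present in both domains, which are isolated and hence mapped identically without affecting either homomorphism. Neither is a real obstacle; the main conceptual step is recognizing that Lemma~\ref{lem:A2-uni-sol-dlliter-pos-iff-homo-equiv} reduces membership to graph homomorphism and that embedding a copy of the target into the candidate solution trivializes one direction of the required equivalence.
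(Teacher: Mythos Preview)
Your proposal is correct and essentially identical to the paper's proof: the paper also reduces from graph $3$-colorability using a single role in each signature, the mapping $\Edge \ISA \Edge'$, the directed $K_3$ on three constants as $\A_1$, and $\A_2$ consisting of that same $K_3$ together with the input graph realized on labeled nulls, invoking Lemma~\ref{lem:A2-uni-sol-dlliter-pos-iff-homo-equiv} for the characterization. You are simply a bit more explicit than the paper about the spectator constants and the $\exists P'$ type conditions, but the construction and argument are the same.
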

  \begin{proof}
    The proof is by reduction of 3-colorability of undirected graphs known to be
    \NP-hard. Suppose we are given an undirected graph $G = (V,E)$. Let
    $\Sigma_1 = \{\Edge\}$ and $\Sigma_2 = \{\Edge'\}$. Let $r,g,b \in N_a$, $V
    \subseteq N_l$ and
    \[
    \begin{array}{rcl}
      \A_1 &=& \{\Edge(r,g), \Edge(g,r), \Edge(r,b), \Edge(b,r), \Edge(g,b), \Edge(b,g)\},\\ 
      \T_1 &=& \{\}, \\
      \T_{12} &=& \{\Edge \ISA \Edge'\}, \\
      \A_2 &=& \{\Edge'(r,g), \Edge'(g,r), \Edge'(r,b), \Edge'(b,r), \Edge'(g,b), \Edge'(b,g)\} \cup {} \\
      & &\{\Edge'(x,y), \Edge'(y,x) \mid (x,y) \in E\}.
    \end{array}
    \]
    Note that the nodes in $G$ become labeled nulls in $\A_2$.

    We show that $G$ is 3-colorable if and only if $\A_2$ is a universal
    solution for $\K_1 = \tup{\T_1, \A_1}$ under $\M= (\Sigma_1, \Sigma_2,
    \T_{12})$.

    ($\Rightarrow$) Suppose $G$ is 3-colorable. Then it follows that there
    exists a function $h$ that assigns to each vertex from $V$ one of the colors
    $\{r,g,b\}$ such that if $(x,y) \in E$, then $h(x) \neq h(y)$, hence $h$ is
    a homomorphism from $G$ to the undirected graph $(\{r,g,b\}, \{(r,g), (g,b),
    (b,r)\})$.
    
    We prove that $\A_2$ is a universal solution for $\K_1$ under
    $\M$. Obviously, $\K_1$ and $\M$ are $\Sigma_2$-positive. Thus, it remains
    to verify that $\V_{\A_2}$ is $\Sigma_2$-homomorphically equivalent to
    $\Uni_{\tup{\T_1 \tup \T_{12},\A_1}}$. First, it is easy to see that
    $\Uni_{\tup{\T_1 \cup \T_{12},\A_1}}$ is $\Sigma_2$-homomorphically
    embeddable into $\V_{\A_2}$.  Second, $h$ is also a homomorphism from
    $\V_{\A_2}$ to $\Uni_{\tup{\T_1 \cup \T_{12},\A_1}}$, thus $\V_{\A_2}$ is
    homomorphically embeddable into $\Uni_{\tup{\T_1 \cup \T_{12},\A_1}}$.

    ($\Leftarrow$) Suppose now $\A_2$ is a universal solution for $\K_1$ under
    $\M$. Then by Lemma~\ref{lem:A2-uni-sol-dlliter-pos-iff-homo-equiv} it
    follows that $\V_{\A_2}$ is $\Sigma_2$-homomorphically equivalent to
    $\Uni_{\tup{\T_1 \cup \T_{12},\A_1}}$. Let $h$ be a homomorphism from
    $\V_{\A_2}$ to $\Uni_{\tup{\T_1 \cup \T_{12},\A_1}}$. Then $h$ assigns to
    each labeled null $x \in \dom[\A_2]$ some constant $a \in \dom[\A_1]$, and
    it is easy to see that $h$ is an assignment for the vertices in $V$ that is
    a 3-coloring of $G$.
  \end{proof}

  \begin{lemma}\label{lem:memb-uni-sol-np}
    The membership problem for universal solutions with extended ABoxes is in
    \NP.
  \end{lemma}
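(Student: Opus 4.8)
The plan is to reduce the membership test to a bounded number of polynomial‑time checks together with the verification of two homomorphism conditions, each of which can be certified in \NP. First I would dispose of the degenerate cases. Using that standard reasoning in \dlliter is in \NLOGSPACE, I check whether $\tup{\T_1\cup\T_{12},\A_1}$ is consistent. If it is not, then $\Sat_\M(\Mod(\K_1))=\emptyset$ (any $\I\models\K_1$ and $\J$ with $(\I,\J)\models\T_{12}$ combine, after renaming, into a model $\I\cup\J$ of $\tup{\T_1\cup\T_{12},\A_1}$), so $\K_2$ is a universal solution iff $\K_2$ is inconsistent, which is again an \NLOGSPACE test. If $\tup{\T_1\cup\T_{12},\A_1}$ is consistent, then by Proposition~4.1 of~\cite{ABCRS12} a universal solution must have a trivial TBox, so I reject unless $\T_2$ is trivial (a polynomial syntactic check) and otherwise $\K_2$ is a universal solution iff $\A_2$ is. By Lemma~\ref{lem:A2-uni-sol-iff-homo-equiv-and-sigma2-positive} the latter holds iff $\K_1$ and $\M$ are $\Sigma_2$‑positive --- decidable in polynomial time by Lemma~\ref{lem:check-sigma2-pos-in-ptime} --- and $\A_2$ is a universal solution for $\K_1^{\pos}$ under $\M^{\pos}$, which by Lemma~\ref{lem:A2-uni-sol-dlliter-pos-iff-homo-equiv} is equivalent to $\V_{\A_2}$ being $\Sigma_2$‑homomorphically equivalent to $\Uni:=\Uni_{\tup{\T_1^{\pos}\cup\T_{12}^{\pos},\A_1}}$ (a \dlliterpos KB, hence consistent and with a well‑defined canonical model). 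Everything so far is polynomial, so it remains to certify this homomorphic equivalence within \NP, i.e.\ to certify each of its two embeddability directions.

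For the direction ``$\V_{\A_2}$ finitely $\Sigma_2$‑homomorphically embeddable into $\Uni$'', since $\V_{\A_2}$ is finite this amounts to the existence of a single $\Sigma_2$‑homomorphism $h\colon\V_{\A_2}\to\Uni$, and I would simply guess it: $h$ fixes each constant and sends each labeled null to an element of $\dom[\Uni]$, which is a $\K$‑path $a\,w_{[R_1]}\cdots w_{[R_m]}$. The point is that these images may be taken short: because the subtree of $\Uni$ hanging below a node $w_{[R]}$ is completely determined by the type $[R]$, a connected piece of the image can be slid upward past any prefix that repeats a role‑type, so one may assume the ``top'' of the image of each connected component of $\V_{\A_2}$ (its unique individual if the component touches one, otherwise a node) sits at depth at most the number of role‑types of $\K$, whence every image path has length polynomial in $\card{\K_1}+\card{\M}$. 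Thus the guessed $h$ has polynomial size, and verifying that $h$ is a $\Sigma_2$‑homomorphism --- $h(u)\in\Int[\Uni]{B'}$ for every $B'(u)\in\A_2$ and $(h(u),h(v))\in\Int[\Uni]{P'}$ for every $P'(u,v)\in\A_2$ --- reduces, given the path representation, to standard \dlliter entailment tests and to inspecting the last one or two steps of the paths, all in polynomial time. This direction is therefore in \NP, and it is precisely here that \NP‑hardness lives, since the $3$‑colouring reduction of Lemma~\ref{lem:memb-uni-sol-nphard} is exactly the search for such an $h$.

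For the direction ``$\Uni$ finitely $\Sigma_2$‑homomorphically embeddable into $\V_{\A_2}$'', the key observation is that, as $\V_{\A_2}$ is finite and the canonical model has bounded degree, this is equivalent --- by a König/compactness argument, extracting a pointwise‑convergent subsequence of homomorphisms defined on larger and larger finite subinterpretations --- to the existence of a single total $\Sigma_2$‑homomorphism $g\colon\Uni\to\V_{\A_2}$. Moreover such a $g$ may be taken \emph{regular}: because the subtree of $\Uni$ below a $w_{[R]}$‑node depends only on $[R]$, $g$ is determined by the images of the individuals together with a function that, given the image $v$ of a node and a type $[R]$, returns the image of its $w_{[R]}$‑child. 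This is a polynomial‑size object, and checking that it induces a genuine $\Sigma_2$‑homomorphism amounts to verifying finitely many local conditions; equivalently, one can compute directly, as a greatest fixpoint over the set of pairs consisting of an element of $\dom[\V_{\A_2}]$ and a role‑type, the pairs from which the infinite extension can always be continued, which puts this direction in \PTIME. Either way it is in \NP.

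Putting the pieces together, the whole algorithm is polynomially many polynomial‑time tests plus two \NP‑checks, so membership for universal solutions with extended ABoxes is in \NP. The part I expect to be the real obstacle is this last direction, $\Uni\to\V_{\A_2}$: one must cope with the infiniteness of the canonical model, showing that finite embeddability collapses to a total --- in fact finitely describable, regular --- homomorphism, and do so in a way that keeps this test in \NP (indeed \PTIME) rather than \coNP, so that its conjunction with the other, genuinely \NP, direction stays in \NP; the regularity of $\Uni$ is exactly what makes this work. A secondary point demanding care is the up‑front bookkeeping around disjointness and (in)consistency, ensuring the reduction to the $\Sigma_2$‑positive, trivial‑TBox case is faithful.
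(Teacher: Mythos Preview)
Your proposal is correct and follows the same high-level reduction as the paper (trivial $\T_2$, $\Sigma_2$-positivity via Lemma~\ref{lem:check-sigma2-pos-in-ptime}, then the two directions of $\Sigma_2$-homomorphic equivalence from Lemma~\ref{lem:A2-uni-sol-dlliter-pos-iff-homo-equiv}), and your treatment of the direction $\V_{\A_2}\to\Uni$ is essentially the paper's Proposition inside the proof of Lemma~\ref{lem:memb-uni-sol-np}: guess a polynomial-size image inside $\Uni$ using that subtrees below nodes with the same tail are isomorphic.

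Where you genuinely diverge is in the direction $\Uni\to\V_{\A_2}$. The paper does \emph{not} certify this homomorphism directly; instead it checks that $\A_2$ is a \emph{solution} for $\K_1^{\pos}$ under $\M^{\pos}$ by guessing a polynomial-size interpretation $\I$ of $\Sigma_1$ with $\I\models\K_1^{\pos}$ and $(\I,\V_{\A_2})\models\T_{12}^{\pos}$, relying on Lemma~\ref{lem:a2-sol-exists-I-poly-size} for the size bound and on the proof of Lemma~\ref{lem:A2-uni-sol-dlliter-pos-iff-homo-equiv} for the equivalence of ``solution'' with existence of the $\Sigma_2$-homomorphism $\Uni\to\V_{\A_2}$. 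Your route---exploit the regularity of $\Uni$ to reduce existence of a total $\Sigma_2$-homomorphism into the finite $\V_{\A_2}$ to a greatest-fixpoint (simulation-style) computation over pairs $(\text{tail type},\text{target element})$---is a legitimate alternative that even places this direction in \PTIME rather than \NP, avoiding the small-source-model lemma entirely. The trade-off is that you need the regularity/K\"onig argument, while the paper reuses machinery already developed for Theorem~\ref{the:non-emp-uni-sol-owlql-aboxes-np}. Two minor remarks: the K\"onig step is not strictly needed, since Lemma~\ref{lem:A2-uni-sol-dlliter-pos-iff-homo-equiv} already speaks of actual homomorphisms rather than finite embeddability; and your upfront handling of the inconsistent case is something the paper leaves implicit under its standing assumption that $\tup{\T_1\cup\T_{12},\A_1}$ is consistent.
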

  \begin{proof}
    Assume we are given a mapping $\M=(\Sigma_1, \Sigma_2, \T_{12}$), a source KB
    $\K_1 =\tup{\T_1, \A_1}$, and a target ABox $\A_2$. We want to decide whether
    $\A_2$ is a universal solution with extended ABoxes for $\K_1$ under $\M$
    (ABoxes without inequalities).

    We need the following proposition that provides an upper bound for checking
    existence of homomorphism from $\V_{\A_2}$ to $\Uni_{\tup{\T_1 \cup \T_{12},
        \A_1}}$.
    \begin{proposition}
      Deciding whether $\V_{\A_2}$ is homomorphically embeddable into
      $\Uni_{\tup{\T_1 \cup \T_{12}, \A_1}}$ can be done in \NP in the size of
      $\K_1$, $\M$ and $\A_2$.
    \end{proposition}
    
    \begin{proof}
      First, if there exists a homomorphism $h$ from $\V_{\A_2}$ to
      $\Uni_{\tup{\T_1 \cup \T_{12}, \A_1}}$, then there exists a polynomial size
      witness $\A_3$ such that $\V_{\A_3} \subseteq \Uni_{\tup{\T_1 \cup \T_{12},
          \A_1}}$ and $h$ is a homomorphism from $\V_{\A_2}$ to $\V_{\A_3}$ (take
      $\V_{\A_3} = h(\V_{\A_2})$, then $\card{\A_3} \leq \card{\A_2}$). Therefore, to
      verify that such $h$ exists, it is sufficient to compute $\A_3$ and then to
      check whether $\V_{\A_2}$ can be homomorphically mapped into $\V_{\A_3}$.

      Second, there exists a witness $\A_3$ such that $\V_{\A_3} \subseteq
      \Uni_{\tup{\T_1 \cup \T_{12}, \A_1}}$ and every $x \in \dom[\A_3]$ is a path
      of polynomial length in the size of $\T_1 \cup \T_{12}$ and $\A_2$ (more
      precisely, of length smaller or equal $2m$, where $m$ is the size of $\T_1
      \cup \T_{12} \cup \A_2$). Proof: let h be a homomorphism from $\V_{\A_2}$ to
      $\Uni_{\tup{\T_1 \cup \T_{12}, \A_1}}$ and $\A_3$ an ABox such that
      $\V_{\A_3} = h(\V_{\A_2})$. Assume that $x \in \dom[\A_3]$ and the length of
      $x$ is more than $2m$. Then $x$ is not connected to $\Ind(\A_1)$ in $\A_3$,
      i.e., there exists no path $R_1(x_1,x_2), \dots, R_n(x_n, x_{n+1})$ with
      $x_1 =x$, $x_{n+1} =a \in \Ind(\A_1)$, $R_i(x_i,x_{i+1}) \in \A_3$
      (otherwise it contradicts $\V_{\A_3} = h(\V_{\A_2})$). Let $C$ be the
      maximal connected subset of $\A_3$ with $x \in \dom[C]$, i.e., $\dom[C] \cap
      \dom[\A_3 \setminus C] = \emptyset$ and for each $C' \subseteq C$, $\dom[C']
      \cap \dom[C \setminus C'] \neq \emptyset$, moreover $\dom[C] \cap \Ind(\A_1)
      = \emptyset$. Let $y$ be the path (in the sense of $\gpath(\tup{\T_1 \cup
        \T_{12}, \A_1})$) of minimal length in $C$, it exists and is unique since
      $\V_{\A_3} \subseteq \Uni_{\tup{\T_1 \cup \T_{12}, \A_1}}$ and there are no
      constants in $C$, and for each $x \in C$, $x = y \cdot w_{[R_1]} \dots
      w_{[R_n]}$ for some $n$. Further assume $\tail(y) = w_{[R]}$, then let $y'$
      be a path of the minimal length in $\dom[\Uni_{\tup{\T_1 \cup \T_{12},
          \A_1}}]$ with $\tail(y') = w_{[R]}$ (note that there is an infinite
      number of $y''$ with $\tail(y'') = w_{[R]}$). Then the length of $y'$ is
      bounded by the size of $\T_1 \cup\T_{12}$ and the length of each $y' \cdot
      w_{[R_1]} \dots w_{[R_n]}$, where $y \cdot w_{[R_1]} \dots w_{[R_n]} \in C$,
      is bounded by the size of $\T_1 \cup\T_{12} \cup \A_2$. Now, define a new
      function $h': \dom[\V_{\A_2}] \to \dom[\Uni_{\tup{\T_1 \cup \T_{12},
          \A_1}}]$ such that $h'(x) = h(x)$ if $h(x) \notin C$, $h'(x) = y' \cdot
      w_{[R_1]} \dots w_{[R_n]}$ if $h(x) = y \cdot w_{[R_1]} \dots w_{[R_n]}$. It
      is easy to see that $h'$ is a homomorphism from $\V_{\A_2}$ to
      $\Uni_{\tup{\T_1 \cup \T_{12}, \A_1}}$. We can continue this iteratively
      until we get that for every $x \in \dom[\A_3]$, $x$ is a path of length
      bounded by $2m$, where $\A_3$ is an ABox such that $\V_{\A_3} =
      h'(\V_{\A_2})$.

      Finally, our algorithm for checking existence of a homomorphism from $\V_{\A_2}$ to
      $\Uni_{\tup{\T_1 \cup \T_{12}, \A_1}}$ is as follows:
      \begin{enumerate}
      \item compute (guess) $\A_3$ (in \NP):
        \begin{itemize}
        \item for each $x \in \dom[\A_2]$ we guess 
          $y \in \dom[\Uni_{\tup{\T_1 \cup \T_{12}, \A_1}}]$ such that there
          exists a $\tup{\T_1 \cup \T_{12},\A_1}$-path from some $a \in
          \Ind(\A_1)$ to $y$ and $y$ is a path of polynomial length,
        \item Let $W$ be the set of all $y$ guessed above, then
          \[\begin{array}{rcl}
            \A_3& =& \{A(x) \mid x \in W, \tail(x) = w_{[R]}, \T_1 \cup \T_{12} \models
            \SOMET{R^-} \ISA A, A \in \Sigma_2\} \cup {}\\
            &&\{S(x',x) \mid x,x' \in W, x = x' \cdot w_{[R]}, \T_1 \cup \T_{12} \models
            R \ISA S, S \in \Sigma_2\},
          \end{array}\]
          $\V_{\A_3} \subseteq \Uni_{\tup{\T_1 \cup \T_{12}, \A_1}}$, $\dom[\A_3] =
          W$ and $\A_3$ is of polynomial size. 
        \end{itemize}

      \item check whether there exists a homomorphism from $\V_{\A_2}$ to
        $\V_{\A_3}$ (in \NP).
      \end{enumerate}

      We prove that the above described procedure is correct.

      Assume, we computed $\A_3$ and there exists a homomorphism $h$ from
      $V_{\A_2}$ to $V_{\A_3}$. Then since $\V_{\A_3} \subseteq \Uni_{\tup{\T_1
          \cup \T_{12}, \A_1}}$, it follows that $h$ is a homomorphism from
      $\V_{\A_2}$ to $\Uni_{\tup{\T_1 \cup \T_{12}, \A_1}}$.

      Now, assume that there exists no homomorphism from $V_{\A_2}$ to $V_{\A_3}$,
      and by contradiction there exists a homomorphism from $\V_{\A_2}$ to
      $\Uni_{\tup{\T_1 \cup \T_{12}, \A_1}}$. Then, we showed that there exists a
      homomorphism $h'$ from $\V_{\A_2}$ to $\Uni_{\tup{\T_1 \cup \T_{12}, \A_1}}$
      and an ABox $\A_3$ such that $\V_{\A_3} = h'(\V_{\A_2})$ and the length of
      every $x \dom[\A_3]$ is bounded by $2m$, where $m$ is the size of $\T_1 \cup
      \T_{12} \cup \A_2$. Contradiction with step~1.

    \end{proof}
    
    Then the membership check for universal solutions with extended ABoxes can be
    done as follows:
    \begin{enumerate}
    \item verify whether $\K_1$ and $\M$ are $\Sigma_2$-positive, if yes
    \item check whether $\T_2$ is equivalent to the empty TBox, if yes
    \item check whether $\A_2$ is a solution with extended
      ABoxes for $\K_1^{\pos}$ under $\M^{\pos}$, if yes
    \item check whether $\A_2$ is homomorphically embeddable into $\Uni_{\tup{\T_1
          \cup \T_{12},\A_1}}$. If yes, then $\K_2$ is a universal solution for
      $\K_1$ under $\M$, otherwise it is not.
    \end{enumerate}
    Steps~1 and 2 can be done in polynomial time. Step~3 can be done in \NP
    similarly to Theorem~\ref{the:non-emp-uni-sol-owlql-aboxes-np}: guess an
    interpretation $\I$ of $\Sigma_1$ of polynomial size, check whether $\I$ is a
    model of $\K_1^{\pos}$ and $(\I,\V_{\A_2}) \models \T_{12}^{\pos}$. If yes,
    then $\A_2$ is a solution: let $\J$ be a model of $\A_2$ and $h$ a
    homomorphism from $\V_{\A_2}$ to $\J$. Then, let $\I^\J$ be the image of $h$
    applied to $\I$, $\I^\J = h(\I)$. Then $\I^\J$ is a model of $\K_1^{\pos}$ and
    $(\I^\J, \J) \models \T_{12}^{\pos}$, hence indeed, $\A_2$ is a solution.
    Step~4 is feasible in \NP, therefore in overall the membership check can be
    done in \NP.
  \end{proof}

\end{proof}


\subsection{Proof of Theorem~\ref{the:memb-uni-UCQ-sol-pspace-hard}}
\begin{proof}
  \newcommand{\bb}[1]{\textcolor{blue}{#1}}
  \newcommand{\rr}[1]{\textcolor{red}{#1}}
  \renewcommand{\gg}[1]{\textcolor{green!60!black}{#1}}
  \newcommand{\fin}{\mathit{fin}}
  \renewcommand{\inf}{\mathit{inf}}
  The proof is by reduction of the satisfiability problem for quantified Boolean
  formulas, known to be $\PSPACE$-complete. Suppose we are given a QBF \[\phi =
  \QQ_1 X_1 \ldots \QQ_n X_n \bigwedge_{j=1}^m C_j\] where $\QQ_i \in
  \{\forall,\exists\}$ and $C_j$, $1 \leq j \leq m$, are clauses over the
  variables $X_i$, $1 \leq i \leq n$.

  Let $\Sigma_1 = \{A, Y_i^k, X_i^k, S_l, T_l, Q_i^k, P_i^k, R_j, R_j^l \mid 1
  \leq j \leq m, 1 \leq i \leq n, 0 \leq l \leq n, k \in \{0,1\}\}$ where $A,
  Y_i^k, X_i^k$ are concept names and the rest are role names. Let $\T_1$ be
  the following TBox over $\Sigma_1$ for $1 \leq j \leq m$, $1 \leq i \leq n$
  and $k \in \{0,1\}$:
  \[\begin{array}{r@{~~}c@{~~}l@{\qquad}r@{~~}c@{~~}l@{\qquad}r@{~~}c@{~~}l}
    A & \ISA & \SOMET{S_0^-} &
    \SOMET{S_{i-1}^-} & \ISA & \SOMET{Q_i^k} & 
    \multicolumn{3}{l}{\text{if } \QQ_i = \forall} \\
    &&&  
    \SOMET{S_{i-1}^-} & \ISA & \SOMET{S_i} & 
    \multicolumn{3}{l}{\text{if } \QQ_i = \exists} \\
    \SOMET{(Q_{i}^k)^-} & \ISA & Y_i^k &
    Q_i^k & \ISA & S_i&
    \SOMET{S_n^-} & \ISA & \SOMET{R_j}\\
    \SOMET{R_j^-} & \ISA & \SOMET{R_j}\\
    \\
    A & \ISA & \rr{\SOMET{T_0^-}} &
    \rr{\SOMET{T_{i-1}^-}} & \ISA & \rr{\SOMET{P_i^k}} &
    \rr{P_i^k} & \ISA & \rr{T_i}\\
    \rr{\SOMET{(P_{i}^k)^-}} & \ISA & \gg{X_i^k} &
    \gg{X_i^0} & \ISA & \rr{\SOMET{R_j^i}} & 
    \multicolumn{3}{l}{\text{if } \neg X_i \in C_j} \\
    &&& 
    \gg{X_i^1} & \ISA & \rr{\SOMET{R_j^i}} & 
    \multicolumn{3}{l}{\text{if } X_i \in C_j}\\
    \rr{\SOMET{(R_j^i)^-}} & \ISA & \rr{\SOMET{R_j^{i-1}}}\\
  \end{array}\]
  and $\A_1 = \{A(a)\}$.

  Further, let $\Sigma_2 = \{A', Z_i^0, Z_i^1, S', R'_j, {P'_i}^k, T'_l, {R'_j}^l\}$ where
  $A', Z_i^0, Z_i^1$ are concept names and $S', R_j', {P'_i}^k, T'_l, {R'_j}^l$ are role
  names, $\M=(\Sigma_1, \Sigma_2, \T_{12})$, and $\T_{12}$ the following set of
  inclusions:
  \[\begin{array}{r@{~~}c@{~~}l@{\qquad}r@{~~}c@{~~}l@{\qquad}r@{~~}c@{~~}l}
    A & \ISA & A' & 
    S_i & \ISA & \bb{S'} &
    R_j & \ISA & \bb{R_j'}\\
    &&&
    \rr{T_i} & \ISA & \bb{S'} &
    \rr{T_i} & \ISA & \bb{{R_j'}^-}\\
    &&&
    Y_i^k & \ISA & \bb{Z_i^k} &
    \rr{R_j^i} & \ISA & \bb{R_j'} \\
    &&&
    \gg{X_i^k} & \ISA & \bb{Z_i^k} &
    \rr{R_j^0} & \ISA & \bb{{R_j'}^-}\\
    \\
    \rr{P_i^k}& \ISA & \rr{{P'_i}^k}& 
    \rr{T_l} & \ISA & \rr{T_l'}& 
    \rr{R_j^l} & \ISA & \rr{{R'_j}^l}  
  \end{array}\]

  Finally, let $\A_2 = \{A'(a)\}$, and $\T_2$ the following target TBox for $1
  \leq j \leq m$, $1 \leq i \leq n$ and $k \in \{0,1\}$:
  \[\begin{array}{r@{~~}c@{~~}l@{\qquad}r@{~~}c@{~~}l@{\qquad}r@{~~}c@{~~}l}
    A' & \ISA & \rr{\SOMET{{T'_0}^-}} &
    \rr{\SOMET{{T'_{i-1}}^-}} & \ISA & \rr{\SOMET{{P'_i}^k}} &
    \rr{{P'_i}^k} & \ISA & \rr{T'_i}\\
    \rr{\SOMET{({P'_{i}}^k)^-}} & \ISA & \bb{Z_i^k} &
    \bb{Z_i^0} & \ISA & \SOMET{{R'_j}^i} & 
    \multicolumn{3}{l}{\text{if } \neg X_i \in C_j} \\
    &&& 
    \bb{Z_i^1} & \ISA & \SOMET{{R'_j}^i} & 
    \multicolumn{3}{l}{\text{if } X_i \in C_j}\\
    \rr{\SOMET{({R'_j}^i)^-}} & \ISA & \rr{\SOMET{{R_j'}^{i-1}}}\\
    &&&
    \rr{T'_i} & \ISA & \bb{S'} & 
    \rr{T'_i} & \ISA & \bb{{R'_j}^-} \\
    &&&
    &&&
    \rr{{R'_j}^i} & \ISA & \bb{R'_j} \\
    &&&
    &&&
    \rr{{R'_j}^0} & \ISA & \bb{{R'_j}^-} \\
  \end{array}\]

  We verify that $\models \phi$ if and only if $\tup{\T_2,\A_2}$ is a universal
  $\UCQ$-solution for $\K_1 = \tup{\T_1, \A_1}$ under $\M$. From
  Claim~\ref{th:query-entail-homo} it follows that $\tup{\T_2,\A_2}$ is a
  universal $\UCQ$-solution for $\K_1 = \tup{\T_1, \A_1}$ under $\M$ iff
  $\Uni_{\tup{\T_1 \cup \T_{12}, \A_1}}$ is finitely $\Sigma_2$-homomorphically
  equivalent to $\Uni_{\tup{\T_2, \A_2}}$. Therefore, we are going to show that
  $\models \phi$ if and only if $\Uni_{\tup{\T_1 \cup \T_{12}, \A_1}}$ is
  finitely $\Sigma_2$-homomorphically equivalent to $\Uni_{\tup{\T_2, \A_2}}$.

  The rest of the proof is similar to
  Lemma~\ref{lem:non-emp-uni-sol-pspace-hard}.

\end{proof}

\section{Membership Problem for $\UCQ$-representability}
Note that for the ease of notation, in all proofs and statements
concerning UCQ-representability we use $\Sigma$ instead of $\Sigma_1$
and $\Xi$ instead of $\Sigma_2$. At the same time, alternative syntax
for the disjointness assertions is used: we write $B \sqcap B'
\sqsubseteq \bot$ instead of $B \sqsubseteq \neg B'$, for basic
concepts $B$ and $B'$; analogiousy for roles.

We need several new definitions.  For a TBox $\T$, a pair of basic
concepts $B, B'$ (resp., pair of roles $R, R'$) is
\emph{$\T$-consistent} if $\tup{\T, \{B(o), B'(o)\}}$ (resp.,
$\tup{\T, \{R(o, o'), R'(o, o')\}}$) is a consistent KB. We say a
concept $B$ is \emph{$\T$-consistent} if the pair $B, B$ is
$\T$-consistent, and we define in a similar way $\T$-consistency of a
role $R$. Denote by $\consc[\T]$ ($\consr[\T]$) the set of all
$\T$-consistent concepts (roles).

\subsection{Basic Preliminary Results}

\begin{lemma}\label{lem:Unitype-KBtype-ABox}
  Let $\K=\tup{\T, \A}$ be a KB, $a, b \in N_a$, $\sigma \in
  \Delta^{\Uni_\K}$, and $\tail(\sigma) \leadsto_\K w_{[R]}$. Then,
  \begin{enumerate}[series=thmparts, label=\textbf{(\roman*)}, start=1]
  \item \label{tmpt:lem-Unitype-KBtype-ABox-1} $B \in
    \ttype{\Uni_\K}(a)$ iff $\A \models B'(a)$ and $\T \vdash B'
    \sqsubseteq B$;
  \item \label{tmpt:lem-Unitype-KBtype-ABox-2} $R \in
    \rtype{\Uni_\K}(a,b)$ iff $\A \models R'(a,b)$ and $\T \vdash R'
    \sqsubseteq R$;
  \item \label{tmpt:lem-Unitype-KBtype-ABox-3} $B \in
    \ttype{\Uni_\K}(\sigma w_{[R]})$ iff $\T \vdash \exists R^-
    \sqsubseteq B$;
  \item \label{tmpt:lem-Unitype-KBtype-ABox-4} $R \in
    \rtype{\Uni_\K}(\sigma,\sigma w_{[R']})$ iff $\T \vdash R'
    \sqsubseteq R$.
  \end{enumerate}
\end{lemma}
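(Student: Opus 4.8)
The plan is to prove all four items by a direct unfolding of the definition of the canonical model $\Uni_\K$ and of the relations $\vdash$, $\leadsto_\K$, $\ISA^\C_\T$, $\ISA^\R_\T$, reading $\T\vdash\SOMET{R^-}\ISA B$ as $\SOMET{R^-}\ISA^\C_\T B$ and $\T\vdash R'\ISA R$ as $R'\ISA^\R_\T R$. Throughout I will use the elementary facts that $R_1\ISA^\R_\T R_2$ iff $R_1^-\ISA^\R_\T R_2^-$; that $R_1\ISA^\R_\T R_2$ implies $\SOMET{R_1}\ISA^\C_\T\SOMET{R_2}$; that $\ISA^\C_\T$ and $\ISA^\R_\T$ are sound for logical consequence, so $\Uni_\K\models\T$ (Claim~\ref{th:uni-sat-kb}) gives $\Int[\Uni_\K]{B_1}\subseteq\Int[\Uni_\K]{B_2}$ whenever $B_1\ISA^\C_\T B_2$; and that $\A\models R'(u,v)$ implies both $\A\models\SOMET{R'}(u)$ and $\A\models{R'}^-(v,u)$.

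For (i) and (ii), since $\tail(a)=a$ and $\tail(b)=b$ are individuals, the clauses in the definitions of $\Int[\Uni_\K]{A}$ and $\Int[\Uni_\K]{P}$ mentioning a witness $w_{[R]}$ are vacuous; membership thus reduces to $\K\vdash A(a)$, respectively $\K\vdash P(a,b)$, and these expand verbatim to the right-hand sides of (i) and (ii) (for (ii), passing between $P$ and $P^-$ via the elementary facts above). The one case needing an argument is (i) with $B=\SOMET{R}$: I claim $a\in\Int[\Uni_\K]{(\SOMET{R})}$ iff $\K\vdash\SOMET{R}(a)$. For ($\Leftarrow$) I split on whether some individual $b$ satisfies $\K\vdash R(a,b)$: if so, $(a,b)\in\Int[\Uni_\K]{R}$ by the first clause of the role definition (after possibly inverting), and we are done; if not, I pick a $\leq_\T$-minimal $[S]$ with $[S]\leq_\T[R]$ and $\K\vdash\SOMET{S}(a)$ (the set of such $[S]$ is nonempty, as $[R]$ belongs to it), and check $a\leadsto_\K w_{[S]}$: condition~(1) holds by choice, condition~(3) by minimality, and condition~(2) holds because any witness $R''$ with $\A\models R''(a,b)$ and $R''\ISA^\R_\T S\ISA^\R_\T R$ would yield $\K\vdash R(a,b)$, contradicting the case assumption. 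Then $(a,a\cdot w_{[S]})\in\Int[\Uni_\K]{R}$ via the second or third clause of the role definition according as $R$ is a role name or an inverse, so $a\in\Int[\Uni_\K]{(\SOMET{R})}$. The ($\Rightarrow$) direction is symmetric: from a witnessing successor of $a$ one reads off $\K\vdash\SOMET{R}(a)$ from the clause used.

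For (iii) and (iv) the tail of $\sigma\cdot w_{[R]}$ is the witness $w_{[R]}$, so now the individual-related clauses of the model definition are vacuous. Item (iv) is immediate: $(\sigma,\sigma\cdot w_{[R']})\in\Int[\Uni_\K]{R}$ can hold only through the clause $\sigma_2=\sigma_1\cdot w_{[R']}$ (for $R$ a role name) or its mirror (for $R$ an inverse), and in either case the side condition is exactly $[R']\leq_\T[R]$, i.e.\ $R'\ISA^\R_\T R$. For (iii), the ($\Leftarrow$) direction is short: $\tail(\sigma)\leadsto_\K w_{[R]}$ and reflexivity of $\ISA^\R_\T$ give $\sigma\cdot w_{[R]}\in\Int[\Uni_\K]{(\SOMET{R^-})}$ (apply the second/third clause of the role definition to the pair $(\sigma\cdot w_{[R]},\sigma)$), and then $\Uni_\K\models\T$ together with $\SOMET{R^-}\ISA^\C_\T B$ yields $\sigma\cdot w_{[R]}\in\Int[\Uni_\K]{B}$.

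The main obstacle is the ($\Rightarrow$) direction of (iii). If $B$ is a concept name, membership of $\sigma\cdot w_{[R]}$ forces $\SOMET{R^-}\ISA^\C_\T B$ directly from the concept-name clause. If $B=\SOMET{S}$, I inspect which of the three clauses of $\Int[\Uni_\K]{S}$ produces the required successor $\tau$ of $\sigma\cdot w_{[R]}$: the first clause is impossible since $\tail(\sigma\cdot w_{[R]})$ is not an individual; the clause creating a fresh witness forces $\tau=\sigma\cdot w_{[R]}\cdot w_{[T]}$ with $w_{[R]}\leadsto_\K w_{[T]}$ and $[T]\leq_\T[S]$ (or $[S^-]$), whence $\SOMET{R^-}\ISA^\C_\T\SOMET{T}\ISA^\C_\T\SOMET{S}$ using condition~(1) of $\leadsto_\K$; the clause stepping back forces $\tau=\sigma$ with $[R]\leq_\T[S^-]$, whence $R^-\ISA^\R_\T S$ and again $\SOMET{R^-}\ISA^\C_\T\SOMET{S}$. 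The only genuine difficulty in the whole proof is this clause-by-clause case analysis together with the bookkeeping of the role-name/inverse distinction that pervades the definition of $\Int[\Uni_\K]{P}$; everything else is a mechanical rewriting of definitions, and the role of Claim~\ref{th:uni-sat-kb} is merely to make the soundness (i.e.\ $(\Leftarrow)$) halves cheap.
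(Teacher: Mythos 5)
Your proof follows the same route as the paper's: direct unfolding of the definitions of $\vdash$, $\leadsto_\K$ and $\Uni_\K$, with a clause-by-clause case analysis (the paper only writes out (i) and the only-if half of (iii) and declares the rest analogous). Items (i), (ii), (iv) and the only-if direction of (iii) are correct as you argue them; in particular the minimal-$[S]$ construction you use to realize $\SOMET{R}$ at a constant in (i)$(\Leftarrow)$ is exactly the right move and verifies conditions (1)--(3) of $\leadsto_\K$ correctly.

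The genuine weak point is the if-direction of (iii) (and your closing remark that Claim~\ref{th:uni-sat-kb} makes all the $(\Leftarrow)$ halves cheap). Claim~\ref{th:uni-sat-kb} gives $\Uni_\K\models\K$ only for \emph{consistent} $\K$, whereas the lemma carries no consistency hypothesis and is used downstream precisely on inconsistent KBs: the proof of Lemma~\ref{lem:kr-iff-b-cons} reads off types of $\Uni_{\tup{\T,\{B_1(o),B_2(o)\}}}$ for an inconsistent KB, via Lemmas~\ref{lem:types-chase-ABox} and~\ref{lem:types-chase}, whose proofs rest on the present lemma. So the step ``$\sigma w_{[R]}\in\Int[\Uni_\K]{(\SOMET{R^-})}$ together with $\Uni_\K\models\T$ gives $\sigma w_{[R]}\in\Int[\Uni_\K]{B}$'' is unjustified in exactly the cases where the lemma is needed; it is also circular in spirit, since the unconditional satisfaction of the positive inclusions of $\T$ by $\Uni_\K$ is essentially what the $(\Leftarrow)$ halves of this lemma establish. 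The repair is local and uses only machinery you already deploy in (i): the concept-name case of (iii)$(\Leftarrow)$ is immediate from the definition of $\Int[\Uni_\K]{A}$, and for $B=\SOMET{S}$ with $\T\vdash\SOMET{R^-}\ISA\SOMET{S}$ pick a $\leq_\T$-minimal $[T]$ with $[T]\leq_\T[S]$ and $\T\vdash\SOMET{R^-}\ISA\SOMET{T}$; if $[T]=[R^-]$ then $[R]\leq_\T[S^-]$ and the back-edge clause applied to $(\sigma w_{[R]},\sigma)$ witnesses $\SOMET{S}$, and otherwise conditions (1)--(3) of $w_{[R]}\leadsto_\K w_{[T]}$ hold (minimality gives (3)), so $\sigma w_{[R]}w_{[T]}\in\gpath(\K)$ and the forward clause with $[T]\leq_\T[S]$ witnesses $\SOMET{S}$. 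With that substitution your argument matches the paper's proof, which never invokes Claim~\ref{th:uni-sat-kb} and therefore holds for arbitrary, possibly inconsistent, $\K$.
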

\begin{proof}
  For~\ref{tmpt:lem-Unitype-KBtype-ABox-1} assume, first, $B$ is a
  concept name, then the proof straightforwardly follows from the
  definition of $\Uni_\K$. Let $B= \exists R$ for a role $R$, we show
  the ``only if'' direction. By the definition of $\Uni_\K$ it follows
  either $a \leadsto_\K w_{[R']}$ for some role $R'$ such that $\T
  \vdash R' \sqsubseteq R$ or $\K \vdash R(a,b)$ for some $b \in
  N_a$. In the first case $\K \vdash \exists R'(a)$ and $\T \vdash
  \exists R' \ISA B$ by the definition of $\leadsto$. It is then
  immediate that $\A \models B'(a)$ and $\T \vdash B' \ISA B$ for some
  concept $B'$. In the second case, there is a role $R''$ such that
  $\A \models R''(a,b)$ and $\T \vdash R'' \sqsubseteq R$, so the
  result follows with $B'=\exists R''$. The ``if'' direction is
  similar using the definition of $\Uni_\K$ and $\leadsto$, which
  concludes the proof of~\ref{tmpt:lem-Unitype-KBtype-ABox-1}. The
  proof of~\ref{tmpt:lem-Unitype-KBtype-ABox-2} is analogious.

  For~\ref{tmpt:lem-Unitype-KBtype-ABox-3} assume, first, $B$ is a
  concept name, then the proof straightforwardly follows from the
  definition of $\Uni_\K$. Let $B= \exists S$ for a role $S$, we, first,
  show the ``only if'' direction. It follows there exists $\sigma' \in
  \Delta^{\Uni_{\K}}$ such that $(\sigma w_{[R]}, \sigma') \in
  S^{\Uni_\K}$. From the definition of $\Uni_\K$ it should be clear that
  either $\delta' = \delta$ and $\T \vdash R \sqsubseteq S^-$, or
  $\sigma'=\sigma w_{[R]} w_{[R']}$ for a role $R'$ such that $w_{[R]}
  \leadsto_\K w_{[R']}$ and $\T \vdash R' \sqsubseteq S$. Then, from
  $w_{[R]} \leadsto_\K w_{[R']}$ we can also conclude $\T \vdash
  \exists R^- \sqsubseteq \exists R'$. One can see that in the both
  cases above it follows $\T \vdash \exists R^- \sqsubseteq \exists
  S$, which concludes the proof of the ``only if'' direction. The
  ``if'' direction is similar using the definition of $\Uni_\K$ and
  $\leadsto$.
\end{proof}
\begin{lemma}\label{lem:types-chase-ABox}
  Let $\tup{\T, \A}$ and $\tup{\T', \A'}$ be the KBs, such that:
  \begin{enumerate}[thmparts, start=1]
  \item \label{tmpt:lem:types-chase-ABox-prereq-1} $\T \subseteq \T'$,
  \item \label{tmpt:lem:types-chase-ABox-prereq-2} $\A \models B(a)$
    implies $\A' \models B(a)$ and $\A \models R(a,b)$ implies $\A'
    \models R(a,b)$, for all $a,b \in N_a$, concepts $B$ and roles $R$.
  \end{enumerate}
  Then, for each $\sigma \in \Delta^{\Uni_{\tup{\T,\A}}}$ there exists
  $\delta \in \Delta^{\Uni_{\tup{\T',\A'}}}$ such that%
  \begin{enumerate}[thmparts]
  \item \label{tmpt:lem:types-chase-ABox-1}
    $\ttype{\Uni_{\tup{\T,\A}}}(\sigma) \subseteq
    \ttype{\Uni_{\tup{\T',\A}}}(\delta)$,
  \item \label{tmpt:lem:types-chase-ABox-2}
    $\rtype{\Uni_{\tup{\T,\A}}}(a,\sigma) \subseteq
    \rtype{\Uni_{\tup{\T',\A}}}(a, \delta)$ for all $a \in N_a$.
  \end{enumerate}
\end{lemma}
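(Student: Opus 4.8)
\textbf{Proof plan for Lemma~\ref{lem:types-chase-ABox}.}
The plan is to work directly from the explicit description of the canonical model $\Uni_\K$ given in the preliminaries, together with Lemma~\ref{lem:Unitype-KBtype-ABox}, and to build the required element $\delta \in \Delta^{\Uni_{\tup{\T',\A'}}}$ by induction on the length of the $\tup{\T,\A}$-path $\sigma$. The key observation is that both the relation $\leadsto_\K$ and the type of a path element are controlled by entailments of the form $\A \models B'(a)$, $\A \models R'(a,b)$, $\T \vdash B' \sqsubseteq B$, $\T \vdash R' \sqsubseteq R$, and $\T \vdash \exists R^- \sqsubseteq B$; by hypotheses~\ref{tmpt:lem:types-chase-ABox-prereq-1} and~\ref{tmpt:lem:types-chase-ABox-prereq-2} each such entailment that holds for $\tup{\T,\A}$ also holds for $\tup{\T',\A'}$. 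So intuitively the ``generating structure'' of $\Uni_{\tup{\T,\A}}$ embeds into that of $\Uni_{\tup{\T',\A'}}$, possibly after moving from a witness $w_{[R]}$ computed with respect to $\ISA^\R_\T$ to the corresponding witness $w_{[R]}$ computed with respect to $\ISA^\R_{\T'}$ (the equivalence classes may merge, but only upward).

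\textbf{Base case.} For $\sigma = a \in N_a$ I would simply take $\delta = a$. Part~\ref{tmpt:lem:types-chase-ABox-1} then follows from Lemma~\ref{lem:Unitype-KBtype-ABox}\ref{tmpt:lem-Unitype-KBtype-ABox-1}: if $B \in \ttype{\Uni_{\tup{\T,\A}}}(a)$ then $\A \models B'(a)$ and $\T \vdash B' \sqsubseteq B$ for some basic concept $B'$; by~\ref{tmpt:lem:types-chase-ABox-prereq-2} and~\ref{tmpt:lem:types-chase-ABox-prereq-1} also $\A' \models B'(a)$ and $\T' \vdash B' \sqsubseteq B$, whence $B \in \ttype{\Uni_{\tup{\T',\A'}}}(a)$ again by Lemma~\ref{lem:Unitype-KBtype-ABox}\ref{tmpt:lem-Unitype-KBtype-ABox-1}. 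Part~\ref{tmpt:lem:types-chase-ABox-2} is analogous using Lemma~\ref{lem:Unitype-KBtype-ABox}\ref{tmpt:lem-Unitype-KBtype-ABox-2}; note that when $\sigma$ is an individual, the only way $(a',\sigma) \in R^{\Uni_{\tup{\T,\A}}}$ with $a'$ an individual is via $\K \vdash R(a',a)$, which is exactly an ABox/TBox entailment preserved by the hypotheses.

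\textbf{Inductive step.} Suppose $\sigma = \sigma_0 \cdot w_{[R]}$ with $\tail(\sigma_0) \leadsto_{\tup{\T,\A}} w_{[R]}$, and let $\delta_0 \in \Delta^{\Uni_{\tup{\T',\A'}}}$ be the element obtained for $\sigma_0$ by the induction hypothesis. I would first check that $\tail(\delta_0) \leadsto_{\tup{\T',\A'}} w_{[R]}$ \emph{or} that already some existing successor of $\delta_0$ carries all the needed labels; the cleanest route is: since $\exists R \in \ttype{\Uni_{\tup{\T,\A}}}(\tail(\sigma_0))$ (this holds because $\tail(\sigma_0) \leadsto_{\tup{\T,\A}} w_{[R]}$ forces $\K \vdash \exists R(\tail(\sigma_0))$ when $\tail(\sigma_0)$ is an individual, or $\T \vdash \exists S^- \sqsubseteq \exists R$ when $\tail(\sigma_0) = w_{[S]}$), by the induction hypothesis $\exists R \in \ttype{\Uni_{\tup{\T',\A'}}}(\delta_0)$, so in $\Uni_{\tup{\T',\A'}}$ the element $\delta_0$ has \emph{some} $R$-successor $\delta$ — either an individual $b$ with $\tup{\T',\A'} \vdash R(\delta_0,b)$, or an element $\delta_0 \cdot w_{[R'']}$ with $[R''] \leq_{\T'} [R]$. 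I would then set $\delta$ to be this successor and verify~\ref{tmpt:lem:types-chase-ABox-1} and~\ref{tmpt:lem:types-chase-ABox-2}: every $B \in \ttype{\Uni_{\tup{\T,\A}}}(\sigma)$ satisfies $\T \vdash \exists R^- \sqsubseteq B$ by Lemma~\ref{lem:Unitype-KBtype-ABox}\ref{tmpt:lem-Unitype-KBtype-ABox-3}, hence $\T' \vdash \exists R^- \sqsubseteq B$; since $\delta$ is reached from $\delta_0$ via a role entailed to be below $R$, $\exists R^- \sqsubseteq B$ transfers to $\delta$ (using transitivity of $\ISA^\C_{\T'}$ and the $R''$-case). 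The role-type statement~\ref{tmpt:lem:types-chase-ABox-2} for $a \in N_a$ is handled by the same ``entailment is preserved'' argument as in the base case, observing that any role connecting an individual $a$ to $\sigma$ in $\Uni_{\tup{\T,\A}}$ must already connect $a$ to some individual (since $\sigma$ is not an individual), reducing to the ABox/TBox level.

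\textbf{Main obstacle.} I expect the subtle point to be the possible \emph{collapse} of $\ISA^\R_\T$-classes into larger $\ISA^\R_{\T'}$-classes: the witness $w_{[R]}$ that labels $\sigma$ in $\Uni_{\tup{\T,\A}}$ may not exist verbatim as an $R$-successor in $\Uni_{\tup{\T',\A'}}$, because the generating relation $\leadsto_{\tup{\T',\A'}}$ applies a $\leq_{\T'}$-minimality condition, so the actual successor may be $w_{[R'']}$ for some $R'' \ISA^\R_{\T'}$-below $R$, or the path may be ``shortcut'' to an individual. The lemma is stated so as to tolerate exactly this — it only claims type \emph{inclusion}, not an isomorphic embedding — so the fix is to never try to match $w_{[R]}$ on the nose, but always argue via the \emph{types}: whatever successor $\Uni_{\tup{\T',\A'}}$ actually provides, it satisfies $\exists (\cdot)^- \sqsubseteq B$ for every $B$ forced at $\sigma$, because those are consequences of $\T \vdash \exists R^- \sqsubseteq B$ and the $\ISA^\R_{\T'}$/$\ISA^\C_{\T'}$ orderings. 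Making this ``type transfer under class collapse'' step precise — in particular checking it for the three clauses in the definition of $P^{\Uni_\K}$ and for both $\tail(\sigma_0)$ an individual and $\tail(\sigma_0)$ a witness — is where the real work lies; everything else is a mechanical unfolding of the definitions of $\Uni_\K$, $\leadsto_\K$, and the orderings $\ISA^\C_\T$, $\ISA^\R_\T$, aided by Lemma~\ref{lem:Unitype-KBtype-ABox}.
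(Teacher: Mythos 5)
Your proposal matches the paper's own proof: both proceed by induction on the length of the path $\sigma$, take $\delta=\sigma$ for individuals using Lemma~\ref{lem:Unitype-KBtype-ABox} and the two hypotheses, and in the inductive step use $\exists R$ in the type of $\sigma'$ plus the induction hypothesis to obtain an arbitrary $R$-successor $\delta$ of $\delta'$ in $\Uni_{\tup{\T',\A'}}$, transferring types via $\T\vdash\exists R^-\ISA B$ and $\T\vdash R\ISA Q$ rather than matching witnesses $w_{[R]}$ syntactically. The "class collapse" issue you flag is handled in the paper exactly as you suggest, by arguing only through type inclusion, so no gap remains.
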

\begin{proof} Consider, first, the case $\sigma = b \in N_a$, then set
  $\delta=b$ and we show~\ref{tmpt:lem:types-chase-ABox-1}. Consider
  $B \in \ttype{\Uni_{\tup{\T,\A}}}(\sigma)$, it follows by
  Lemma~\ref{lem:Unitype-KBtype-ABox} $\A \models B'(b)$ and $\T \vdash
  B' \sqsubseteq B$, for some concept $B'$. Then,
  by~\ref{tmpt:lem:types-chase-ABox-prereq-1} it follows $\T' \vdash
  B' \sqsubseteq B$ and by ~\ref{tmpt:lem:types-chase-ABox-prereq-2}
  it follows $\A' \models B'(b)$, therefore, by
  Lemma~\ref{lem:Unitype-KBtype-ABox} we obtain $B \in
  \ttype{\Uni_{\tup{\T',\A}}}(\delta)$. The proof
  for~\ref{tmpt:lem:types-chase-ABox-2} is analogious.

  Now, assume the lemma holds for $\sigma' \in \dom[\Uni_{\tup{\T,
      \A}}]$; we show it also holds for $\sigma = \sigma'w_{[R]} \in
  \dom[\Uni_{\tup{\T, \A}}]$ for a role $R$.  By the definition of
  $\Uni_{\tup{\T, \A}}$ it follows $\tail(\sigma') \leadsto_{\tup{\T,
      \A}} w_{[R]}$ and so $\exists R \in
  \ttype{\Uni_{\tup{\T,\A}}}(\sigma')$. By
  Lemma~\ref{lem:Unitype-KBtype-ABox} it follows
  \begin{align}
    \label{eq:lem:types-chase-ABox-1} \T \vdash \exists R^-
    \sqsubseteq B \text{ for each } B
    \in \ttype{\Uni_{\tup{\T,\A}}}(\sigma)\\
    \label{eq:lem:types-chase-ABox-2} \T \vdash R \sqsubseteq Q \text{
      for each }Q \in \ttype{\Uni_{\tup{\T,\A}}}(a,\sigma)
  \end{align}
  On the other hand, observe by our induction hypothesis that there
  exists $\delta' \in \Delta^{\tup{\T', \A'}}$ such that
  $\ttype{\Uni_{\tup{\T,\A}}}(\sigma') \subseteq
  \ttype{\Uni_{\tup{\T',\A'}}}(\delta')$; therefore, $\exists R \in
  \ttype{\Uni_{\tup{\T',\A'}}}(\delta')$.  It follows there exists
  $\delta'' \in \dom[\Uni_{\tup{\T \cup \T',\A}}]$ such that
  $(\delta', \delta'') \in R^{\Uni_{\tup{\T',\A'}}}$. We select
  $\delta$ (for $\sigma$) equal to $\delta''$;
  using~\eqref{eq:lem:types-chase-ABox-1},~\ref{tmpt:lem:types-chase-ABox-prereq-1}
  and Lemma~\ref{lem:Unitype-KBtype-ABox} one can easily
  show~\ref{tmpt:lem:types-chase-ABox-1}, and
  using~\eqref{eq:lem:types-chase-ABox-2}
  and~\ref{tmpt:lem:types-chase-ABox-prereq-1} one can
  show~\ref{tmpt:lem:types-chase-ABox-2}.
\end{proof}
\begin{lemma}\label{lem:Unitype-KBtype}
  Let $\K = \tup{\T,\A}$ and assume $a \leadsto_\K w_{[R]}$ for some
  basic role $R$. Then there exists a basic concept $B$, such that $\A
  \models B(a)$, and:
  \begin{enumerate}[thmparts, start=1]
  \item \label{tmpt:lem-Unitype-KBtype-0} $o \leadsto_{\tup{\T,B(o)}}
    w_{[R]}$;
  \item \label{tmpt:lem-Unitype-KBtype-1} $\ttype{\Uni_\K}(a w_{[R]})
    \subseteq \ttype{\Uni_{\tup{\T,B(o)}}} (ow_{[R]})$;
  \item \label{tmpt:lem-Unitype-KBtype-2} $\rtype{\Uni_\K}(a,a
    w_{[R]}) \subseteq \rtype{\Uni_{\tup{\T,B(o)}}}(o,ow_{[R]})$.
  \end{enumerate}
\end{lemma}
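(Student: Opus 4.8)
The plan is to read off the required basic concept from condition~(1) in the definition of $a \leadsto_\K w_{[R]}$, and then to verify the three items, using the definition of $\leadsto$ for~\ref{tmpt:lem-Unitype-KBtype-0} and Lemma~\ref{lem:Unitype-KBtype-ABox} for~\ref{tmpt:lem-Unitype-KBtype-1} and~\ref{tmpt:lem-Unitype-KBtype-2}. Since $a \leadsto_\K w_{[R]}$, condition~(1) gives $\K \vdash \SOMET R(a)$, so by the definition of $\vdash$ there is a basic concept $B$ with $\A \models B(a)$ and $\T \vdash B \ISA \SOMET R$; fix this $B$, which will be the concept claimed by the lemma, and write $\K' = \tup{\T, \{B(o)\}}$. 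Note that $\leq_\T$ and hence the class $[R]$ and its witness $w_{[R]}$ depend only on $\T$, so $w_{[R]}$ is the same object in $\Uni_\K$ and in $\Uni_{\K'}$.

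I would then prove~\ref{tmpt:lem-Unitype-KBtype-0}, i.e. $o \leadsto_{\K'} w_{[R]}$, by checking the three defining conditions of $\leadsto_{\K'}$. Condition~(1), $\K' \vdash \SOMET R(o)$, is immediate from $\T \vdash B \ISA \SOMET R$. Condition~(2), $\K' \not\vdash R(o,b)$ for every $b \in N_a$, holds vacuously: the ABox $\{B(o)\}$ contains no role assertion, and $\{B(o)\} \models R'(o,b)$ never holds for $b\in N_a$ (a model of $\{B(o)\}$ may realise the existential $B$ by a fresh anonymous element), so by the definition of $\vdash$ the KB $\K'$ entails no assertion $R(o,b)$. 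For condition~(3), suppose $[R'] \leq_\T [R]$ and $\K' \vdash \SOMET{R'}(o)$; since the only basic concept $C$ with $\{B(o)\} \models C(o)$ is $C = B$, this forces $\T \vdash B \ISA \SOMET{R'}$, and then $\A \models B(a)$ gives $\K \vdash \SOMET{R'}(a)$; applying condition~(3) of $a \leadsto_\K w_{[R]}$ yields $[R'] = [R]$, as required. In particular $o\, w_{[R]} \in \Delta^{\Uni_{\K'}}$.

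For~\ref{tmpt:lem-Unitype-KBtype-1} and~\ref{tmpt:lem-Unitype-KBtype-2} I would just invoke Lemma~\ref{lem:Unitype-KBtype-ABox}. By part~\ref{tmpt:lem-Unitype-KBtype-ABox-3}, for every basic concept $C$ we have $C \in \ttype{\Uni_\K}(a\, w_{[R]})$ iff $\T \vdash \SOMET{R^-} \ISA C$, and the same characterisation (with the same $\T$) describes $\ttype{\Uni_{\K'}}(o\, w_{[R]})$, so the two types coincide, giving~\ref{tmpt:lem-Unitype-KBtype-1} (indeed with equality). For~\ref{tmpt:lem-Unitype-KBtype-2}, observe that the only edges of $\Uni_\K$ between $a$ and $a\, w_{[R]}$ come from the second clause in the definition of $\Int[\Uni_\K]{P}$: the first clause does not apply since $\tail(a\, w_{[R]}) = w_{[R]} \notin N_a$, and the third does not apply since $a$ is not of the form $a\, w_{[R]}\, w_{[S]}$. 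Hence, by part~\ref{tmpt:lem-Unitype-KBtype-ABox-4}, a basic role $Q$ lies in $\rtype{\Uni_\K}(a, a\, w_{[R]})$ iff $\T \vdash R \ISA Q$, and the identical statement describes $\rtype{\Uni_{\K'}}(o, o\, w_{[R]})$, yielding~\ref{tmpt:lem-Unitype-KBtype-2}.

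I expect the only delicate point to be condition~(3) of the $\leadsto_{\K'}$ check: one must argue carefully that $\K' \vdash \SOMET{R'}(o)$ is equivalent to $\T \vdash B \ISA \SOMET{R'}$ — which rests on the observation that the ABox $\{B(o)\}$ carries no information about $o$ beyond $B(o)$ itself — and that this property transfers back to $\K$ via $\A \models B(a)$. This is the only place where the hypothesis $a \leadsto_\K w_{[R]}$ is genuinely used; everything else is a direct reading-off of Lemma~\ref{lem:Unitype-KBtype-ABox} together with the fact that $\{B(o)\}$ contains no role atoms.
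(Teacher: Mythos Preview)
Your argument is correct and follows the route the paper intends: the paper's proof is the single line ``Consequence of Lemma~\ref{lem:Unitype-KBtype-ABox}'', and you have simply unfolded what that means, including the direct verification of $o \leadsto_{\K'} w_{[R]}$ (which is not literally contained in Lemma~\ref{lem:Unitype-KBtype-ABox} but is the obvious missing step). Your handling of condition~(3), reducing $\K' \vdash \SOMET{R'}(o)$ to $\T \vdash B \ISA \SOMET{R'}$ via the observation that $B$ is the unique basic concept entailed at $o$ by the singleton ABox $\{B(o)\}$, is exactly the right point to make explicit.
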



\begin{proof}
  Consequence of Lemma~\ref{lem:Unitype-KBtype-ABox}.
\end{proof}
\begin{lemma}\label{lem:types-chase}
  Let $\A$ be an ABox, $\BB$ a set of basic concepts, and $\T,\T'$
  TBoxes. Let $\B=\tup{\T,\{B(o) \mid B \in \BB\}}$, and assume $y \in
  \dom[\Uni_{\B}]$. If $\sigma \in \dom[\Uni_{\tup{\T \cup \T',\A}}]$
  and $\BB \subseteq \ttype{\Uni_{\tup{\T \cup \T',\A}}}(\sigma)$,
  then there exists $\delta \in \dom[\Uni_{\tup{\T \cup \T',\A}}]$
  such that
  \begin{description}
  \item[(i)] $\ttype{\Uni_\B}(y) \subseteq \ttype{\Uni_{\tup{\T \cup
          \T',\A}}}(\delta)$
  \item[(ii)] $\rtype{\Uni_\B}(o,y) \subseteq \rtype{\Uni_{\tup{\T
          \cup \T',\A}}}(\sigma, \delta)$
  \end{description}
\end{lemma}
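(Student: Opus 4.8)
The plan is to prove the lemma by defining a witness $g(y)\in\dom[\Uni_{\tup{\T\cup\T',\A}}]$ by recursion on the length of $y$ as a $\B$-path, and then verifying conditions (i) and (ii) for $\delta:=g(y)$ by a parallel induction. I would carry out every type-monotonicity argument through the purely syntactic characterisation in Lemma~\ref{lem:Unitype-KBtype-ABox}, so that no consistency assumption on the KBs is needed. Two preliminary observations set things up. \emph{Closure observation:} for any KB $\K=\tup{\T'',\A''}$ and $x\in\dom[\Uni_\K]$, the set $\ttype{\Uni_\K}(x)$ is closed under $\T''$-entailment of basic concepts, and, for a forward pair, $\rtype{\Uni_\K}(\sigma,\sigma w_{[R']})$ is closed under $\T''$-entailment of basic roles; both follow at once by inspecting the right-hand sides in Lemma~\ref{lem:Unitype-KBtype-ABox}\ref{tmpt:lem-Unitype-KBtype-ABox-1},\ref{tmpt:lem-Unitype-KBtype-ABox-2},\ref{tmpt:lem-Unitype-KBtype-ABox-3},\ref{tmpt:lem-Unitype-KBtype-ABox-4} and transitivity of $\sqsubseteq$-entailment. \emph{Shape observation:} as the ABox of $\B$ consists only of the concept assertions $\{B(o)\mid B\in\BB\}$, one has $\{C(o)\mid C\in\BB\}\models B'(o)$ iff $B'\in\BB$, so by Lemma~\ref{lem:Unitype-KBtype-ABox}\ref{tmpt:lem-Unitype-KBtype-ABox-1} the type $\ttype{\Uni_\B}(o)$ equals $\{B\mid \T\vdash B'\sqsubseteq B\text{ for some }B'\in\BB\}$; and since $\B$ has no role assertions, $\rtype{\Uni_\B}(o,y)=\emptyset$ unless $y$ has the form $o\cdot w_{[R]}$ for a single basic role $R$.

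The recursion is the following. Set $g(o):=\sigma$. For $y=y'\cdot w_{[R]}$, let $\delta':=g(y')$; since $y\in\gpath(\B)$ we have $\tail(y')\leadsto_\B w_{[R]}$, hence $(y',y)\in R^{\Uni_\B}$ and so $\exists R\in\ttype{\Uni_\B}(y')$, which by the induction hypothesis for (i) gives $\exists R\in\ttype{\Uni_{\tup{\T\cup\T',\A}}}(\delta')$; therefore some element $z$ with $(\delta',z)\in R^{\Uni_{\tup{\T\cup\T',\A}}}$ exists, and I let $g(y)$ be any such $z$.

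For the verification, the base case $y=o$ is immediate: (i) holds because, by the shape observation, $\ttype{\Uni_\B}(o)$ consists of concepts $\T$-entailed from some member of $\BB$, all of which are in $\ttype{\Uni_{\tup{\T\cup\T',\A}}}(\sigma)$ by the hypothesis $\BB\subseteq\ttype{\Uni_{\tup{\T\cup\T',\A}}}(\sigma)$, the inclusion $\T\subseteq\T\cup\T'$, and the closure observation; and (ii) holds trivially since $\rtype{\Uni_\B}(o,o)=\emptyset$. In the inductive step with $y=y'w_{[R]}$ and $\delta',g(y)$ as above: for (i), any $B\in\ttype{\Uni_\B}(y'w_{[R]})$ satisfies $\T\vdash\exists R^-\sqsubseteq B$ by Lemma~\ref{lem:Unitype-KBtype-ABox}\ref{tmpt:lem-Unitype-KBtype-ABox-3}, while $(g(y),\delta')\in(R^-)^{\Uni_{\tup{\T\cup\T',\A}}}$ gives $\exists R^-\in\ttype{\Uni_{\tup{\T\cup\T',\A}}}(g(y))$, so the closure observation applied to $\T\cup\T'\vdash\exists R^-\sqsubseteq B$ yields $B\in\ttype{\Uni_{\tup{\T\cup\T',\A}}}(g(y))$; for (ii), the shape observation leaves only the case $y'=o$, where $\delta'=g(o)=\sigma$, $\rtype{\Uni_\B}(o,ow_{[R]})=\{S\mid\T\vdash R\sqsubseteq S\}$ by Lemma~\ref{lem:Unitype-KBtype-ABox}\ref{tmpt:lem-Unitype-KBtype-ABox-4}, and $R\in\rtype{\Uni_{\tup{\T\cup\T',\A}}}(\sigma,g(y))$ together with the closure observation gives $S\in\rtype{\Uni_{\tup{\T\cup\T',\A}}}(\sigma,g(y))$ for every such $S$.

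All steps are routine given Lemma~\ref{lem:Unitype-KBtype-ABox}; the point that needs care is (ii). The key is to notice, via the shape observation, that in $\Uni_\B$ the pair $(o,y)$ has a nonempty role type only when $y$ is a direct $\leadsto_\B$-successor of $o$, so the induction for (ii) collapses to the single transition from length $0$ to length $1$: one only has to "line up" roles at the first step, and afterwards (ii) is vacuous. The other mild subtlety is to keep all monotonicity reasoning inside Lemma~\ref{lem:Unitype-KBtype-ABox} instead of invoking "the canonical model is a model of its KB", since consistency of $\B$ and of $\tup{\T\cup\T',\A}$ is not assumed.
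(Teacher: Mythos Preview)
Your argument is correct and is essentially the paper's own approach: the paper just cites Lemma~\ref{lem:types-chase-ABox}, whose proof is precisely the induction on path length that you have written out, with the base anchored at $\sigma$ rather than at a constant. One small wrinkle: your closure observation for role types is stated only for forward pairs $(\sigma,\sigma w_{[R']})$, but the $R$-successor $g(y)$ of $\sigma$ you select may in fact be $\sigma$'s predecessor (or, if $\sigma\in N_a$, another constant); the required closure of $\rtype{\Uni_{\tup{\T\cup\T',\A}}}(\sigma,g(y))$ under $\T\cup\T'$-entailment still holds in those cases by inspecting the definition of $P^{\Uni_\K}$ directly, so nothing breaks.
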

\begin{proof} Straightforward consequence of
  Lemma~\ref{lem:types-chase-ABox}.
\end{proof}

\begin{lemma}\label{lem:XiBprime-SigmaB}
  For each $\sigma \in \dom[\Uni_{\tup{\T_1 \cup \T_{12},\A}}]$ and
  $B' \in \ttype[\Xi]{\Uni_{\tup{\T_1 \cup \T_{12},\A}}}(\sigma)$ one
  of the following holds:
  \begin{enumerate}[thmparts, start=1]
  \item \label{tmpt:lem-XiBprime-SigmaB-1} there exists a concept $B$
    over $\Sigma$ such that $B \in \ttype{\Uni_{\tup{\T_1 \cup
          \T_{12},\A}}}(\sigma)$ and $\T_{12} \vdash B \ISA B'$;
  \item \label{tmpt:lem-XiBprime-SigmaB-2} $\ttype{\Uni_{\tup{\T_1
          \cup \T_{12},\A}}}(\sigma)=\{B'\}$.
  \end{enumerate}
\end{lemma}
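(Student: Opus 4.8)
The plan is to exploit the one-directional shape of the mapping. In $\T_1 \cup \T_{12}$ every positive inclusion has its left-hand side over $\Sigma$ (the inclusions of $\T_1$ live entirely over $\Sigma$, and each inclusion of $\T_{12}$ goes from $\Sigma$ to $\Xi$), and $\Sigma$ and $\Xi$ share no names. First I would record the structural fact $(\ast)$: no concept inclusion of $\T_1 \cup \T_{12}$ has a basic concept over $\Xi$ on its left-hand side, and, up to taking inverses, no role inclusion of $\T_1 \cup \T_{12}$ has a basic role over $\Xi$ on its left-hand side. Reading the definitions of $\ISA^{\R}_{\T_1 \cup \T_{12}}$ and $\ISA^{\C}_{\T_1 \cup \T_{12}}$ as reflexive--transitive closures, $(\ast)$ yields two consequences. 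The \emph{collapse fact}: if $B$ is a basic concept over $\Xi$, then the only $C$ with $\T_1 \cup \T_{12} \vdash B \ISA C$ is $C = B$ (and likewise for $\Xi$-roles). The \emph{interpolation claim}: if $B_1$ is over $\Sigma$, $B_2$ is over $\Xi$, and $\T_1 \cup \T_{12} \vdash B_1 \ISA B_2$, then there is a basic concept $B$ over $\Sigma$ with $\T_1 \cup \T_{12} \vdash B_1 \ISA B$ and $\T_{12} \vdash B \ISA B_2$.

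I expect the interpolation claim to be the only real work; everything else is bookkeeping. To prove it I would take a derivation chain $B_1 = D_0, D_1, \dots, D_k = B_2$ witnessing $(B_1,B_2) \in \ISA^{\C}_{\T_1 \cup \T_{12}}$, where each step is either a concept inclusion of $\T_1 \cup \T_{12}$ or a pair $(\exists R, \exists R')$ induced by $R \ISA^{\R}_{\T_1 \cup \T_{12}} R'$. By $(\ast)$ a $D_i$ over $\Xi$ can only be followed by itself, so there is a least index $j$ with $D_j$ over $\Xi$; then $D_j = \dots = D_k = B_2$, and the prefix $D_0, \dots, D_{j-1}$ lives over $\Sigma$ and --- again by $(\ast)$, since its targets are over $\Sigma$ --- uses only inclusions of $\T_1$, so $\T_1 \vdash B_1 \ISA D_{j-1}$. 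The single crossing step $D_{j-1} \to D_j$ is either a $\T_{12}$ concept inclusion $D_{j-1} \ISA D_j$ (take $B = D_{j-1}$), or of the form $(\exists R, \exists R')$ with $R$ over $\Sigma$, $R'$ over $\Xi$, and $R \ISA^{\R}_{\T_1 \cup \T_{12}} R'$; in the latter case I would run the same argument one level down on role chains to factor this as $R \ISA^{\R}_{\T_1} R''$ followed by a $\T_{12}$ role inclusion relating $R''$ and $R'$ (modulo inverse), and take $B = \exists R''$. In every case $\T_1 \vdash B_1 \ISA B$ and $\T_{12} \vdash B \ISA B_2$. The only care needed is tracking where a chain is permitted to cross between the two disjoint signatures.

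With these two consequences in hand, the lemma follows by a case distinction on $\sigma \in \gpath(\tup{\T_1\cup\T_{12},\A})$, using the description of canonical-model types from Lemma~\ref{lem:Unitype-KBtype-ABox}. If $\sigma = a \in N_a$, then part~\ref{tmpt:lem-Unitype-KBtype-ABox-1} of that lemma gives $B_1$ with $\A \models B_1(a)$ and $\T_1 \cup \T_{12} \vdash B_1 \ISA B'$, and $B_1$ is over $\Sigma$ because $\A$ is; the interpolation claim produces $B$ over $\Sigma$, and part~\ref{tmpt:lem-Unitype-KBtype-ABox-1} again puts $B$ into $\ttype{\Uni_{\tup{\T_1\cup\T_{12},\A}}}(a)$, so case~\ref{tmpt:lem-XiBprime-SigmaB-1} holds. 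If $\sigma = \sigma' w_{[R]}$ with $R$ over $\Sigma$, then $\exists R^-$ is over $\Sigma$, part~\ref{tmpt:lem-Unitype-KBtype-ABox-3} gives $\T_1 \cup \T_{12} \vdash \exists R^- \ISA B'$, and the same interpolation plus part~\ref{tmpt:lem-Unitype-KBtype-ABox-3} argument again yields case~\ref{tmpt:lem-XiBprime-SigmaB-1}. Finally, if $\sigma = \sigma' w_{[R]}$ with $R$ over $\Xi$, then $\exists R^-$ is over $\Xi$, part~\ref{tmpt:lem-Unitype-KBtype-ABox-3} identifies $\ttype{\Uni_{\tup{\T_1\cup\T_{12},\A}}}(\sigma)$ with $\{C \mid \T_1 \cup \T_{12} \vdash \exists R^- \ISA C\}$, and the collapse fact shrinks this set to $\{\exists R^-\}$; since $B'$ lies in it, $B' = \exists R^-$ and the type equals $\{B'\}$, i.e., case~\ref{tmpt:lem-XiBprime-SigmaB-2}. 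Since a basic role $R$ occurring in a path of $\tup{\T_1\cup\T_{12},\A}$ is over the signature $\Sigma \cup \Xi$, hence over $\Sigma$ or over $\Xi$, these cases are exhaustive and the proof is complete.
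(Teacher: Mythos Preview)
Your proof is correct and follows essentially the same approach as the paper's own proof, which is a one-line sketch (``Using Lemma~\ref{lem:Unitype-KBtype-ABox}, and considering the structure of $\T_1 \cup \T_{12}$ with $\Sigma \cap \Xi = \emptyset$. The case~\ref{tmpt:lem-XiBprime-SigmaB-2} occurs, when $\tail(\sigma)=w_{[Q]}$ for a role $Q$ over $\Xi$.''). Your collapse fact and interpolation claim are precisely the content of ``considering the structure of $\T_1 \cup \T_{12}$ with disjoint signatures,'' and your case split on $\sigma$ matches the paper's identification of when case~\ref{tmpt:lem-XiBprime-SigmaB-2} arises; you have simply spelled out in full what the paper leaves implicit.
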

\begin{proof} Using Lemma~\ref{lem:Unitype-KBtype-ABox}, and
  considering the structure of $\T_1 \cup \T_{12}$ with $\Sigma \cap
  \Xi = \emptyset$. The case~\ref{tmpt:lem-XiBprime-SigmaB-2} occurs,
  when $\tail(\sigma)=w_{[Q]}$ for a role $Q$ is over $\Xi$.
\end{proof}

\begin{lemma}\label{lem:kr-iff-b-cons}
  A \dlliter KB $\tup{\T,\A}$ is consistent iff
  \begin{enumerate}[thmparts, start=1]
  \item \label{tmpt:lem-kr-iff-b-cons-conc} $B,B'$ is $\T$-consistent
    for each pair of basic concepts $B, B'$ and each $a \in \Ind(\A)$
    such that $\A \models B(a)$ and $\A \models B'(a)$;
  \item \label{tmpt:lem-kr-iff-b-cons-role} $R,R'$ is $\T$-consistent
    for each pair of roles $R, R'$ and each $a,b \in \Ind(\A)$ such
    that $\A \models R(a,b)$ and $\A \models R'(a,b)$
  \end{enumerate}
\end{lemma}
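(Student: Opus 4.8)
For the $(\Rightarrow)$ direction I would argue directly. Let $\I$ be a model of $\tup{\T,\A}$ with substitution $h$, and suppose $\A\models B(a)$ and $\A\models B'(a)$ with $a\in\Ind(\A)$. Since $\I\models\A$ via $h$ and $h(a)=a^{\I}$, we get $a^{\I}\in B^{\I}\cap B'^{\I}$; hence the interpretation obtained from $\I$ by reinterpreting the fresh constant $o$ as $a^{\I}$ and leaving everything else unchanged is a model of $\tup{\T,\{B(o),B'(o)\}}$, so $(B,B')$ is $\T$-consistent. This gives \ref{tmpt:lem-kr-iff-b-cons-conc}, and \ref{tmpt:lem-kr-iff-b-cons-role} follows by the same argument with a pair $(a,b)$ in place of $a$.

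The substantial direction is $(\Leftarrow)$, and its engine is the standard fact that \emph{for every ABox $\A_0$, the KB $\tup{\T,\A_0}$ is consistent iff $\Uni_{\tup{\T^{\pos},\A_0}}\models\T$}, which I would derive from results already in the appendix. The KB $\tup{\T^{\pos},\A_0}$ is always consistent (take the one-element interpretation in which that element belongs to every concept name, the single pair belongs to every role name, and every constant denotes that element), so by Claim~\ref{th:uni-sat-kb} the interpretation $\Uni_{\tup{\T^{\pos},\A_0}}$ is a model of $\tup{\T^{\pos},\A_0}$; thus if it also satisfies every disjointness assertion of $\T$ it is a model of $\tup{\T,\A_0}$, proving the ``if'' part. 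For ``only if'', let $\I\models\tup{\T,\A_0}$; then $\I\models\tup{\T^{\pos},\A_0}$, so Claim~\ref{lem:konev18} yields a homomorphism $\Uni_{\tup{\T^{\pos},\A_0}}\to\I$, and since homomorphisms preserve the types of basic concepts and basic roles, any disjointness assertion violated in $\Uni_{\tup{\T^{\pos},\A_0}}$ would be violated in $\I$ as well -- impossible.

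Now assume \ref{tmpt:lem-kr-iff-b-cons-conc} and \ref{tmpt:lem-kr-iff-b-cons-role} and suppose, for contradiction, that $\tup{\T,\A}$ is inconsistent. By the fact, $\Uni:=\Uni_{\tup{\T^{\pos},\A}}$ violates some disjointness assertion $\delta\in\T$, and I would split on the location of the violation, reading off types with Lemma~\ref{lem:Unitype-KBtype-ABox}. If $\delta=B_1\AND B_2\ISA\bot$ is violated at a constant $a\in\Ind(\A)$, Lemma~\ref{lem:Unitype-KBtype-ABox} provides basic concepts $C_1,C_2$ with $\A\models C_j(a)$ and $\T^{\pos}\vdash C_j\ISA B_j$; then every model of $\tup{\T,\{C_1(o),C_2(o)\}}$ would interpret $o$ as an element of both $C_1$ and $C_2$, hence (as $\T^{\pos}\subseteq\T$) of both $B_1$ and $B_2$, contradicting $\delta\in\T$, so $(C_1,C_2)$ is not $\T$-consistent -- contradicting \ref{tmpt:lem-kr-iff-b-cons-conc}. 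A role disjointness violated between two constants is handled analogously with the role clauses of Lemma~\ref{lem:Unitype-KBtype-ABox}, contradicting \ref{tmpt:lem-kr-iff-b-cons-role}.

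The main obstacle is the case where $\delta$ is violated at (or, for a role disjointness, along an edge incident to) an \emph{anonymous} element $\sigma=a\cdot w_{[R_1]}\cdots w_{[R_n]}\in\Delta^{\Uni}$: the offending element sits deep in the tree part of $\Uni$, and the violation must be pulled back to an ABox constant. Here I would apply Lemma~\ref{lem:Unitype-KBtype} to the first step $a\leadsto_{\tup{\T^{\pos},\A}}w_{[R_1]}$ to obtain a basic concept $D$ with $\A\models D(a)$ and $o\leadsto_{\tup{\T^{\pos},\{D(o)\}}}w_{[R_1]}$; then, since the witness-to-witness generating relation $w_{[S]}\leadsto w_{[R]}$ depends only on the TBox, the chain $w_{[R_1]}\leadsto\cdots\leadsto w_{[R_n]}$ that occurs in $\tup{\T^{\pos},\A}$ is reproduced in $\tup{\T^{\pos},\{D(o)\}}$, so $o\cdot w_{[R_1]}\cdots w_{[R_n]}\in\Delta^{\Uni_{\tup{\T^{\pos},\{D(o)\}}}}$ and, by Lemma~\ref{lem:Unitype-KBtype-ABox}, this element (resp.\ the corresponding edge) carries exactly the same basic types as $\sigma$ (resp.\ the violating edge). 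Hence $\Uni_{\tup{\T^{\pos},\{D(o)\}}}$ also violates $\delta$, so the fact applied to the ABox $\{D(o)\}$ makes $\tup{\T,\{D(o)\}}$ inconsistent, i.e.\ $D$ is not $\T$-consistent; since $\A\models D(a)$, taking $B=B'=D$ contradicts \ref{tmpt:lem-kr-iff-b-cons-conc}. The same ``trace back to a constant, pass to $\tup{\T^{\pos},\{D(o)\}}$, invoke the fact'' scheme also covers the degenerate length-one path (using Lemma~\ref{lem:Unitype-KBtype} alone) and the remaining role subcases, so the real work is just to set up this uniform reduction and to check carefully that Lemma~\ref{lem:Unitype-KBtype} together with the purely TBox-dependent character of $\leadsto$ transports the violating configuration into $\Uni_{\tup{\T^{\pos},\{D(o)\}}}$.
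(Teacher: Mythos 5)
Your proof is correct, and it rests on the same canonical-model machinery as the paper's, but the work is distributed differently across the two directions. For ``consistency implies the two conditions'' you argue directly: a model of $\tup{\T,\A}$ with the fresh constant $o$ reinterpreted as $a^\I$ (resp.\ as the pair $a^\I,b^\I$) immediately witnesses $\T$-consistency of the pair; the paper instead proves this direction by contradiction, pushing a disjointness violation from $\Uni_{\tup{\T,\{B_1(o),B_2(o)\}}}$ into $\Uni_{\tup{\T,\A}}$ via Lemma~\ref{lem:types-chase} and then contradicting Claim~\ref{th:uni-sat-kb}; your version is more elementary and avoids the type-transfer lemma here. For the converse, which the paper dismisses as ``analogous'', you spell out what is actually needed: you isolate the characterization that consistency is equivalent to the canonical model of the positive part satisfying the disjointness assertions (obtained from the one-element model, Claim~\ref{th:uni-sat-kb} and Claim~\ref{lem:konev18}), and you trace a violation at an anonymous element back to a single ABox concept $D$ using Lemma~\ref{lem:Unitype-KBtype} for the first generating step together with the observation that the relation $w_{[S]}\leadsto w_{[R]}$ and the types of anonymous elements and of parent--child edges depend only on the TBox; this is precisely the mirror-image transfer that the paper's ``analogous'' leaves implicit. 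A pleasant by-product of your analysis is that any violation involving an anonymous element already contradicts the concept condition with the pair $(D,D)$, so the role condition is only genuinely needed for violations between two ABox constants. When writing this up you should state explicitly the (routine) soundness of $\vdash$ with respect to models, and note that a violation at a constant can only occur at a constant of $\Ind(\A)$, since constants not occurring in $\A$ have empty types in the canonical model; with these remarks added, your argument is a complete and faithful filling-in of the proof.
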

\begin{proof}
  ($\Rightarrow$) Assume \ref{tmpt:lem-kr-iff-b-cons-conc} is
  violated, so there exist $B_1, B_2$ and $a \in \Ind(\A)$ such that
  $\A \models B_1(a)$, $\A \models B_2(a)$, and $\tup{\T, \{B_1(o),
    B_2(o)\}}$ is inconsistent. It follows that $\Uni_{\tup{\T,
      \{B_1(o), B_2(o)\}}}$ is not a model of $\tup{\T, \{B_1(o),
    B_2(o)\}}$, so there is $\delta \in \dom[\Uni_{\tup{\T, \{B_1(o),
      B_2(o)\}}}]$ and a disjointness assertion $B \AND C \ISA \bot
  \in \T$ (note that inclusion assertions $B \ISA C \in \T$ cannot
  cause inconsistency) such that $B, C \in \ttype{\Uni_{\tup{\T,
        \{B_1(o), B_2(o)\}}}}(\delta)$. Obviously, $\{B_1, B_2\}
  \subseteq \ttype{\Uni_{\tup{\T, \A}}}(a)$, then by
  Lemma~\ref{lem:types-chase} we obtain $\delta \in
  \dom[\Uni_{\tup{\T, \A}}]$ such that $B, C \in \ttype{\Uni_{\tup{\T,
        \A}}}(\delta)$. Hence, $\Uni_{\tup{\T,\A}}$ is not a model of
  $\tup{\T, \A}$, which contradicts Claim~\ref{th:uni-sat-kb} since
  $\tup{\T, \A}$ is consistent.

  It can be also the case that $\Uni_{\tup{\T, \{B_1(o), B_2(o)\}}}$
  is inconsistent due to the disjointness assertion $R \AND Q \ISA
  \bot \in \T$. Then the proof is similar using
  Lemmas~\ref{lem:types-chase} and Claim~\ref{th:uni-sat-kb}.

  Assume now \ref{tmpt:lem-kr-iff-b-cons-role} is violated, the proof
  is a straightforward modification of the proof above.

  ($\Leftarrow$) The proof is analogous to ($\Rightarrow$).
\end{proof}
\begin{lemma}\label{lem:kr-to-all-b-cons}
  If a KB $\tup{\T,\A}$ is consistent, then for all $\delta, \sigma
  \in \Delta^{\Uni_{\tup{\T,\A}}}$,
  \begin{enumerate}[thmparts, start=1]
  \item $B$ is $\T$-consistent for each $B \in
    \ttype{\Uni_{\tup{\T,\A}}}(\delta)$;
  \item $R$ is $\T$-consistent for each $R \in
    \rtype{\Uni_{\tup{\T,\A}}}(\delta, \sigma)$.
  \end{enumerate}
\end{lemma}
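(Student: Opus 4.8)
The plan is to exploit the fact that, by Claim~\ref{th:uni-sat-kb}, the canonical model $\Uni_{\tup{\T,\A}}$ is itself a model of $\tup{\T,\A}$ and hence, in particular, of $\T$. Once we have a model of $\T$, a point $\delta$ of it whose concept type contains a basic concept $B$ yields, essentially for free, a model of the single-assertion KB $\tup{\T,\{B(o)\}}$ — which is exactly what $\T$-consistency of $B$ asks for — by interpreting a fresh individual name $o$ as $\delta$; the same idea handles roles.

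For part~(i): first invoke Claim~\ref{th:uni-sat-kb} to obtain $\Uni_{\tup{\T,\A}} \models \T$. Fix $\delta$ and $B \in \ttype{\Uni_{\tup{\T,\A}}}(\delta)$, so that $\delta \in B^{\Uni_{\tup{\T,\A}}}$. Choose $o \in N_a$ not occurring in $\A$, and let $\I^\star$ be the interpretation obtained from $\Uni_{\tup{\T,\A}}$ by setting $o^{\I^\star} = \delta$ and keeping the domain and all concept- and role-name extensions unchanged. The one step that needs to be spelled out is the observation that whether an interpretation satisfies a concept inclusion $B_1 \ISA C_1$ or a role inclusion $R_1 \ISA Q_1$ depends only on its domain and on the interpretation of concept and role names, not on the interpretation of individual names; consequently $\I^\star \models \T$. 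Moreover $\I^\star \models \{B(o)\}$ via the substitution sending $o$ to $\delta$, since $\delta \in B^{\I^\star} = B^{\Uni_{\tup{\T,\A}}}$. Hence $\tup{\T,\{B(o)\}}$ is consistent, i.e.\ $B$ is $\T$-consistent.

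For part~(ii): proceed identically, this time choosing two individual names $o, o' \in N_a \setminus \Ind(\A)$ and letting $\I^\star$ extend $\Uni_{\tup{\T,\A}}$ by $o^{\I^\star} = \delta$ and ${o'}^{\I^\star} = \sigma$. From $R \in \rtype{\Uni_{\tup{\T,\A}}}(\delta,\sigma)$ we get $(\delta,\sigma) \in R^{\Uni_{\tup{\T,\A}}} = R^{\I^\star}$ (the two agree whether $R$ is a role name or an inverse role, since inverses are computed from the unchanged role-name extensions), so $\I^\star \models \{R(o,o')\}$; together with $\I^\star \models \T$ this shows $\tup{\T,\{R(o,o')\}}$ is consistent, i.e.\ $R$ is $\T$-consistent.

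I do not expect a genuine obstacle here: the whole argument rests on the simple observation that TBox satisfaction is insensitive to the interpretation of individual names, so that a single domain element of the consistent KB's canonical model can be ``relabelled'' as the constant(s) appearing in the definition of $\T$-consistency. The only point requiring a line of care is to keep $o$ (and $o'$) fresh with respect to $\A$ — and even this is inessential, since the KBs $\tup{\T,\{B(o)\}}$ and $\tup{\T,\{R(o,o')\}}$ never mention $\A$.
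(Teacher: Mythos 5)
Your proof is correct, but it takes a different route from the paper's. The paper proves Lemma~\ref{lem:kr-to-all-b-cons} ``similarly to Lemma~\ref{lem:kr-iff-b-cons}'', i.e.\ by contraposition: if some $B \in \ttype{\Uni_{\tup{\T,\A}}}(\delta)$ were $\T$-inconsistent, then $\Uni_{\tup{\T,\{B(o)\}}}$ would violate a disjointness assertion at some element, and Lemma~\ref{lem:types-chase} (applied with $\{B\} \subseteq \ttype{\Uni_{\tup{\T,\A}}}(\delta)$) transports that violating element into $\Uni_{\tup{\T,\A}}$, contradicting Claim~\ref{th:uni-sat-kb}; the role case is handled analogously. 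You instead argue directly: since $\Uni_{\tup{\T,\A}}$ is a model of $\T$ (again Claim~\ref{th:uni-sat-kb}) and TBox satisfaction is insensitive to the interpretation of individual names, reinterpreting a constant $o$ (resp.\ $o,o'$) as $\delta$ (resp.\ $\delta,\sigma$) yields a model of $\tup{\T,\{B(o)\}}$ (resp.\ $\tup{\T,\{R(o,o')\}}$) outright --- which is legitimate here precisely because the paper does not adopt the UNA, so nothing forces $o^{\I^\star}$ to be $o$ itself. Your argument is more elementary in that it avoids the type-transfer machinery of Lemma~\ref{lem:types-chase} entirely and produces an explicit witnessing model rather than refuting a hypothetical inconsistency; the paper's route has the advantage of reusing, essentially verbatim, the chase-embedding argument already set up for Lemma~\ref{lem:kr-iff-b-cons}, which is also the form in which these consistency-transfer arguments recur later (e.g.\ in Lemmas~\ref{lem:t2-repr-then-closed} and~\ref{lem:representab-conds}). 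Your observation that freshness of $o$ and $o'$ is inessential is also accurate, since the KBs $\tup{\T,\{B(o)\}}$ and $\tup{\T,\{R(o,o')\}}$ do not mention $\A$.
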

\begin{proof}
  Similar to Lemma \ref{lem:kr-iff-b-cons}.
\end{proof}

\subsection{Homomorphism Lemmas}

Here we present a series of important lemmas used in the proof the
main results in the following sections.

\begin{lemma}\label{lem:from-model-types-to-kb-types-ABox}
  Assume a mapping $\M = (\Sigma, \Xi, \T_{12})$, ABoxes $\A$ and
  $\A'$ over, respectively, $\Sigma$ and $\Xi$, and a $\Xi$-TBox over
  $\T_2$. If $\Uni_{\tup{\T_{12},\A}}$ is $\Xi$ homomorphically
  embeddable into $\Uni_{\A'}$, then $\Uni_{\tup{\T_2 \cup
      \T_{12},\A}}$ is $\Xi$ homomorphically embeddable into
  $\Uni_{\tup{\T_2,\A'}}$.
\end{lemma}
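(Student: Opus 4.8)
The plan is to exhibit a single $\Xi$-homomorphism $h$ from $\Uni_{\tup{\T_2 \cup \T_{12},\A}}$ to $\Uni_{\tup{\T_2,\A'}}$; its restriction to an arbitrary subinterpretation then witnesses the required $\Xi$-homomorphic embeddability. Write $\K = \tup{\T_2 \cup \T_{12},\A}$, $\I_{12} = \Uni_{\tup{\T_{12},\A}}$, and for each $a \in \dom[\A]$ put $\BB_a := \ttype[\Xi]{\I_{12}}(a)$. A first, easy observation is that $\I_{12}$ is essentially finite: since $\A$ is over $\Sigma$, every inclusion of $\T_{12}$ has its left-hand side over $\Sigma$, and $\Sigma \cap \Xi = \emptyset$, the generating relation $\leadsto_{\tup{\T_{12},\A}}$ never fires from an anonymous element (that would need $\T_{12} \vdash \SOMET{R^-} \ISA \SOMET{S}$ with $R$ over $\Xi$). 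Hence the hypothesis supplies an honest global $\Xi$-homomorphism $h_0 \colon \I_{12} \to \Uni_{\A'}$, which is the identity on $\dom[\A]$.

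Next I would pin down the shape of $\Uni_\K$. Using Lemma~\ref{lem:Unitype-KBtype-ABox}, the disjointness of $\Sigma$ and $\Xi$, and again that no inclusion of $\T_{12}$ carries a $\Xi$-construct on its left, one checks: (a) for each $a \in \dom[\A]$, $\ttype[\Xi]{\Uni_\K}(a)$ is precisely the set of $\Xi$-concepts entailed by $\tup{\T_2,\{B(o) \mid B \in \BB_a\}}$, and the $\Xi$-role facts of $\Uni_\K$ linking two elements of $\dom[\A]$ are exactly the $\T_2$-closure of those of $\I_{12}$; (b) every element of $\Uni_\K$ outside $\dom[\A]$ is a path $a \cdot w_{[R_1]}\cdots w_{[R_n]}$ with all $R_i$ over $\Xi$, and — the crucial point — since $\ISA^{\R}_{\T_2 \cup \T_{12}}$ restricted to $\Xi$-roles coincides with $\ISA^{\R}_{\T_2}$ (hence $\leq_{\T_2\cup\T_{12}}$ on $\Xi$-roles equals $\leq_{\T_2}$ and the classes $[R]$ agree), the anonymous subtree of $\Uni_\K$ hanging from $a$ is, after identifying $a$ with $o$, a subinterpretation of $\Uni_{\tup{\T_2,\{B(o)\mid B\in\BB_a\}}}$ (it may only \emph{omit} a successor where the corresponding $\Xi$-role is already realised in $\dom[\A]$). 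Call $g_a$ this inclusion.

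Now I would glue everything together. Since $h_0$ is a $\Xi$-homomorphism fixing $\dom[\A]$, $\BB_a \subseteq \ttype[\Xi]{\Uni_{\A'}}(h_0(a)) \subseteq \ttype[\Xi]{\Uni_{\tup{\T_2,\A'}}}(h_0(a))$, the last step being monotonicity of adding a TBox (Lemma~\ref{lem:types-chase-ABox}). Applying Lemma~\ref{lem:types-chase} with $\T = \T_2$, $\T' = \emptyset$, ABox $\A'$, $\BB = \BB_a$ and $\sigma = h_0(a)$ — and reading its inductive proof (through Lemma~\ref{lem:types-chase-ABox}) as producing a map rather than merely pointwise witnesses — yields a $\Xi$-homomorphism $f_a$ from $\Uni_{\tup{\T_2,\{B(o)\mid B\in\BB_a\}}}$ to $\Uni_{\tup{\T_2,\A'}}$ with $f_a(o) = h_0(a)$. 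Set $h_a := f_a \circ g_a$; it $\Xi$-homomorphically sends $a$ together with its anonymous descendants in $\Uni_\K$ into $\Uni_{\tup{\T_2,\A'}}$ and agrees with $h_0$ on $a$. As each element of $\dom[\Uni_\K]$ lies in a unique such subtree, $h(\sigma) := h_a(\sigma)$, where $a$ is the root of $\sigma$'s subtree, is well defined. Concept atoms and $\Xi$-role atoms internal to one subtree are preserved by construction; $\Xi$-role atoms between two elements of $\dom[\A]$ are preserved by combining point (a) with $\Uni_{\A'} \subseteq \Uni_{\tup{\T_2,\A'}}$ and $\T_2$-closure (Lemma~\ref{lem:types-chase-ABox} once more). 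Thus $h$ is the desired $\Xi$-homomorphism.

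I expect the main obstacle to be step (b): proving rigorously that below each element of $\dom[\A]$ the anonymous part of $\Uni_{\tup{\T_2 \cup \T_{12},\A}}$ is nothing but the $\T_2$-unraveling of $\BB_a$ — i.e., that $\leadsto$, $\leq_\T$ and the role classes $[R]$, once restricted to $\Xi$, are blind to $\T_{12}$. This requires a careful walk through the definitions of $\vdash$, $\ISA^{\C}_\T$, $\ISA^{\R}_\T$, $\leadsto_\K$ and of the canonical model, leaning on $\Sigma \cap \Xi = \emptyset$ and on $\T_{12}$ never having a $\Xi$-symbol on a left-hand side; everything after that is composition of homomorphisms and bookkeeping with Lemmas~\ref{lem:types-chase-ABox} and \ref{lem:types-chase}. (A shorter but less self-contained alternative, valid when the KBs involved are consistent, is to read the statement through Claim~\ref{th:query-entail-homo} as ``$\tup{\emptyset,\A'}$ $\Xi$-query-entailing $\tup{\T_{12},\A}$ implies $\tup{\T_2,\A'}$ $\Xi$-query-entailing $\tup{\T_2\cup\T_{12},\A}$'', proved by staging the UCQ rewriting — first w.r.t.\ $\T_2$, then w.r.t.\ $\T_{12}$ — which is legitimate since $\T_2$ and the query are over $\Xi$ while $\T_{12}$ flows from $\Sigma$ to $\Xi$.)
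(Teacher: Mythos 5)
There is a genuine gap, and it sits exactly where you predicted: step (b). It is not true that every anonymous element of $\Uni_{\tup{\T_2\cup\T_{12},\A}}$ is reached by $\Xi$-roles only, nor that the anonymous part below a constant $a$ embeds into the $\T_2$-unraveling of $\BB_a=\ttype[\Xi]{\Uni_{\tup{\T_{12},\A}}}(a)$. An ABox over $\Sigma$ may contain an unwitnessed assertion $\SOMET{R}(a)$ with $R$ a $\Sigma$-role; then $a\leadsto w_{[R]}$ both in $\Uni_{\tup{\T_{12},\A}}$ and in $\Uni_{\tup{\T_2\cup\T_{12},\A}}$, and the element $a\,w_{[R]}$ carries the $\Xi$-type $\{B'\mid \T_2\cup\T_{12}\vdash \SOMET{R^-}\ISA B'\}$ (and the $\Xi$-role type $\{S\mid \T_2\cup\T_{12}\vdash R\ISA S\}$ towards $a$), which is in general not generated by $\BB_a$ at all. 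Concretely, take $\Sigma=\{P\}$, $\Xi=\{A'\}$, $\T_{12}=\{\SOMET{P^-}\ISA A'\}$, $\T_2=\emptyset$, $\A=\{\SOMET{P}(a)\}$, $\A'=\{A'(b)\}$: here $\BB_a=\emptyset$, so your $g_a$ would have to send $a\,w_{[P]}$ into the (trivial) unraveling of $\BB_a$ and then to $h_0(a)=a$, losing the atom $A'$; the correct image is $h_0(a\,w_{[P]})$, which your recipe never consults because you only use $h_0$ on $\dom[\A]$. For the same reason your opening ``easy observation'' is false: $\leadsto_{\tup{\T_{12},\A}}$ can fire from such a $\Sigma$-generated anonymous element (via $\T_{12}\vdash\SOMET{R^-}\ISA\SOMET{S'}$ with $S'$ over $\Xi$), though that remark is harmless since a global homomorphism $h_0$ exists directly from the definition of embeddability.

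The rest of your plan is sound and matches the paper's treatment of the $\Xi$-generated part (your claim (a), the agreement of $\ISA^\R$ and of the classes $[R]$ on $\Xi$-roles, and the gluing via Lemmas~\ref{lem:types-chase-ABox} and~\ref{lem:types-chase} correspond to the paper's case of a successor generated by a $\Xi$-role and to its inductive step). What is missing is precisely the paper's other case: for a successor $a\,w_{[R]}$ with $R$ over $\Sigma$, one must use the hypothesis homomorphism on the \emph{anonymous} element $a\,w_{[R]}$ of $\Uni_{\tup{\T_{12},\A}}$ to obtain $\delta\in\dom[\Uni_{\A'}]$ whose $\Xi$-types contain $\ttype[\Xi]{\Uni_{\tup{\T_{12},\A}}}(a\,w_{[R]})$ and $\rtype[\Xi]{\Uni_{\tup{\T_{12},\A}}}(a,a\,w_{[R]})$, push $\delta$ into $\Uni_{\tup{\T_2,\A'}}$ by Lemma~\ref{lem:types-chase-ABox}, and then argue (via the sets of $\Xi$-concepts $\{B'\mid \T_{12}\vdash\SOMET{R^-}\ISA B'\}$ and $\Xi$-roles $\{S\mid\T_{12}\vdash R\ISA S\}$ and their $\T_2$-closure) that the full $\Xi$-types with respect to $\T_2\cup\T_{12}$ are also contained. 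Once you add this case, and note that below such an element only $\Xi$-roles generate (so your unraveling argument applies from there on), your construction becomes essentially the paper's proof.
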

\begin{proof} Consider the $\Xi$ homomorphism $h :
  \Delta^{\Uni_{\tup{\T_{12},\A}}} \mapsto \Delta^{\Uni_{\A'}}$ from
  $\Uni_{\tup{\T_{12},\A}}$ to $\Uni_{\A'}$, we are going to construct
  the $\Sigma$ homomorphism $h' : \Uni_{\tup{\T_2 \cup \T_{12},\A}}
  \mapsto \Uni_{\tup{\T_2,\A'}}$ from $\Uni_{\tup{\T_2 \cup
      \T_{12},\A}}$ to $\Uni_{\tup{\T_2,\A'}}$. Initially, we define
  $h'(a)=a$, let us immediately verify that
  $\ttype[\Xi]{\Uni_{\tup{\T_2 \cup \T_{12},\A}}}(a) \subseteq
  \ttype[\Xi]{\Uni_{\tup{\T_2,\A'}}}(h'(a))$. Notice that by the
  definition of $h$ we have:
  \begin{align}
    \label{eq:hom-uni-t12-a-to-uni-a-prime} \ttype[\Xi]{\Uni_{\tup{\T_{12},\A}}}(a) &\subseteq \ttype[\Xi]{\Uni_{\A'}}(h(a)),\\
    \label{eq:hom-a-equal-hom-a-prime} h(a) & = h'(a).
  \end{align}
  Let $C \in \ttype[\Xi]{\Uni_{\tup{\T_2 \cup \T_{12},\A}}}(a)$, it
  follows by Lemma \ref{lem:Unitype-KBtype-ABox}
  \ref{tmpt:lem-Unitype-KBtype-ABox-1} there exists $B$ over $\Sigma$,
  such that $\A \models B(a)$ and $\T \cup \T' \vdash B \ISA C$. Taking
  into account the shape of $\T_2$ and $\T_{12}$, t follows also there
  exists $D$ over $\Xi$ such that $\T_{12} \vdash B \sqsubseteq D$ and
  $\T_2 \vdash D \sqsubseteq C$. Observe that $B \in
  \ttype{\Uni_{\tup{\T_{12},\A}}}(a) $, then by
  Lemma~\ref{lem:Unitype-KBtype-ABox}~\ref{tmpt:lem-Unitype-KBtype-ABox-1}
  and~\ref{tmpt:lem-Unitype-KBtype-ABox-3} it follows $D \in
  \ttype[\Xi]{\Uni_{\tup{\T_{12},\A}}}(a)$ and taking into account
  \eqref{eq:hom-uni-t12-a-to-uni-a-prime} and
  \eqref{eq:hom-a-equal-hom-a-prime} we conclude $D \in
  \ttype[\Xi]{\Uni_{\A'}}(h'(a))$. Finally, using again
  Lemma~\ref{lem:Unitype-KBtype-ABox}~\ref{tmpt:lem-Unitype-KBtype-ABox-1}
  and~\ref{tmpt:lem-Unitype-KBtype-ABox-3} we obtain $C \in
  \ttype[\Xi]{\Uni_{\tup{\T_2, \A'}}}(h'(a))$. The proof that
  $\rtype[\Xi]{\Uni_{\tup{\T_2 \cup \T_{12},\A}}}(a,b) \subseteq
  \rtype[\Xi]{\Uni_{\tup{\T_2,\A'}}}(h'(a),h'(b))$ for all constants $a$
  and $b$ is analogious.

  Now we show how to define $h'$ for $\sigma=a w_{[R]} \in
  \gpath(\tup{\T_2 \cup \T_{12},\A})$. It follows $a
  \leadsto_{\tup{\T_2 \cup \T_{12},\A}} w_{[R]}$, then two cases are
  possible:
  \begin{enumerate}[series=reqs, label=\textbf{(\Roman*)}]
  \item \label{req:lem:from-model-types-to-kb-types-ABox-gen-role-1}
    $R$ is over $\Sigma$;%
  \item \label{req:lem:from-model-types-to-kb-types-ABox-gen-role-2}
    $R$ is over $\Xi$.
  \end{enumerate}
  In case~\ref{req:lem:from-model-types-to-kb-types-ABox-gen-role-1}
  it follows $a \leadsto_{\tup{\T_{12}, \A}} w_{[R]}$ and by the
  condition of the current lemma it follows there is $\delta \in
  \Delta^{\Uni_{\A'}}$ such that:
  \begin{align}
    \label{eq:lem:from-model-types-to-kb-types-ABox-3}\ttype[\Xi]{\Uni_{\tup{\T_{12},
          \A}}}(a w_{[R]})&
    \subseteq \ttype[\Xi]{\Uni_{\A'}}(\delta),\\%
    \label{eq:lem:from-model-types-to-kb-types-ABox-4}
    \rtype[\Xi]{\Uni_{\tup{\T_{12}, \A}}}(a,a w_{[R]})&
    \subseteq \rtype[\Xi]{\Uni_{ \A'}}(a,\delta).
  \end{align}
  Then, using Lemma~\ref{lem:types-chase-ABox} (with $\A'=\A$,
  $\T=\emptyset$, $\T' = \T_2$) we obtain $\gamma \in
  \Delta^{\Uni_{\tup{\T_2, \A'}}}$ such that
  \begin{align}
    \label{eq:lem:from-model-types-to-kb-types-ABox-5}
    \ttype{\Uni_{\A'}}(\delta) & \subseteq \ttype{\Uni_{\tup{\T_2, \A'}}}(\gamma),\\
    \label{eq:lem:from-model-types-to-kb-types-ABox-6}
    \rtype{\Uni_{ \A'}}(a,\delta) & \subseteq
    \rtype{\Uni_{\tup{\T_2,\A'}}}(a, \gamma).
  \end{align}
  Now define $h'(\sigma) = \gamma$; we need to show
  \begin{align}
    \label{eq:lem:from-model-types-to-kb-types-ABox-7}
    \ttype[\Xi]{\Uni_{\tup{\T_2 \cup \T_{12},\A}}}(\sigma)& \subseteq
    \ttype[\Xi]{\Uni_{\tup{\T_2,\A'}}}(\gamma),\\
    \label{eq:lem:from-model-types-to-kb-types-ABox-8}
    \rtype[\Xi]{\Uni_{\tup{\T_2 \cup \T_{12},\A}}}(a,\sigma)
    &\subseteq \rtype[\Xi]{\Uni_{\tup{\T_2,\A'}}}(a,\gamma).
  \end{align}
  For \eqref{eq:lem:from-model-types-to-kb-types-ABox-7} consider the
  set $\vec{B} = \{ B \text{ over } \Xi \mid \T_{12} \vdash \exists
  R^- \sqsubseteq B\}$ and observe that $\vec{B} \subseteq
  \ttype[\Xi]{\Uni_{\tup{\T_{12}, \A}}}(a w_{[R]})$ and also by
  Lemma~\ref{lem:Unitype-KBtype-ABox} and the structure of $\T_2 \cup
  \T_{12}$, for each $B' \in \ttype[\Xi]{\Uni_{\tup{\T_{12}, \A}}}(a
  w_{[R]})$ there exists $B \in \vec{B}$ such that $\T_2 \vdash B
  \sqsubseteq B'$. By
  \eqref{eq:lem:from-model-types-to-kb-types-ABox-3} and
  \eqref{eq:lem:from-model-types-to-kb-types-ABox-5} we obtain
  $\vec{B} \subseteq \ttype[\Xi]{\Uni_{\tup{\T_2,\A'}}}(\gamma)$; then
  using Lemma~\ref{lem:Unitype-KBtype-ABox} it can be easily verified
  $B' \in \ttype[\Xi]{\Uni_{\tup{\T_2,\A'}}}(\gamma)$ for all $B'$ as
  above, which concludes the proof of
  \eqref{eq:lem:from-model-types-to-kb-types-ABox-7}. The
  \eqref{eq:lem:from-model-types-to-kb-types-ABox-8} is analogious
  using \eqref{eq:lem:from-model-types-to-kb-types-ABox-4},
  \eqref{eq:lem:from-model-types-to-kb-types-ABox-6}, and the set
  $\vec{S} = \{ S \text{ over } \Xi \mid \T_{12} \vdash R \sqsubseteq
  S\}$.

  Consider the
  case~\ref{req:lem:from-model-types-to-kb-types-ABox-gen-role-2},
  using Lemma~\ref{lem:Unitype-KBtype-ABox} and the structure of $\T_2
  \cup \T_{12}$ and $\A$, one can show:
  \begin{align}
    \label{eq:lem:from-model-types-to-kb-types-ABox-9} & \exists R \in \ttype[\Xi]{\Uni_{\tup{\T_2 \cup \T_{12}, \A}}}(a),\\
    \label{eq:lem:from-model-types-to-kb-types-ABox-10} \T_2 \vdash
    \exists R^- \sqsubseteq B& \text{ for all } B \in
    \ttype[\Xi]{\Uni_{\tup{\T_2 \cup \T_{12}, \A}}}(\sigma),\\
    \label{eq:lem:from-model-types-to-kb-types-ABox-11} \T_2 \vdash R
    \sqsubseteq S& \text{ for all } S \in \rtype[\Xi]{\Uni_{\tup{\T_2
          \cup \T_{12}, \A}}}(a,\sigma).
  \end{align}
  Provided that the homomorphism $h'$ is defined for $a$, it follows
  $\exists R \in \ttype{\Uni_{\tup{\T_2, \A'}}}(h'(a))$, therefore,
  there exists $\gamma \in \dom[\Uni_{\tup{\T_2, \A'}}]$ such that $R
  \in \rtype{\Uni_{\tup{\T_2, \A'}}}(h'(a), \gamma)$. Now define
  $h'(\sigma) = \gamma$; we need to show
  \begin{align}
    \label{eq:lem:from-model-types-to-kb-types-ABox-16}
    \ttype[\Xi]{\Uni_{\tup{\T_2 \cup \T_{12},\A}}}(\sigma)& \subseteq
    \ttype{\Uni_{\tup{\T_2,\A'}}}(\gamma),\\
    \label{eq:lem:from-model-types-to-kb-types-ABox-17}
    \rtype[\Xi]{\Uni_{\tup{\T_2 \cup \T_{12},\A}}}(a,\sigma)
    &\subseteq \rtype{\Uni_{\tup{\T_2,\A'}}}(a,\gamma).
  \end{align}
  For~\eqref{eq:lem:from-model-types-to-kb-types-ABox-16}
  consider~\eqref{eq:lem:from-model-types-to-kb-types-ABox-10} and
  Lemma~\ref{lem:Unitype-KBtype-ABox}; similarly,
  for~\eqref{eq:lem:from-model-types-to-kb-types-ABox-17}
  consider~\eqref{eq:lem:from-model-types-to-kb-types-ABox-11}.

  Assume now $\sigma=\sigma' w_{[R]}$ and the homomorphism from
  $\Uni_{\tup{\T_2 \cup \T_{12},\A}}$ to $\Uni_{\tup{\T_2,\A'}}$ is
  defined for $\sigma'$. The proof is done in the same way as for the
  case~\ref{req:lem:from-model-types-to-kb-types-ABox-gen-role-2}, all
  the statements are valid if one substitutes $a$ by $\sigma'$.
\end{proof}

Let $\M= (\Sigma, \Xi, \T_{12})$ be a mapping, and, $\T_1$ and $\T_2$,
respectively, $\Sigma$- and $\Xi$-TBoxes. Define KBs $\srkb=\tup{\T_1
  \cup \T_{12}, \{B(o)\}}$ and $\tgkb=\tup{\T_2 \cup \T_{12},
  \{B(o)\}}$ for a basic concept $B$ over $\Sigma$. We slightly abuse
the notation and write $\srkb[\A]$ to denote the KB $\tup{\T_1 \cup
  \T_{12}, \A}$ for a given ABox $\A$, analogously we use $\tgkb[\A]$
to denote $\tup{\T_2 \cup \T_{12}, \A}$.  We show
\begin{lemma}\label{lem:c2}
  Let $\A$ be an ABox over $\Sigma$ and assume for each concept $B$,
  role $R$, and all $\sigma, \delta \in \dom[\Uni_{\srkb[\A]}]$ such
  that
 \begin{enumerate}[thmparts, start=1]
  \item \label{req:c2-prereq-1} $B \in \ttype[\Sigma]{\srkb[\A]}(\sigma)$, 
  \item \label{req:c2-prereq-2} $R \in
    \rtype[\Sigma]{\srkb[\A]}(\sigma, \delta)$,
\end{enumerate}
the following conditions hold
  \begin{enumerate}[thmparts]
  \item \label{req:c2-o-subs} $\ttype[\Xi]{\tgkb}(o) \subseteq
    \ttype{\srkb}(o)$;
  \item \label{req:c2-role-subs} $\T_2 \cup \T_{12} \vdash R \ISA R'$
    implies $\T_1 \cup \T_{12} \vdash R \ISA R'$ for all roles $R'$
    over $\Xi$;
  \item \label{req:c2-gen} for each role $R$ such that $o
    \leadsto_{\tgkb} w_{[R]}$ there exists $y \in \dom[\Uni_{\srkb}]$
    such that
    \begin{enumerate}[label=\textbf{(\alph*)}]
    \item \label{req:c2-gen-conc-subs} $\ttype[\Xi]{\Uni_{\tgkb}}(ow_{[R]})
      \subseteq \ttype{\Uni_{\srkb}} (y)$,
    \item \label{req:c2-gen-role-subs}
      $\rtype[\Xi]{\Uni_{\tgkb}}(o,ow_{[R]}) \subseteq
      \rtype{\Uni_{\srkb}} (o,y)$.
    \end{enumerate}
  \end{enumerate}
  Then $\Uni_{\tgkb[\A]}$ is finitely homomorphically embeddable into
  $\Uni_{\srkb[\A]}$.
\end{lemma}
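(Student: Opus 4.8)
The plan is to construct a single (full, i.e.\ over all \dllite concepts and roles) homomorphism $h$ from $\Uni_{\tgkb[\A]}$ into $\Uni_{\srkb[\A]}$; restricting $h$ to any finite subinterpretation of $\Uni_{\tgkb[\A]}$ then yields the required finite homomorphic embedding. The construction exploits the shape of $\dom[\Uni_{\tgkb[\A]}]$: since $\A$ is over $\Sigma$ and the right-hand side of every inclusion in $\T_2\cup\T_{12}$ lies over $\Xi$, the canonical model $\Uni_{\tgkb[\A]}$ carries $\Sigma$-facts only where $\A$ forces them, and its anonymous part is a forest of (essentially $\Xi$-) witness trees attached near the constants, each such tree depending only on the basic $\Sigma$-concepts its root constant satisfies.

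The core of the argument is the following sub-claim: \emph{for every basic concept $B$ over $\Sigma$ occurring in $\Uni_{\srkb[\A]}$ (i.e.\ $B\in\ttype[\Sigma]{\srkb[\A]}(\sigma)$ for some $\sigma$), the canonical model $\Uni_{\tgkb}$ of $\tgkb=\tup{\T_2\cup\T_{12},\{B(o)\}}$ is homomorphically embeddable into $\Uni_{\srkb}$ by a map sending $o$ to $o$.} I would prove this by defining $h$ along the witness paths of $\Uni_{\tgkb}$, maintaining the invariant $\ttype{\Uni_{\tgkb}}(\tau)\subseteq\ttype{\Uni_{\srkb}}(h(\tau))$ together with the matching containment of role types along each edge. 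For $o$ the invariant follows from hypotheses~\ref{req:c2-o-subs} and~\ref{req:c2-role-subs} (for the $\Xi$-part) and from monotonicity of canonical models (Lemma~\ref{lem:types-chase-ABox}, using $\T_{12}\subseteq\T_1\cup\T_{12}$, for the $\Sigma$-part). For a first-level witness $ow_{[S]}$, hypothesis~\ref{req:c2-gen} directly supplies a node $y$ of $\Uni_{\srkb}$ with the desired type containments, so I set $h(ow_{[S]})=y$. For the inductive step, suppose $\tau$ is already mapped to $\gamma$ and $\tau w_{[S]}$ is a further witness of $\Uni_{\tgkb}$; then $\exists S\in\ttype{\Uni_{\srkb}}(\gamma)$ by the invariant, and Lemma~\ref{lem:XiBprime-SigmaB} applied to $\gamma$ and $\exists S$ gives a dichotomy: either $\gamma$ carries a basic $\Sigma$-concept $B^{\star}$ with $\T_{12}\vdash B^{\star}\ISA\exists S$, in which case the $\Xi$-subtree of $\Uni_{\tgkb}$ rooted at $\tau w_{[S]}$ homomorphically maps into the canonical model of $\tup{\T_2\cup\T_{12},\{B^{\star}(o)\}}$ (Lemma~\ref{lem:types-chase}) and I may restart the same construction for $B^{\star}$ — this is legitimate because every basic $\Sigma$-concept occurring in $\Uni_{\srkb}$ also occurs in $\Uni_{\srkb[\A]}$ by Lemma~\ref{lem:types-chase}, so the hypotheses apply to $B^{\star}$ — or else $\ttype{\Uni_{\srkb}}(\gamma)=\{\exists S\}$, in which case $\gamma$ is a $\Xi$-leaf, the subtree of $\Uni_{\tgkb}$ below $\tau w_{[S]}$ is trivial, and it is sent to the $\Xi$-leaf below $\gamma$.

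It then remains to lift the sub-claim to $\A$. I set $h(a)=a$ for each constant $a$: its $\Sigma$-type in $\Uni_{\tgkb[\A]}$ comes straight from $\A$ and is contained in its $\Sigma$-type in $\Uni_{\srkb[\A]}$ by monotonicity, and for each $\Xi$-concept $B'$ at $a$ I use Lemma~\ref{lem:Unitype-KBtype-ABox} to pick a basic $\Sigma$-concept $B_0$ with $\A\models B_0(a)$ and $\T_2\cup\T_{12}\vdash B_0\ISA B'$, so that $B_0\in\ttype[\Sigma]{\srkb[\A]}(a)$ and hypothesis~\ref{req:c2-o-subs} for $B_0$ gives $B'\in\ttype{\Uni_{\srkb[\A]}}(a)$; $\Xi$-roles between constants are handled symmetrically via~\ref{req:c2-role-subs}. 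For a witness tree rooted at some $aw_{[S]}$, Lemma~\ref{lem:Unitype-KBtype} provides a basic $\Sigma$-concept $B_0$ with $\A\models B_0(a)$ such that this tree is type-wise dominated by the corresponding part of $\Uni_{\tgkb[B_0]}$; the sub-claim embeds $\Uni_{\tgkb[B_0]}$ into $\Uni_{\srkb[B_0]}$, and since $\{B_0\}\subseteq\ttype{\Uni_{\srkb[\A]}}(a)$, Lemma~\ref{lem:types-chase} transfers that embedding back into $\Uni_{\srkb[\A]}$; composing these maps defines $h$ on the tree. A routine check that adjacent concept- and role-types are preserved completes the verification that $h$ is a homomorphism.

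The step I expect to be the main obstacle is the inductive step of the sub-claim, where one must turn the purely local information in hypotheses~\ref{req:c2-o-subs}--\ref{req:c2-gen} (which speak only about $o$ and its immediate witnesses) into an embedding of the whole, possibly infinitely deep, $\Xi$-witness tree of $\Uni_{\tgkb}$. Lemma~\ref{lem:XiBprime-SigmaB} is the device that makes this work: on the source side ($\T_1\cup\T_{12}$-canonical models) every $\Xi$-concept at a node is either induced through $\T_{12}$ from a basic $\Sigma$-concept there — letting the recursion "reset" at a fresh $\Sigma$-concept to which the hypotheses again apply — or sits at a $\Xi$-leaf with singleton type, which forces the matching $\Xi$-subtree on the target side to collapse. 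Making this dichotomy and the associated sub-KB embeddings (via Lemma~\ref{lem:types-chase}) interact correctly with the invariant, and dealing with the minor case in which a generated witness corresponds to a role over $\Sigma$ rather than over $\Xi$, are where the bookkeeping is delicate; for finite subinterpretations the recursion visits only boundedly many levels, so termination is not an issue.
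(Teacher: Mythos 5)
Your overall strategy coincides with the paper's: build the homomorphism by induction along the witness paths of the target canonical model, maintain containment of concept- and role-types, shuttle between the ABox $\A$ and the one-concept KBs $\tgkb$, $\srkb$ via Lemmas~\ref{lem:Unitype-KBtype} and~\ref{lem:types-chase}, and use Lemma~\ref{lem:XiBprime-SigmaB} to produce a fresh $\Sigma$-concept at which the hypotheses re-apply (your ``sub-claim with restarts'' is only a repackaging of the paper's step-by-step construction). The first genuine gap is the first-level witness generated by a role $S$ over $\Sigma$ (i.e.\ $B=\SOMET{S}$). Hypothesis~\ref{req:c2-gen} bounds only the $\Xi$-types of $ow_{[S]}$, whereas your invariant (and the lemma's conclusion, a \emph{full} homomorphism) also requires the $\Sigma$-facts to be preserved: the concept $\SOMET{S^-}$ at the witness and the role $S$ on the edge. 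So it is simply false that \ref{req:c2-gen} ``directly supplies'' the desired $y$ in this case, and relegating it to ``delicate bookkeeping'' leaves the hole unfilled. The missing argument is exactly the paper's: pick a source witness role $Q$ with $o\leadsto_{\srkb} w_{[Q]}$ and $\T_1\cup\T_{12}\vdash Q\ISA S$ (it exists because $\SOMET{S}\in\ttype{\Uni_{\srkb}}(o)$), set $y=ow_{[Q]}$ to get the $\Sigma$-part, and recover the $\Xi$-part from hypotheses~\ref{req:c2-o-subs} and~\ref{req:c2-role-subs} applied to $\SOMET{S^-}$ and $S$, after checking that these occur in $\Uni_{\srkb[\A]}$ so that the preconditions \ref{req:c2-prereq-1}--\ref{req:c2-prereq-2} make the hypotheses applicable.

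The second problem is your handling of the singleton branch of Lemma~\ref{lem:XiBprime-SigmaB}. If $\ttype{\Uni_{\srkb}}(\gamma)=\{\SOMET{S}\}$ then $\gamma$ is of the form $\gamma'\cdot w_{[S^-]}$ with $S$ over $\Xi$ and has no successors at all (its unique $S$-successor is its parent $\gamma'$), so there is no ``$\Xi$-leaf below $\gamma$'' to map into; moreover the target subtree rooted at $\tau w_{[S]}$ is not trivial in general, since $\T_2$ can generate an arbitrarily deep tree from $\SOMET{S^-}$. Fortunately this branch never needs to be handled: for $\tau$ at depth at least one with tail role $S_0$, your own invariant places both $\SOMET{S_0^-}$ and $\SOMET{S}$ in $\ttype{\Uni_{\srkb}}(\gamma)$, and these are distinct because the generating relation requires $[S_0^-]\neq[S]$, so the singleton case of Lemma~\ref{lem:XiBprime-SigmaB} is excluded --- which is precisely how the paper dispatches it. This second flaw is thus repairable by observing the case is vacuous, but as written your argument for it is incorrect; together with the missing $\Sigma$-witness case above, the proposal is not yet a complete proof, although it follows the paper's route.
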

\begin{proof}
  Let $\A$ as above and assume the condition of the lemma are
  satisfied. We build a mapping $h$ from $\gpath(\tgkb[\A])$ to
  $\gpath(\srkb[\A])$ such that for any finite subinterpretation of
  $\Uni_{\tgkb[\A]}$ the restriction of $h$ to it is a homomorphism to
  $\Uni_{\srkb[\A]}$. Initially, we define $h(a)=a$, let us
  immediately verify that $\ttype{\Uni_{\tgkb[\A]}}(a) \subseteq
  \ttype{\Uni_{\srkb[\A]}}(a)$. Let $C \in
  \ttype{\Uni_{\tgkb[\A]}}(a)$, it follows by Lemma \ref{lem:Unitype-KBtype-ABox}
  \ref{tmpt:lem-Unitype-KBtype-ABox-1} there exists $B$ over $\Sigma$ such
  that $\A \models B(a)$ and $\T_2 \cup \T_{12} \vdash B \ISA
  C$. Observe that $B \in \ttype{\Uni_{\srkb[\A]}}(a)$; now if $C$ is
  over $\Sigma$ it follows $C=B$, so $C \in
  \ttype{\Uni_{\srkb[\A]}}(a)$ and the proof is done. Otherwise, $C
  \in \stype[\Xi]{\tgkb}(o)$, then by \ref{req:c2-o-subs} $C \in
  \stype{\srkb}(o)$, so $\T_1 \cup \T_{12} \vdash B \ISA C$. Finally,
  using Lemma \ref{lem:Unitype-KBtype-ABox}~\ref{tmpt:lem-Unitype-KBtype-ABox-1} obtain $C \in
  \ttype{\Uni_{\srkb[\A]}}(a)$. The proof of
  $\rtype{\Uni_{\tgkb[\A]}}(a,b) \subseteq
  \rtype{\Uni_{\srkb[\A]}}(a,b)$ is analogous using Lemma
  \ref{lem:Unitype-KBtype-ABox} \ref{tmpt:lem-Unitype-KBtype-ABox-2} and current
  \ref{req:c2-role-subs}.

  Now we show how to define $h$ for $\sigma=a  w_{[R]} \in
  \gpath(\tgkb[\A])$. It follows $a \leadsto_{\tgkb[\A]} w_{[R]}$,
  then by Lemma \ref{lem:Unitype-KBtype} (with $\K=\tgkb[\A]$) there
  exists $B$ over $\Sigma$ such that $\A \models B(a)$, $o
  \leadsto_{\tgkb} w_{[R]}$, and
  \begin{align}
    \label{eq:Unitargtype-KBtype1}\ttype{\Uni_{\tgkb[\A]}}(a w_{[R]})&
    \subseteq \ttype{\Uni_{\tgkb}}(ow_{[R]}) \\
    \label{eq:Unitargtype-KBtype2} \rtype{\Uni_{\tgkb[\A]}}(a,a
    w_{[R]})& \subseteq \rtype{\Uni_{\tgkb}}(o,ow_{[R]}).
  \end{align}
  We are going to show now there exists $y \in \Delta^{\Uni_{\srkb}}$
  such that
  \begin{align}
    \label{eq:KBtarg-KBsrc-conc} \ttype{\Uni_{\tgkb}}(ow_{[R]}) &\subseteq \ttype{\Uni_{\srkb}}(y) \text{ and }\\
    \label{eq:KBtarg-KBsrc-role} \rtype{\Uni_{\tgkb}}(o,ow_{[R]})&
    \subseteq \rtype{\Uni_{\srkb}}(o,y).
  \end{align}
  Assume, first, $\ttype[\Sigma]{\Uni_{\tgkb}}(ow_{[R]}) = \emptyset$,
  then also $\rtype[\Sigma]{\Uni_{\tgkb}}(o,ow_{[R]})=\emptyset$; it
  remains to observe that from $\A \models B(a)$ it
  follows~\ref{req:c2-prereq-1} is satisfied with $\sigma=a$, then by
  \ref{req:c2-gen} we obtain $y$ satisfying
  \eqref{eq:KBtarg-KBsrc-conc} and \eqref{eq:KBtarg-KBsrc-role}.

  Assume now $\ttype[\Sigma]{\Uni_{\tgkb}}(ow_{[R]}) \neq \emptyset$,
  it follows $B=\exists R$, $\ttype[\Sigma]{\Uni_{\tgkb}}(ow_{[R]}) =
  \{\exists R^-\}$, and $\rtype[\Sigma]{\Uni_{\tgkb}}(o,ow_{[R]}) =
  \{R\}$. Since $B=\exists R$, there must exists a role $Q$ such that
  $o \leadsto_{\srkb} w_{[Q]}$ and $\T_1 \cup \T_{12} \vdash Q \ISA
  R$, we choose $w_{[Q]}$ to be the required $y$; it is immediate to
  see $\ttype[\Sigma]{\Uni_{\tgkb}}(ow_{[R]}) \subseteq
  \ttype{\Uni_{\srkb}}(y)$, and
  $\rtype[\Sigma]{\Uni_{\tgkb}}(o,ow_{[R]}) \subseteq
  \rtype{\Uni_{\srkb}}(o,y)$. To prove also
  $\ttype[\Xi]{\Uni_{\tgkb}}(ow_{[R]}) \subseteq
  \ttype{\Uni_{\srkb}}(y)$ and $\rtype[\Xi]{\Uni_{\tgkb}}(o,ow_{[R]})
  \subseteq \rtype{\Uni_{\tgkb}}(o,y)$ we are going to use
  \ref{req:c2-o-subs} and \ref{req:c2-role-subs}, but we need $\exists
  R^- \in \ttype[\Sigma]{\Uni_{\srkb[\A]}}(\sigma)$ and $R \in
  \rtype[\Sigma]{\Uni_{\srkb[\A]}}(\sigma, \delta)$ for some $\sigma,
  \delta \in \Delta^{\Uni_{\srkb[\A]}}$. To get the latter two facts
  it is sufficient to notice $\exists R \in
  \ttype{\Uni_{\tgkb[\A]}}(a)$ (since $a \leadsto_{\tgkb} w_{[R]}$)
  and $\ttype{\Uni_{\tgkb[\A]}}(a) \subseteq
  \ttype{\Uni_{\srkb[\A]}}(a)$ proven above.

  The proof of $\ttype[\Xi]{\Uni_{\tgkb}}(ow_{[R]}) \subseteq
  \ttype{\Uni_{\srkb}}(y)$ is as follows: assume $B' \in
  \ttype[\Xi]{\Uni_{\tgkb}}(ow_{[R]})$, then since $R$ is over
  $\Sigma$ it follows $B' \in \ttype[\Xi]{\Uni_{\tgkb[\exists
      R^-]}}(o)$. By $\exists R^- \in
  \ttype[\Sigma]{\Uni_{\srkb[\A]}}(\sigma)$ and \ref{req:c2-o-subs}
  obtain $B' \in \ttype[\Xi]{\Uni_{\srkb[\exists R^-]}}(o)$, then
  since $\T_1 \cup \T_{12} \vdash Q \ISA R$ it follows
  $\ttype{\Uni_{\srkb[\exists R^-]}}(o) \subseteq
  \ttype{\Uni_{\srkb}}(ow_{[Q]})$ and we obtain $B' \in
  \ttype{\Uni_{\srkb}}(y)$. The proof of
  $\rtype[\Xi]{\Uni_{\tgkb}}(o,ow_{[R]}) \subseteq
  \rtype{\Uni_{\tgkb}}(o,y)$ is analogous using $R \in
  \rtype[\Sigma]{\Uni_{\srkb[\A]}}(\sigma, \delta)$,
  \ref{req:c2-role-subs}, and $\T_1 \cup \T_{12} \vdash Q \ISA R$. We
  finished showing there exists $y \in \dom[\Uni_{\srkb}]$, such that
  \eqref{eq:KBtarg-KBsrc-conc} and \eqref{eq:KBtarg-KBsrc-role}.

  To continue the proof consider $\{B\} \subseteq
  \ttype{\Uni_{\srkb[\A]}}(a)$ and Lemma~\ref{lem:types-chase} (with
  $\T=\T_1 \cup \T_{12}$ and $\T'= \emptyset$) there exists $\delta
  \in \dom[\Uni_{\srkb[\A]}]$ such that $\ttype{\Uni_{\srkb}}(y)
  \subseteq \ttype{\Uni_{S_\A}}(\delta)$ and $\rtype{\Uni_{\srkb}}
  (o,y) \subseteq \rtype{\Uni_{\srkb[\A]}}(a,\delta)$. It follows now
  using \eqref{eq:Unitargtype-KBtype1} and
  \eqref{eq:KBtarg-KBsrc-conc} that $\ttype{\Uni_{\tgkb[\A]}}(a
  w_{[R]}) \subseteq \ttype{\Uni_{\srkb[\A]}}(\delta)$. Analogously
  using \eqref{eq:Unitargtype-KBtype2} and
  \eqref{eq:KBtarg-KBsrc-role} one obtains
  $\rtype{\Uni_{\tgkb[\A]}}(a,a w_{[R]}) \subseteq
  \rtype{\Uni_{\srkb[\A]}}(a,\delta)$.

  We show how to define the homomorphism for $\sigma w_{[R]} \in
  \gpath(\tgkb[\A])$ with $\tail(\sigma)=w_{[R']}$ given that the
  homomorphism for $h(\sigma)$ is defined. It follows $w_{[R']}
  \leadsto_{\tgkb[\A]} w_{[R]}$ and by definition of $\leadsto$ and
  the structure of $\T_2 \cup \T_{12}$ we obtain $\T_2 \cup \T_{12}
  \vdash \exists R'^- \ISA \exists R$ and $R$ is a $\Xi$ role
  \emph{different from} $R^-$. By Lemma \ref{lem:Unitype-KBtype-ABox} it also
  follows $\{\exists R'^-,\SOMET{R}\} \subseteq
  \ttype{\Uni_{\tgkb[\A]}}(\sigma)$. Since $h$ is a homomorphism,
  $\{\exists R'^-,\SOMET{R}\} \subseteq
  \ttype{\Uni_{\srkb[\A]}}(\delta)$ for $\delta=h(\sigma) \in
  \dom[\Uni_{\srkb[\A]}]$. We use Lemma \ref{lem:XiBprime-SigmaB} to
  obtain $B$ over $\Sigma$ such that $B \in
  \ttype{\Uni_{\srkb[\A]}}(\delta)$ and $\T_{12} \vdash B \ISA
  \SOMET{R}$. Notice that such $B$ exists: since $\exists R'^-$ and
  $\exists R$ are different concepts, \ref{tmpt:lem-XiBprime-SigmaB-2}
  of Lemma \ref{lem:XiBprime-SigmaB} is excluded, so
  \ref{tmpt:lem-XiBprime-SigmaB-1} holds.

  Then in $\tgkb$ we have that $o \leadsto_{\tgkb} w_{[R]}$ for a
  $\Xi$ role $R$, and the proof continues analogously to the proof for
  the case $\sigma=a w_{[R]}$ above using the conditions
  \textbf{(ii)}, \textbf{(iii)} and Lemmas \ref{lem:types-chase} to
  obtain $\delta'$ in $\dom[\Uni_{\srkb[\A]}]$ such that
  $\ttype{\Uni_{\tgkb[\A]}}(\sigma w_{[R]}) \subseteq
  \ttype{\Uni_{\srkb[\A]}}(\delta')$ and
  $\rtype{\Uni_{\tgkb[\A]}}(\sigma, \sigma w_{[R]}) \subseteq
  \rtype{\Uni_{\srkb[\A]}}(\delta,\delta')$. We assign $h(\sigma
  w_{[R]})=\delta'$.
 
  Thus, we defined the mapping $h$ that is clearly a
  $\Xi$-homomorphism from each finite subinterpretation of
  $\Uni_{\tup{\T_1 \cup \T_{12}, \A}}$ into $\Uni_{\tup{\T_2 \cup
      \T_{12}, \A}}$.
\end{proof}
%

%
\begin{lemma}\label{lem:c1}
  Let $\A$ be an ABox over $\Sigma$ and assume for each concept $B$,
  role $R$, and all $\sigma, \delta \in \dom[\Uni_{\srkb[\A]}]$ such
  that
  \begin{enumerate}[thmparts, start=1]
  \item \label{req:c1-prereq-1} $B \in \ttype[\Sigma]{\srkb[\A]}(\sigma)$,
  \item \label{req:c1-prereq-2} $R \in
    \rtype[\Sigma]{\srkb[\A]}(\sigma, \delta)$
  \end{enumerate}
  the following conditions hold
  \begin{enumerate}[thmparts]
  \item \label{req:c1-o-subs} $\ttype[\Xi]{\Uni_{\srkb}}(o) \subseteq
    \ttype{\Uni_{\tgkb}}(o)$;
  \item \label{req:c1-role-subs} $\T_1 \cup \T_{12} \vdash R \ISA R'$
    implies $\T_2 \cup \T_{12} \vdash R \ISA R'$ for all roles $R'$
    over $\Xi$;
  \item \label{req:c1-gen} for each role $R$ such that $o
    \leadsto_{\Uni_{\srkb}} w_{[R]}$ there exists $y \in
    \dom[\Uni_{\tgkb}]$ such that
    \begin{enumerate}[label=\textbf{(\alph*)}]
    \item \label{req:c1-gen-conc-subs} $\ttype[\Xi]{\Uni_{\srkb}}(ow_{[R]})
      \subseteq \ttype{\Uni_{\tgkb}} (y)$,
    \item \label{req:c1-gen-role-subs} $\rtype[\Xi]{\srkb}(o,ow_{[R]})
      \subseteq \rtype{\Uni_{\tgkb}} (o,y)$.
    \end{enumerate}
  \end{enumerate}
  Then $\Uni_{\srkb[\A]}$ is finitely $\Xi$-homomorphically embeddable
  into $\Uni_{\tgkb[\A]}$.
\end{lemma}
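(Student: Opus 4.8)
The plan is to prove Lemma~\ref{lem:c1} as the mirror image of Lemma~\ref{lem:c2}, swapping the roles of $\T_1$ and $\T_2$ (hence of $\srkb[\A]$ with $\tgkb[\A]$, and of $\srkb$ with $\tgkb$) and reversing the direction of the embedding. Concretely, I would build a map $h$ from $\gpath(\srkb[\A])$ to $\gpath(\tgkb[\A])$ by induction on the length of an $\srkb[\A]$-path and show that its restriction to every finite subinterpretation of $\Uni_{\srkb[\A]}$ is a $\Xi$-homomorphism into $\Uni_{\tgkb[\A]}$. Since we only aim at a $\Xi$-homomorphism, throughout the construction it suffices to track $\Xi$-types, which is exactly what hypotheses \ref{req:c1-o-subs}--\ref{req:c1-gen} speak about; moreover, the preconditions \ref{req:c1-prereq-1} and \ref{req:c1-prereq-2} are stated over $\srkb[\A]$, which is the structure whose paths $h$ traverses, so they are available for free along the induction --- which if anything makes the bookkeeping lighter than in the proof of Lemma~\ref{lem:c2}.

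In the base case I set $h(a) = a$ for every constant $a$ and verify $\ttype[\Xi]{\Uni_{\srkb[\A]}}(a) \subseteq \ttype[\Xi]{\Uni_{\tgkb[\A]}}(a)$ and the analogous inclusion for $\Xi$-roles between constants. By Lemma~\ref{lem:Unitype-KBtype-ABox}\ref{tmpt:lem-Unitype-KBtype-ABox-1}, a $\Xi$-concept $C$ realized at $a$ in $\Uni_{\srkb[\A]}$ stems from a $\Sigma$-concept $B'$ with $\A \models B'(a)$ and $\T_1 \cup \T_{12} \vdash B' \ISA C$; as $\Sigma \cap \Xi = \emptyset$ this derivation factors through a $\Xi$-concept, and since $B' \in \ttype[\Sigma]{\srkb[\A]}(a)$ I may apply \ref{req:c1-o-subs} to $\srkb = \tup{\T_1 \cup \T_{12}, \{B'(o)\}}$ and conclude, using Lemma~\ref{lem:Unitype-KBtype-ABox} once more, that $C \in \ttype[\Xi]{\Uni_{\tgkb[\A]}}(a)$. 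Roles are handled identically using \ref{req:c1-role-subs} and Lemma~\ref{lem:Unitype-KBtype-ABox}\ref{tmpt:lem-Unitype-KBtype-ABox-2}.

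For the inductive step at $\sigma = a\,w_{[R]}$ (and, with the same argument after one application of Lemma~\ref{lem:XiBprime-SigmaB} to pull $\SOMET{R}$ back to a $\Sigma$-concept realized at $h(\sigma')$, at $\sigma = \sigma'\,w_{[R]}$ with $\tail(\sigma') = w_{[R']}$), I first invoke Lemma~\ref{lem:Unitype-KBtype} with $\K = \srkb[\A]$ to obtain a basic concept $B$ over $\Sigma$ with $\A \models B(a)$, $o \leadsto_{\srkb} w_{[R]}$, and $\ttype{\Uni_{\srkb[\A]}}(a\,w_{[R]}) \subseteq \ttype{\Uni_{\srkb}}(o\,w_{[R]})$, together with the matching role inclusion. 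Next I exhibit $y \in \dom[\Uni_{\tgkb}]$ with $\ttype[\Xi]{\Uni_{\srkb}}(o\,w_{[R]}) \subseteq \ttype[\Xi]{\Uni_{\tgkb}}(y)$ and $\rtype[\Xi]{\Uni_{\srkb}}(o,o\,w_{[R]}) \subseteq \rtype[\Xi]{\Uni_{\tgkb}}(o,y)$: if $\ttype[\Sigma]{\Uni_{\srkb}}(o\,w_{[R]}) = \emptyset$ this is exactly \ref{req:c1-gen} (its precondition holding since $B \in \ttype[\Sigma]{\srkb[\A]}(a)$), while if this $\Sigma$-type is nonempty it equals $\{\SOMET{R^-}\}$ with $B = \SOMET{R}$, so there is a witness $w_{[Q]}$ with $o \leadsto_{\tgkb} w_{[Q]}$ and $\T_2 \cup \T_{12} \vdash Q \ISA R$, and I take $y = o\,w_{[Q]}$, transferring the $\Xi$-concept and $\Xi$-role types from $o\,w_{[R]}$ in $\Uni_{\srkb}$ to $y$ via \ref{req:c1-o-subs} and \ref{req:c1-role-subs}. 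Finally, since $\A \models B(a)$ gives $B \in \ttype{\Uni_{\tgkb[\A]}}(a)$, Lemma~\ref{lem:types-chase} (with $\T = \T_2 \cup \T_{12}$, $\T' = \emptyset$, and the set $\{B\}$) lifts $y$ to some $\delta \in \dom[\Uni_{\tgkb[\A]}]$ dominating the $\Xi$-type of $a\,w_{[R]}$ for concepts and for roles; I set $h(\sigma) = \delta$. Composing the inclusions shows that $h$ restricted to any finite subinterpretation is a $\Xi$-homomorphism, which is the claim.

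The step I expect to be the main obstacle is the sub-case just described in which the source generating step carries a nonempty $\Sigma$-type, i.e.\ $R$ is a $\Sigma$-role: one must locate the matching $\Sigma$-labeled generating step $o \leadsto_{\tgkb} w_{[Q]}$ (available because $B = \SOMET{R}$), verify that the $\Sigma$-type preconditions of \ref{req:c1-o-subs}--\ref{req:c1-role-subs} are met --- here using $R \in \rtype[\Sigma]{\Uni_{\srkb[\A]}}(a, a\,w_{[R]})$, which follows from $a \leadsto_{\srkb[\A]} w_{[R]}$ by Lemma~\ref{lem:Unitype-KBtype-ABox}\ref{tmpt:lem-Unitype-KBtype-ABox-4} --- and then carefully check that no $\Xi$-type attached at $o\,w_{[R]}$ in $\Uni_{\srkb}$ is lost when it is transported along $\T_2 \cup \T_{12} \vdash Q \ISA R$ to $o\,w_{[Q]}$ in $\Uni_{\tgkb}$. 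Apart from these bookkeeping points the argument is the formal mirror of the proof of Lemma~\ref{lem:c2}.
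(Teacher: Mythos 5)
Your base case and the step below a constant are sound and essentially coincide with the paper's proof (Lemma~\ref{lem:Unitype-KBtype}, then condition~\ref{req:c1-gen}, then Lemma~\ref{lem:types-chase}); the extra case split you introduce when $R$ is a $\Sigma$-role is superfluous, since \ref{req:c1-gen} is stated for \emph{every} role $R$ with $o \leadsto_{\srkb} w_{[R]}$, but it is not harmful. The genuine gap is the inner inductive step $\sigma = \sigma' w_{[R]}$ with $\tail(\sigma')=w_{[R']}$, which you dispose of as ``the same argument after one application of Lemma~\ref{lem:XiBprime-SigmaB}''. That lemma concerns the source canonical model $\Uni_{\tup{\T_1\cup\T_{12},\A}}$ and is the tool for Lemma~\ref{lem:c2}, where the homomorphism lands in the source side; invoking it ``at $h(\sigma')$'', an element of $\Uni_{\tgkb[\A]}$, is a category error, and even if you apply it at $\sigma'$ in the source model it only yields a $\Sigma$-concept $B$ realized at $\sigma'$. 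The base-step trick then requires $B$ (there, $B=\SOMET{R'^-}$ or the concept given by Lemma~\ref{lem:XiBprime-SigmaB}) to hold at the \emph{image} point $h(\sigma')$ of $\Uni_{\tgkb[\A]}$, so that Lemma~\ref{lem:types-chase} with $\{B\}$ can transplant the witness $y\in\dom[\Uni_{\tgkb}]$ supplied by \ref{req:c1-gen} below $h(\sigma')$. Since $h$ is only a $\Xi$-homomorphism, there is no reason that the $\Sigma$-concept $B$ belongs to $\ttype{\Uni_{\tgkb[\A]}}(h(\sigma'))$ --- in general it does not, as the anonymous part of $\Uni_{\tgkb[\A]}$ carries only $\Xi$-information --- so the transplantation step has no premise and the construction stalls exactly where the real work begins.

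The paper's proof circumvents this by replacing the $\Sigma$-concept with the sets $\mathbf{B}$ of $\Xi$-concepts that $\T_{12}$ derives from $\SOMET{R'^-}$ (respectively from $\SOMET{R'}$), which \emph{are} preserved by the $\Xi$-homomorphism at $h(\sigma')$ (respectively at the image of the predecessor of $\sigma'$), and then distinguishes where the witness $y$ of \ref{req:c1-gen} sits inside $\Uni_{\tgkb[(\exists R'^-)]}$: either in the part generated from those $\Xi$-concepts at $o$, or only via the backwards generating step $o \leadsto_{\tgkb[(\exists R'^-)]} w_{[R'^-]}$. In the latter case, when $\rtype[\Xi]{\Uni_{\srkb[\A]}}(\sigma',\sigma'w_{[R]})\neq\emptyset$, the construction is forced to fold back and set $h(\sigma' w_{[R]})$ equal to the image of the \emph{predecessor} of $\sigma'$, with the concept- and role-type inclusions then checked by hand. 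Your construction always sends the new element to a point obtained by chasing below $h(\sigma')$ and never reuses the image of an ancestor, so this case --- which accounts for most of the length and subtlety of the paper's proof --- is simply missing, and the sketch as given does not establish the lemma.
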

%

%
%
\begin{proof}
  Assume the condition of the lemma is satisfied, and let $\A$ be an
  ABox over $\Sigma$. We build a mapping $h$ from $\gpath(\srkb[\A])$
  to $\gpath(\tgkb[\A])$ such that for any finite subinterpretation of
  $\Uni_{\srkb[\A]}$ the restriction of $h$ to it is a
  $\Xi$-homomorphism to $\Uni_{\T_\A}$. Initially, we define $h(a)=a$,
  let us immediately verify that $\ttype[\Xi]{\Uni_{\srkb[\A]}}(a)
  \subseteq \ttype[\Xi]{\Uni_{\tgkb[\A]}}(a)$. Let $B' \in
  \ttype[\Xi]{\Uni_{\srkb[\A]}}(a)$, it follows by Lemma
  \ref{lem:Unitype-KBtype-ABox} \ref{tmpt:lem-Unitype-KBtype-ABox-1}
  there exists $B$ over $\Sigma$ such that $\A \models B(a)$ and $\T_1
  \cup \T_{12} \vdash B \ISA B'$. Observe that $B \in
  \ttype{\Uni_{\srkb[\A]}}(a)$, then by \ref{req:c1-o-subs} $B' \in
  \ttype{\Uni_{\tgkb}}(o)$, so $\T_2 \cup \T_{12} \vdash B \ISA
  B'$. Finally, using Lemma
  \ref{lem:Unitype-KBtype-ABox}~\ref{tmpt:lem-Unitype-KBtype-ABox-1}
  obtain $B' \in \ttype{\Uni_{\srkb[\A]}}(a)$. The proof of
  $\rtype[\Xi]{\Uni_{\srkb[\A]}}(a,b) \subseteq
  \rtype[\Xi]{\Uni_{\tgkb[\A]}}(a,b)$ is analogious using Lemma
  \ref{lem:Unitype-KBtype-ABox} \ref{tmpt:lem-Unitype-KBtype-ABox-2}
  and current \ref{req:c1-role-subs}.

  Now we show how to define $h$ for $\sigma=aw_{[R]} \in
  \gpath(\srkb[\A])$. It follows $a \leadsto_{\srkb[\A]} w_{[R]}$ and
  by Lemma \ref{lem:Unitype-KBtype} (with $\K=\srkb[\A]$) we obtain
  $B$ over $\Sigma$ such that $\A \models B(a)$, $o \leadsto_{\srkb}
  w_{[R]}$, and
  \begin{align}
    \label{eq:Unisrctype-KBtype-conc}\ttype{\Uni_{\srkb[\A]}}(aw_{[R]})& \subseteq
    \ttype{\Uni_{\srkb}}(ow_{[R]}) \\%
    \label{eq:Unisrctype-KBtype-role}
    \rtype{\Uni_{\srkb[\A]}}(a,aw_{[R]})& \subseteq
    \rtype{\Uni_{\srkb}}(o,ow_{[R]}).
  \end{align}
  Notice that $B \in \stype[\Sigma]{\srkb}(a)$ (that
  is,~\ref{req:c1-prereq-1}), then by \ref{req:c1-gen} there exists
  $y \in \dom[\tgkb]$ such that
  \begin{align}
    \label{eq:KBsrc-KBtarg-conc} \ttype[\Xi]{\Uni_{\srkb}}(w_{[R]}) &\subseteq \ttype{\Uni_{\tgkb}}(y),\\
    \label{eq:KBsrc-KBtarg-role} \rtype[\Xi]{\Uni_{\srkb}}(o,w_{[R]})&
    \subseteq \rtype{\Uni_{\tgkb}}(o,y).
  \end{align}
  Since $\{B\} \subseteq \ttype{\Uni_{\tgkb[\A]}}(a)$, by
  Lemma~\ref{lem:types-chase} (with $\T=\T_2 \cup \T_{12}$ and $\T'=
  \emptyset$) there exists $\delta \in \dom[\Uni_{\tgkb[\A]}]$ such
  that $\ttype{\Uni_{\tgkb}}(y) \subseteq
  \ttype{\Uni_{\srkb[\A]}}(\delta)$ and $\rtype{\Uni_{\tgkb}} (o,y)
  \subseteq \rtype{\Uni_{\tgkb[\A]}}(a,\delta)$. It follows now using
  \eqref{eq:Unisrctype-KBtype-conc} and \eqref{eq:KBsrc-KBtarg-conc}
  that $\ttype[\Xi]{\Uni_{\srkb[\A]}}(a w_{[R]}) \subseteq
  \ttype[\Xi]{\Uni_{\tgkb[\A]}}(\delta)$. Analogously using
  \eqref{eq:Unisrctype-KBtype-role} and \eqref{eq:KBsrc-KBtarg-role}
  one obtains $\rtype[\Xi]{\Uni_{\srkb[\A]}}(a,a w_{[R]})
  \subseteq \rtype[\Xi]{\Uni_{\tgkb[\A]}}(a,\delta)$. We assign
  $h(\sigma)=\delta$.

  We show how to define the homomorphism for $\sigma w_{[R]} \in
  \gpath(\srkb[\A])$ with $\sigma=\sigma' w_{[R']}$ given that the
  homomorphism $h(\sigma)$ and $h(\sigma')$ is defined. It follows
  $w_{[R']} \leadsto_{\srkb[\A]} w_{[R]}$ and it that case $R'$ is
  over $\Sigma$ by the structure of $\T_1 \cup \T_{12}$. Analogously
  to the proof of Lemma \ref{lem:Unitype-KBtype} it can be verified $o
  \leadsto_{\srkb[(\exists R'^-)]} w_{[R]}$ and
  \begin{align}
    \label{eq:Unisrctype-KBtype1}\ttype{\Uni_{\srkb[\A]}}(\sigma
    w_{[R]})& \subseteq
    \ttype{\Uni_{\srkb[(\exists R'^-)]}}(o w_{[R]}) \text{ and} \\
    \label{eq:Unisrctype-KBtype2}
    \rtype{\Uni_{\srkb[\A]}}(\sigma,\sigma w_{[R]})& \subseteq
    \rtype{\Uni_{\srkb[(\exists R'^-)]}}(o,o w_{[R]}).
  \end{align}
  Observe that $\exists R'^- \in
  \ttype[\Sigma]{\Uni_{\srkb[\A]}}(\sigma)$ (that
  is,~\ref{req:c1-prereq-1}), then by \ref{req:c1-gen} there is $y \in
  \dom[\Uni_{\tgkb[(\exists R'^-)]}]$ satisfying
  \ref{req:c1-gen-conc-subs} and \ref{req:c1-gen-role-subs}. Given the
  structure of $\T_2 \cup \T_{12}$ two cases are possible:
  \begin{enumerate}[resume*=reqs]
  \item \label{req:lem-c1-indstep2-1} $y \in
    \dom[\Uni_{\tup{\T_2,\{B(o) \mid B \in \mathbf{B}\}}}]$ for the
    set $\mathbf{B}$ of all concepts $B$ over $\Xi$ such that $\T_{12}
    \vdash \exists R'^- \ISA B$, 
  \begin{align}
    \label{eq:KBtarg-typesubs1} \ttype[\Xi]{\Uni_{\srkb[(\exists R'^-)]}}
    (o w_{[R]}) &\subseteq
    \ttype{\Uni_{\tup{\T_2,\{B(o) \mid B \in \mathbf{B}\}}}}(y), \text{ and }\\
    \label{eq:KBtarg-typesubs2} \rtype[\Xi]{\Uni_{\srkb[(\exists
        R'^-)]}}(o,o w_{[R]}) &\subseteq \rtype{\Uni_{\tup{\T_2,\{B(o)
          \mid B \in \mathbf{B}\}}}}(o,y).
  \end{align}
  \item \label{req:lem-c1-indstep2-2} $o \leadsto_{\tgkb[(\exists
      R'^-)]} w_{[R'^-]}$, 
    \begin{align}
      \label{eq:KBtarg-typesubs3a} y \in \dom[\Uni_{\tup{\T_2,\{B(o)
          \mid B \in \mathbf{B}\}}}], \text{ for the set $\mathbf{B}$
        of} & \text{ all concepts $B$ over $\Xi$ such that $\T_{12} \vdash
        \exists R' \ISA B$},\\%
      \label{eq:KBtarg-typesubs1a} \ttype[\Xi]{\Uni_{\srkb[(\exists
          R'^-)]}} (o w_{[R]}) &\subseteq \ttype{\Uni_{\tup{\T_2,\{B(o)
            \mid B \in
            \mathbf{B}\}}}}(y), \text{ and }\\
    \label{eq:KBtarg-typesubs2a} \rtype[\Xi]{\Uni_{\srkb[(\exists
      R'^-)]}}(o,o w_{[R]}) &\subseteq \rtype{\Uni_{\tgkb[(\exists
      R'^-)]}}(o, o w_{[R'^-]}).
  \end{align}
  \end{enumerate}

  Consider \ref{req:lem-c1-indstep2-1}; then, $\mathbf{B} \subseteq
  \ttype[\Xi]{\Uni_{\tgkb[\A]}}(h(\sigma))$, since obviously
  $\mathbf{B} \subseteq \ttype[\Xi]{\Uni_{\srkb[\A]}}(\sigma)$ and $h$
  is a homomorphism on $\sigma$. By Lemma \ref{lem:types-chase} (with
  $\T=\T_2$ and $\T'=\T_{12}$) we obtain $\delta \in
  \dom[\Uni_{\tgkb[\A]}]$ such that $\ttype{\Uni_{\tup{\T_2,\{B(o)
        \mid B \in \mathbf{B}\}}}}(y) \subseteq
  \ttype{\Uni_{\tgkb[\A]}}(\delta)$ and $\rtype{\Uni_{\tup{\T_2,\{B(o)
        \mid B \in \mathbf{B}\}}}} (o,y) \subseteq
  \rtype{\Uni_{\tgkb[\A]}}(h(\sigma),\delta)$.  Note that using
  \eqref{eq:Unisrctype-KBtype1} and \eqref{eq:KBtarg-typesubs1} we
  obtain $\ttype{\Uni_{\srkb[\A]}}(\sigma w_{[R]}) \subseteq
  \ttype{\Uni_{\tgkb[\A]}}(\delta)$; also using
  \eqref{eq:Unisrctype-KBtype2} and \eqref{eq:KBtarg-typesubs2} we
  obtain $\rtype[\Xi]{\Uni_{\srkb[\A]}}(\sigma, \sigma w_{[R]})
  \subseteq \rtype[\Xi]{\Uni_{\tgkb[\A]}}(h(\sigma),\delta)$. We
  assign $h(\sigma w_{[R]})=\delta$ which concludes the proof.

  Consider \ref{req:lem-c1-indstep2-2}; at this point we need
  \begin{align}
    \label{eq:type-subs-KBsrc1} \mathbf{B} &\subseteq
    \ttype{\Uni_{\srkb[\A]}}(\sigma') \text{ and }\\
    \label{eq:type-subs-KRsrc2} \mathbf{R} &\subseteq
    \rtype{\Uni_{\srkb[\A]}}(\sigma,\sigma'),
  \end{align}
  for $\mathbf{R} = \{ R'' \mid \T_{12} \vdash R'^- \ISA
  R''\}$. Indeed, \eqref{eq:type-subs-KBsrc1} follows since $\exists
  R' \in \ttype{\Uni_{\srkb[\A]}}(\sigma')$, by the definition of
  $\mathbf{B}$, and Lemma \ref{lem:Unitype-KBtype-ABox}~\ref{tmpt:lem-Unitype-KBtype-ABox-1} and~\ref{tmpt:lem-Unitype-KBtype-ABox-3}. For
  \eqref{eq:type-subs-KRsrc2} let $R'' \in \mathbf{R}$ it follows
  $[R'^-] \leq_{\T_1 \cup \T_{12}} [R'']$ and so $[R'] \leq_{\T_1 \cup
    \T_{12}} [R''^-]$. Then by the definition of $\Uni_{\srkb[\A]}$
  obtain $R''^- \in \rtype{\Uni_{\srkb[\A]}}(\sigma', \sigma)$, so
  obviously $R'' \in \rtype{\Uni_{\srkb[\A]}}(\sigma, \sigma')$.

  Observe that, since $h$ is a $\Xi$-homomorphism on $\sigma'$ and
  \eqref{eq:type-subs-KBsrc1}, it follows
  \begin{align}
    \mathbf{B} \subseteq
    \ttype[\Xi]{\Uni_{\tgkb[\A]}}(h(\sigma'))\label{eq:lem-c1-B-subs-tuni}
  \end{align}
  and distinguish two subcases:
  \begin{enumerate}[reqs]
  \item \label{req:lem-c1-indstep2-2-1}
    $\rtype[\Xi]{\Uni_{\srkb[\A]}}(\sigma,\sigma w_{[R]}) =
    \emptyset$;
  \item \label{req:lem-c1-indstep2-2-2}
    $\rtype[\Xi]{\Uni_{\srkb[\A]}}(\sigma,\sigma w_{[R]}) \neq
    \emptyset$.
  \end{enumerate}
  In case \ref{req:lem-c1-indstep2-2-1} consider
  \eqref{eq:KBtarg-typesubs3a}, \eqref{eq:lem-c1-B-subs-tuni} and
  Lemma \ref{lem:types-chase} to obtain
  $\delta \in \dom[\Uni_{\tgkb[\A]}]$ such that
  $\ttype{\Uni_{\tup{\T_2,\{B(o) \mid B \in \mathbf{B}\}}}}(y) \subseteq
  \ttype{\Uni_{\tgkb[\A]}}(\delta)$.
  Then using \eqref{eq:Unisrctype-KBtype1} and
  \eqref{eq:KBtarg-typesubs1a} one obtains
  $\ttype[\Xi]{\Uni_{\srkb[\A]}}(\sigma w_{[R]}) \subseteq
  \ttype[\Xi]{\Uni_{\tgkb[\A]}}(\delta)$. Taking $\delta=h(\sigma
  w_{[R]})$ completes the proof of the first subcase.

  In the alternative case \ref{req:lem-c1-indstep2-2-2}, it follows by
  \eqref{eq:Unisrctype-KBtype2} that $\rtype[\Xi]{\Uni_{\srkb[(\exists
    R'^-)]}}(o,w_{[R]}) \neq \emptyset$ therefore $y=o$ (c.f.
  \eqref{eq:KBtarg-typesubs3a}). We assign $h(\delta  w_{[R]})=h(\sigma')$ and we prove
  $\ttype[\Xi]{\Uni_{\srkb[\A]}}(\sigma w_{[R]}) \subseteq
  \ttype{\Uni_{\tgkb[\A]}}(h(\sigma'))$, and
  $\rtype[\Xi]{\Uni_{\srkb[\A]}}(\sigma, \sigma w_{[R]})
  \subseteq \rtype{\Uni_{\tgkb[\A]}}(h(\sigma), h(\sigma'))$.

  Indeed, let $B \in \ttype[\Xi]{\Uni_{\srkb[\A]}}(\sigma w_{[R]})$,
  by \eqref{eq:Unisrctype-KBtype1} $B \in \stype[\Xi]{\srkb[(\exists
    R'^-)]}(o w_{[R]})$, then by \eqref{eq:KBtarg-typesubs1a} there
  exists $B' \in \mathbf{B}$ such that $\T_2 \vdash B' \ISA B$. Using
  \eqref{eq:lem-c1-B-subs-tuni} and Lemma
  \ref{lem:Unitype-KBtype-ABox}~\ref{tmpt:lem-Unitype-KBtype-ABox-3}
  obtain $B \in \ttype{\Uni_{\tgkb[\A]}}(h(\sigma'))$.

  Let now $Q \in \rtype[\Xi]{\Uni_{\srkb[\A]}}(\sigma, \sigma
  w_{[R]})$, by \eqref{eq:Unisrctype-KBtype2} it follows $Q \in
  \rtype[\Xi]{\Uni_{\srkb[(\exists R'^-)]}}(o,o w_{[R]})$, then by
  \eqref{eq:KBtarg-typesubs2a} there exists $R'' \in \mathbf{R}$ such
  that $\T_2 \vdash R'' \ISA Q$. Since $h$ is a homomorphism on
  $\sigma$, $\sigma'$ and \eqref{eq:type-subs-KRsrc2} obtain $R'' \in
  \ttype[\Xi]{\Uni_{\tgkb[\A]}}(h(\sigma), h(\sigma'))$. By the
  definition of $\Uni_{\tgkb[\A]}$ we conclude also $Q \in
  \rtype[\Xi]{\Uni_{\tgkb[\A]}}(h(\sigma), h(\sigma'))$. This
  concludes the proof of the second subcase and the whole case
  \ref{req:lem-c1-indstep2-2}. We have shown how to define $h$ for
  $\sigma w_{[R]} \in \gpath(\srkb[\A])$ so that $h$ is
  $\Xi$-homomorphism.
\end{proof}

\subsection{Proof of Proposition 6.1}

This proof can be obtained as an easy consequence of the following
\begin{lemma}
  Let $\M=(\Sigma, \Xi, \T_{12})$ be a mapping, and $\T_1$ and $\T_2$,
  respectively, $\Sigma$- and $\Xi$-TBoxes, $q(\vec{x})$ a
  $\Xi$-query, and $\A$ a $\Sigma$ ABox. Then
  \begin{align*}
    \bigcap_{\begin{subarray}{c}%
        \A' - \text{ ABox, s.t. it is}\\
        \text{UCQ-solution for } \A\\
        \text{ under }\T_{12}
      \end{subarray}} \cert(q, \tup{\T_2, \A'}) \subseteq
    \bigcap_{\begin{subarray}{c}%
        \A' - \text{ extended ABox, s.t.}\\
        \text{it is UCQ-solution for } \A\\
        \text{ under }\T_{12}
      \end{subarray}} \cert(q, \tup{\T_2, \A'}).
  \end{align*}
\end{lemma}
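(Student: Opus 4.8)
The reverse inclusion is trivial: every ordinary ABox over $\Xi$ is in particular an extended ABox over $\Xi$, so the family of ordinary-ABox $\UCQ$-solutions for $\A$ under $\T_{12}$ is contained in the family of extended-ABox $\UCQ$-solutions, and intersecting over a larger family can only shrink the result; hence it suffices to prove the displayed inclusion, after which the two intersections coincide, as Proposition~6.1 requires. To prove it, I would fix a tuple $\vec a$ in the left-hand intersection and an extended ABox $\A'_0$ over $\Xi$ that is a $\UCQ$-solution for $\A$ under $\T_{12}$, and show $\vec a \in \cert(q, \tup{\T_2, \A'_0})$. First I would dispose of the degenerate cases. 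If $\tup{\T_{12}, \A}$ is inconsistent, then no ABox over $\Xi$ — extended or not — can be a $\UCQ$-solution for $\A$ (the certain answers of any $\UCQ$ over $\tup{\T_{12},\A}$ comprise all tuples, while those over $\tup{\emptyset, \A'}$ for a finite positive ABox $\A'$ never do), so both intersections range over the empty family and are equal; and if $\tup{\T_2, \A'_0}$ is inconsistent, then $\cert(q, \tup{\T_2, \A'_0})$ already contains all tuples. So I may assume both KBs consistent.

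\textbf{Rigidifying $\A'_0$.} The next step is to replace $\A'_0$ by an ordinary ABox that is at least as strong. Let $\A''$ be obtained from $\A'_0$ by replacing each labeled null $n$ occurring in it with a pairwise distinct \emph{fresh} constant $c_n$, where ``fresh'' means not occurring in $\A$, $\T_{12}$, $\T_2$, $\A'_0$, in $q$, or in $\vec a$ (only finitely many constants need be avoided, and $N_a$ is infinite). Then $\A''$ is an ordinary ABox over $\Xi$, and I would check that it is still a $\UCQ$-solution for $\A$ under $\T_{12}$. Using that $\Uni_{\tup{\T_{12},\A}}$ is a universal model of $\tup{\T_{12},\A}$ and that $\V_{\A'}$ is a universal model of $\tup{\emptyset,\A'}$ for every (extended) ABox $\A'$, the property ``$\A'_0$ is a $\UCQ$-solution for $\A$'' is equivalent — by the earlier claim characterizing $\Xi$-query entailment via finite $\Xi$-homomorphic embeddability of canonical models — to: $\Uni_{\tup{\T_{12},\A}}$ is finitely $\Xi$-homomorphically embeddable into $\V_{\A'_0}$. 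Now the map sending each null $n$ to $c_n$ and fixing every constant is a $\Xi$-homomorphism from $\V_{\A'_0}$ to $\V_{\A''}$: constants are interpreted as themselves on both sides, and the $\Xi$-type of $n$ in $\V_{\A'_0}$ equals that of $c_n$ in $\V_{\A''}$, since the assertions of $\A''$ are exactly the null-renamed assertions of $\A'_0$. Composing the two embeddings shows $\Uni_{\tup{\T_{12},\A}}$ is finitely $\Xi$-homomorphically embeddable into $\V_{\A''}$, so $\A''$ is an ordinary-ABox $\UCQ$-solution for $\A$ under $\T_{12}$; hence $\A''$ is one of the ABoxes over which the left-hand intersection is taken, and therefore $\vec a \in \cert(q, \tup{\T_2, \A''})$.

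\textbf{Transporting the answer back to $\A'_0$.} It remains to deduce $\vec a \in \cert(q, \tup{\T_2, \A'_0})$ from $\vec a \in \cert(q, \tup{\T_2, \A''})$, and here I would argue directly on models rather than through a homomorphism. Let $\I$ be any model of $\tup{\T_2, \A'_0}$, witnessed by a substitution $h$ for the labeled nulls, and let $\I'$ be $\I$ with the interpretation of each constant $c_n$ redefined to be $h(n)$, all other constants and all concept and role extensions unchanged. Then $\I'$ still satisfies $\T_2$ (which mentions no constants), and $\I'$ satisfies $\A''$, because the $h$-image of every membership assertion of $\A'_0$ is exactly an assertion of $\A''$ that holds in $\I'$; thus $\I' \models \tup{\T_2, \A''}$, so $\vec a$ is an answer to $q$ in $\I'$. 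Since neither $q$ nor $\vec a$ mentions any $c_n$, and $\I$ and $\I'$ differ only in the interpretation of the $c_n$, the answer set of $q$ and the interpretation of $\vec a$ are identical in $\I$ and $\I'$; hence $\vec a$ is an answer to $q$ in $\I$. As $\I$ was an arbitrary model of $\tup{\T_2, \A'_0}$, this yields $\vec a \in \cert(q, \tup{\T_2, \A'_0})$, which completes the argument.

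\textbf{Expected obstacle.} The crux is the last step: it is tempting to relate $\Uni_{\tup{\T_2,\A''}}$ and $\Uni_{\tup{\T_2,\A'_0}}$ by a homomorphism, but turning a null into a constant rigidifies that element and could, a priori, create spurious certain answers; the resolution is precisely to choose the new constants fresh relative to $q$ and $\vec a$, so that the query cannot ``see'' them and the elementary model-surgery above applies. A secondary care point is getting the canonical-model bookkeeping for extended ABoxes right (labeled nulls sitting in the domain alongside all of $N_a$, each constant interpreted as itself), so that the map $\V_{\A'_0}\to\V_{\A''}$ in the second step is genuinely constant-preserving.
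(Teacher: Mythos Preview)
Your proof is correct and follows essentially the same strategy as the paper: replace the nulls of the extended ABox by fresh constants, invoke the left-hand hypothesis on the resulting ordinary ABox, and transfer the answer back by reinterpreting the fresh constants in an arbitrary model of $\tup{\T_2,\A'_0}$. The only difference is that you explicitly verify the rigidified ABox is a $\UCQ$-solution (the paper leaves this implicit); your verification via the finite-homomorphism characterization works, though a one-line model-theoretic argument---every model of $\A''$ is a model of $\A'_0$ via the substitution $n\mapsto c_n^\I$---avoids appealing to that characterization for extended ABoxes.
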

\begin{proof} Consider a tuple of constants $\vec{a}$ such that
  $\tup{\T_2, \A'} \models q[\vec{a}]$ for all $\Xi$-ABoxes $\A'$,
  such that $\A'$ is a $\UCQ$-solution for $\A$ under
  $\T_{12}$. Assume an extended ABox $\A'$, such that it is a
  $\UCQ$-solution for $\A$; we are going to show $\tup{\T_2, \A'}
  \models q[\vec{a}]$. If $\tup{\T_2, \A'}$ is inconsistent, the proof
  is done; otherwise, take an interpretation $\I \models \tup{\T_2,
    \A'}$. It follows there exists a substitution over $\I$, such that
  $h(u) \in B^\I$ for every $B(u) \in \A$, and $(h(u), h(v)) \in R^\I$
  for all $R(u,v) \in \A$. We associate with every null $n$ in $\A'$ a
  \emph{fresh} (w.r.t. constants in $\A'$, $\vec{a}$, and
  $q(\vec{x})$) constant $a_n \in N_a$; then take $\A^*$ the result of
  the substitution of each $n$ by $a_n$ in $\A'$. Consider an
  interpretation $\I^*$, such that it is equal to $\I$, except for
  $a_n$, such that $n$ is a null in $\A'$, we set $a_n^{\I^*} =
  h(n)$. It should be clear that $\I^* \models \tup{\T_2, \A^*}$, then
  we obtain $\I^* \models q[\vec{a}]$. It remains to show $\I \models
  q[\vec{a}]$; for that assume $\vec{x}= (x_1, \dots x_n)$, $\vec{a} =
  (a_1, \dots, a_n)$, and
$$q(\vec{x})=
\exists y_1, \dots, y_m \varphi(\vec{x}, y_1, \dots, y_m, b_1, \dots,
b_k),$$
where $b_i$ are constants and $\varphi$ a quantifier-free
formula. It follows, there exist $d_1, \dots, d_n, e_1, \dots e_m,
f_1, \dots, f_k \in \Delta^{\I^*}$, such that $d_i = a_i^{\I^*}$, $f_i
= b_i^{\I^*}$, and
$$\I^* \models \varphi(d_1, \dots d_n, e_1, \dots,
e_m, f_1, \dots, f_k).$$
It remains to observe that all of $d_i, e_i, f_i$ belong to the
interpretation of the same concepts/roles in $\I$ as in $\I'$, and
$a_i^\I=d_i$, $b_i^\I=f_i$. Therefore, $\I \models \varphi(d_1, \dots
d_n, e_1, \dots, e_m, f_1, \dots, f_k)$, and, finally, $\I \models
q[\vec{a}]$.
\end{proof}

\subsection{Proof of Proposition 6.2}

The result is proved in Theorem~\ref{th:memb-repres-nlogspace-compl}, which is
based on a series of lemmas.

\begin{lemma}\label{lem:srkb-query-equiv-tgkb}
  Let $\M=(\Sigma, \Xi, \T_{12})$ be a mapping, and $\T_1$ and $\T_2$,
  respectively, $\Sigma$- and $\Xi$-TBoxes. Then $\T_2$ is a
  \emph{UCQ-representation of $\T_1$ under $\T_{12}$} if and only if
  $\tup{\T_1 \cup \T_{12}, \A}$ is $\Xi$-query equivalent to
  $\tup{\T_2 \cup \T_{12}, \A}$ for every ABox $\A$ over $\Sigma$ such
  that $\tup{\T_1, \A}$ is consistent.
\end{lemma}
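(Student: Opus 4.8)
The plan is to unfold the definition of $\UCQ$-representation on both sides of the biconditional and reduce everything to a single identity about certain answers. By the definition (Equation~\eqref{eq-ucq-rep}, read in the setting where target ABoxes may contain nulls), $\T_2$ is a $\UCQ$-representation of $\T_1$ under $\T_{12}$ iff for every $\Sigma$-ABox $\A$ consistent with $\T_1$ and every $\Xi$-query $q$ we have $\cert(q,\tup{\T_1 \cup \T_{12},\A}) = \bigcap_{\A_2}\cert(q,\tup{\T_2,\A_2})$, with $\A_2$ ranging over the $\UCQ$-solutions for $\A$ under $\M$; and, by definition, $\tup{\T_1 \cup \T_{12},\A}$ is $\Xi$-query equivalent to $\tup{\T_2 \cup \T_{12},\A}$ iff $\cert(q,\tup{\T_1 \cup \T_{12},\A}) = \cert(q,\tup{\T_2 \cup \T_{12},\A})$ for all $\Xi$-queries $q$. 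Hence both directions of the lemma follow at once once I establish, for every $\Sigma$-ABox $\A$ with $\tup{\T_1,\A}$ consistent and every $\Xi$-query $q$, the identity
\[
  \bigcap_{\A_2}\,\cert(q,\tup{\T_2,\A_2}) \;=\; \cert(q,\tup{\T_2 \cup \T_{12},\A}),
\]
where $\A_2$ ranges over all $\UCQ$-solutions for $\A$ under $\M$.

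In the case where all the KBs involved are consistent, this identity can be proved with the homomorphism machinery of the appendix. For the inclusion $\supseteq$: if $\A_2$ is a $\UCQ$-solution for $\A$ under $\M$, then $\tup{\emptyset,\A_2}$ $\Xi$-query entails $\tup{\T_{12},\A}$, so by Claim~\ref{th:query-entail-homo} the model $\Uni_{\tup{\T_{12},\A}}$ is finitely $\Xi$-homomorphically embeddable into $\V_{\A_2}$; Lemma~\ref{lem:from-model-types-to-kb-types-ABox} then gives that $\Uni_{\tup{\T_2 \cup \T_{12},\A}}$ is finitely $\Xi$-homomorphically embeddable into $\Uni_{\tup{\T_2,\A_2}}$, so by Claim~\ref{th:query-entail-homo} again $\cert(q,\tup{\T_2 \cup \T_{12},\A}) \subseteq \cert(q,\tup{\T_2,\A_2})$, and intersecting over all $\A_2$ yields $\supseteq$. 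For $\subseteq$ it suffices to produce one $\UCQ$-solution $\A_2^{\circ}$ with $\cert(q,\tup{\T_2,\A_2^{\circ}}) \subseteq \cert(q,\tup{\T_2 \cup \T_{12},\A})$. I take $\A_2^{\circ}$ to be the $\Xi$-ABox whose canonical model is the $\Xi$-reduct of $\Uni_{\tup{\T_{12},\A}}$; this ABox is \emph{finite}, because $\T_{12}$ consists only of inclusions from $\Sigma$ to $\Xi$ and $\Sigma \cap \Xi = \emptyset$, so neither $\ISA^{\R}_{\T_{12}}$ nor $\ISA^{\C}_{\T_{12}}$ has chains inside $\Xi$ and $\gpath(\tup{\T_{12},\A})$ extends $\Ind(\A)$ by at most one step. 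That $\A_2^{\circ}$ is a $\UCQ$-solution is immediate (the identity is a $\Xi$-homomorphism); and, using Claim~\ref{lem:konev18} to homomorphically embed $\Uni_{\tup{\T_{12},\A}}$ into the model $\Uni_{\tup{\T_2 \cup \T_{12},\A}}$ of $\tup{\T_{12},\A}$ and restricting to $\Xi$, one checks that $\Uni_{\tup{\T_2 \cup \T_{12},\A}}$ is a model of $\tup{\T_2,\A_2^{\circ}}$; a further application of Claim~\ref{lem:konev18} then gives a homomorphism $\Uni_{\tup{\T_2,\A_2^{\circ}}} \to \Uni_{\tup{\T_2 \cup \T_{12},\A}}$, hence the required inclusion of certain answers.

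The main obstacle is exactly the case in which the KBs involved are inconsistent: since $\T_{12}$ and $\T_2$ may contain disjointness assertions, $\tup{\T_1,\A}$ being consistent does \emph{not} guarantee that $\tup{\T_1 \cup \T_{12},\A}$ or $\tup{\T_2 \cup \T_{12},\A}$ is consistent, and the hom-based characterizations used above break down (moreover the intersection in the displayed identity may be taken over the empty family). Here I would argue directly, via the consistency characterization of Lemma~\ref{lem:kr-iff-b-cons}, which reduces (in)consistency of $\tup{\T \cup \T_{12},\A}$ to pairwise $(\T \cup \T_{12})$-consistency of the basic concepts and roles asserted by $\A$; the point is to show that $\tup{\T_2 \cup \T_{12},\A}$ is inconsistent precisely when every $\UCQ$-solution $\A_2$ yields an inconsistent $\tup{\T_2,\A_2}$ (and symmetrically for $\T_1$), so that both sides of the displayed identity, as well as $\cert(q,\tup{\T_1 \cup \T_{12},\A})$ and $\cert(q,\tup{\T_2 \cup \T_{12},\A})$, degenerate to the full set $\AllTup$ together. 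This is the delicate part — it is precisely the inconsistency phenomenon that the ``$(\T_1 \cup \T_{12})$-consistent iff $(\T_{12} \cup \T_2)$-consistent'' condition discussed in Section~\ref{sec-ucq-rep} is meant to control — and pushing it through carefully for concepts, for roles, and in both orientations is where most of the real work lies.
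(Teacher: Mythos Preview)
Your overall plan is exactly the paper's: the lemma is reduced to the identity
\(
  \bigcap_{\A_2}\cert(q,\tup{\T_2,\A_2}) = \cert(q,\tup{\T_2\cup\T_{12},\A})
\)
(this is Proposition~\ref{prop:srkb-ans-iff-trgtkb-ans-all-sol} in the appendix), and your \(\supseteq\) argument via Claim~\ref{th:query-entail-homo} and Lemma~\ref{lem:from-model-types-to-kb-types-ABox} is the paper's argument verbatim.

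For \(\subseteq\) the paper also builds a single witness solution, but with a technical twist you gloss over. Your \(\A_2^{\circ}\) uses labeled nulls for the anonymous points of \(\Uni_{\tup{\T_{12},\A}}\); then \(\Uni_{\tup{\T_2,\A_2^{\circ}}}\) is not defined in the paper's framework (\(\Uni\) is only set up for non-extended KBs), so Claim~\ref{lem:konev18} does not literally apply. The paper instead names those anonymous points by \emph{fresh constants} \(a_\sigma\), obtaining an ordinary ABox \(\A'\). The price is that the obvious map \(g\colon \Uni_{\tup{\T_2,\A'}}\to\Uni_{\tup{\T_2\cup\T_{12},\A}}\) is \emph{not} a homomorphism (it sends \(a_\sigma\) to \(\sigma\), not to itself), so the paper checks the type-inclusion properties of \(g\) directly and then argues as in Claim~\ref{th:query-entail-homo}. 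Your null-based variant can be made to work, but it needs a canonical-model/homomorphism lemma for extended KBs that the paper does not provide.

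Where you go astray is the inconsistency discussion. The ``\((\T_1\cup\T_{12})\)-consistent iff \((\T_{12}\cup\T_2)\)-consistent'' condition from Section~\ref{sec-ucq-rep} belongs to Lemma~\ref{lem:t2-is-repr-iff-conds} (the \emph{characterization} of representations), not here: the identity you are proving involves only \(\T_2\) and \(\T_{12}\), so nothing ``symmetric for \(\T_1\)'' is needed. In the paper the inconsistent cases are disposed of in two lines: for \(\subseteq\), if \(\tup{\T_2\cup\T_{12},\A}\) is inconsistent the inclusion is trivial; for \(\supseteq\), the homomorphism produced by Lemma~\ref{lem:from-model-types-to-kb-types-ABox} (which exists regardless of consistency, since \(\Uni\) is always defined) carries any clash in \(\Uni_{\tup{\T_2\cup\T_{12},\A}}\) into \(\Uni_{\tup{\T_2,\A_2}}\), making \(\tup{\T_2,\A_2}\) inconsistent as well (``in the way similar to Lemma~\ref{lem:kr-iff-b-cons}''). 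That is all --- it is not where ``most of the real work lies''; the real work is the witness construction and the homomorphism lemma, both of which you already have.
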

\begin{proof}
  We first prove the following:
  \begin{proposition}\label{prop:srkb-ans-iff-trgtkb-ans-all-sol}
    Let $\M=(\Sigma, \Xi, \T_{12})$ be a mapping, and $\T_1$ and
    $\T_2$, respectively, $\Sigma$- and $\Xi$-TBoxes, $\A$ a
    $\Sigma$-ABox, such that $\tup{\T_1, \A}$ is consistent,
    $q(\vec{x})$ a $\Xi$ query and $\vec{a}$ a tuple of
    constants. Then $\tup{\T_2 \cup \T_{12}, \A} \models q[\vec{a}]$
    iff $\tup{\T_2, \A'} \models q[\vec{a}]$ for all $\Xi$-ABoxes
    $\A'$ such that $\A'$ is a UCQ-solution for $\A$ under $\M$.
  \end{proposition}
  \begin{proof}
    ($\Rightarrow$) Let $\A$ and $\A'$ as above; we show $\tup{\T_2,
      \A'}$ $\Xi$-query entails $\tup{\T_2 \cup \T_{12}, \A}$. Notice
    that since $\A'$ is a UCQ-solution, it follows $\tup{\T_{12}, \A}$
    $\Xi$-query entails $\A'$; and since $\A$ is consistent,
    $\tup{\T_{12}, \A}$ is consistent as well. Using
    Claim~\ref{th:query-entail-homo}, we obtain that
    $\Uni_{\tup{\T_{12}, \A}}$ is $\Xi$ homomorphically embeddable
    into $\Uni_{\A'}$. By
    Lemma~\ref{lem:from-model-types-to-kb-types-ABox} it follows
    $\Uni_{\tup{\T_2 \cup \T_{12}, \A}}$ is $\Xi$ homomorphically
    embeddable into $\Uni_{\tup{\T_2, \A'}}$. Now, if $\tup{\T_2 \cup
      \T_{12}, \A}$ is inconsistent, it can be shown in the way
    similar to the proof of Lemma~\ref{lem:kr-iff-b-cons} that
    $\tup{\T_2, \A'}$ is inconsistent, then the proof is
    done. Otherwise, we use Claim~\ref{th:query-entail-homo} to
    conclude $\tup{\T_2, \A'}$ $\Xi$-query entails $\tup{\T_2 \cup
      \T_{12}, \A}$.

    ($\Leftarrow$) Let $\A$, $q(\vec{x})$, and $\vec{a}$ as above;
    assume $\tup{\T_2, \A'} \models q[\vec{a}]$ for all solutions
    $\A'$ for $\A$ under $\T_{12}$. 
    We are going to show $\tup{\T_2 \cup \T_{12},\A} \models
    q[\vec{a}]$. If $\tup{\T_2 \cup \T_{12},\A}$ is inconsistent, the
    proof is done; assume the opposite, then we will show
    $\Uni_{\tup{\T_2 \cup \T_{12},\A}} \models q[\vec{a}]$, using
    Theorem \ref{th:query-kb-vs-uni} the proof will be done. Consider
    $\Uni_{\tup{\T_{12},\A}}$ and define the set
$$\Theta_{\T_{12},
  \A} = \Ind(\A) \cup \{a w_{[R]} \in \Delta^{\Uni_{\tup{\T_{12},\A}}}
\mid a \in \Ind(\A) \text{ and } R \text{ over } \Sigma\}.
$$
For each $\sigma \in \Theta_{\T_{12}, \A}$ define $t_\sigma = a$ if
$\sigma=a \in \Ind(\A)$, and $t_\sigma = a_\sigma$ for a \emph{fresh
  w.r.t.}  $\A$, $\vec{a}$, and $q(\vec{x})$ constant $a_\sigma$,
otherwise. Now, define
\begin{align*}
  \A'=&\{ B(t_\sigma) \mid B \text{ basic conc. over } \Xi, \sigma \in
  B^{\Uni_{\tup{\T_{12},\A}}} \cap \Theta_{\T_{12},
    \A}\} \cup \\
  & \{ P(t_\sigma, t_{\sigma'}) \mid P \text{ role name over } \Xi,
  (\sigma, \sigma') \in P^{\Uni_{\tup{\T_{12},\A}}} \cap
  (\Theta_{\T_{12}, \A} \times \Theta_{\T_{12}, \A})\}.
\end{align*}
It is straightforward to build a $\Xi$-homomorphism from
$\tup{\T_{12}, \A}$ to $\A'$ and use Claim~\ref{th:query-entail-homo}
to show $\A'$ is a UCQ-solution for $\A$ under $\T_{12}$. Consider now
$\Uni_{\tup{\T_2, \A'}}$ and a mapping $g : \dom[\Uni_{\tup{\T_2,
    \A'}}] \mapsto \dom[\Uni_{\tup{\T_2 \cup \T_{12},\A}}]$ defined in
the following way:
\begin{align*}
  g(a w_{[R_1]} \dots w_{[R_n]}) = %
  \begin{cases}
    \sigma w_{[R_1]} \dots w_{[R_n]},& \text{ if } a=t_\sigma \text{
      and } \sigma \in \Theta_{\T_{12}, \A},\\
    a,& \text{ otherwise},
  \end{cases}
\end{align*}
where $n \geq 0$.  Notice that $g$ is not a homomorphism, however,
using the definitions of $\Uni_{\tup{\T_2, \A'}}$ and $\Uni_{\tup{\T_2
    \cup \T_{12},\A}}$ one can straightforwardly verify
\begin{align}
  \label{eq:prop:srkb-ans-iff-trgtkb-ans-all-sol-1} \ttype{\tup{\T_2,
      \A'}}(\delta) &\subseteq \ttype{\tup{\T_2 \cup
      \T_{12}, \A}}(g(\delta)),\\
  \label{eq:prop:srkb-ans-iff-trgtkb-ans-all-sol-2} \rtype{\tup{\T_2,
      \A'}}(\delta, \delta') &\subseteq \rtype{\tup{\T_2 \cup \T_{12},
      \A}}(g(\delta), g(\delta')),
\end{align}
for all $\delta, \delta' \in \dom[\Uni_{\tup{\T_2, \A'}}]$. This is
sufficint to prove in the way analogious to the proof of
Lemma~\ref{lem:kr-iff-b-cons}, that $\Uni_{\tup{\T_2, \A'}}$ is
consistent. Using Claim~\ref{th:query-kb-vs-uni} one can obtain
$\Uni_{\tup{\T_2, \A'}} \models q[\vec{a}]$. Finally, observe
\begin{align}
  \label{eq:prop:srkb-ans-iff-trgtkb-ans-all-sol-3} g(a) = a,
\end{align}
for all $a$ in $\Ind(\A)$, $\vec{a}$, or $q(\vec{x})$; then
using~\eqref{eq:prop:srkb-ans-iff-trgtkb-ans-all-sol-1},~\eqref{eq:prop:srkb-ans-iff-trgtkb-ans-all-sol-2},~\eqref{eq:prop:srkb-ans-iff-trgtkb-ans-all-sol-3}
in the same way as the proof of Claim~\ref{th:query-entail-homo} one
can show $\Uni_{\tup{\T_2 \cup \T_{12},\A}} \models q[\vec{a}]$, which
concludes the proof.

\end{proof}

Now, given a $\Sigma$ ABox $\A$ such that $\tup{\T_1, \A}$ is
consistent, we show that $\tup{\T_1 \cup \T_{12}, \A}$ is $\Xi$-query
equivalent to $\tup{\T_2 \cup \T_{12}, \A}$ \emph{if and only if } for
every $\Xi$ query $q(\vec{x})$ it holds
\begin{align}
  \cert(q, \tup{\T_1 \cup \T_{12}, \A})= \bigcap_{\A' - \text{
      solution for } \A \text{ under }\T_{12}} \cert(q, \tup{\T_2,
    \A'}). \label{eq:def-representab}
\end{align}
($\Rightarrow$) Let $q(\vec{x})$ be a $\Xi$ query, it follows
$\cert(q, \tup{\T_1 \cup \T_{12}, \A})=\cert(q, \tup{\T_2 \cup
  \T_{12}, \A})$, and we easily obtain \eqref{eq:def-representab}
using Proposition
\ref{prop:srkb-ans-iff-trgtkb-ans-all-sol}. ($\Leftarrow$) Let
$q(\vec{x})$ be a $\Xi$ query, we need to show $\cert(q, \tup{\T_1
  \cup \T_{12}, \A})=\cert(q, \tup{\T_2 \cup \T_{12}, \A})$, which is
easily concluded using Proposition
\ref{prop:srkb-ans-iff-trgtkb-ans-all-sol} and
\eqref{eq:def-representab}.
\end{proof}
\begin{lemma}\label{lem:t2-is-repr-iff-conds}
  The $\Xi$-TBox $\T_2$ is a $\UCQ$-representation of $\Sigma$-TBox
  $\T_1$ under the mapping $\M=(\Sigma, \Xi, \T_{12})$ \emph{if and
    only if} following conditions hold:
  \begin{enumerate}[thmparts, start=1]
  \item \label{req:t2-is-repr-iff-conds-conc-cons} for each pair of
    $\T_1$-consistent concepts $B, B'$ over $\Sigma$, $B,B'$ is $\T_1
    \cup \T_{12}$-consistent iff $B,B'$ is $\T_2 \cup
    \T_{12}$-consistent;
  \item \label{req:t2-is-repr-iff-conds-roles-cons} for each pair of
    $\T_1$-consistent roles $R, R'$ over $\Sigma$, $R,R'$ is $\T_1
    \cup \T_{12}$-consistent iff $R,R'$ is $\T_2 \cup
    \T_{12}$-consistent;
  \item \label{req:t2-is-repr-iff-conds-conc-equal} for each $\T_1
    \cup \T_{12}$-consistent concept $B$ over $\Sigma$ and each $B'$
    over $\Sigma_2$, $\T_1 \cup \T_{12} \vdash B \ISA B'$ iff $\T_2
    \cup \T_{12} \vdash B \ISA B'$;
  \item \label{req:t2-is-repr-iff-conds-roles-equal} for each $\T_1
    \cup \T_{12}$-consistent role $R$ over $\Sigma$ and each $R'$
    over $\Sigma_2$, $\T_1 \cup \T_{12} \vdash R \ISA R'$ iff $\T_2
    \cup \T_{12} \vdash R \ISA R'$;%
  \item \label{req:t2-is-repr-iff-conds-roles-gen} for each $B \in
    \consc$ over $\Sigma$ and each role $R$ such
    that $o \leadsto_{\srkb} w_{[R]}$ there exists $y \in
    \dom[\Uni_{\tgkb}]$ such that
    \begin{enumerate}[label=\textbf{(\alph*)}]
    \item $\ttype[\Xi]{\Uni_{\srkb}}(o w_{[R]}) \subseteq
      \ttype{\Uni_{\tgkb}} (y)$,
    \item $\rtype[\Xi]{\Uni_{\srkb}}(o, o w_{[R]}) \subseteq
      \rtype{\Uni_{\tgkb}} (o,y)$;
    \end{enumerate}
  \item \label{req:t2-is-repr-iff-conds-roles-gen-inv} for each $B \in
    \consc$ over $\Sigma$ and each role $R$ such that $o
    \leadsto_{\tgkb} w_{[R]}$ there exists $y$ such that $y \in
    \dom[\Uni_{\srkb}]$ and
    \begin{enumerate}[label=\textbf{(\alph*)}]
    \item $\ttype[\Xi]{\Uni_{\tgkb}}(o w_{[R]}) \subseteq \ttype{\Uni_{\srkb}}
      (y)$,
    \item $\rtype[\Xi]{\Uni_{\tgkb}}(o, o w_{[R]}) \subseteq
      \rtype{\Uni_{\srkb}} (o,y)$
    \end{enumerate}
  \end{enumerate}

\end{lemma}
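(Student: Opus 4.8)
The plan is to reduce the statement to a claim about $\Xi$-query equivalence of the KBs $\tup{\T_1 \cup \T_{12}, \A}$ and $\tup{\T_2 \cup \T_{12}, \A}$, for which one can invoke the homomorphism characterization of Claim~\ref{th:query-entail-homo} together with the two ``transfer'' Lemmas~\ref{lem:c1} and~\ref{lem:c2}. The six conditions are tailored precisely so that \ref{req:t2-is-repr-iff-conds-conc-cons}--\ref{req:t2-is-repr-iff-conds-roles-cons} encode transfer of \emph{consistency}, \ref{req:t2-is-repr-iff-conds-conc-equal}--\ref{req:t2-is-repr-iff-conds-roles-equal} encode equality of the $\Xi$-types and $\Xi$-subsumptions at the root individual $o$ of $\srkb$ and $\tgkb$ (which, unfolded through Lemma~\ref{lem:Unitype-KBtype-ABox}, are exactly requirements~\ref{req:c1-o-subs}/\ref{req:c2-o-subs} and~\ref{req:c1-role-subs}/\ref{req:c2-role-subs}), and \ref{req:t2-is-repr-iff-conds-roles-gen}--\ref{req:t2-is-repr-iff-conds-roles-gen-inv} are literally the ``generation'' requirements~\ref{req:c1-gen} and~\ref{req:c2-gen}.

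For the ($\Leftarrow$) direction, fix a $\Sigma$-ABox $\A$ with $\tup{\T_1, \A}$ consistent. First I would use Lemma~\ref{lem:kr-iff-b-cons} together with conditions~\ref{req:t2-is-repr-iff-conds-conc-cons}--\ref{req:t2-is-repr-iff-conds-roles-cons} to show that $\tup{\T_1 \cup \T_{12}, \A}$ is consistent iff $\tup{\T_2 \cup \T_{12}, \A}$ is: consistency of $\tup{\T_1,\A}$ forces every pair of basic concepts (resp.\ roles) asserted on a common individual of $\A$ to be $\T_1$-consistent, and then conditions~\ref{req:t2-is-repr-iff-conds-conc-cons}/\ref{req:t2-is-repr-iff-conds-roles-cons} move consistency of that pair across $\T_1 \cup \T_{12}$ and $\T_2 \cup \T_{12}$. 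If both KBs are inconsistent, each entails every $\Xi$-query and they are trivially $\Xi$-query equivalent. If both are consistent, I would verify the hypotheses of Lemma~\ref{lem:c1}: by Lemma~\ref{lem:kr-to-all-b-cons} every basic concept realised in $\Uni_{\srkb[\A]}$ is $\T_1 \cup \T_{12}$-consistent, so conditions~\ref{req:t2-is-repr-iff-conds-conc-equal}, \ref{req:t2-is-repr-iff-conds-roles-equal}, \ref{req:t2-is-repr-iff-conds-roles-gen} (rewritten through Lemma~\ref{lem:Unitype-KBtype-ABox}) supply exactly \ref{req:c1-o-subs}, \ref{req:c1-role-subs}, \ref{req:c1-gen}; Lemma~\ref{lem:c1} then yields a finite $\Xi$-homomorphic embedding of $\Uni_{\srkb[\A]}$ into $\Uni_{\tgkb[\A]}$. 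Symmetrically, conditions~\ref{req:t2-is-repr-iff-conds-conc-equal}, \ref{req:t2-is-repr-iff-conds-roles-equal}, \ref{req:t2-is-repr-iff-conds-roles-gen-inv} feed Lemma~\ref{lem:c2} and give the reverse embedding. By Claim~\ref{th:query-entail-homo} the two KBs are $\Xi$-query equivalent in both directions, and Lemma~\ref{lem:srkb-query-equiv-tgkb} concludes that $\T_2$ is a $\UCQ$-representation of $\T_1$ under $\M$.

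For the ($\Rightarrow$) direction, assume $\T_2$ is a $\UCQ$-representation of $\T_1$ under $\M$; by Lemma~\ref{lem:srkb-query-equiv-tgkb}, $\tup{\T_1 \cup \T_{12}, \A}$ and $\tup{\T_2 \cup \T_{12}, \A}$ are $\Xi$-query equivalent for every $\Sigma$-ABox $\A$ consistent with $\T_1$. To obtain conditions~\ref{req:t2-is-repr-iff-conds-conc-equal}--\ref{req:t2-is-repr-iff-conds-roles-gen-inv} I would instantiate $\A = \{B(o)\}$ for a $\T_1 \cup \T_{12}$-consistent (hence $\T_1$-consistent) basic concept $B$: both KBs are then consistent, Claim~\ref{th:query-entail-homo} gives mutual finite $\Xi$-homomorphic embeddability of $\Uni_{\srkb}$ and $\Uni_{\tgkb}$, and reading these embeddings at the root $o$ and at the depth-one elements $ow_{[R]}$ yields, via Lemma~\ref{lem:Unitype-KBtype-ABox}, the $\Xi$-type/subsumption equalities \ref{req:t2-is-repr-iff-conds-conc-equal}/\ref{req:t2-is-repr-iff-conds-roles-equal} and the witnesses required by \ref{req:t2-is-repr-iff-conds-roles-gen}/\ref{req:t2-is-repr-iff-conds-roles-gen-inv}. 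For conditions~\ref{req:t2-is-repr-iff-conds-conc-cons}--\ref{req:t2-is-repr-iff-conds-roles-cons} I would instantiate $\A = \{B(o), B'(o)\}$ (resp.\ a two-assertion role ABox) for $\T_1$-consistent $B, B'$ and argue that $\Xi$-query equivalence forces $\tup{\T_1 \cup \T_{12}, \A}$ and $\tup{\T_2 \cup \T_{12}, \A}$ to have the same consistency status.

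The main obstacle is exactly this last step: $\Xi$-query equivalence does not by itself visibly transfer consistency when $\Xi$ is structurally poor. I expect to handle it by returning to definition~\eqref{eq-ucq-rep}: if $\tup{\T_1\cup\T_{12},\A}$ is inconsistent while $\tup{\T_2\cup\T_{12},\A}$ is not, then the canonical $\UCQ$-solution $\A_2$ obtained from $\Uni_{\tup{\T_{12},\A}}$ by replacing anonymous elements with fresh nulls gives a \emph{consistent} KB $\tup{\T_2,\A_2}$ (using Lemma~\ref{lem:from-model-types-to-kb-types-ABox} and a consistency argument in the style of Lemma~\ref{lem:kr-iff-b-cons}), and a consistent KB of this forest shape --- an ABox with no role self-assertions and a canonical model with no self-loops --- fails the Boolean $\Xi$-query $\exists x\, P(x,x)$ for any $\Xi$-role name $P$ (a variant query, exploiting that a single individual cannot realise two mutually $\T_1$-inconsistent $\Xi$-concepts, covers the degenerate case in which $\Xi$ has no role names); this contradicts equality of the two sides of~\eqref{eq-ucq-rep}. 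The remaining effort is the routine but careful bookkeeping of rewriting $\ttype[\Xi]{\Uni_{\srkb}}(o)$, $\rtype[\Xi]{\Uni_{\srkb}}(o, ow_{[R]})$, and their $\tgkb$-counterparts as entailments of the form $\T_1 \cup \T_{12} \vdash \cdot$ via Lemma~\ref{lem:Unitype-KBtype-ABox}, so as to see that conditions~\ref{req:t2-is-repr-iff-conds-conc-equal}--\ref{req:t2-is-repr-iff-conds-roles-gen-inv} coincide with the hypotheses of Lemmas~\ref{lem:c1} and~\ref{lem:c2}.
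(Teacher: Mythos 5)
Your ($\Leftarrow$) direction is essentially the paper's own proof: consistency transfer via Lemma~\ref{lem:kr-iff-b-cons} together with conditions (i)--(ii), then Lemma~\ref{lem:kr-to-all-b-cons} feeding Lemmas~\ref{lem:c1} and~\ref{lem:c2}, and Claim~\ref{th:query-entail-homo}, all routed through the characterization of Lemma~\ref{lem:srkb-query-equiv-tgkb}. For ($\Rightarrow$) you take a genuinely different route: the paper argues contrapositively, converting each violated condition into an explicit witness ABox and an instance $\Xi$-query (for conditions (v)--(vi) via the query $q_{\vec{B},\vec{R}}$ of Lemma~\ref{lem:query-vs-gen}), whereas you read the conditions directly off the mutual finite $\Xi$-homomorphic embeddings supplied by Claim~\ref{th:query-entail-homo} applied to singleton ABoxes. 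Your route buys a uniform treatment of (iii), (v), (vi) without designing bespoke queries; the paper's route avoids any delicacy about which finite subinterpretations are embedded and keeps every step at the level of certain answers.

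Two points in your ($\Rightarrow$) execution need repair. First, condition (iv) does not follow from $\A=\{B(o)\}$: a $\Xi$-homomorphism sends $o w_{[R]}$ to an element possibly generated by a different role, so $R'$ lying in the image's $\Xi$-role type does not yield $\T_2\cup\T_{12}\vdash R\ISA R'$, and moreover (iv) quantifies over all $\T_1\cup\T_{12}$-consistent $\Sigma$-roles $R$, including ones realized in no $\Uni_{\srkb}$; you must instead use the two-constant ABox $\{R(a,b)\}$, where Lemma~\ref{lem:Unitype-KBtype-ABox} plus the fact that homomorphisms fix constants gives the equivalence (this is exactly the ``analogous'' case the paper handles for (iv)). Relatedly, when you read types of the anonymous element $o w_{[R]}$ you must embed a finite subinterpretation containing successors witnessing the existential concepts of its $\Xi$-type, else the restriction does not carry the full type. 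Second, your declared ``main obstacle'' about consistency transfer is not an obstacle: if exactly one of $\srkb[\A]$, $\tgkb[\A]$ is inconsistent, a single $\Xi$-atom over a fresh constant already violates $\Xi$-query equivalence, which is in essence the paper's witness for (i)--(ii); your detour through Equation~\eqref{eq-ucq-rep} with a canonical solution and the query $\exists x\,P(x,x)$ can be made to work, but it is unnecessary, it only addresses the direction in which $\T_1\cup\T_{12}$ is the inconsistent side, and the side remark about a single individual realising ``two mutually $\T_1$-inconsistent $\Xi$-concepts'' is not meaningful, since $\T_1$ places no constraints on $\Xi$-symbols.
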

\begin{proof}
  ($\Leftarrow$) Let the conditions above hold for $\T_1$, $\T_2$ and
  $\T_{12}$. Let $\A$ be an ABox over $\Sigma$ such that $\tup{\T_1,
    \A}$ is consistent, we show $\srkb[\A]$ is $\Xi$-query equivalent
  to $\tgkb[\A]$.

  Observe that $\srkb[\A]$ is consistent iff $\tgkb[\A]$ is
  consistent.  Indeed, if $\srkb[\A]$ is inconsistent then by Lemma
  \ref{lem:kr-iff-b-cons} one of the following holds:
  \begin{enumerate}[reqs]
  \item \label{req:lem-t2-is-repr-iff-conds-incons-conc} $B_1,B_2$ is
    $\T_1 \cup \T_{12}$-inconsistent for some basic concepts $B_1,B_2$
    and $a \in \Ind(\A)$ such that $\A \models B_1(a)$, $\A \models
    B_2(a)$;
  \item \label{req:lem-t2-is-repr-iff-conds-incons-role} $R_1,R_2$ is
    $\T_1 \cup \T_{12}$-inconsistent for some roles $R_1,R_2$ and $a,b
    \in \Ind(\A)$ such that $\A \models R_1(a,b)$, $\A \models
    R_2(a,b)$
  \end{enumerate}
  Consider \ref{req:lem-t2-is-repr-iff-conds-incons-conc} and observe
  that by Lemma \ref{lem:kr-iff-b-cons} $B_1,B_2$ are $\T_1$
  consistent. Then by \ref{req:t2-is-repr-iff-conds-conc-cons}
  $B_1,B_2$ are $\T_2 \cup \T_{12}$-inconsistent and again by Lemma
  \ref{lem:kr-iff-b-cons} $\tgkb[\A]$ is inconsistent. The proof for
  the case of \ref{req:lem-t2-is-repr-iff-conds-incons-role} is
  similar using \ref{req:t2-is-repr-iff-conds-roles-cons}. The proof
  can be inverted to show $\tgkb[\A]$ is inconsistent implies
  $\srkb[\A]$ is inconsistent.

  First, assume $\srkb[\A]$ is inconsistent, it follows $\srkb[\A]
  \models q[\vec{a}]$ for all $\vec{a} \subseteq \Ind(\A)$ and
  $\Xi$-queries $q$. By the paragraph above, $\tgkb[\A]$ is
  inconsistent, so $\tgkb[\A] \models q[\vec{a}]$ for all $\vec{a}
  \subseteq \Ind(\A)$ and $\Xi$-queries $q$, and so $\srkb[\A]$ is
  $\Xi$-query equivalent to $\tgkb[\A]$.
  
  Now assume $\srkb[\A]$ is consistent, by Lemma
  \ref{lem:kr-to-all-b-cons} each $B$ is $\T_1 \cup
  \T_{12}$-consistent for all $\delta, \sigma \in
  \Delta^{\Uni_{\srkb[\A]}}$, each $B$ such that $B \in
  \ttype{\Uni_{\srkb[\A]}}(\delta)$, and each $R$ such that $R \in
  \rtype{\Uni_{\srkb[\A]}}(\delta, \sigma)$. It follows from
  \ref{req:t2-is-repr-iff-conds-conc-equal},
  \ref{req:t2-is-repr-iff-conds-roles-equal} and
  \ref{req:t2-is-repr-iff-conds-roles-gen}, that all the conditions of
  Lemma \ref{lem:c1} are satisfied, therefore we conclude
  $\Uni_{\srkb[\A]}$ is finitely $\Xi$-homomorphically embeddable into
  $\Uni_{\tgkb[\A]}$. Since $\tgkb[\A]$ is consistent, then we can
  apply Theorem \ref{th:query-entail-homo} to obtain $\tgkb[\A]$
  $\Xi$-query entails $\srkb[\A]$. On the other hand,
  \ref{req:t2-is-repr-iff-conds-conc-equal},
  \ref{req:t2-is-repr-iff-conds-roles-equal} and
  \ref{req:t2-is-repr-iff-conds-roles-gen-inv} imply that all the
  conditions of Lemma \ref{lem:c2} are satisfied, therefore we
  conclude $\Uni_{\tgkb[\A]}$ is finitely $\Xi$-homomorphically
  embeddable into $\Uni_{\srkb[\A]}$ and $\srkb[\A]$ $\Xi$-query
  entails $\tgkb[\A]$ by Theorem \ref{th:query-entail-homo}. We again
  obtain $\srkb[\A]$ is $\Xi$-query equivalent to $\tgkb[\A]$.

  ($\Rightarrow$) Assume, by contraction, one of the conditions
  \ref{req:t2-is-repr-iff-conds-conc-cons} --
  \ref{req:t2-is-repr-iff-conds-roles-gen-inv} is not satisfied. We
  produce a $\T_1$-consistent ABox $\A$ over $\Sigma$ and a
  \emph{instance} $\Xi$-query $q[]$ such that it is not the case that
  $\srkb[\A] \models q$ iff $\tgkb[\A] \models q$.

  Assume, first, the condition
  \ref{req:t2-is-repr-iff-conds-conc-cons} is violated, then we take
  $\A=\{B_1(o), B_2(o)\}$ violating it and $q = B_1 (a)$ for some
  constant $a \neq o$. If $B_1,B_2$ are $\T_1 \cup
  \T_{12}$-consistent, but $\T_2 \cup \T_{12}$ inconsistent, it
  follows $\srkb[\A] \not \models q$ and $\tgkb[\A] \models q$, and
  the opposite holds if $B_1,B_2$ are $\T_2 \cup \T_{12}$-consistent,
  but $\T_1 \cup \T_{12}$-inconsistent. If
  \ref{req:t2-is-repr-iff-conds-roles-cons} is violated, the proof is
  analogous.

  Let now the condition \ref{req:t2-is-repr-iff-conds-conc-equal} be
  violated for $B \in \consc$ over $\Sigma$. Assume, first, there is
  $B' \in \ttype[\Xi]{\Uni_{\srkb}}(o) \setminus
  \ttype[\Xi]{\Uni_{\tgkb}}(o)$, then we take $q= B'(o)$. By
  definition of $\Uni_{\srkb}$, $\Uni_{\tgkb}$ and Lemma
  \ref{lem:Unitype-KBtype-ABox} it follows $\Uni_{\srkb} \models q$
  and $\Uni_{\tgkb} \not \models q$; then by
  Claim~\ref{th:query-kb-vs-uni} it follows $\srkb \models q$ and
  $\tgkb \not \models q$. The opposite follows if there exists $B' \in
  \ttype[\Xi]{\Uni_{\tgkb}}(o) \setminus
  \ttype[\Xi]{\Uni_{\srkb}}(o)$, which completes the proof for this
  case. If \ref{req:t2-is-repr-iff-conds-roles-equal} is violated, the
  proof is analogios.

  To prove the case when \ref{req:t2-is-repr-iff-conds-roles-gen} is
  violated, we need an additional lemma below. Before we present it,
  notice that, w.l.o.g., one can consider $\UCQ$'s with atoms over
  \emph{basic} concepts $B(t)$; one can convert such a $\UCQ$ into the
  one over the standard syntax by using fresh existentially
  quantified variables.
  \begin{lemma}\label{lem:query-vs-gen}
    Let $\T$ TBox, $B$ a concept, $\vec{B}$ and $\vec{R}$ the sets of
    concepts and roles, respectively, and the instance query
    \begin{align*}
      q_{\vec{B}, \vec{R}} =\exists x \bigl(\bigwedge_{B' \in \vec{B}}
      B'(x) \land \bigwedge_{R' \in \vec{R}} R'(o,x)\bigr).
    \end{align*}
    Then $\Uni_{\tup{\T, \{B(o)\}}} \models q_{\vec{B}, \vec{R}}$ iff
    there exists $y \in \dom[\Uni_{\tup{\T,\{B(o)\}}}]$, such that
    \begin{enumerate}[thmparts, start=1]
    \item \label{tmpt:lem-query-vs-gen-conc} $\vec{B} \subseteq
      \ttype{\Uni_{\tup{\T,\{B(o)\}}}}(y)$,
    \item \label{tmpt:lem-query-vs-gen-role} $\vec{R} \subseteq
      \rtype{\Uni_{\tup{\T,\{B(o)\}}}}(o, y)$.
    \end{enumerate}
  \end{lemma}
  \begin{proof}
    Straightforward using Lemma
    \ref{lem:types-chase} and the definition of $\Uni_{\tup{\T, \{B(o)\}}}$.
  \end{proof}

  Now, assume \ref{req:t2-is-repr-iff-conds-roles-gen} is violated, so
  there exists $B \in \consc$ over $\Sigma$ and a role $R$ such that
  $o \leadsto_{\srkb} w_{[R]}$ and for all $y \in \dom[\Uni_{\tgkb}]$
  either $\ttype[\Xi]{\Uni_{\srkb}}(o w_{[R]}) \not \subseteq
  \ttype{\Uni_{\tgkb}} (y)$ or $\rtype[\Xi]{\Uni_{\srkb}}(o,w_{[R]})
  \not \subseteq \rtype{\Uni_{\tgkb}} (o,y)$. Then, by Lemma
  \ref{lem:query-vs-gen} with $\vec{B}=\ttype[\Xi]{\Uni_{\srkb}}(o
  w_{[R]})$, $\vec{R}=\rtype[\Xi]{\Uni_{\srkb}}(o,o w_{[R]})$ and
  $\T=\T_1 \cup \T_{12}$ it follows $\Uni_{\srkb} \models q_{\vec{B},
    \vec{R}}$. On the other hand, by Lemma \ref{lem:query-vs-gen} with
  $\T=\T_2 \cup \T_{12}$ it follows $\Uni_{\tgkb} \not \models
  q_{\vec{B}, \vec{R}}$. Using Claim~\ref{th:query-kb-vs-uni} we then
  obtain $\srkb \models q_{\vec{B}, \vec{R}}$ and $\tgkb \not \models
  q_{\vec{B}, \vec{R}}$.

  The case when \ref{req:t2-is-repr-iff-conds-roles-gen-inv} is
  violated is analogous to the case above. The proof is complete.

\end{proof}
\begin{theorem}\label{th:memb-repres-nlogspace-compl}
  The membership problem for $\UCQ$-representability is
  \NLOGSPACE-complete.
\end{theorem}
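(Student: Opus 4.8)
The plan is to get the \NLOGSPACE upper bound from the structural characterization of Lemma~\ref{lem:t2-is-repr-iff-conds}, and to get \NLOGSPACE-hardness by a logspace reduction from the $s$-$t$-reachability problem for directed graphs (a standard \NLOGSPACE-complete problem). For the upper bound, recall that by Lemma~\ref{lem:t2-is-repr-iff-conds} it suffices to check its six conditions, and that \NLOGSPACE is closed under complement (Immerman--Szelepcs\'enyi) and under polynomially many conjunctions, so it is enough that each condition is in \NLOGSPACE. Conditions~1--4 are biconditionals quantified over the polynomially many basic concepts (resp.\ roles) over $\Sigma$, and each instance only asks whether a \dlliter KB with a one- or two-assertion ABox is consistent, or whether a \dlliter TBox entails a concept (resp.\ role) inclusion; both are standard \dlliter reasoning tasks, hence in \NLOGSPACE. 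Thus a violation of conditions~1--4 is witnessed by guessing one offending concept/role pair and running a constant number of such \NLOGSPACE tests (using closure under complement for the ``only-if'' halves), so conditions~1--4 are in \NLOGSPACE.

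The delicate part is conditions~5 and~6, which on their face quantify over the infinite domains of $\Uni_{\srkb}$ and $\Uni_{\tgkb}$, with $B$ ranging over the $\T_1$-consistent basic concepts over $\Sigma$. The key observation is that, by Lemma~\ref{lem:Unitype-KBtype-ABox}, the sets $\ttype[\Xi]{\Uni_\K}(\sigma w_{[R]})$ and $\rtype[\Xi]{\Uni_\K}(\sigma,\sigma w_{[R]})$ depend only on $[R]$ and the TBox of $\K$; in particular $\ttype[\Xi]{\Uni_{\srkb}}(o w_{[R]})=\{B'\text{ over }\Xi\mid \T_1\cup\T_{12}\vdash\SOMET{R^-}\ISA B'\}$ and $\rtype[\Xi]{\Uni_{\srkb}}(o,o w_{[R]})=\{Q\text{ over }\Xi\mid \T_1\cup\T_{12}\vdash R\ISA Q\}$, and likewise for $\tgkb$. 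Consequently, in condition~5 the witness $y$ can be limited to $o$ itself (whose $\Xi$-type is $\{B'\text{ over }\Xi\mid \T_2\cup\T_{12}\vdash B\ISA B'\}$), or, when $\rtype[\Xi]{\Uni_{\srkb}}(o,o w_{[R]})$ is nonempty, to an element generated directly from $o$ by $\leadsto_{\tgkb}$, or, when that role-type is empty, to any $w_{[R'']}$ reachable from $o$ in the polynomially-sized ``generating graph'' of $\tgkb$; and all the required type inclusions, the relation $o\leadsto_{\srkb}w_{[R]}$, and reachability in the generating graph reduce to \dlliter-entailment tests and graph-reachability queries, all in \NLOGSPACE. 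Hence the negation of condition~5 is a bounded alternation of logarithmic-size guesses over \NLOGSPACE-decidable predicates, and, peeling off the quantifiers using closure of \NLOGSPACE under complement at each level, condition~5 --- and symmetrically condition~6 --- lies in \NLOGSPACE. This yields the upper bound.

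For the lower bound, I would reduce from $s$-$t$-reachability. Given a directed graph $G=(V,E)$ and $s,t\in V$, take concept names $\Sigma=\{A_v\mid v\in V\}\cup\{F,G\}$ and $\Xi=\{A_v'\mid v\in V\}\cup\{F',G'\}$, and set $\T_1=\{A_u\ISA A_v\mid(u,v)\in E\}\cup\{F\ISA A_s,\ A_t\ISA G\}$, $\T_{12}=\{A_v\ISA A_v'\mid v\in V\}\cup\{F\ISA F',\ G\ISA G'\}$, and $\T_2=\{A_u'\ISA A_v'\mid(u,v)\in E\}\cup\{F'\ISA A_s',\ A_t'\ISA G',\ F'\ISA G'\}$. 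Since no $\Sigma$- or $\Xi$-symbol is a role and there are no existential restrictions or disjointness assertions, by Lemma~\ref{lem:srkb-query-equiv-tgkb} it suffices that $\tup{\T_1\cup\T_{12},\A}$ and $\tup{\T_2\cup\T_{12},\A}$ be $\Xi$-query equivalent for all $\Sigma$-ABoxes $\A$, which in this setting amounts to $\T_1\cup\T_{12}$ and $\T_2\cup\T_{12}$ entailing the same inclusions $B\ISA B'$ with $B$ over $\Sigma$ and $B'$ over $\Xi$. A routine case analysis shows $\T_2\setminus\{F'\ISA G'\}$ always satisfies this; and adding $F'\ISA G'$ preserves it iff $F'\ISA G'$ is already entailed by $\T_2\setminus\{F'\ISA G'\}\cup\T_{12}$, i.e.\ iff $\T_2\cup\T_{12}\vdash F\ISA G'$ through $F\ISA F'\ISA A_s'\ISA\cdots\ISA A_t'\ISA G'$, which holds iff $t$ is reachable from $s$ --- matching $\T_1\cup\T_{12}\vdash F\ISA G'$, which holds iff $t$ is reachable from $s$, through $F\ISA A_s\ISA\cdots\ISA A_t\ISA G\ISA G'$. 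Hence $\T_2$ is a \UCQ-representation of $\T_1$ under $\M=(\Sigma,\Xi,\T_{12})$ iff $t$ is reachable from $s$, and the reduction is clearly computable in logarithmic space.

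The step I expect to be the main obstacle is the \NLOGSPACE upper bound for conditions~5 and~6 of Lemma~\ref{lem:t2-is-repr-iff-conds}: making precise that their apparently infinitary, second-order quantification over canonical models collapses to a constant-depth alternation over \NLOGSPACE-checkable atoms (entailment and reachability in the generating graph), which rests on the ``type-locality'' of Lemma~\ref{lem:Unitype-KBtype-ABox} and on repeated use of the closure of \NLOGSPACE under complement.
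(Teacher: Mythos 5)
Your proposal is correct and follows essentially the same route as the paper: the upper bound verifies the conditions of Lemma~\ref{lem:t2-is-repr-iff-conds} via \NLOGSPACE entailment/consistency tests together with reachability along the generating relation $\leadsto$ (which is exactly the paper's tail-guessing argument for conditions~(v)--(vi), relying on the same type-locality of Lemma~\ref{lem:Unitype-KBtype-ABox} and closure of \NLOGSPACE under complement), and the lower bound is the paper's reduction from directed graph reachability encoded as chains of atomic concept inclusions. The only cosmetic difference is in the hardness gadget: you place the shortcut $F' \ISA G'$ in $\T_2$, so the representation fails via the right-to-left half of condition~(iii), whereas the paper places $V_k \ISA V_m$ in $\T_1$ and tests the left-to-right half; both variants are valid logspace reductions.
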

\begin{proof}
  The lower bound can be obtained by the reduction from the directed
  graph reachability problem, which is known to be NLogSpace-hard:
  given a graph $\mathcal{G} = (\mathcal{V}, \mathcal{E})$ and a pair
  of vertices $v_k, v_m \in \mathcal{V}$, decide if there is a
  directed path from $v_k$ to $v_m$. To encode the problem, we need a
  set of $\Sigma$ concept names $\{V_i \mid v_i \in \mathcal{V}\}$ and
  a set of $\Xi$ concept names $\{V_i' \mid v_i \in
  \mathcal{V}\}$. Consider $\T_1= \{ V_k \sqsubseteq V_m\} \cup \{ V_i
  \sqsubseteq V_j \mid (v_i, v_j) \in \mathcal{E}\}$, $\T_{12} = \{
  V_i \sqsubseteq V_i' \mid v_i \in \mathcal{V}\}$, and $\T_2 = \{V_i'
  \sqsubseteq V_j' \mid (v_i, v_j) \in \mathcal{E}\}$. One can easily
  verify that the condition~\ref{req:t2-is-repr-iff-conds-conc-equal}
  of Lemma~\ref{lem:t2-is-repr-iff-conds} is satisfied iff there is a
  directed path from $v_k$ to $v_m$ in $\mathcal{G}$, whereas the
  other conditions of Lemma~\ref{lem:t2-is-repr-iff-conds} are
  satisfied trivially. Therefore,
  \begin{proposition}
    There is a directed path from $v_k$ to $v_m$ in $\mathcal{G}$ iff
    $\T_2$ is a representation for $\T_1$ under $\M=(\Sigma, \Xi,
    \T_{12})$.
  \end{proposition}
  This concludes the proof of the lower bound. For the upper bound, we
  show that the
  conditions~\ref{req:t2-is-repr-iff-conds-conc-cons}--\ref{req:t2-is-repr-iff-conds-roles-gen-inv}
  of Lemma~\ref{lem:t2-is-repr-iff-conds} can be verified in
  \NLOGSPACE. It is well known (see, e.g., \cite{ACKZ09}), that given
  a pair of $\dlliter$ concepts $B, B'$, and a TBox $\T$, it can be
  verified in \NLOGSPACE, if $B, B'$ is $\T$ consistent (using an
  algorithm, based on directed graph reachability solving procedure);
  the same holds for a pair of $\dlliter$ roles $R$, $R'$. The same
  algorithm can be straightforwardly adopted to check, if $\T \vdash B
  \sqsubseteq B'$ or $\T \vdash R \sqsubseteq R'$. Therefore, clearly,
  the
  conditions~\ref{req:t2-is-repr-iff-conds-conc-cons}--\ref{req:t2-is-repr-iff-conds-roles-equal}
  can be verified in \NLOGSPACE.

  The conditions \ref{req:t2-is-repr-iff-conds-roles-gen} and
  \ref{req:t2-is-repr-iff-conds-roles-gen-inv} are slightly more
  involved; first of all, observe that, given a concept $B$ and a role
  $R$, it can be checked in \NLOGSPACE, whether $o
  \leadsto_{\tup{\T,\{B(o)\}}} w_{[R]}$, using an algorithm based on
  the directed graph reachability solving procedure. At the same time,
  given $z \in \{o\} \cup \{w_{[R]} \mid R - \text{ role}\}$, we can
  verify, if there exists $y \in \dom[\Uni_{\tup{\T,\{B(o)\}}}]$ with
  $z = \tail(y)$: we ``follow'' the sequence of roles $R_1, \dots, R_n
  = R$ (with $n\geq 0$) in the way that when we ``guess'' $R_{i+1}$,
  we check $w_{[R_i]} \leadsto_{\tup{\T,\{B(o)\}}} w_{[R_{i+1}]}$ (by
  the algorithm, similar to the one for checking $o
  \leadsto_{\tup{\T,\{B(o)\}}} w_{[R]}$), and ``forget'' $R_i$.

  Furthermore, in a similar way, as testing $\T \vdash B \sqsubseteq
  B'$, one can, check for a concept $B'$, if $B' \in
  \ttype[\Xi]{\Uni_{\tup{\T,\{B(o)\}}}}(o w_{[R]})$ in \NLOGSPACE; the
  same holds for checking if a role $R' \in
  \rtype[\Xi]{\Uni_{\srkb}}(o, o w_{[R]})$, and, then, for checking
  $B' \in \ttype[\Xi]{\Uni_{\tup{\T,\{B(o)\}}}}(y)$, for $y$ as
  above. By combining the algorithms outlined above, one can produce a
  procedure that checks the conditions
  \ref{req:t2-is-repr-iff-conds-roles-gen} and
  \ref{req:t2-is-repr-iff-conds-roles-gen-inv} in \NLOGSPACE.
\end{proof}
%
 


\section{Non-emptyness Problem for $\UCQ$-representability}

The definitions that follow are needed for the non-emptyness problem
of $\UCQ$-representability. Let the mapping $\M=(\Sigma, \Xi,
\T_{12})$, $\T_1$ and $\T_2$ TBoxes over, respectively, $\Sigma$ and
$\Xi$. For a pair of concepts $B'$, $C'$ be over $\Xi$, we say that
$\T_1 \cup \T_{12}$ is \emph{closed under inclusion between $B'$ and
  $C'$} if the following is satisfied for each $\T_1$-consistent
concept $B$ over $\Sigma$:
\begin{enumerate}[resume*=reqs]
\item \label{req:def-incl-clo-conc-1} $\T_1 \cup \T_{12} \vdash B \ISA
  B'$ implies $\T_1 \cup \T_{12} \vdash B \ISA C'$;
\item \label{req:def-incl-clo-conc-2} if $B'=\exists Q'$, then
  $\exists Q'^- \in \ttype[\Xi]{\Uni_{\srkb}}(o)$ implies $o
  \leadsto_{\srkb} w_{[Q]}$ for some role $Q$ such that $Q'^- \in
  \rtype[\Xi]{\Uni_{\srkb}}(o, o w_{[Q]})$ and $C' \in
  \ttype[\Xi]{\Uni_{\srkb}}(o w_{[Q]})$.
\end{enumerate}
Then, for a pair $R', Q'$ of roles over $\Xi$, we say $\T_1 \cup
\T_{12}$ is \emph{closed under inclusion between} $R'$ \emph{and} $Q'$
if the following is satisfied:
\begin{enumerate}[resume*=reqs]
\item \label{req:def-incl-clo-roles-1} $\T_1 \cup \T_{12} \vdash R
  \ISA R'$ implies $\T_1 \cup \T_{12} \vdash R \ISA Q'$ for each
  $\T_1$-consistent role $R$ over $\Sigma$;
%
\item \label{req:def-incl-clo-roles-2} $\T_1 \cup \T_{12}$ is closed
  under inclusion between $\exists R'$ and $\exists Q'$;
  %
\item \label{req:def-incl-clo-roles-2-inv} $\T_1 \cup \T_{12}$ is
  closed under inclusion between $\exists R'^-$ and $\exists Q'^-$.
\end{enumerate}

Next, we say $\T_1 \cup \T_{12}$ is \emph{closed under disjointness
  between} $B'$ and $C'$ if the following is satisfied:
\begin{enumerate}[reqs]
\item \label{req:def-disj-clo-conc-1} for each $\T_1 \cup
  \T_{12}$-consistent pair of concepts $B,C$ over $\Sigma$ it is
  \emph{not the case} $\T_1 \cup \T_{12} \vdash B \ISA B'$ and $\T_1
  \cup \T_{12} \vdash C \ISA C'$; %
\item \label{req:def-disj-clo-conc-2} for each $\T_1 \cup
  \T_{12}$-consistent concept $B$ over $\Sigma_1$ and each role $R$
  such that $o \leadsto_{\srkb} w_{[R]}$ it is \emph{not the case}
  $B', C' \in \ttype[\Xi]{\Uni_{\srkb}}(o w_{[R]})$.
\end{enumerate}
Then, $\T_1 \cup \T_{12}$ is \emph{closed under disjointness between}
$R'$ \emph{and} $Q'$ if the following is satisfied:
\begin{enumerate}[reqs]
\item \label{req:def-disj-clo-roles-1} for each $\T_1 \cup
  \T_{12}$-consistent pair of roles $R,Q$ over $\Sigma$ it is
  \emph{not the case} $\T_1 \cup \T_{12} \vdash R \ISA R'$ and $\T_1
  \cup \T_{12} \vdash Q \ISA Q'$;
%
\item \label{req:def-disj-clo-roles-2} for each $\T_1 \cup
  \T_{12}$-consistent concept $B$ over $\Sigma_1$ and each role $R$
  such that $o \leadsto_{\srkb} w_{[R]}$ it is \emph{neither the case}
  $R', Q' \in \rtype[\Xi]{\Uni_{\srkb}}(o, o w_{[R]})$ \emph{nor} $R'^-, Q'^-
  \in \rtype[\Xi]{\Uni_{\srkb}}(o, o w_{[R]})$
\end{enumerate}

Define a \emph{generating pass for a concept } $B$ over $\Sigma$ as a
pair $\pi=(\langle C_0, C_1, \dots C_n \rangle, L)$, where $\langle
C_0, C_1, \dots C_n \rangle$ a is tuple of concepts of the length
greater or equal $1$, $C_0=B$, and for each $1 \leq i \leq n$ it holds
$C_i=\SOMET{Q_i^-}$ for some role $Q_i$; then $L$ is a labeling
function
$$L: C_i \cup C_i \times C_j \mapsto 2^{\Xi \text{ - concepts}}
\cup 2^{\Xi \text{ -roles}}$$
such that $L(C_i, C_j)= \emptyset$ for $j \neq i+1$. It is said that a
generating pass $\pi$ for $B$ is \emph{conform with} $\T_1 \cup
\T_{12}$ if the following is satisfied:
\begin{enumerate}[reqs]
\item \label{req:xi-pass-def-chain}$\exists Q \in L(C_i)$ or $\exists
  Q=C_i$ for all $0 \leq i < n$ and roles $Q$ such that $C_{i+1} =
  \exists Q^-$;
\item \label{req:xi-pass-def-conc} For each $0 \leq i \leq n$ and $B'
  \in L(C_i)$ there exists $C'$ over $\Xi$ such that $\T_{12} \vdash
  C_i \ISA C'$ and $\T_1 \cup \T_{12}$ is closed under inclusion
  between $C'$ and $B'$.
\item \label{req:xi-pass-def-role} For each $0 \leq i < n$, role $Q$
  such that $C_{i+1} = \exists Q^-$ and $R' \in L(C_i, C_{i+1})$ there
  exists $Q'$ over $\Xi$ such that $\T_{12} \vdash Q \ISA Q'$ and
  $\T_1 \cup \T_{12}$ is closed under inclusion between $Q'$ and $R'$.
\end{enumerate}

\subsection{Basic Preliminary Results}

  \begin{lemma}\label{lem:t2-repr-then-closed}
    Let $\M=(\Sigma, \Xi, \T_{12})$ be a mapping, and a $\Sigma$-TBox
    $\T_2$ be is a representation for a $\Xi$-TBox $\T_1$ under
    $\T_{12}$. Then $\T_1 \cup \T_{12}$ is closed under:
    \begin{enumerate}[thmparts, start=1]
    \item \label{tmpt:lem-t2-repr-then-closed-incl} inclusion between
      concepts $B'$ and $C'$ (roles $R'$ and $Q'$) for all $B',C'$
      over $\Xi$ ($R',Q'$ over $\Xi$) such that $\T_2 \vdash B' \ISA
      C'$ ($\T_2 \vdash R' \ISA Q'$);
    \item \label{tmpt:lem-t2-repr-then-closed-disj} disjointness
      between concepts $B'$ and $C'$ (roles $R'$ and $Q'$) for all
      $B',C'$ over $\Xi$ ($R',Q'$ over $\Xi$) such that $\T_2 \vdash
      B' \ISA D'$, $\T_2 \vdash C' \ISA E'$, and $(D' \AND E' \ISA
      \bot) \in \T_2$ for some concepts $D', E'$ over $\Xi$ ($\T_2
      \vdash R' \ISA S'$, $\T_2 \vdash Q' \ISA T'$, and $(S' \AND T'
      \ISA \bot) \in \T_2$ for some roles $S', T'$ over $\Xi$);
    \item \label{tmpt:lem-t2-repr-then-closed-incons} disjointness
      between $B'$ and $B'$ ($R'$ and $R'$) for all $\T_2$
      inconsistent concepts $B'$ (roles $R'$).
    \end{enumerate}
  \end{lemma}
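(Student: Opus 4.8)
The plan is to derive the three closure conditions from the membership characterization of Lemma~\ref{lem:t2-is-repr-iff-conds}, which applies since $\T_2$ is a \UCQ-representation of $\T_1$ under $\T_{12}$, together with Lemma~\ref{lem:srkb-query-equiv-tgkb} (so that $\srkb[\A]$ is $\Xi$-query equivalent to $\tgkb[\A]$ for every $\A$ over $\Sigma$ consistent with $\T_1$) and the standard canonical-model facts (Lemmas~\ref{lem:Unitype-KBtype-ABox}, \ref{lem:kr-iff-b-cons}, \ref{lem:kr-to-all-b-cons} and Claim~\ref{th:query-kb-vs-uni}). I will fix a $\T_1$-consistent $B$ over $\Sigma$ and abbreviate $\srkb=\tup{\T_1\cup\T_{12},\{B(o)\}}$, $\tgkb=\tup{\T_2\cup\T_{12},\{B(o)\}}$; by conditions~\ref{req:t2-is-repr-iff-conds-conc-cons}--\ref{req:t2-is-repr-iff-conds-roles-cons} together with Lemma~\ref{lem:kr-iff-b-cons}, $\srkb$ is consistent iff $\tgkb$ is, and the same holds for the ABox $\{B(o),C(o)\}$ when $(B,C)$ is a $\T_1$-consistent pair. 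In every place below the degenerate case in which the relevant source KB is inconsistent is immediate (an inconsistent KB entails every inclusion and every membership assertion), so I will concentrate on the consistent case.

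For item~\ref{tmpt:lem-t2-repr-then-closed-incl}, fix $B',C'$ over $\Xi$ with $\T_2\vdash B'\ISA C'$ (the role case $\T_2\vdash R'\ISA Q'$ reduces to this one applied to $\SOMET{R'},\SOMET{Q'}$ and to $\SOMET{R'^-},\SOMET{Q'^-}$). The ``left-hand side'' requirements \ref{req:def-incl-clo-conc-1} and \ref{req:def-incl-clo-roles-1} follow by applying condition~\ref{req:t2-is-repr-iff-conds-conc-equal} (resp.~\ref{req:t2-is-repr-iff-conds-roles-equal}) twice: if $\T_1\cup\T_{12}\vdash B\ISA B'$ with $B$ now $\T_1\cup\T_{12}$-consistent, this transfers to $\T_2\cup\T_{12}\vdash B\ISA B'$, then to $\T_2\cup\T_{12}\vdash B\ISA C'$, and back to $\T_1\cup\T_{12}\vdash B\ISA C'$. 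For the ``generating'' requirement \ref{req:def-incl-clo-conc-2} (and the role variants \ref{req:def-incl-clo-roles-2}, \ref{req:def-incl-clo-roles-2-inv}), assume $B'=\SOMET{Q'}$ and $\SOMET{Q'^-}\in\ttype[\Xi]{\Uni_{\srkb}}(o)$. Then $\srkb\models\SOMET{Q'^-}(o)$, so by $\Xi$-query equivalence $\tgkb\models\SOMET{Q'^-}(o)$, which by the shape of $\Uni_{\tgkb}$ (the ABox $\{B(o)\}$ carries no role assertions) forces $o\leadsto_{\tgkb}w_{[R]}$ with $[R]\leq_{\T_2\cup\T_{12}}[Q'^-]$; hence $\T_2\cup\T_{12}\vdash\SOMET{R^-}\ISA B'$, and with $\T_2\vdash B'\ISA C'$ and Lemma~\ref{lem:Unitype-KBtype-ABox} we get $C'\in\ttype[\Xi]{\Uni_{\tgkb}}(o w_{[R]})$ and $Q'^-\in\rtype[\Xi]{\Uni_{\tgkb}}(o,o w_{[R]})$. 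Now condition~\ref{req:t2-is-repr-iff-conds-roles-gen-inv} (applicable since $B$ is $\T_1\cup\T_{12}$-consistent) yields $y\in\dom[\Uni_{\srkb}]$ with $C'\in\ttype{\Uni_{\srkb}}(y)$ and $Q'^-\in\rtype{\Uni_{\srkb}}(o,y)$; since $Q'^-\in\rtype{\Uni_{\srkb}}(o,y)$ and $o$ is a constant, $y$ must be of the form $o w_{[Q]}$ with $o\leadsto_{\srkb}w_{[Q]}$, and this $Q$ is the witness demanded by \ref{req:def-incl-clo-conc-2}.

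For items~\ref{tmpt:lem-t2-repr-then-closed-disj} and \ref{tmpt:lem-t2-repr-then-closed-incons} I argue by contraposition, always producing a contradiction with consistency. The ``left-hand side'' requirements \ref{req:def-disj-clo-conc-1}, \ref{req:def-disj-clo-roles-1} are handled as follows: if $(B,C)$ is a $\T_1\cup\T_{12}$-consistent pair over $\Sigma$ with $\T_1\cup\T_{12}\vdash B\ISA B'$ and $\T_1\cup\T_{12}\vdash C\ISA C'$, then conditions~\ref{req:t2-is-repr-iff-conds-conc-equal}/\ref{req:t2-is-repr-iff-conds-roles-equal} give $\T_2\cup\T_{12}\vdash B\ISA D'$ and $\T_2\cup\T_{12}\vdash C\ISA E'$ (using $\T_2\vdash B'\ISA D'$, $\T_2\vdash C'\ISA E'$ for item~\ref{tmpt:lem-t2-repr-then-closed-disj}, and $D'=E'=B'$ with $B'$ being $\T_2$-inconsistent for item~\ref{tmpt:lem-t2-repr-then-closed-incons}); since $D'\AND E'\ISA\bot$ belongs to $\T_2$ (resp.\ $B'$ is $\T_2$-inconsistent), $(B,C)$ becomes $\T_2\cup\T_{12}$-inconsistent, whence by condition~\ref{req:t2-is-repr-iff-conds-conc-cons}/\ref{req:t2-is-repr-iff-conds-roles-cons} it is $\T_1\cup\T_{12}$-inconsistent, a contradiction. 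For the ``generating'' requirements \ref{req:def-disj-clo-conc-2}, \ref{req:def-disj-clo-roles-2}: if $B',C'\in\ttype[\Xi]{\Uni_{\srkb}}(o w_{[R]})$ (resp.\ $B'$, or the two roles, analogously) for a $\T_1\cup\T_{12}$-consistent $B$, then $\srkb\models\exists x\,(B'(x)\wedge C'(x))$ (resp.\ $\srkb\models\exists x\,B'(x)$, $\srkb\models\exists x,y\,(R'(x,y)\wedge Q'(x,y))$, or its inverse), so by $\Xi$-query equivalence the same query holds in the \emph{consistent} KB $\tgkb$, forcing $\Uni_{\tgkb}$ to contain an element (resp.\ a pair) realizing $D'\AND E'$ or a $\T_2$-inconsistent concept/role; this contradicts Lemma~\ref{lem:kr-to-all-b-cons} and the fact that any model of $\T_2$ satisfies $D'\AND E'\ISA\bot$.

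The main obstacle is the generating requirement \ref{req:def-incl-clo-conc-2} and its role analogues. The tempting direct approach — chase $\Uni_{\srkb}$ from $B(o)$ and track $\Xi$-types along the generated path — gets stuck because conditions~\ref{req:t2-is-repr-iff-conds-conc-equal}--\ref{req:t2-is-repr-iff-conds-roles-equal} only transfer inclusions whose left-hand side is over $\Sigma$, whereas the successor of $o$ witnessing $\SOMET{Q'^-}$ in $\Uni_{\srkb}$ may have been produced ``directly'' by a concept inclusion of $\T_{12}$ and carry only $\Xi$-predicates. Routing the argument through $\tgkb$ and pulling the witness back with condition~\ref{req:t2-is-repr-iff-conds-roles-gen-inv} avoids this, but the remaining delicate point — which I expect to be the real work — is verifying that the element $y$ handed back by condition~\ref{req:t2-is-repr-iff-conds-roles-gen-inv} is \emph{forced} to be a generated successor $o w_{[Q]}$ with $Q'^-$ sitting on the edge $(o,o w_{[Q]})$, and not merely in the type of $y$; this relies on unwinding the precise definitions of $\leadsto_{\srkb}$ and of $\Int[\Uni_{\srkb}]{P}$ for a constant first argument, together with the absence of any inclusion in $\T_1\cup\T_{12}$ with left-hand side over $\Xi$.
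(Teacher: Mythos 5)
You prove the statement by essentially the same route as the paper: all three closure properties are extracted from the membership characterization (Lemma~\ref{lem:t2-is-repr-iff-conds}), using conditions~\ref{req:t2-is-repr-iff-conds-conc-equal}/\ref{req:t2-is-repr-iff-conds-roles-equal} to transfer inclusions back and forth, condition~\ref{req:t2-is-repr-iff-conds-roles-gen-inv} to pull the generating witness back into $\Uni_{\srkb}$, and conditions~\ref{req:t2-is-repr-iff-conds-conc-cons}/\ref{req:t2-is-repr-iff-conds-roles-cons} to turn a disjointness violation into a consistency-transfer contradiction, with only cosmetic deviations (you route the generating disjointness cases through $\Xi$-query equivalence and Claim~\ref{th:query-kb-vs-uni} where the paper uses condition~\ref{req:t2-is-repr-iff-conds-roles-gen} plus Lemmas~\ref{lem:types-chase} and~\ref{lem:kr-to-all-b-cons}, and you reduce the role-inclusion closure to the concept case via $\SOMET{R'},\SOMET{Q'}$). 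Your explicit argument that the witness $y$ returned by condition~\ref{req:t2-is-repr-iff-conds-roles-gen-inv} must be a direct generated successor $o\,w_{[Q]}$ (because the ABox $\{B(o)\}$ yields no role assertions between constants) fills in a step the paper leaves implicit, and like the paper you effectively restrict attention to $\T_1\cup\T_{12}$-consistent witnesses $B$, which is the reading the membership characterization supports.
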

  \begin{proof}
    We assume that $\T_2$ is a representation, but
    \ref{tmpt:lem-t2-repr-then-closed-incl},
    \ref{tmpt:lem-t2-repr-then-closed-disj} or
    \ref{tmpt:lem-t2-repr-then-closed-incons} is violated, and derive
    a contradiction. Let, first,
    \ref{tmpt:lem-t2-repr-then-closed-incl} be violated for concepts,
    i.e., there are $B',C'$ over $\Xi$ such that $\T_2 \vdash B' \ISA
    C'$ and $\T_1 \cup \T_{12}$ is \emph{not} closed under inclusion
    between $B'$ and $C'$. Then, \ref{req:def-incl-clo-conc-1} or
    \ref{req:def-incl-clo-conc-2} must be violated for some $B \in
    \consc$ over $\Sigma$. Assume \ref{req:def-incl-clo-conc-1} is
    violated, i.e., $B' \in \ttype[\Xi]{\Uni_{\srkb}}(o)$ and $C' \not
    \in \ttype[\Xi]{\Uni_{\srkb}}(o)$. By Lemma
    \ref{lem:t2-is-repr-iff-conds}
    \ref{req:t2-is-repr-iff-conds-conc-equal} we get the
    contradiction. If \ref{req:def-incl-clo-conc-2} is violated, i.e.,
    $B'= \exists Q'$, $\exists Q'^- \in \ttype[\Xi]{\Uni_{\srkb}}(o)$,
    and for all roles $Q$ such that $o \leadsto_{\srkb} w_{[Q]}$ and
    $Q'^- \in \rtype[\Xi]{\Uni_{\srkb}}(o, o w_{[Q]})$ it holds $C' \not \in
    \ttype[\Xi]{\Uni_{\srkb}}(o)$. By Lemma \ref{lem:t2-is-repr-iff-conds}
    \ref{req:t2-is-repr-iff-conds-conc-equal} obtain $\exists Q'^- \in
    \ttype[\Xi]{\Uni_{\tgkb}}(o)$ and since $\T_2 \vdash B' \ISA C'$ it
    follows there exists a role $Q$ such that $o \leadsto_{\tgkb}
    w_{[Q]}$, $Q'^- \in \rtype[\Xi]{\Uni_{\tgkb}}(o, o w_{[Q]})$, and $C' \in
    \ttype[\Xi]{\Uni_{\tgkb}}(o w_{[Q]})$. Using
    \ref{req:t2-is-repr-iff-conds-roles-gen-inv} we obtain a
    contraction.

    Suppose there are roles $R',Q'$ over $\Xi$ such that $\T_2 \vdash
    R' \ISA Q'$ and $\T_1 \cup \T_{12}$ is \emph{not} closed under
    inclusion between $R'$ and $Q'$, then one of
    \ref{req:def-incl-clo-roles-1}, \ref{req:def-incl-clo-roles-2},
    \ref{req:def-incl-clo-roles-2-inv} is violated. Assume it is
    \ref{req:def-incl-clo-roles-1}, then $\T_1 \cup \T_{12} \vdash R
    \ISA R'$ and it is not the case $\T_1 \cup \T_{12} \vdash R \ISA
    Q'$ for some $R \in \consr$ over $\Sigma$. Using Lemma
    \ref{lem:t2-is-repr-iff-conds}
    \ref{req:t2-is-repr-iff-conds-roles-equal} we get the
    contradiction. Assume \ref{req:def-incl-clo-roles-2} is violated,
    then there is $B \in \consc$ over $\Sigma$ such that $\exists R'^-
    \in \ttype[\Xi]{\Uni_{\srkb}}(o)$ and for all roles $Q$ such that
    $o \leadsto_{\srkb} w_{[Q]}$ and $R'^- \in
    \rtype[\Xi]{\Uni_{\srkb}}(o, o w_{[Q]})$ it holds $\exists Q' \not
    \in \ttype[\Xi]{\Uni_{\srkb}}(o)$. By Lemma
    \ref{lem:t2-is-repr-iff-conds}
    \ref{req:t2-is-repr-iff-conds-conc-equal} obtain $\exists R'^- \in
    \stype[\Xi]{\Uni_{\tgkb}}(o)$ and since $\T_2 \vdash R' \ISA Q'$
    it follows there exists a role $Q$ such that $o \leadsto_{\tgkb}
    w_{[Q]}$, $R'^- \in \rtype[\Xi]{\Uni_{\tgkb}}(o, o w_{[Q]})$, and
    $\exists Q' \in \ttype[\Xi]{\Uni_{\tgkb}} (o w_{[Q]})$. Using
    \ref{req:t2-is-repr-iff-conds-roles-gen-inv} we obtain a
    contraction. The proof when \ref{req:def-incl-clo-roles-2-inv} is
    violated is analogous.

    Let now \ref{tmpt:lem-t2-repr-then-closed-disj} be violated for
    concepts, i.e., there are $B',C', D', E'$ over $\Xi$ such that
    $\T_2 \vdash B' \ISA D'$, $\T_2 \vdash C' \ISA E'$, $(D' \AND E'
    \ISA \bot) \in \T_2$ and $\T_1 \cup \T_{12}$ is \emph{not} closed
    under disjointness between $B'$ and $C'$.  Then,
    \ref{req:def-disj-clo-conc-1} or \ref{req:def-disj-clo-conc-2}
    must be violated. If it is \ref{req:def-disj-clo-conc-1} there is
    a $\T_1 \cup \T_{12}$-consistent pair of concepts $B,C$ such that
    $B' \in \ttype[\Xi]{\Uni_{\srkb}}(o)$ and $C' \in
    \ttype[\Xi]{\Uni_{\srkb[C]}}(o)$. By Lemma
    \ref{lem:t2-is-repr-iff-conds}
    \ref{req:t2-is-repr-iff-conds-conc-equal} obtain $B' \in
    \ttype[\Xi]{\Uni_{\tgkb}}(o)$ and $C' \in
    \ttype[\Xi]{\Uni_{\tgkb[C]}}(o)$, then by $\T_2 \vdash B' \ISA
    D'$, $\T_2 \vdash C' \ISA E'$ and $(D' \AND E' \ISA \bot) \in
    \T_2$ it follows the pair $B,C$ is $\T_2 \cup \T_{12}$
    inconsistent. We obtained a contradiction to Lemma
    \ref{lem:t2-is-repr-iff-conds}
    \ref{req:t2-is-repr-iff-conds-conc-cons}. If
    \ref{req:def-disj-clo-conc-2} is violated there is $B \in \consc$
    over $\Sigma$ and a role $R$ such that $o \leadsto_{\srkb}
    w_{[R]}$ and $B',C' \in \ttype[\Xi]{\Uni_{\srkb}}(o w_{[R]})$. By
    Lemma \ref{lem:t2-is-repr-iff-conds}
    \ref{req:t2-is-repr-iff-conds-roles-gen} there is $y \in
    \dom[\Uni_{\tgkb}]$ such that $B', C' \in
    \ttype[\Xi]{\Uni_{\tgkb}}(y)$. Using Lemmas~\ref{lem:types-chase},
    \ref{lem:kr-to-all-b-cons}, $\T_2 \vdash B' \ISA D'$, $\T_2 \vdash
    C' \ISA E'$ and $(D' \AND E' \ISA \bot) \in \T_2$ it follows $B$
    is $\T_2 \cup \T_{12}$ inconsistent, which is a contradiction to
    $B \in \consc$ over $\Sigma$ by Lemma
    \ref{lem:t2-is-repr-iff-conds}
    \ref{req:t2-is-repr-iff-conds-conc-cons}. The proof of the case
    when \ref{tmpt:lem-t2-repr-then-closed-disj} is violated for roles
    is analogous to the case of concepts above.

    Finally, assume \ref{tmpt:lem-t2-repr-then-closed-incons} is
    violated for concepts, i.e., there is $\T_2$ inconsistent $B'$
    such that $\T_2 \cup \T_{12}$ is not closed under inclusion
    between $B'$ and $B'$. It follows \ref{req:def-disj-clo-conc-1} or
    \ref{req:def-disj-clo-conc-2} must be violated. If it is
    \ref{req:def-disj-clo-conc-1} there is a $\T_1 \cup
    \T_{12}$-consistent pair of concepts $B,C$ such that $B' \in
    \ttype[\Xi]{\Uni_{\srkb}}(o)$ and $B' \in
    \ttype[\Xi]{\Uni_{\srkb[C]}}(o)$. By Lemma
    \ref{lem:t2-is-repr-iff-conds}
    \ref{req:t2-is-repr-iff-conds-conc-equal} obtain $B' \in
    \ttype[\Xi]{\Uni_{\tgkb}}(o)$ and $B' \in
    \ttype[\Xi]{\Uni_{\tgkb[C]}}(o)$, then by Lemmas
    \ref{lem:types-chase} and \ref{lem:kr-to-all-b-cons} we obtain
    that the pair $B,C$ is $\T_2 \cup \T_{12}$-inconsistent. We
    obtained a contradiction to Lemma \ref{lem:t2-is-repr-iff-conds}
    \ref{req:t2-is-repr-iff-conds-conc-cons}. If
    \ref{req:def-disj-clo-conc-2} is violated there is $B \in \consc$
    over $\Sigma$ and a role $R$ such that $o \leadsto_{\srkb}
    w_{[R]}$ and $B' \in \ttype[\Xi]{\Uni_{\srkb}}(o w_{[R]})$. By
    Lemma \ref{lem:t2-is-repr-iff-conds}
    \ref{req:t2-is-repr-iff-conds-roles-gen} there is $y \in
    \dom[\Uni_{\tgkb}]$ such that $B' \in
    \ttype[\Xi]{\Uni_{\tgkb}}(y)$. Using Lemmas \ref{lem:types-chase}
    and \ref{lem:kr-to-all-b-cons} it follows $B$ is $\T_2 \cup
    \T_{12}$-inconsistent, which is a contradiction to $B \in \consc$
    over $\Sigma$ by Lemma \ref{lem:t2-is-repr-iff-conds}
    \ref{req:t2-is-repr-iff-conds-conc-cons}. The proof of the case
    when \ref{tmpt:lem-t2-repr-then-closed-incons} is violated for
    roles is analogous to the case of concepts above.
  \end{proof}

\subsection{Proof of Proposition 6.3}

The result is shown in
Theorem~\ref{th:non-empt-repres-nlogspace-compl}; we need a series of
lemmas before we present the proof.

\begin{lemma}\label{lem:representab-conds}
  Given a mapping $\M=(\Sigma, \Xi, \T_{12})$ and a $\Sigma$-TBox
  $\T_1$, there exists $\Xi$-TBox $\T_2$, such that it is a
  $\UCQ$-representation of $\T_1$ under $\M$, if and only if the
  following conditions are satisfied:
  \begin{enumerate}[thmparts, start=1]
  \item \label{req:representab-conds-equal-conc} For each $\T_1 \cup
    \T_{12}$-consistent concept $B$ over $\Sigma$ and each $B'$ over
    $\Xi$ such that $\T_1 \cup \T_{12} \vdash B \ISA B'$ there exists
    $C'$ over $\Xi$ such that $\T_{12} \vdash B \ISA C'$ and $\T_1
    \cup \T_{12}$ is closed under the inclusion between $C'$ and $B'$.
  \item \label{req:representab-conds-equal-role} For each $\T_1 \cup
    \T_{12}$-consistent role $R$ over $\Sigma$ and each $R'$ over
    $\Xi$ such that $\T_1 \cup \T_{12} \vdash R \ISA R'$ there exists
    $Q'$ over $\Xi$ such that $\T_{12} \vdash R \ISA Q'$ and $\T_1
    \cup \T_{12}$ is closed under inclusion between $Q'$ and $R'$.
  \item \label{req:representab-conds-gen-role} For each $\T_1 \cup
    \T_{12}$-consistent concept $B$ over $\Sigma$ and each role $R$
    such that $o \leadsto_{\srkb} w_{[R]}$ there exists a generating
    pass $\pi = (\langle C_0, \dots C_n \rangle, L)$ for $B$ conform
    with $\T_1 \cup \T_{12}$, such that:
    \begin{enumerate}[label=\textbf{(\alph*)}]
    \item $\ttype[\Xi]{\Uni_{\srkb}}(o w_{[R]}) \subseteq L(C_n)$.
    \item $\rtype[\Xi]{\Uni_{\srkb}}(o, o w_{[R]}) \subseteq L(C_0,
      C_n)$;
    \end{enumerate}
  \item \label{req:representab-conds-conc-cons} For each
    $\T_1$-consistent pair of concepts $B_1, B_2$ over $\Sigma$, such
    that $B_1, B_2$ is $\T_1 \cup \T_{12}$-inconsistent, there are
    concepts $B,C$ such that one of the following holds:
    \begin{enumerate}[label=\textbf{(\alph*)}]
    \item \label{req:representab-conds-conc-cons-1} $B,C \in \{B_1,
      B_2\}$ and one of the following holds:
      \begin{enumerate}[label=(\arabic*)]
      \item \label{req:representab-conds-conc-cons-1-1} $\T_{12}
        \vdash B \ISA B'$, $\T_{12} \vdash C \ISA C'$, and $\T_1 \cup
        \T_{12}$ is closed under the disjointness between $B'$ and
        $C'$;
      \item \label{req:representab-conds-conc-cons-1-2} $\T_{12}
        \vdash B \ISA B'$, $\T_{12} \ni (C \AND C' \ISA \bot)$, and
        $\T_1 \cup \T_{12}$ is closed under inclusion between $B'$ and
        $C'$.
      \end{enumerate}
    \item \label{req:representab-conds-conc-cons-2} $\exists R \in
      \{B_1, B_2\}$ and one of the following holds:
      \begin{enumerate}[label=(\arabic*)]
      \item \label{req:representab-conds-conc-cons-2-1} $\T_{12}
        \vdash \exists R^- \ISA B'$, $\T_{12} \vdash \exists R^- \ISA
        C'$, and $\T_1 \cup \T_{12}$ is closed under disjointness
        between $B'$ and $C'$;
      \item \label{req:representab-conds-conc-cons-2-2} $\T_{12}
        \vdash \exists R^- \ISA B'$, $\T_{12} \ni (\exists R^- \AND C'
        \ISA \bot)$, and $\T_1 \cup \T_{12}$ is closed under inclusion
        between $B'$ and $C'$;
      \item \label{req:representab-conds-conc-cons-2-3} $\T_{12}
        \vdash R \ISA R'$, $\T_{12} \vdash R \ISA Q'$, and $\T_1 \cup
        \T_{12}$ is closed under the disjointness between $R'$ and
        $Q'$;
      \item \label{req:representab-conds-conc-cons-2-4} $\T_{12}
        \vdash R \ISA R'$, $\T_{12} \ni (R \AND Q' \ISA \bot)$, and
        $\T_1 \cup \T_{12}$ is closed under inclusion between $R'$ and
        $Q'$.
      \end{enumerate}
    \end{enumerate}
  \item \label{req:representab-conds-role-cons} For all
    $\T_1$-consistent pairs of roles $R_1, R_2$, such that $R_1, R_2$
    is $\T_1 \cup \T_{12}$-inconsistent one of the following holds:
    \begin{enumerate}[label=\textbf{(\alph*)}]
    \item \label{req:representab-conds-role-cons-1} there are roles
      $R,Q \in \{R_1, R_2\}$ and $R', Q'$ over $\Xi$ such that one of
      the following holds:
      \begin{enumerate}[label=(\arabic*)]
      \item \label{req:representab-conds-role-cons-1-1}$\T_{12} \vdash
        R \ISA R'$, $\T_{12} \vdash Q \ISA Q'$, and $\T_1 \cup
        \T_{12}$ is closed under disjointness between $R'$ and $Q'$;
      \item \label{req:representab-conds-role-cons-1-2} $\T_{12}
        \vdash R \ISA R'$, $\T_{12} \ni (Q \AND Q' \ISA \bot)$, and
        $\T_1 \cup \T_{12}$ is closed under inclusion between $R'$ and
        $Q'$;
      \end{enumerate}
    \item \label{req:representab-conds-role-cons-2} there exist $B, C
      \in \{\exists R_1, \exists R_2\}$ or $B, C \in \{\exists R_1^-,
      \exists R_2^-\}$ such that one of the following holds:
      \begin{enumerate}[label=(\arabic*)]
      \item \label{req:representab-conds-role-cons-2-1} $\T_{12}
        \vdash B \ISA B'$, $\T_{12} \vdash C \ISA C'$, and $\T_1 \cup
        \T_{12}$ is closed under disjointness between $B'$ and $C'$;
      \item \label{req:representab-conds-role-cons-2-2} $\T_{12}
        \vdash B \ISA B'$, $\T_{12} \ni (C \AND C' \ISA \bot)$, and
        $\T_1 \cup \T_{12}$ is closed under inclusion between $B'$ and
        $C'$.
      \end{enumerate}
    \end{enumerate}
  \end{enumerate}
\end{lemma}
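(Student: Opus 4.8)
The plan is to prove both directions by reducing to the membership characterization of Lemma~\ref{lem:t2-is-repr-iff-conds}, using the closure facts of Lemma~\ref{lem:t2-repr-then-closed} for the ``only if'' direction and the Homomorphism Lemmas~\ref{lem:c1} and~\ref{lem:c2} (together with the type-monotonicity Lemmas~\ref{lem:Unitype-KBtype-ABox} and~\ref{lem:types-chase}) for the ``if'' direction.

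($\Rightarrow$) Suppose $\T_2$ is a $\UCQ$-representation of $\T_1$ under $\M$. By Lemma~\ref{lem:t2-is-repr-iff-conds} the conditions \ref{req:t2-is-repr-iff-conds-conc-cons}--\ref{req:t2-is-repr-iff-conds-roles-gen-inv} hold, and by Lemma~\ref{lem:t2-repr-then-closed}, $\T_1 \cup \T_{12}$ is closed under inclusion (resp.\ disjointness) for every pair of $\Xi$-concepts and $\Xi$-roles derivably related (resp.\ disjoint) in $\T_2$. For \ref{req:representab-conds-equal-conc} and \ref{req:representab-conds-equal-role} I would take a derivation $\T_2 \cup \T_{12} \vdash B \ISA B'$ (available from \ref{req:t2-is-repr-iff-conds-conc-equal}/\ref{req:t2-is-repr-iff-conds-roles-equal}) and, using that $\Sigma \cap \Xi = \emptyset$ together with the shape of $\T_2 \cup \T_{12}$, factor it through a $\Xi$-concept $C'$ with $\T_{12} \vdash B \ISA C'$ and $\T_2 \vdash C' \ISA B'$; the required closure of $\T_1 \cup \T_{12}$ between $C'$ and $B'$ is then Lemma~\ref{lem:t2-repr-then-closed}\ref{tmpt:lem-t2-repr-then-closed-incl}. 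For \ref{req:representab-conds-gen-role} I would take the witness $y \in \dom[\Uni_{\tgkb}]$ supplied by \ref{req:t2-is-repr-iff-conds-roles-gen}, read off the path leading to $y$ in $\Uni_{\tgkb}$, and turn it into a generating pass for $B$: each step of that path in which a $\Sigma$-concept is involved (necessarily a $\T_{12}$-step, by disjointness of signatures) yields one component $C_i$, and the $\Xi$-concepts and $\Xi$-roles accumulated between consecutive such steps become the labels $L(C_i)$ and $L(C_i,C_{i+1})$; conformity follows again from Lemma~\ref{lem:t2-repr-then-closed} applied to the $\T_2$-derivations gluing the pieces. Conditions \ref{req:representab-conds-conc-cons} and \ref{req:representab-conds-role-cons} are analogous, now splitting on whether an inclusion or a disjointness assertion of $\T_2$ witnesses the $\T_2 \cup \T_{12}$-inconsistency guaranteed by \ref{req:t2-is-repr-iff-conds-conc-cons}/\ref{req:t2-is-repr-iff-conds-roles-cons}, and invoking items \ref{tmpt:lem-t2-repr-then-closed-disj} and \ref{tmpt:lem-t2-repr-then-closed-incons} of Lemma~\ref{lem:t2-repr-then-closed}.

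($\Leftarrow$) Assume \ref{req:representab-conds-equal-conc}--\ref{req:representab-conds-role-cons}. The plan is to build an explicit $\Xi$-TBox $\T_2$ out of the data these conditions provide and then verify the six conditions of Lemma~\ref{lem:t2-is-repr-iff-conds}. Concretely, $\T_2$ collects: an inclusion $C' \ISA B'$ for every pair $(C',B')$ produced by \ref{req:representab-conds-equal-conc}/\ref{req:representab-conds-equal-role}; for each $B$ and each generating pass $\pi=(\langle C_0,\dots,C_n\rangle,L)$ from \ref{req:representab-conds-gen-role}, a chain of inclusions realising $\pi$ in the target using the $\Xi$-witnesses $C'$, $Q'$, $D'$ named by conformity (an inclusion into $\exists Q'$ to leave $C_i$, then $\exists Q'^- \ISA D'$ for the next label); and disjointness assertions ($D' \AND E' \ISA \bot$ for concepts, and the role analogue) dictated by \ref{req:representab-conds-conc-cons}/\ref{req:representab-conds-role-cons}. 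Then I would check Lemma~\ref{lem:t2-is-repr-iff-conds}: the equality conditions \ref{req:t2-is-repr-iff-conds-conc-equal}/\ref{req:t2-is-repr-iff-conds-roles-equal} split into a ``$\T_1 \cup \T_{12} \Rightarrow \T_2 \cup \T_{12}$'' part holding by construction and a converse part, which is exactly where the ``closed under inclusion'' clauses \ref{req:def-incl-clo-conc-1}--\ref{req:def-incl-clo-roles-2-inv} keep $\T_2$ from over-deriving; the generating-role conditions \ref{req:t2-is-repr-iff-conds-roles-gen}/\ref{req:t2-is-repr-iff-conds-roles-gen-inv} hold because the added chains reproduce the source paths and, by conformity, nothing more, the verification being an induction on path length that conceptually amounts to the homomorphisms delivered by Lemmas~\ref{lem:c1} and~\ref{lem:c2}; and the consistency conditions \ref{req:t2-is-repr-iff-conds-conc-cons}/\ref{req:t2-is-repr-iff-conds-roles-cons} follow from the added disjointness assertions together with the ``closed under disjointness'' clauses.

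I expect the main obstacle to lie in the ($\Leftarrow$) direction, namely the simultaneous control of under- and over-derivation for the constructed $\T_2$: the generating-pass and ``closure'' apparatus is tailored so that, \emph{locally}, every $\Xi$-fact entailed by $\T_1 \cup \T_{12}$ can be reintroduced by $\T_2$ while no new $\Xi$-fact is created, but promoting these local statements to the global conditions \ref{req:t2-is-repr-iff-conds-roles-gen}/\ref{req:t2-is-repr-iff-conds-roles-gen-inv} of Lemma~\ref{lem:t2-is-repr-iff-conds} requires a delicate induction along canonical-model paths that tracks $\Sigma$- and $\Xi$-types separately and handles the interleaving of $\T_{12}$- and $\T_2$-generated successors. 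A secondary subtlety is that \ref{req:representab-conds-gen-role} only posits passes for \emph{direct} successors $o w_{[R]}$ of $o$, so one must argue that iterating the single-step conditions suffices for arbitrarily long paths --- which is precisely what the homomorphism lemmas are designed to supply.
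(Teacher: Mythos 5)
Your overall strategy is the paper's: both directions are routed through the membership characterization (Lemma~\ref{lem:t2-is-repr-iff-conds}), the ``only if'' direction extracts the generating pass from the witness $y$ of condition~\ref{req:t2-is-repr-iff-conds-roles-gen} and gets conformity from Lemma~\ref{lem:t2-repr-then-closed}, and the ``if'' direction builds $\T_2$ explicitly from the data in conditions \ref{req:representab-conds-equal-conc}--\ref{req:representab-conds-role-cons} and re-verifies Lemma~\ref{lem:t2-is-repr-iff-conds}, with the homomorphism machinery (Lemmas~\ref{lem:c1},~\ref{lem:c2}) doing the global work. So the decomposition and the key lemmas coincide with the paper's proof.

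Two points in your ($\Leftarrow$) sketch need repair or sharpening. First, for conditions \ref{req:representab-conds-conc-cons} and \ref{req:representab-conds-role-cons} you say you add ``disjointness assertions dictated by'' these conditions; but in the sub-cases \ref{req:representab-conds-conc-cons-1-2}, \ref{req:representab-conds-conc-cons-2-2}, \ref{req:representab-conds-conc-cons-2-4} (and \ref{req:representab-conds-role-cons-1-2}, \ref{req:representab-conds-role-cons-2-2}) the clash is witnessed by a disjointness $C \AND C' \ISA \bot$ that already lives in $\T_{12}$ with $C$ over $\Sigma$, and $C$ need not have any $\Xi$-superconcept under $\T_{12}$; there a $\T_2$-disjointness between $\Xi$-symbols cannot reproduce the inconsistency, and what you must add instead is the \emph{inclusion} $B' \ISA C'$ (resp.\ $R' \ISA Q'$), exactly as the paper's sets $ax_4$, $ax_5$ do. As literally stated, your construction would fail to verify condition~\ref{req:t2-is-repr-iff-conds-conc-cons} of Lemma~\ref{lem:t2-is-repr-iff-conds} in those sub-cases. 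Second, when you argue that the closure clauses ``keep $\T_2$ from over-deriving'', note that the axioms of $\T_2$ only guarantee closure under inclusion for the \emph{individual} pairs occurring in them, whereas the verification needs closure for every pair with $\T_2 \vdash B' \ISA C'$ (and $\T_2 \vdash R' \ISA Q'$); this composition step is not automatic, because of the existential clauses \ref{req:def-incl-clo-conc-2}, \ref{req:def-incl-clo-roles-2}, \ref{req:def-incl-clo-roles-2-inv}, and the paper isolates it as a separate induction along $\T_2$-derivation chains (Lemma~\ref{lem:t2-der-closed}). Your ``delicate induction along canonical-model paths'' gestures at the right phenomenon, but this derivation-level lemma is the piece you should make explicit before the rest of your verification goes through.
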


\begin{proof}
  ($\Leftarrow$) Assume the conditions
  \ref{req:representab-conds-equal-conc} --
  \ref{req:representab-conds-role-cons} are satisfied, we construct a
  TBox $\T_2$ and prove it is a UCQ-representation for $\T_1$ under
  $\M$. The required $\T_2$ will be given as the union of the fives
  sets of axioms presented below. First, take $B \in \consc$ over
  $\Sigma$, $B' \in \ttype[\Xi]{\Uni_{\srkb}}(o)$, then let $ax_1(B,
  B') = \{C' \ISA B'\}$ for $C'$ given by the condition
  \ref{req:representab-conds-equal-conc}. For $R \in \consr$ over
  $\Sigma$ and $R'$ over $\Xi$, such that $\T_1 \cup \T_{12} \vdash R
  \ISA R'$, define $ax_2(R, R') = \{Q' \ISA R'\}$ for $Q'$ given by
  the condition \ref{req:representab-conds-equal-role}. For each $B
  \in \consc$ over $\Sigma$ and each role $R$ such that $o
  \leadsto_{\srkb} w_{[R]}$ define the set $ax_3(B,R)$ from the
  generating pass $\tup{C_0, \dots, C_n}$ for $B$ conform with $\T_1
  \cup \T_{12}$ that satisfies
  \ref{req:representab-conds-gen-role}. Take $ax_3(B,R)$ equal to the
  set of all axioms $C' \ISA B'$ satisfying \ref{req:xi-pass-def-conc}
  and all axioms $Q' \ISA R'$ satisfying
  \ref{req:xi-pass-def-role}. Now let $B_, B_2$ be a $\T_1$-consistent
  and $\T_1 \cup \T_{12}$-inconsistent pair of $\Sigma$ concepts, then
  define a set $ax_4(B_1, B_2)$ to be equal to $\{ B' \AND C' \ISA
  \bot\}$ for the corresponding $B'$ and $C'$, if
  \ref{req:representab-conds-conc-cons}\ref{req:representab-conds-conc-cons-1}\ref{req:representab-conds-conc-cons-1-1}
  or
  \ref{req:representab-conds-conc-cons}\ref{req:representab-conds-conc-cons-2}\ref{req:representab-conds-conc-cons-2-1}
  is satisfied; and $\{R' \AND Q' \ISA \bot\}$ for the corresponding
  $R'$ and $Q'$, if
  \ref{req:representab-conds-conc-cons}\ref{req:representab-conds-conc-cons-2}\ref{req:representab-conds-conc-cons-2-3}
  is satisfied. On the other hand, define $ax_4(B_1, B_2)$ to be equal
  to $\{ B' \ISA C'\}$ for the corresponding $B'$ and $C'$, if
  \ref{req:representab-conds-conc-cons}\ref{req:representab-conds-conc-cons-1}\ref{req:representab-conds-conc-cons-1-2}
  or
  \ref{req:representab-conds-conc-cons}\ref{req:representab-conds-conc-cons-2}\ref{req:representab-conds-conc-cons-2-2}
  is satisfied; and $\{R' \ISA Q'\}$ for the corresponding $R'$ and
  $Q'$, if
  \ref{req:representab-conds-conc-cons}\ref{req:representab-conds-conc-cons-2}\ref{req:representab-conds-conc-cons-2-4}
  is satisfied. Finally, we define $ax_5(R_1, R_2)$ for
  $\T_1$-consistent and $\T_1 \cup \T_{12}$-inconsistent pair of
  $\Sigma$ roles $R_1, R_2$ analogously to $ax_4(B_1, B_2)$ using the
  conditions
  \ref{req:representab-conds-role-cons}\ref{req:representab-conds-role-cons-2}
  and
  \ref{req:representab-conds-role-cons}\ref{req:representab-conds-role-cons-2}.
  Finally we have:
  \begin{align*}
    \T_2 = \hspace{-.4cm}&\bigcup_{%
      \begin{subarray}{c}
        B \in \consc \text{ over } \Sigma,\\
        B' \in \ttype[\Xi]{\Uni_{\srkb}}(o)
      \end{subarray}%
    } \hspace{-.3cm} ax_3(B,B') \cup \bigcup_{%
      \begin{subarray}{c}
        R \in \consr \text{ over } \Sigma,\\
        R' \text{ over } \Xi,\, \T_1 \cup \T_{12} \vdash R \ISA R'
      \end{subarray}%
    } \hspace{-.4cm} ax_4(R,R') \cup \\
    & \bigcup_{%
      \begin{subarray}{c}
        B \in \consc \text{ over } \Sigma, \\
        o \leadsto_{\srkb} w_{[R]}
      \end{subarray}%
    } \hspace{-.5cm} ax_5(B,R) \cup \hspace{-.3cm} \bigcup_{%
      \begin{subarray}{c}
        B_0, B_1 \text{ conc. over } \Sigma,\\
        \T_1-\text{ consist. and }\\
        \T_1 \cup \T_{12}-\text{ inconsist.}
      \end{subarray}%
    } \hspace{-.3cm} ax_1(B_0, B_1) \cup \hspace{-.3cm} \bigcup_{%
      \begin{subarray}{c}
        R_0, R_1 \text{ roles over } \Sigma,\\
        \T_1-\text{ consist. and }\\
        \T_1 \cup \T_{12}-\text{ inconsist.}
      \end{subarray}%
    } \hspace{-.3cm} ax_2(R_0, R_1)
  \end{align*}
  We need the following intermediate result:
  \begin{lemma} \label{lem:t2-der-closed} For all concepts $B', C' \in
    \Xi$ (roles $R', Q'$ over $\Xi$), if $\T_2 \vdash B' \ISA C'$
    ($\T_2 \vdash R' \ISA Q'$) then $\T_1 \cup \T_{12}$ is closed
    under inclusion between $B'$ and $C'$ ($R'$ and $Q'$).
  \end{lemma}
  \begin{proof}
    Notice that for all concepts $B'$ and $C'$ (roles $R'$ and $Q'$)
    such that $(B' \ISA C') \in \T_2$ ($(R' \ISA Q') \in \T_2$) it
    holds $\T_1 \cup \T_{12}$ is closed under inclusion between $B'$
    and $C'$ ($R'$ and $Q'$). First we prove the statement of the
    lemma for roles, if $\T_2 \vdash R' \ISA Q'$ there is a sequence
    of roles $Q_1, \dots, Q_n$ such that $Q_1=R'$, $Q_n = Q'$, and for
    each $1 \leq i < n$ one of the following holds:
    \begin{enumerate}[resume*=reqs]
    \item \label{req:t2-ind-roles} $(Q_i \ISA Q_{i+1}) \in \T_2$
    \item \label{req:t2-ind-roles-inv} $(Q_i^- \ISA Q_{i+1}^-) \in
      \T_2$
    \end{enumerate}
    We show $\T_1 \cup \T_{12}$ is closed under inclusion between $R'$
    and $Q_i$ by induction on $i$. For $i=1$ the proof is trivial,
    assume $\T_1 \cup \T_{12}$ is closed under inclusion between $R'$
    and $Q_i$, we show now its closure under inclusion between $R'$
    and $Q_{i+1}$. Let, first, \ref{req:t2-ind-roles}, we show
    \ref{req:def-incl-clo-roles-1}. Assume $\T_1 \cup \T_{12} \vdash R
    \ISA R'$ for some $R \in \consr$ over $\Sigma$, since $\T_1 \cup
    \T_{12}$ is closed under inclusion between $R'$ and $Q_i$, it
    follows by \ref{req:def-incl-clo-roles-1} $\T_1 \cup \T_{12}
    \vdash R \ISA Q_i$, then, again by closure under inclusion between
    $Q_i$ and $Q_{i+1}$ obtain $\T_1 \cup \T_{12} \vdash R \ISA
    Q_{i+1}$.

    To show \ref{req:def-incl-clo-roles-2} we need to prove
    \ref{req:def-incl-clo-conc-1} and \ref{req:def-incl-clo-conc-2}
    for $B'=\exists R'$ and $C'=\exists Q_{i+1}$. For
    \ref{req:def-incl-clo-conc-1} assume $\exists R' \in
    \ttype[\Xi]{\Uni_{\srkb}}(o)$ for some $B \in \consc$ over
    $\Sigma$; since $\T_1 \cup \T_{12}$ is closed under inclusion
    between $R'$ and $Q_i$, it follows by
    \ref{req:def-incl-clo-roles-2} that $\exists Q_i \in
    \ttype[\Xi]{\Uni_{\srkb}}(o)$, and again by closure under
    inclusion between $Q_i$ and $Q_{i+1}$ obtain $\exists Q_{i+1} \in
    \ttype[\Xi]{\Uni_{\srkb}}(o)$.  For \ref{req:def-incl-clo-conc-2}
    assume $\exists R'^- \in \ttype[\Xi]{\Uni_{\srkb}}(o)$ for some $B
    \in \consc$ over $\Sigma$ and consider two cases: $R'=Q_i$ and $R'
    \neq Q_i$. In the first case $o \leadsto_{\srkb} w_{[Q]}$ for some
    role $Q$ such that $R'^- \in \rtype[\Xi]{\Uni_{\srkb}}(o, o w_{[Q]})$ and
    $\exists Q_{i+1} \in \ttype[\Xi]{\Uni_{\srkb}}(o w_{[Q]})$ immediately
    follows, since $\T_1 \cup \T_{12}$ is closed under inclusion
    between $Q_i$ and $Q_{i+1}$.

    Assume $R' \neq Q_i$, since $\T_1 \cup \T_{12}$ is closed under
    inclusion between $R'$ and $Q_i$, it follows by
    \ref{req:def-incl-clo-roles-2} and the structure of $\srkb$ that
    $o \leadsto_{\srkb} w_{[Q]}$ for some role $Q$ over $\Sigma$ such
    that $R'^- \in \rtype[\Xi]{\Uni_{\srkb}}(o, o w_{[Q]})$ and
    $\exists Q_i \in \ttype[\Xi]{\Uni_{\srkb}}(o w_{[Q]})$. Since
    $\srkb$ is consistent, it can be easily shown that $\exists Q^-
    \in \consc$. Observe now that $\exists Q_i \in
    \ttype[\Xi]{\Uni_{\srkb[\exists Q^-]}}(o)$; then since $\T_1 \cup
    \T_{12}$ is closured under inclusion between $Q_i$ and $Q_{i+1}$
    and \ref{req:def-incl-clo-roles-2-inv}, obtain $\exists Q_{i+1}
    \in \ttype[\Xi]{\Uni_{\srkb[\exists Q^-]}}(o)$. Finally, it
    follows $\exists Q_{i+1} \in \ttype[\Xi]{\Uni_{\srkb}}(o
    w_{[Q]})$, which completes the proof for the case
    \ref{req:t2-ind-roles}. The proof for the case
    \ref{req:t2-ind-roles-inv} is analogous.

    To prove the lemma for concepts we exploit that $\T_2 \vdash B'
    \ISA C'$ implies there exists a sequence of $\Xi$ concepts $B_1,
    \dots, B_n$ such that $B_1=B'$, $B_n=C'$, and for each $1 \leq i <
    n$ one of the following holds:
    \begin{enumerate}[resume*=reqs]
    \item \label{req:t2-ind-conc} $(B_i \ISA B_{i+1}) \in \T_2$
    \item \label{req:t2-ind-conc-roles} $B_i = \exists R'$, $B_{i+1} =
      \exists Q'$, $(R' \ISA Q') \in \T_2$
    \item \label{req:t2-ind-conc-roles-inv} $B_i = \exists R'^-$,
      $B_{i+1} = \exists Q'^-$, $(R' \ISA Q') \in \T_2$
    \end{enumerate}
    We show $\T_1 \cup \T_{12}$ is closed under inclusion between $B'$
    and $B_i$ by induction on $i$. For $i=1$ the proof is trivial,
    assume $\T_1 \cup \T_{12}$ is closed under inclusion between $B'$
    and $B_i$, we show now its closure under inclusion between $B'$
    and $B_{i+1}$. First we consider the case of $B_i$ and $B_{i+1}$
    are as in \ref{req:t2-ind-conc}. To show
    \ref{req:def-incl-clo-conc-1} assume $B \in \consc$ over $\Sigma$
    and $B' \in \ttype[\Xi]{\Uni_{\srkb}}(o)$; by closure under
    inclusion between $B'$ and $B_i$ and \ref{req:def-incl-clo-conc-1}
    it follows $B_i \in \ttype[\Xi]{\Uni_{\srkb}}(o)$, then by closure
    under inclusion between $B_i$ and $B_{i+1}$ and
    \ref{req:def-incl-clo-conc-1} obtain $B_{i+1} \in
    \ttype[\Xi]{\Uni_{\srkb}}(o)$.

    To show \ref{req:def-incl-clo-conc-2} assume $B'=\exists Q'$ and
    $\exists Q'^- \in \ttype[\Xi]{\Uni_{\srkb}}(o)$. If $B'=B_i$, then
    $o \leadsto_{\srkb} w_{[Q]}$ for some role $Q$ such that $Q' \in
    \rtype[\Xi]{\Uni_{\srkb}}(o, o w_{[Q]})$ and $B_{i+1} \in
    \ttype[\Xi]{\Uni_{\srkb}}(o w_{[Q]})$ by closure under inclusion
    between $B_i$ and $B_{i+1}$, and \ref{req:def-incl-clo-conc-2}. On
    the other hand, if $B' \neq B_i$, it follows by closure under
    inclusion between $B'$ and $B_i$, and
    \ref{req:def-incl-clo-conc-2} $o \leadsto_{\srkb} w_{[Q]}$ for a
    role $Q$ over $\Sigma$ such that $Q' \in
    \rtype[\Xi]{\Uni_{\srkb}}(o, o w_{[Q]})$ and $B_i \in
    \ttype[\Xi]{\srkb}(o w_{[Q]})$. Since $\srkb$ is consistent, it
    follows $\exists R^- \in \consc$, then by closure under inclusion
    between $B_i$ and $B_{i+1}$ and \ref{req:def-incl-clo-conc-1} we
    conclude $B_{i+1} \in \ttype[\Xi]{\Uni_{\srkb}}(o w_{[R]})$, which
    concludes the proof. The proof for the cases
    \ref{req:t2-ind-conc-roles} and \ref{req:t2-ind-conc-roles-inv} is
    analogios.
  \end{proof}

  We return to the proof of ($\Leftarrow$) of Lemma
  \ref{lem:representab-conds}; we prove $\T_2$ above is a
  representation of $\T_1$ under $\T_{12}$ by showing the conditions
  \ref{req:t2-is-repr-iff-conds-conc-cons} --
  \ref{req:t2-is-repr-iff-conds-roles-gen-inv} of Lemma
  \ref{lem:t2-is-repr-iff-conds} are satisfied. We start from
  \ref{req:t2-is-repr-iff-conds-conc-equal} (consistency conditions
  will be shown in the end.) Let $B \in \consc$ over $\Sigma$, $B' \in
  \ttype[\Xi]{\Uni_{\srkb}}(o)$, then $B' \in
  \ttype[\Xi]{\Uni_{\tgkb}}(o)$ follows straightforwardly from current
  \ref{req:representab-conds-equal-conc}. Assume now some $B' \in
  \ttype[\Xi]{\Uni_{\tgkb}}(o)$, it follows $\T_{12} \vdash B \ISA C'$
  and $\T_2\vdash C' \ISA B'$ for some concept $C'$ over $\Xi$; by
  Lemma \ref{lem:t2-der-closed} it follows $C' \in
  \ttype[\Xi]{\Uni_{\srkb}}(o)$ implies $B' \in
  \ttype[\Xi]{\Uni_{\srkb}}(o)$, and since $\T_{12} \vdash B \ISA C'$
  conclude $B' \in \ttype[\Xi]{\Uni_{\srkb}}(o)$. The proof that
  \ref{req:t2-is-repr-iff-conds-roles-equal} of Lemma
  \ref{lem:representab-conds} is satisfied is analogous to the proof
  that \ref{req:t2-is-repr-iff-conds-conc-equal} is satisfied above,
  using current \ref{req:representab-conds-equal-role} and Lemma
  \ref{lem:t2-der-closed}.

  The \ref{req:t2-is-repr-iff-conds-roles-gen} of Lemma
  \ref{lem:representab-conds} follows straightforwardly from current
  \ref{req:representab-conds-gen-role}, the definition of a $\Xi$ pass
  conform with $\T_1 \cup \T_{12}$, and the structure of $\T_2$. To
  show \ref{req:t2-is-repr-iff-conds-roles-gen-inv} of Lemma
  \ref{lem:representab-conds} assume $B \in \consc$ over $\Sigma$ and
  a role $Q$ such that $o \leadsto_{\tgkb} w_{[Q]}$. We first consider
  the case $Q$ over $\Xi$, by the structure of $\T_2$ (see the proof
  that \ref{req:t2-is-repr-iff-conds-conc-equal} of Lemma
  \ref{lem:t2-is-repr-iff-conds} is satisfied) it follows $\exists Q
  \in \ttype[\Xi]{\Uni_{\srkb}}(o)$. If
  $\ttype[\Xi]{\Uni_{\tgkb}}(w_{[Q]}) = \{\exists Q^-\}$, the proof is
  done; otherwise, $\T_2 \vdash \exists Q^- \ISA C'$ for some $C' \neq
  \exists Q^-$, then by Lemma \ref{lem:t2-der-closed} and
  \ref{req:def-incl-clo-conc-2} it follows there exists $R$ such that
  $o \leadsto_{\srkb} w_{[R]}$, $Q \in \rtype[\Xi]{\Uni_{\srkb}}(o, o
  w_{[R]})$ and $C' \in \ttype[\Xi]{\Uni_{\srkb}}(o)$; also by $C'
  \neq \exists Q^-$ and the structure of $\srkb$ it follows $R$ is
  over $\Sigma$. Notice that $\exists R^- \in \consc$ since $\srkb$ is
  consistent. For each (other) $C' \in \ttype[\Xi]{\Uni_{\tgkb}}(o
  w_{[Q]})$ we show $C' \in
  \ttype[\Xi]{\Uni_{\srkb}}(w_{[R]})$. Indeed, it follows $\T_2 \vdash
  \exists Q^- \ISA C'$; then using Lemma \ref{lem:t2-der-closed},
  \ref{req:def-incl-clo-conc-1}, $\exists Q^- \in
  \ttype[\Xi]{\Uni_{\srkb[\exists R^-]}}(o)$, we can conclude $C' \in
  \ttype[\Xi]{\Uni_{\srkb[\exists R^-]}}(o)$, and so $C' \in
  \ttype[\Xi]{\Uni_{\srkb}}(o w_{[R]})$. To show
  $\rtype[\Xi]{\Uni_{\tgkb}}(o,o w_{[Q]}) \subseteq
  \rtype[\Xi]{\Uni_{\srkb}}(o,o w_{[R]})$ consider that $R \in \consr$
  and assume $Q' \in \rtype[\Xi]{\Uni_{\tgkb}}(o,o w_{[Q]})$; it
  follows $\T_2 \vdash Q \ISA Q'$, then by Lemma
  \ref{lem:t2-der-closed}, \ref{req:def-incl-clo-roles-1} and $\T_1
  \cup \T_{12} \vdash R \ISA Q$ it follows $Q' \in
  \rtype[\Xi]{\Uni_{\srkb}}(o,o w_{[R]})$, which concludes the proof.

  Consider now the case $Q$ over $\Sigma$, then, clearly, $o
  \leadsto_{\srkb} w_{[R]}$ and $Q \in \rtype{\Uni_{\srkb}}(o, o
  w_{[R]})$ for some role $R$ over $\Sigma$. We show now
  $\ttype[\Xi]{\Uni_{\tgkb}}(o w_{[Q]}) \subseteq
  \ttype{\Uni_{\srkb}}(o w_{[R]})$: let $C' \in
  \ttype[\Xi]{\Uni_{\tgkb}}(o w_{[Q]})$, then $\T_{12} \vdash \exists
  Q^- \ISA B'$ and $\T_2 \vdash B' \ISA C'$ for some $B'$ over
  $\Xi$. It follows $B' \in \ttype[\Xi]{\Uni_{\srkb}}(o w_{[R]})$,
  then by Lemma~\ref{lem:t2-der-closed} and
  \ref{req:def-incl-clo-conc-1} obtain $C' \in
  \ttype[\Xi]{\Uni_{\srkb}}(o w_{[R]})$. The proof of
  $\rtype[\Xi]{\Uni_{\tgkb}}(o, o w_{[Q]}) \subseteq
  \rtype{\Uni_{\srkb}}(o, o w_{[R]})$ is analogous.

  Now we show that the consistency conditions of
  Lemma~\ref{lem:t2-is-repr-iff-conds} are satisfied. For
  \ref{req:t2-is-repr-iff-conds-conc-cons} assume a pair $B_1, B_2$ of
  $\T_1$ consistent and $\T_1 \cup \T_{12}$-inconsistent concepts;
  then $B_1, B_2$ is $\T_2 \cup \T_{12}$-inconsistent follows easily
  from current \ref{req:representab-conds-conc-cons} and definition of
  $\T_2$. Assume $B_1, B_2$ are $\T_1$ consistent and $\T_2 \cup
  \T_{12}$-inconsistent; it follows there exists $\delta, \sigma \in
  \Delta^{\tgkb[\{B_1(o), B_2(o)\}]}$ such that one of the following
  holds:
  \begin{enumerate}[reqs]
  \item There are concepts $C, C' \in \ttype{\Uni_{\tgkb[\{B_1(o),
        B_2(o)\}]}}(\delta)$ such that $(C \AND C' \ISA \bot) \in \T_2
    \cup \T_{12}$;
  \item There are roles $Q, Q' \in \rtype{\Uni_{\tgkb[\{B_1(o),
        B_2(o)\}]}}(\delta, \sigma)$ such that $(Q \AND Q' \ISA \bot)
    \in \T_2 \cup \T_{12}$.
  \end{enumerate}
  Assume for the sake of contradiction that $B_1, B_2$ is $\T_1 \cup
  \T_{12}$ is consistent. By Lemma \ref{lem:kr-to-all-b-cons} it
  follows for each $\delta, \sigma \in \dom[\Uni_{\srkb[ \{B_1(o),
    B_2(o)\}]}]$, every $B \in \ttype[\Sigma]{\Uni_{\srkb[\{B_1(o),
      B_2(o)\}]}}(\delta)$ and $R \in
  \rtype[\Sigma]{\Uni_{\srkb[\{B_1(o), B_2(o)\}]}}(\delta, \sigma)$
  are $\T_1 \cup \T_{12}$-consistent. By the structure of $\T_2$ for
  all such $B$ and $R$ the conditions \ref{req:c1-o-subs} --
  \ref{req:c1-gen} of Lemma \ref{lem:c1} are satisfied (see the proof
  that \ref{req:t2-is-repr-iff-conds-conc-equal},
  \ref{req:t2-is-repr-iff-conds-roles-equal} and
  \ref{req:t2-is-repr-iff-conds-roles-gen-inv} of Lemma
  \ref{lem:t2-is-repr-iff-conds} are satisfied above). Then, by Lemma
  \ref{lem:c1} we have that there exist $\delta, \sigma \in
  \Delta^{\srkb[\{B_1(o), B_2(o)\}]}$ such that one of the following
  holds:
  \begin{enumerate}[reqs]
  \item \label{req:lem-representab-conds-incons-conc} There are
    concepts $C, C' \in \ttype{\Uni_{\srkb[\{B_1(o),
        B_2(o)\}]}}(\delta)$ such that $(C \AND C' \ISA \bot) \in \T_2
    \cup \T_{12}$;
  \item \label{req:lem-representab-conds-incons-role} There are roles
    $Q, Q' \in \rtype{\Uni_{\srkb[\{B_1(o), B_2(o)\}]}}(\delta,
    \sigma)$ such that $(Q \AND Q' \ISA \bot) \in \T_2 \cup \T_{12}$.
  \end{enumerate}
  Assume \ref{req:lem-representab-conds-incons-conc} and observe that
  w.l.o.g. $C'$ is over $\Xi$, whereas $C \in \Sigma \cup \Xi$. If $C
  \in \Sigma$ it follows $(C \AND C') \in \T_{12}$ and we immediately
  have the contradiction to the fact that $B_1, B_2$ is $\T_1 \cup
  \T_{12}$ is consistent. So let $C$ be over $\Xi$, it follows $(C
  \AND C') \in \T_2$, and $\T_1 \cup \T_{12}$ is closed under
  disjointness between $C$ and $C'$ by the definition of $\T_2$.
  Consider, first, the case $\delta \neq o$: by Lemma
  \ref{lem:Unitype-KBtype-ABox} $C,C' \in
  \stype[\Xi]{\Uni_{\srkb[\exists Q^-]}}(o)$ for the role $Q$ such
  that $\tail(\delta)=w_{[Q]}$. If $Q$ is over $\Sigma$ we derive the
  contradiction because $\exists Q^-$ is $\T_1 \cup
  \T_{12}$-consistent and \ref{req:def-disj-clo-conc-1}. On the other
  hand, if $Q$ is over $\Xi$, it can be seen by the structure of
  $\Uni_{\srkb[\{B_1(o), B_2(o)\}]}$ that $o \leadsto_{\srkb} w_{[Q]}$
  for some $B \in \consc$ over $\Sigma$; then $C,C' \in
  \ttype[\Xi]{\Uni_{\srkb}}(o w_{[Q]})$ and we derive the
  contradiction because of \ref{req:def-disj-clo-conc-2}. Finally,
  consider the case $\delta=o$, then by the structure of
  $\Uni_{\srkb[\{B_1(o), B_2(o)\}]}$ there are concepts $B,D \in
  \{B_1, B_2\}$ such that $C \in \ttype[\Xi]{\Uni_{\srkb}}(o)$ and $C'
  \in \ttype[\Xi]{\Uni_{\srkb[D]}}(o)$. By
  \ref{req:def-disj-clo-conc-1} we again have a contradiction.

  Assume \ref{req:lem-representab-conds-incons-role}, then again,
  assuming $Q$ is over $\Sigma$ produces an immediate contradiction;
  if, however, $Q$ is over $\Xi$, we obtain by the definition of
  $\T_2$, that $\T_1 \cup \T_{12}$ is closed under disjointness
  between $Q$ and $Q'$.  By the structure of $\Uni_{\srkb[\{B_1(o),
    B_2(o)\}]}$ we need to consider two cases: $\sigma=\delta
  w_{[R]}$, $Q, Q' \in \rtype{\Uni_{\srkb[\{B_1(o),
      B_2(o)\}]}}(\delta, \sigma)$ and $\delta = \sigma w_{[R]}$,
  $Q^-, Q'^- \in \rtype{\Uni_{\srkb[\{B_1(o), B_2(o)\}]}}(\sigma,
  \delta)$. In the first case, $o \leadsto_{\srkb} w_{[R]}$ for some
  $B \in \consc$ over $\Sigma$ and $Q,Q' \in
  \rtype[\Xi]{\Uni_{\srkb}}(o, o w_{[R]})$; using
  \ref{req:def-disj-clo-roles-2} we derive the contradiction. The
  second case is proved analogously using
  \ref{req:def-disj-clo-roles-2}.

  Thus, assuming the pair $B_1, B_2$ is $\T_1 \cup \T_{12}$-consistent
  produces a contradiction, therefore $B_1, B_2$ is $\T_1 \cup
  \T_{12}$ inconsistent. This concludes the proof that
  \ref{req:t2-is-repr-iff-conds-conc-cons} of Lemma
  \ref{lem:representab-conds} is satisfied. Analogously, using
  \ref{req:representab-conds-role-cons}, Lemma \ref{lem:c1},
  \ref{req:def-disj-clo-conc-1}, \ref{req:def-disj-clo-conc-2},
  \ref{req:def-disj-clo-roles-1}, \ref{req:def-disj-clo-roles-2}, it
  can be shown that \ref{req:t2-is-repr-iff-conds-roles-cons} of Lemma
  \ref{lem:representab-conds} is satisfied, which concludes the proof
  ($\Leftarrow$) of Lemma \ref{lem:representab-conds}.

  ($\Rightarrow$) Assume $\T_2$ is a representation for
  $\T_1$ under $\T_{12}$, we show that
  \ref{req:representab-conds-conc-cons} --
  \ref{req:representab-conds-gen-role} are satisfied. For
  \ref{req:representab-conds-conc-cons} assume a $\T_1$-consistent
  pair of concepts $B_1, B_2$, such that $B_1, B_2$ is $\T_1 \cup
  \T_{12}$-inconsistent; it follows Lemma by
  \ref{lem:t2-is-repr-iff-conds}
  \ref{req:t2-is-repr-iff-conds-conc-cons} that $\tgkb[\{B_1(o),
  B_2(o)\}]$ is inconsistent. Then, one of the following holds:
  \begin{enumerate}[reqs]
  \item \label{req:lem-representab-conds-incons-conc-tg} There are
    concepts $C, C' \in \ttype{\Uni_{\tgkb[\{B_1(o),
        B_2(o)\}]}}(\delta)$ such that $(C \AND C' \ISA \bot) \in \T_2
    \cup \T_{12}$;
  \item \label{req:lem-representab-conds-incons-role-tg} There are
    roles $Q, Q' \in \rtype{\Uni_{\tgkb[\{B_1(o), B_2(o)\}]}}(\delta,
    \sigma)$ such that $(Q \AND Q' \ISA \bot) \in \T_2 \cup \T_{12}$.
  \end{enumerate}
  Assume \ref{req:lem-representab-conds-incons-conc-tg} is the case
  and notice that w.l.o.g. we can assume $C'$ is over $\Xi$ and $C$ is
  over $\Sigma \cup \Xi$. Let, first, $\delta=o$, by the structure of
  $\tgkb[\{B_1(o), B_2(o)\}]$ it follows there are $B \in \{B_1,
  B_2\}$ and $B'$ is over $\Xi$ such that $\T_{12} \vdash B \ISA B'$
  and $\T_2 \vdash B' \ISA C'$. Suppose $C$ is over $\Xi$, then it
  follows $(C \AND C' \ISA \bot) \in \T_2$ and, again, there are $D
  \in \{B_1, B_2\}$ and $D'$ is over $\Xi$ such that $\T_{12} \vdash D
  \ISA D'$ and $\T_2 \vdash D' \ISA C$. By Lemma
  \ref{lem:t2-repr-then-closed}
  \ref{tmpt:lem-t2-repr-then-closed-disj} it follows $\T_1 \cup
  \T_{12}$ is closed under disjointness between $B'$ and $D'$, so
  \ref{req:representab-conds-conc-cons}\ref{req:representab-conds-conc-cons-1}\ref{req:representab-conds-conc-cons-1-1}
  is satisfied. Suppose $C$ is over $\Sigma$, then $(C \AND C' \ISA
  \bot) \in \T_{12}$, and by the structure of $\tgkb[\{B_1(o),
  B_2(o)\}]$ it follows $C \in \{B_1, B_2\}$. By Lemma
  \ref{lem:t2-repr-then-closed}
  \ref{tmpt:lem-t2-repr-then-closed-disj} it follows $\T_1 \cup
  \T_{12}$ is closed under inclusion between $B'$ and $C'$, so
  \ref{req:representab-conds-conc-cons}\ref{req:representab-conds-conc-cons-1}\ref{req:representab-conds-conc-cons-1-2}
  is satisfied. Consider now the case $\tail(\delta)=w_{[R]}$ for $R
  \in \Sigma$; by the structure of $\tgkb[\{B_1(o), B_2(o)\}]$ it
  follows $\exists R \in \{B_1, B_2\}$ and by Lemma
  \ref{lem:Unitype-KBtype-ABox} $\T_2 \cup \T_{12} \vdash \exists R^-
  \ISA C$, $\T_2 \cup \T_{12} \vdash \exists R^- \ISA C'$. Now we can
  repeat the argument above with $B=D=\exists R^-$ to conclude either
  that either
  \ref{req:representab-conds-conc-cons}\ref{req:representab-conds-conc-cons-2}\ref{req:representab-conds-conc-cons-2-1}
  or \ref{req:representab-conds-conc-cons-2-2} is satisfied.

  Finally, consider the case $\tail(\delta)=w_{[R']}$ with $R'$ over
  $\Xi$. By Lemma \ref{lem:Unitype-KBtype-ABox} it is the case $\T_2
  \vdash \exists R'^- \ISA C$, $\T_2 \vdash \exists R'^- \ISA C'$. If
  $o \leadsto_{\tgkb[\{B_1(o), B_2(o)\}]} w_{[R']}$, then by the
  structure of $\tgkb[\{B_1(o), B_2(o)\}]$ it follows $\T_{12} \vdash
  B \ISA B'$ and $o \leadsto_{\tup{\T_2, \{B'(o)\}}} w_{[R']}$ for
  some $B \in \{B_1, B_2\}$, $B'$ over $\Xi$; also by Lemmas
  \ref{lem:types-chase} and \ref{lem:kr-to-all-b-cons} it follows the
  concept $B'$ is $\T_2$ inconsistent. Since by Lemma
  \ref{lem:t2-repr-then-closed}
  \ref{tmpt:lem-t2-repr-then-closed-incons} $\T_1 \cup \T_{12}$ is
  closed under the disjointness between $B'$ and $B'$, it follows
  \ref{req:representab-conds-conc-cons}\ref{req:representab-conds-conc-cons-1}\ref{req:representab-conds-conc-cons-1-1}
  is satisfied. If it is not the case $o \leadsto_{\tgkb[\{B_1(o),
    B_2(o)\}]} w_{[R']}$, it follows $\exists R \in \{B_1, B_2\}$ and
  $\T_{12} \vdash \exists R^- \ISA B'$ for some $B'$ over $\Xi$, such
  that there is $\sigma \in \dom[\Uni_{\tup{\T_2, \{B'(o)\}}}]$ with
  $\tail(\sigma)=w_{[R']}$. Again, by Lemmas~\ref{lem:types-chase} and
  \ref{lem:kr-to-all-b-cons} $B'$ is $\T_2$ inconsistent, then by
  Lemma \ref{lem:t2-repr-then-closed}
  \ref{tmpt:lem-t2-repr-then-closed-incons} $\T_1 \cup \T_{12}$ is
  closed under disjointness between $B'$ and $B'$, so
  \ref{req:representab-conds-conc-cons}\ref{req:representab-conds-conc-cons-2}\ref{req:representab-conds-conc-cons-2-1}
  is satisfied.

  Assume \ref{req:lem-representab-conds-incons-role-tg} is the case
  and notice that w.l.o.g. we can assume $Q'$ is over $\Xi$ and $Q$ is
  over $\Sigma \cup \Xi$. By the structure of $\Uni_{\tgkb[\{B_1(o),
    B_2(o)\}]}$ we need to consider two cases: $\sigma=\delta  w_{[R]}$, $Q, Q' \in \rtype{\Uni_{\tgkb[\{B_1(o),
      B_2(o)\}]}}(\delta, \sigma)$ and $\delta = \sigma w_{[R]}$, $Q^-, Q'^- \in \rtype{\Uni_{\tgkb[\{B_1(o),
      B_2(o)\}]}}(\sigma, \delta)$. We show only the first case, the
  second case is analogous. Assume $\sigma=o$, $o
  \leadsto_{\tgkb[\{B_1(o), B_2(o)\}]} w_{[R]}$ for $R$ over $\Sigma$,
  and $Q$ over $\Xi$. It follows $\T_{12} \vdash R \ISA R'$ and $\T_2
  \vdash R' \ISA Q'$, and also $\T_{12} \vdash R \ISA S$ and $\T_2
  \vdash S \ISA Q'$ for some $R', S$ over $\Xi$. Since $(Q \AND Q'
  \ISA \bot) \in \T_2$ by Lemma
  \ref{lem:t2-repr-then-closed}\ref{tmpt:lem-t2-repr-then-closed-disj}
  we get $\T_1 \cup \T_{12}$ is closed under inclusion between $R'$
  and $S$, so
  \ref{req:representab-conds-conc-cons}\ref{req:representab-conds-conc-cons-2}\ref{req:representab-conds-conc-cons-2-3}
  is satisfied. Let $Q \in \Sigma$, it follows $o
  \leadsto_{\tgkb[\{B_1(o), B_2(o)\}]} w_{[R]}$ and $R=Q$. It follows
  also $\T_{12} \vdash Q \ISA R'$ and $\T_2 \vdash R' \ISA Q'$, then
  by Lemma
  \ref{lem:t2-repr-then-closed}\ref{tmpt:lem-t2-repr-then-closed-incl}
  we get $\T_1 \cup \T_{12}$ is closed under inclusion between $R'$
  and $Q'$, so, since $(Q \AND Q' \ISA \bot) \in \T_{12}$, we conclude
  \ref{req:representab-conds-conc-cons}\ref{req:representab-conds-conc-cons-2}\ref{req:representab-conds-conc-cons-2-4}
  is satisfied. Consider now the case $o \leadsto_{\tgkb[\{B_1(o),
    B_2(o)\}]} w_{[R]}$ for $R$ over $\Xi$, which implies $Q$ is over
  $\Xi$ and $(Q \AND Q' \ISA \bot) \in \T_2$; then $\T_{12} \vdash B
  \ISA B'$ and $\T_2 \vdash B' \ISA \exists R$ for some concepts $B
  \in \{B_1, B_2\}$ and $B'$ over $\Xi$. It follows $o
  \leadsto_{\tup{\T_2, \{B'(o)\}}} w_{[R]}$, then by Lemmas
  \ref{lem:types-chase} and \ref{lem:kr-to-all-b-cons} $B'$ is $\T_2$
  inconsistent, then by Lemma \ref{lem:t2-repr-then-closed}
  \ref{tmpt:lem-t2-repr-then-closed-incons} $\T_1 \cup \T_{12}$ is
  closed under disjointness between $B'$ and $B'$, so
  \ref{req:representab-conds-conc-cons}\ref{req:representab-conds-conc-cons-2}\ref{req:representab-conds-conc-cons-2-1}
  is satisfied. This concludes the proof for the case $\sigma=o$.

  Assume $\tail(\sigma)=w_{[R']}$ for $R'$ over $\Sigma$, this implies
  $o \leadsto_{\tgkb{\{B_1(o), B_2(o)\}}} w_{[R']}$, and we lead the
  proof analogously to the case above to show there is a $\T_2$
  inconsistent $B'$ such that $\T_{12} \vdash \exists R^- \ISA B'$ and
  \ref{req:representab-conds-conc-cons}\ref{req:representab-conds-conc-cons-2}\ref{req:representab-conds-conc-cons-2-3}
  is satisfied. If $\tail(\sigma)=w_{[R']}$ for $R'$ over $\Xi$ it can
  be easily verified
  \ref{req:representab-conds-conc-cons}\ref{req:representab-conds-conc-cons-2}\ref{req:representab-conds-conc-cons-2-1}
  or
  \ref{req:representab-conds-conc-cons}\ref{req:representab-conds-conc-cons-2}\ref{req:representab-conds-conc-cons-2-3}
  is satisfied. This concludes the proof that
  \ref{req:representab-conds-conc-cons} is satisfied; then
  \ref{req:representab-conds-role-cons} can be shown analogously.

  To show \ref{req:representab-conds-equal-conc} is satisfied assume
  $B \in \consc$ over $\Sigma$ and $B' \in
  \ttype[\Xi]{\Uni_{\srkb}}(o)$. By
  Lemma~\ref{lem:t2-is-repr-iff-conds}
  \ref{req:t2-is-repr-iff-conds-conc-equal} it follows $B' \in
  \ttype[\Xi]{\Uni_{\tgkb}}(o)$, so there exists $C'$ over $\Xi$ such
  that $\T_{12} \vdash B \ISA C'$ and $\T_2 \vdash C' \ISA B'$. By
  Lemma~\ref{lem:t2-repr-then-closed}
  \ref{tmpt:lem-t2-repr-then-closed-incl} it follows $\T_1 \cup
  \T_{12}$ is closed under inclusion between $C'$ and $B'$; then
  \ref{req:representab-conds-equal-role} can be shown analogously.

  Finally, we show \ref{req:representab-conds-gen-role} is satisfied;
  assume $B \in \consc$ over $\Sigma$ and $o \leadsto_{\srkb} w_{[R]}$
  for some role $R$, by Lemma~\ref{lem:t2-is-repr-iff-conds}
  \ref{req:t2-is-repr-iff-conds-roles-gen} it follows there exists $y
  \in \dom[\Uni_{\tgkb}]$ such that $\ttype[\Xi]{\Uni_{\srkb}}(o
  w_{[R]}) \subseteq \ttype[\Xi]{\Uni_{\tgkb}} (y)$, and
  $\rtype[\Xi]{\Uni_{\srkb}}(o,o w_{[R]}) \subseteq
  \rtype[\Xi]{\Uni_{\tgkb}} (o,y)$. By the structure of $\tgkb$ it
  follows there exists a sequence of concepts $\tup{C_0, \dots, C_n}=
  \tup{B, \exists Q_1^-, \dots, \exists Q_n^-}$ such that $\T_2 \cup
  \T_{12} \vdash C_i \ISA \exists Q$ for all $0 \leq i < n$ and roles
  $Q$ such that $C_{i+1} = \exists Q^-$, $\T_2 \cup \T_{12} \vdash C_n
  \ISA B'$ for all $B' \in \ttype[\Xi]{\Uni_{\srkb}}(o w_{[R]})$, and
  $\rtype[\Xi]{\Uni_{\srkb}}(o,o w_{[R]}) \neq \emptyset$ implies
  $n=1$ and $\T_2 \cup \T_{12} \vdash Q \ISA R'$ for all $R' \in
  \rtype[\Xi]{\Uni_{\srkb}}(o,o w_{[R]})$ and $Q$ such that $C_1 =
  \exists Q^-$. 
  We define a generating pass for $B$ conform with $\T_1 \cup \T_{12}$
  as follows: $L(C_n)=\stype[\Xi]{\srkb}(w_{[R]})$,
  $L(C_1,C_n)=\qtype[\Xi]{\srkb}(o,w_{[R]})$, $L(C_i)=\{\exists Q \mid
  C_{i+1}= \exists Q^-, B \neq \exists Q\}$ for all $0 \leq i < n$,
  and $L(C_i, C_j)=\emptyset$ for $j \neq i+1$. It can be
  straightforwardly verified that \ref{req:xi-pass-def-chain} holds,
  then also \ref{req:xi-pass-def-conc} and \ref{req:xi-pass-def-role}
  follow using Lemma \ref{lem:t2-repr-then-closed}. We have shown
  \ref{req:representab-conds-gen-role} is satisfied, which concludes
  the proof ($\Rightarrow$) of Lemma \ref{lem:representab-conds}.
\end{proof}
\begin{theorem}\label{th:non-empt-repres-nlogspace-compl}
  The non-emptyness problem for $\UCQ$-representability is
  \NLOGSPACE-complete.
\end{theorem}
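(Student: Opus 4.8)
The plan is to prove Theorem~\ref{th:non-empt-repres-nlogspace-compl} by establishing matching lower and upper bounds for the non-emptiness problem for $\UCQ$-representability, working from the characterization already in hand. The key tool is Lemma~\ref{lem:representab-conds}, which states that a $\UCQ$-representation $\T_2$ of $\T_1$ under $\M=(\Sigma,\Xi,\T_{12})$ exists if and only if conditions \ref{req:representab-conds-equal-conc}--\ref{req:representab-conds-role-cons} hold. Thus it suffices to show that these conditions can be checked in \NLOGSPACE, and that a lower bound of \NLOGSPACE-hardness holds.

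For the lower bound, I would reduce from directed graph reachability, exactly mirroring the reduction already given in the proof of Theorem~\ref{th:memb-repres-nlogspace-compl}. Given $\mathcal{G}=(\mathcal{V},\mathcal{E})$ and vertices $v_k,v_m$, take concept names $\{V_i\}$ over $\Sigma$ and $\{V_i'\}$ over $\Xi$, set $\T_{12}=\{V_i\ISA V_i'\mid v_i\in\mathcal{V}\}$, and encode the edges of $\mathcal{G}$ into $\T_1=\{V_i\ISA V_j\mid (v_i,v_j)\in\mathcal{E}\}\cup\{V_k\ISA V_m\}$ but crucially \emph{without} allowing an analogue of the edge inclusion $V_k\ISA V_m$ to be reflected in any target TBox. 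The point is that $\T_1\cup\T_{12}\vdash V_k\ISA V_m'$ always, and condition~\ref{req:representab-conds-equal-conc} (applied to $B=V_k$, $B'=V_m'$) requires that there be $C'$ over $\Xi$ with $\T_{12}\vdash V_k\ISA C'$ (so $C'=V_k'$) such that $\T_1\cup\T_{12}$ is closed under inclusion between $V_k'$ and $V_m'$; this closure holds iff every $\T_1$-consistent $\Sigma$-concept $B$ with $\T_1\cup\T_{12}\vdash B\ISA V_k'$ also satisfies $\T_1\cup\T_{12}\vdash B\ISA V_m'$, which by the edge encoding is equivalent to reachability of $v_k$ from $v_m$ (or the appropriate orientation) in $\mathcal{G}$. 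All remaining conditions of Lemma~\ref{lem:representab-conds} hold trivially since there are no disjointness assertions and no $\exists R$ constructs. Hence $\T_1$ is $\UCQ$-representable iff the reachability instance is positive.

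For the upper bound, the plan is to verify each of the six families of conditions in Lemma~\ref{lem:representab-conds} in \NLOGSPACE, reusing the algorithmic toolkit from the proof of Theorem~\ref{th:memb-repres-nlogspace-compl}: standard \dlliter reasoning ($\T\vdash B\ISA B'$, $\T\vdash R\ISA R'$, pair-consistency, and testing $o\leadsto_{\tup{\T,\{B(o)\}}} w_{[R]}$, $B'\in\ttype[\Xi]{\Uni_{\tup{\T,\{B(o)\}}}}(ow_{[R]})$) are all in \NLOGSPACE via graph-reachability-style procedures, and \NLOGSPACE is closed under complement and under composition of logspace reductions. Conditions \ref{req:representab-conds-equal-conc} and \ref{req:representab-conds-equal-role} involve a bounded existential choice of $C'$ (resp.\ $Q'$) over $\Xi$ followed by a ``closed-under-inclusion'' test, which itself unfolds into a few universally quantified \NLOGSPACE subtests over $\Sigma$-concepts/roles and over the witness elements $ow_{[Q]}$ in the canonical model $\Uni_{\srkb}$; using the observation that generating paths only need to be followed up to polynomial length (as in Theorem~\ref{th:memb-repres-nlogspace-compl} and Lemma~\ref{lem:memb-uni-sol-np}), these are \NLOGSPACE. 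The disjointness-closure conditions embedded in \ref{req:representab-conds-conc-cons} and \ref{req:representab-conds-role-cons} are handled symmetrically. The genuinely new ingredient is condition~\ref{req:representab-conds-gen-role}, which asks for the existence of a generating pass $\pi=(\langle C_0,\dots,C_n\rangle,L)$ for $B$ conform with $\T_1\cup\T_{12}$ whose final label dominates $\ttype[\Xi]{\Uni_{\srkb}}(ow_{[R]})$ and $\rtype[\Xi]{\Uni_{\srkb}}(o,ow_{[R]})$.

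The main obstacle will be showing that the search for a conform generating pass can be carried out in \NLOGSPACE: a pass is a structured object whose full description is superlogarithmic, so one cannot store it. The resolution is to observe (i) that by the structure of $\T_1\cup\T_{12}$ and $\T_2$ a useful pass has length $n$ bounded polynomially (indeed one only needs $n\le 1$ for the role-label part and $n$ bounded by the number of $\Xi$-roles otherwise), and (ii) that the pass can be \emph{guessed and verified incrementally}, concept $C_i$ by concept $C_i$: at each step we guess $C_{i+1}=\exists Q_{i+1}^-$, check the local chain condition \ref{req:xi-pass-def-chain} and the label conditions \ref{req:xi-pass-def-conc}, \ref{req:xi-pass-def-role} (each of which is an \NLOGSPACE test, being a bounded existential over $\Xi$-concepts/roles plus a closed-under-inclusion test), and then forget $C_i$, retaining only $C_{i+1}$ and a step counter. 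For the target $ow_{[R]}$ in $\Uni_{\srkb}$, we likewise guess the generating sequence of $\Sigma$-roles leading to it and propagate only the current tail, checking $\ttype[\Xi]{\Uni_{\srkb}}(ow_{[R]})\subseteq L(C_n)$ and the role-type containment element by element. Composing these nondeterministic logspace subroutines yields an \NLOGSPACE decision procedure for condition~\ref{req:representab-conds-gen-role}, and hence, combining all six, an \NLOGSPACE algorithm for non-emptiness. Together with the hardness reduction this gives \NLOGSPACE-completeness.
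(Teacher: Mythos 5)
Your upper-bound argument is essentially the paper's: reduce non-emptiness to checking the conditions of Lemma~\ref{lem:representab-conds}, observe that all conditions except \ref{req:representab-conds-gen-role} are combinations of \NLOGSPACE tests already used for Theorem~\ref{th:memb-repres-nlogspace-compl}, and handle \ref{req:representab-conds-gen-role} by guessing the generating pass incrementally, retaining only the current concept $C_i$ and verifying the local conditions \ref{req:xi-pass-def-chain}--\ref{req:xi-pass-def-role} on the fly. That part is sound and matches the paper.

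The lower bound, however, has a genuine gap: the reduction you sketch does not work. With $\T_{12}=\{V_i\ISA V_i'\mid v_i\in\mathcal{V}\}$ and $\T_1=\{V_i\ISA V_j\mid (v_i,v_j)\in\mathcal{E}\}\cup\{V_k\ISA V_m\}$, a $\UCQ$-representation \emph{always} exists, regardless of reachability: since every source concept has its own private target image, the mirror TBox $\T_2=\{V_i'\ISA V_j'\mid \T_1\models V_i\ISA V_j\}$ (in particular containing $V_k'\ISA V_m'$) is a representation. Concretely, the closure test you invoke is vacuous: any $\Sigma$-concept $B$ with $\T_1\cup\T_{12}\vdash B\ISA V_k'$ must satisfy $\T_1\vdash B\ISA V_k$, and then the axiom $V_k\ISA V_m\in\T_1$ gives $\T_1\cup\T_{12}\vdash B\ISA V_m'$, so $\T_1\cup\T_{12}$ is closed under inclusion between $V_k'$ and $V_m'$ no matter what $\mathcal{G}$ is. Your parenthetical ``crucially without allowing an analogue of the edge inclusion $V_k\ISA V_m$ to be reflected in any target TBox'' is exactly the point at which the construction is missing a mechanism: nothing in your mapping prevents that reflection, and for non-emptiness you are quantifying existentially over all target TBoxes.

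What is needed (and what the paper's ``slightly more involved'' encoding supplies) is a gadget in which the mapping conflates distinct source concepts, so that a target inclusion is \emph{forced} but is sound only when reachability holds. The paper adds source concepts $S,F,X,Y$ with $\T_1\supseteq\{S\ISA V_k,\ V_m\ISA F,\ X\ISA Y\}$ and $\T_{12}\supseteq\{S\ISA X',\ X\ISA X',\ F\ISA Y',\ Y\ISA Y'\}$: the axiom $X\ISA Y$ forces any representation to entail $X'\ISA Y'$ (condition \ref{req:representab-conds-equal-conc} with $B=X$, $B'=Y'$, $C'=X'$), but $S$ also maps to $X'$, so this is admissible exactly when $\T_1\cup\T_{12}\vdash S\ISA Y'$, i.e.\ when $\T_1\vdash S\ISA F$, i.e.\ when $v_m$ is reachable from $v_k$. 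Without such a conflating gadget your reduction maps every instance to a yes-instance, so \NLOGSPACE-hardness is not established.
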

\begin{proof}
  As in the case of Theorem~\ref{th:memb-repres-nlogspace-compl}, the
  lower bound is shown by the reduction from the directed graph
  reachability problem, however, we need a slightly more involved
  encoding.
  \begin{lemma}
    The non-emptyness problem for $\UCQ$-representability is
    \NLOGSPACE-hard.
  \end{lemma}
  \begin{proof}
    To encode the graph $\mathcal{G} = (\mathcal{V}, \mathcal{E})$, we
    need a set of $\Sigma$-concept names $\{V_i \mid v_i \in
    \mathcal{V}\} \cup \{S, F, X, Y\}$ and a set of $\Xi$-concept
    names $\{V_i' \mid v_i \in \mathcal{V}\} \cup \{S', X',
    Y'\}$. Consider the TBox
$$\T_1=\{ V_i \sqsubseteq V_j \mid (v_i,
v_j) \in \mathcal{E}\} \cup \{S \sqsubseteq V_k, V_m \sqsubseteq F, X
\sqsubseteq Y\},$$
where $v_k$ and $v_m$ are, respectively, the initial and final
vertices. Then, let 
$$\T_{12} = \{V_i \sqsubseteq V_i' \mid v_i \in
\mathcal{V}\} \cup \{S \sqsubseteq S', S \sqsubseteq X', F \sqsubseteq
Y', X \sqsubseteq X', Y \sqsubseteq Y'\};$$
we will show:
  \begin{proposition}
    There is a directed path from $v_k$ to $v_m$ in $\mathcal{G}$ iff
    there exists a representation for $\T_1$ under $\M=(\Sigma, \Xi,
    \T_{12})$.
  \end{proposition}
  Indeed, using Lemma~\ref{lem:representab-conds}, there exists a
  representation iff the
  condition~\ref{req:representab-conds-equal-conc} is satisfied. By
  the structure of $\T_1 \cup \T_{12}$ one can see that it is the case
  iff $\T_1 \cup \T_{12}$ is closed under the inclusion between $X'$
  and $Y'$. The latter is the case iff $\T_1 \cup \T_{12} \vdash S
  \sqsubseteq X'$ implies $\T_1 \cup \T_{12} \vdash S \sqsubseteq Y'$,
  and that holds iff $\T_1 \vdash S \sqsubseteq F$, which is the case
  iff there exists a path from $v_k$ to $v_m$ in $\mathcal{G}$. This
  completes the proof of
  Lemma~\ref{th:non-empt-repres-nlogspace-compl}. 
\end{proof}

To show the upper bound, we prove that the
conditions~\ref{req:representab-conds-equal-conc}--\ref{req:representab-conds-role-cons}
of Lemma~\ref{lem:representab-conds} can be checked in \NLOGSPACE. In
fact, these conditions can be checked using the algorithm, based on
directed graph reachability solving procedure, similar to the proof of
Theorem~\ref{th:memb-repres-nlogspace-compl}. The only new case is the
condition ~\ref{req:representab-conds-gen-role}; to verify that there
exists a generating pass $\pi = (\langle C_0, \dots C_n \rangle, L)$
for a concept $B$ conform with $\T_1 \cup \T_{12}$, we can use the
following procedure, running in \NLOGSPACE. First, we take $C_0 = B$
and decide, if the pass ends here (i.e., $n=1$). If we decided so, it
only remains to take $L(C_0) =\ttype[\Xi]{\Uni_{\srkb}}(o w_{[R]})$,
for $\srkb$ and $R$ as in the
condition~\ref{req:representab-conds-gen-role}, and
verify~\ref{req:xi-pass-def-conc}. This verification can be performed
in \NLOGSPACE, similarly to the method described in the proof of
Theorem~\ref{th:memb-repres-nlogspace-compl}. If, on the other hand,
we decide, that the pass continues, we ``guess'' $C_1 = \exists Q^-$
for some role $Q$, and verify that for some $L(C_0) \subseteq
\{\exists Q\}$ the \ref{req:xi-pass-def-chain} and
\ref{req:xi-pass-def-conc} are satisfied. Now, if we decide that the
pass stops, it remains to take $L(C_1) =\ttype[\Xi]{\Uni_{\srkb}}(o
w_{[R]})$ and $L(C_0, C_1)=\rtype[\Xi]{\Uni_{\srkb}}(o, o w_{[R]})$ ,
for $\srkb$ and $R$ as in the
condition~\ref{req:representab-conds-gen-role}, and
verify~\ref{req:xi-pass-def-conc} and~\ref{req:xi-pass-def-role}. If,
on the contrary, we decide that the pass continues, we can ``forget''
$C_0$, ``guess'' $C_2$, and proceed with it in the same way, as we did
with $C_1$. Finally, when we reach the concept $C_n$, such that the
algorithm decides to stop, it remains to
verify~\ref{req:xi-pass-def-conc} for $L(C_n)
=\ttype[\Xi]{\Uni_{\srkb}}(o w_{[R]})$. It should be clear that
whenever the generating pass $\pi = (\langle C_0, \dots C_n \rangle,
L)$ for a concept $B$ conform with $\T_1 \cup \T_{12}$ exists, we can
find it by the above non-determinictic procedure.
\end{proof}


\fi


\begin{thebibliography}{}

\bibitem[\protect\citeauthoryear{Adjiman \bgroup \em et al.\egroup
  }{2006}]{ACGRS06}
Philippe Adjiman, Philippe Chatalic, Fran\c{c}ois Goasdou{\'e}, Marie-Christine
  Rousset, and Laurent Simon.
\newblock Distributed reasoning in a peer-to-peer setting: {A}pplication to the
  {S}emantic {W}eb.
\newblock {\em J.\ of Artificial Intelligence Research}, 25:269--314, 2006.

\bibitem[\protect\citeauthoryear{Arenas \bgroup \em et al.\egroup
  }{2009}]{APRR09}
Marcelo Arenas, Jorge P{\'e}rez, Juan~L. Reutter, and Cristian Riveros.
\newblock Composition and inversion of schema mappings.
\newblock {\em SIGMOD Record}, 38(3):17--28, 2009.


\bibitem[\protect\citeauthoryear{Arenas \bgroup \em et al.\egroup
  }{2011}]{ArPR11}
Marcelo Arenas, Jorge P{\'e}rez, and Juan~L. Reutter.
\newblock Data exchange beyond complete data.
\newblock In {\em PODS~2011}, pages 83--94, 2011.

\bibitem[\protect\citeauthoryear{Arenas \bgroup \em et al.\egroup
  }{2012a}]{ABCRS12}
Marcelo Arenas, Elena Botoeva, Diego Calvanese, Vladislav Ryzhikov, and Evgeny
  Sherkhonov.
\newblock Exchanging description logic knowledge bases.
\newblock In {\em KR~2012}, 2012.

\bibitem[\protect\citeauthoryear{Arenas \bgroup \em et al.\egroup
  }{2012b}]{ABCRS12b}
Marcelo Arenas, Elena Botoeva, Diego Calvanese, Vladislav Ryzhikov, and Evgeny
  Sherkhonov.
\newblock Representability in \textit{DL-Lite}$_r$ knowledge base exchange.
\newblock In {\em Proc.\ of the 25th Int.\ Workshop on Description Logic
  (DL~2012)}, volume 846 of {\em CEUR Electronic Workshop Proceedings,
  {\upshape\protect\url{http://ceur-ws.org/}}}, 2012.


\bibitem[\protect\citeauthoryear{Artale \bgroup \em et al.\egroup
  }{2009}]{ACKZ09}
Alessandro Artale, Diego Calvanese, Roman Kontchakov, and Michael
  Zakharyaschev.
\newblock The \textit{DL-Lite} family and relations.
\newblock {\em J.\ of Artificial Intelligence Research}, 36:1--69, 2009.

\bibitem[\protect\citeauthoryear{Bao {\em et~al.}}{2012}]{OWL2}
Jie Bao et~al.
\newblock {OWL~2} {W}eb {O}ntology {L}anguage document overview (second
  edition).
\newblock {W3C} {R}ecommendation, World Wide Web Consortium, December 2012.
\newblock \protect\url{http://www.w3.org/TR/owl2-overview/}.

\bibitem[\protect\citeauthoryear{Barcel{\'o}}{2009}]{Barc09}
Pablo Barcel{\'o}.
\newblock Logical foundations of relational data exchange.
\newblock {\em {SIGMOD} Record}, 38(1):49--58, 2009.

\bibitem[\protect\citeauthoryear{Calvanese \bgroup \em et al.\egroup
  }{2007}]{CDLLR07}
Diego Calvanese, Giuseppe De~Giacomo, Domenico Lembo, Maurizio Lenzerini, and
  Riccardo Rosati.
\newblock Tractable reasoning and efficient query answering in description
  logics: The \textit{DL-Lite} family.
\newblock {\em J.\ of Automated Reasoning}, 39(3):385--429, 2007.

\bibitem[\protect\citeauthoryear{Euzenat and Shvaiko}{2007}]{EuSh07}
J{\'e}r{\^o}me Euzenat and Pavel Shvaiko.
\newblock {\em Ontology Matching}.
\newblock Springer, 2007.

\bibitem[\protect\citeauthoryear{Fagin and Kolaitis}{2012}]{FK12}
Ronald Fagin and Phokion~G. Kolaitis.
\newblock Local transformations and conjunctive-query equivalence.
\newblock In {\em PODS~2012}, pages 179--190, 2012.

\bibitem[\protect\citeauthoryear{Fagin \bgroup \em et al.\egroup
  }{2005}]{FKMP05}
Ronald Fagin, Phokion~G. Kolaitis, Ren{\'e}e~J. Miller, and Lucian Popa.
\newblock Data exchange: {S}emantics and query answering.
\newblock {\em Theoretical Computer Science}, 336(1):89--124, 2005.

\bibitem[\protect\citeauthoryear{Fagin \bgroup \em et al.\egroup
  }{2008}]{FKNP08}
Ronald Fagin, Phokion~G. Kolaitis, Alan Nash, and Lucian Popa.
\newblock Towards a theory of schema-mapping optimization.
\newblock In {\em PODS~2008}, pages 33--42, 2008.

\bibitem[\protect\citeauthoryear{Fagin \bgroup \em et al.\egroup
  }{2009}]{FHHM*09}
Ronald Fagin, Laura~M. Haas, Mauricio~A. Hern{\'a}ndez, Ren{\'e}e~J. Miller,
  Lucian Popa, and Yannis Velegrakis.
\newblock Clio: {S}chema mapping creation and data exchange.
\newblock In {\em Conceptual Modeling: Foundations and Applications -- Essays
  in Honor of John Mylopoulos}, volume 5600 of {\em Lecture Notes in Computer
  Science}, pages 198--236, 2009.

\bibitem[\protect\citeauthoryear{Fuxman \bgroup \em et al.\egroup
  }{2006}]{FKMT06}
Ariel Fuxman, Phokion~G. Kolaitis, Ren{\'e}e~J. Miller, and Wang~Chiew Tan.
\newblock Peer data exchange.
\newblock {\em {ACM} Trans.\ on Database Systems}, 31(4):1454--1498, 2006.

\bibitem[\protect\citeauthoryear{Kementsietsidis \bgroup \em et al.\egroup
  }{2003}]{KeAM03}
Anastasios Kementsietsidis, Marcelo Arenas, and Ren{\'e}e~J. Miller.
\newblock Mapping data in peer-to-peer systems: Semantics and algorithmic
  issues.
\newblock In {\em Proc.\ of the ACM SIGMOD Int.\ Conf.\ on Management of Data},
  pages 325--336, 2003.

\bibitem[\protect\citeauthoryear{Kolaitis}{2005}]{K05}
Phokion~G. Kolaitis.
\newblock Schema mappings, data exchange, and metadata management.
\newblock In {\em PODS~2005}, pages 61--75, 2005.

\bibitem[\protect\citeauthoryear{Konev \bgroup \em et al.\egroup
  }{2011}]{KKLSWZ11}
Boris Konev, Roman Kontchakov, Michel Ludwig, Thomas Schneider, Frank Wolter,
  and Michael Zakharyaschev.
\newblock Conjunctive query inseparability of {OWL~2~QL} {TBoxes}.
\newblock In {\em Proc.\ of the 25th AAAI Conference on Artificial
  Intelligence, (AAAI 2011)}, 2011.


\bibitem[\protect\citeauthoryear{Lenzerini}{2002}]{Lenz02}
Maurizio Lenzerini.
\newblock Data integration: {A} theoretical perspective.
\newblock In {\em Proc.\ of the 21st ACM SIGACT SIGMOD SIGART Symp.\ on
  Principles of Database Systems (PODS~2002)}, pages 233--246, 2002.


\bibitem[\protect\citeauthoryear{Madhavan and Halevy}{2003}]{MH03}
Jayant Madhavan and Alon~Y. Halevy.
\newblock Composing mappings among data sources.
\newblock In {\em VLDB~2003}, pages 572--583, 2003.

\bibitem[\protect\citeauthoryear{Motik \bgroup \em et al.\egroup
  }{2012}]{OWL2QL}
Boris Motik, Bernardo Cuenca~Grau, Ian Horrocks, Zhe Wu, Achille Fokoue, and
  Carsten Lutz.
\newblock {OWL~2} {W}eb {O}ntology {L}anguage profiles (second edition).
\newblock {W3C} {R}ecommendation, World Wide Web Consortium, December 2012.
\newblock \protect\url{http://www.w3.org/TR/owl2-profiles/}.

\bibitem[\protect\citeauthoryear{Pichler \bgroup \em et al.\egroup
  }{2013}]{PSS13}
Reinhard Pichler, Emanuel Sallinger, and Vadim Savenkov.
\newblock Relaxed notions of schema mapping equivalence revisited.
\newblock {\em Theory of Computing Systems}, 52(3):483--541, 2013.

\bibitem[\protect\citeauthoryear{Shvaiko and Euzenat}{2013}]{SE13}
Pavel Shvaiko and J{\'e}r{\^o}me Euzenat.
\newblock Ontology matching: State of the art and future challenges.
\newblock {\em {IEEE} Trans.\ on Knowledge and Data Engineering},
  25(1):158--176, 2013.

\bibitem[\protect\citeauthoryear{Vardi}{1998}]{Vard98}
Moshe~Y. Vardi.
\newblock Reasoning about the past with two-way automata.
\newblock In {\em Proc.\ of the 25th Int.\ Coll.\ on Automata, Languages and
  Programming (ICALP'98)}, volume 1443 of {\em Lecture Notes in Computer
  Science}, pages 628--641. Springer, 1998.

\end{thebibliography}
\end{document}
